\begin{document}
\thispagestyle{plain}
\title{Convergence Analysis of Flow Matching in Latent Space with Transformers}

\author{\name Yuling Jiao \email yulingjiaomath@whu.edu.cn \\
\addr School of Mathematics and Statistics \\
and Hubei Key Laboratory of Computational Science\\
Wuhan University, Wuhan 430072, China
\AND
\name Yanming Lai \email ylaiam@connect.ust.hk \\
\addr Department of Mathematics\\
The Hong Kong University of Science and Technology\\
Clear Water Bay, Kowloon, Hong Kong, China
\AND
\name Yang Wang \email yangwang@ust.hk \\
\addr Department of Mathematics\\
The Hong Kong University of Science and Technology\\
Clear Water Bay, Kowloon, Hong Kong, China
\AND
\name Bokai Yan\thanks{Corresponding author.} \email byanac@connect.ust.hk \\
\addr Department of Mathematics\\
The Hong Kong University of Science and Technology\\
Clear Water Bay, Kowloon, Hong Kong, China
}

\editor{My editor}

\maketitle

\begin{abstract}
We present theoretical convergence guarantees for ODE-based generative models, specifically flow matching. We use a pre-trained autoencoder network to map high-dimensional original inputs to a low-dimensional latent space, where a transformer network is trained to predict the velocity field of the transformation from a standard normal distribution to the target latent distribution. Our error analysis demonstrates the effectiveness of this approach, showing that the distribution of samples generated via estimated ODE flow converges to the target distribution in the Wasserstein-2 distance under mild and practical assumptions. Furthermore, we show that arbitrary smooth functions can be effectively approximated by transformer networks with Lipschitz continuity, which may be of independent interest.
\end{abstract}

\begin{keywords}
deep generative model, ODE flow,  transformer network, end-to-end error bound
\end{keywords}

\section{Introduction}

A wide variety of statistics and machine learning problems can be framed as generative modeling, especially when there is an emphasis on accurately modeling and efficiently sampling from intricate distributions, including those associated with images, sound, and text. The essence of generative modeling lies in its ability to learn a target distribution from finite samples, a task at which models incorporating deep neural networks have recently achieved considerable success.

Generative Adversarial Networks (GANs; \citet{goodfellow2014generative, arjovsky2017wasserstein}), as a flagship example of deep generative models, have successfully been applied to a wide range of application challenges, including the synthesis of photorealistic images and videos \citep{radford2015unsupervised, wang2018high, chan2019everybody}, data augmentation \citep{frid2018synthetic}, style transfer \citep{zhu2017unpaired}, and facial editing \citep{karras2019style}. Additionally, significant research has been conducted to analyze the theoretical properties of GANs. \citet{bai2018approximability} demonstrated that GANs could learn distributions within the Wasserstein distance, provided the discriminator class has sufficient distinguishing capability against the generator class. \citet{chen2020statistical} established a minimax optimal convergence rate based on optimal transport theory, which necessitates the input and output dimensions of the generator to be identical. \citet{huang2022error} proved that GANs could learn any distribution with bounded support. Despite their theoretical elegance and practical achievements, GANs often encounter challenges such as training instability, mode collapse, and difficulties in evaluating the quality of generated data.

The recent breakthrough known as the diffusion model has gained notable attention for its superior sample quality and a significantly more stable and controllable training process compared to GANs. The initial concept of the diffusion model involves training a denoising model to progressively transform noise data into samples that adhere to the target distribution \citep{ho2020denoising}, which has soon been mathematically proven to correspond to learning either the drift term of a Stochastic Differential Equation (SDE) or the velocity field of an Ordinary Differential Equation (ODE) \citep{song2021scorebased}. In SDE-based methods, the target data density degenerates into a simpler Gaussian density through the Ornstein-Uhlenbeck (OU) process, followed by solving a reverse-time SDE to generate samples from noise \citep{ho2020denoising, song2021scorebased, meng2021sdedit}. Researchers have also proposed the diffusion Schr\"odinger Bridge (SB), which formulates a finite-time SDE, effectively accelerating the simulation time \citep{de2021diffusion}. The achievements of ODE-based methods are equally remarkable, with most adopting an approach involving interpolative trajectory modeling \citep{liu2022flow, albergo2022building, liu2023flowgrad, xu2022poisson}. \citet{liu2022flow} employs linear interpolation to connect the target distribution with a reference distribution, while \citet{albergo2022building} extends this interpolation to nonlinear cases. Further \citet{gao2023gaussian} uses interpolation to analyze the regularity of a broad class of ODE flows.

In the past few years, there has been an explosive development in SDE/ODE-based generative models, with many models showcasing outstanding performance across a diverse array of application challenges. \citet{dhariwal2021diffusion} have demonstrated that diffusion models outperform GANs in both unconditional and conditional image synthesis, setting a new benchmark in the quality of generated images. \citet{rombach2022high} showed that generative processes operating in a latent space can significantly reduce computational resources while maintaining high-quality text-to-image generation. A line of research \citep{kong2020diffwave, chen2020wavegrad, popov2021grad, liu2022diffsinger} introduced versatile diffusion models capable of synthesizing high-fidelity audio, marking considerable progress in the quality of speech and music generation. Additionally, considerable research has concentrated on text-to-video generation, aiming to create long videos while maintaining high visual quality and adherence to the user's prompt \citep{blattmann2023stable, blattmann2023align, wu2023tune, chen2024videocrafter2, wang2024videocomposer, videoworldsimulators2024}. Despite these models being tailored for various tasks, they typically share two common features. Firstly, they utilize an encoder-decoder architecture to map high-dimensional original inputs to a low-dimensional latent space, where the SDE/ODE-based generative process takes place. Secondly, they employ transformers as the backbone architecture.

Although some analyses have attempted to explain the success of SDE/ODE-based generative models, these analyses either involve technical and unverifiable assumptions or do not align with the models actually used in practice. In a series of studies \citep{lee2022convergence, lee2023convergence, de2022convergence, chen2022sampling, chen2023improved, benton2023linear, conforti2023score}, researchers systematically examined the sampling errors of diffusion models across various target distributions and have determined the optimal sampling error order. Their analysis assumes that the velocity field or drift term in diffusion models has been well-trained, without considering the training process and model selection, thus not providing an end-to-end analysis. It should be noted that end-to-end error analysis is rarely observed even in the domain of general ODE/SDE generative methods. To our knowledge, \cite{wang2021deep} first proved the consistency of the Schr\"odinger Bridge approach through an end-to-end analysis. \citet{oko2023diffusion} proved that in an SDE-based generative model, when the true density function has certain regularities and the empirical score matching loss is properly minimized, the generated data distribution achieves nearly minimax optimal estimation rates in total variation distance and Wasserstein-1 distance. \citet{tang2024adaptivity} further extended the analysis to the intrinsic manifold assumption. \cite{chen2023score} considered a special case in which the encoder and decoder are linear models. \citet{chang2024deep} developed an ODE-based framework and derived a non-asymptotic convergence rate in the Wasserstein-2 distance. However, these analyses do not consider the transformer architecture or incorporate pre-training, which are commonly used in practical implementations, leaving a gap in explaining the success of SDE/ODE-based generative models.

In this paper, we mathematically prove that the distribution of the samples generated via ODE flow converges to the target distribution in the Wasserstein-2 distance under mild and practical assumptions, providing the first comprehensive end-to-end error analysis that considers the transformer architecture and allows for domain shift in pre-training.

\subsection{Our main contributions}
Our main contributions are summarized as follows.
\begin{itemize}
    \item We establish approximation guarantees for transformer networks subject to Lipschitz continuity constraints, which may be of independent interest. (Theorem \ref{theorem: app 3} and \ref{corollary: app 1}). Specifically, we prove that the transformer network can approximate any function, with the Lipschitz continuity of the network remaining independent of the approximation error. Under the assumption that the target distribution has bounded support, we show that the ground truth velocity field is a smooth function, allowing it to be sufficiently approximated by a properly chosen transformer network. 

    \item We establish statistical guarantees for pre-training using the learned encoder and decoder network (Lemma \ref{lemma: ae rate}). Choosing transformer networks as our encoder and decoder, we show that the excessive risk of reconstruction loss converges at a rate of $\widetilde{\mathcal{O}} (m^{-\frac{1}{D+2}})$, where $m$ is the pre-training sample size, only under the assumptions that the pre-trained data distribution has bounded support and that there exist smooth functions minimizing the reconstruction loss.

    \item We establish estimation guarantees for the target distribution using the estimated velocity field (Theorem \ref{theorem: main result}). By choosing proper discretization step size and early stopping time for generating samples, we prove that $\mathbb{E}_{\mathcal{Y}, \mathcal{X}} [W_2(\widehat{\gamma}_T, \gamma_1)] = \mathcal{O} (\sqrt{\varepsilon_{\widetilde{\gamma}_1}} + \varepsilon_{\widetilde{\gamma}_1,  \gamma_1})$, where $\widehat{\gamma}_T$ is the generated data distribution, $\gamma_1$ is the target distribution, $\varepsilon_{\widetilde{\gamma}_1, \gamma_1}$ denotes the domain shift between the target distribution and the pre-trained data distribution, and $\varepsilon_{\widetilde{\gamma}_1}$ is the minimum reconstruction loss achievable by the encoder-decoder architecture. Specifically, if there is no domain shift and the encoder-decoder architecture can perfectly reconstruct the distribution, our results show that the generated data distribution converges to the target distribution in Wasserstein-2 distance.
\end{itemize}

\subsection{Organization}
The rest of the paper is organized as follows. In Section \ref{sec: preliminaries}, we provide notations and introduce key concepts. In Section \ref{sec: approximation}, we show that the true velocity field can be well approximated by a Lipschitz transformer network. In Section \ref{sec: generalization and sampling}, we show that the true velocity field can be efficiently estimated, and analyze the error of distribution recovery using the estimated velocity field. Finally, in Section \ref{sec: end-to-end error}, we analyze the error introduced by the pre-trained autoencoder.

\begin{figure*}[t!]
    \vspace{-0.4cm}
    \centering
    \hspace*{-0.2cm}
    \begin{tikzpicture}
        \node at (0,0) {\includegraphics[width=\linewidth]{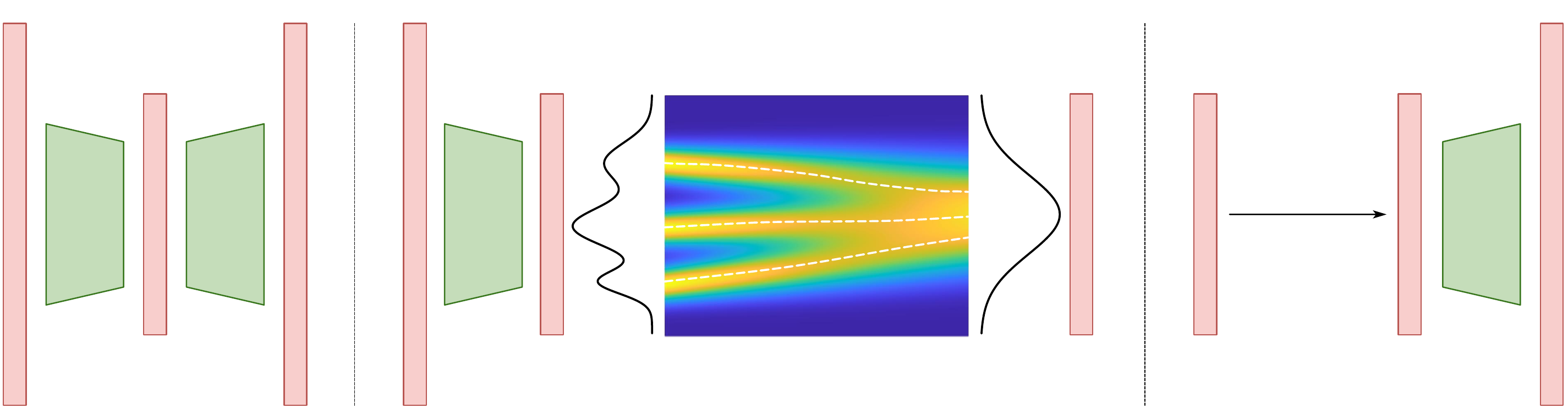}};
        \node at (-6.8,-0.05) {\footnotesize $\widehat{\boldsymbol{E}}$};
        \node at (-5.44,-0.05) {\footnotesize $\widehat{\boldsymbol{D}}$};
        \node at (-2.93,-0.05) {\footnotesize $\widehat{\boldsymbol{E}}$};
        \node at (6.75,-0.05) {\footnotesize $\widehat{\boldsymbol{D}}$};
        \node at (5.08,0.4) {\footnotesize Euler};
        \node at (5.08,0.1) {\footnotesize method};
        \node at (-6.12,-2.4) {\footnotesize Pre-training};
        \node at (-0.25,-2.4) {\footnotesize Flow matching};
        \node at (5.57,-2.4) {\footnotesize Sampling};
    \end{tikzpicture}
    \vspace*{-0.7cm}
    \caption{An illustration of our framework. Pre-training: Based on $m$ samples $\mathcal{Y} = \{\boldsymbol{y}_i\}_{i=1}^m$ drawn i.i.d. from pre-trained data distribution $\widetilde{\gamma}_1$, we minimize the empirical reconstruction loss to obtain an encoder $\widehat{\boldsymbol{E}}: [0,1]^D \rightarrow [0,1]^d$ and the corresponding decoder $\widehat{\boldsymbol{D}}: [0,1]^d \rightarrow \mathbb{R}^D$. These will serve as the bridge linking the high-dimensional input space and the low-dimensional latent space. 
    Flow matching: For the target distribution $\gamma_1$ and $n$ samples $\mathcal{X} = \{\boldsymbol{x}_i\}_{i=1}^n$ drawn from it, the encoder $\widehat{\boldsymbol{E}}$ maps them to the latent space with $\pi_1 = \widehat{\boldsymbol{E}}_{\#} \gamma_1$ and $\widehat{\boldsymbol{E}}(\mathcal{X}) = \{\widehat{\boldsymbol{E}}(\boldsymbol{x}_i)\}_{i=1}^n$. Flow matching is then applied within the latent space, where a transformer network is trained to predict the velocity field of the transformation from a standard normal distribution $\pi_0 = \mathcal{N}(0, I_d)$ to the target latent distribution $\pi_1$. 
    Sampling: Given the estimated velocity field, we can generate samples from an approximation of the continuous flow ODE starting from the prior distribution $\pi_0$. The generated latent data distribution $\widehat{\pi}_T$ will be mapped back to the high-dimensional space by the decoder $\widehat{\boldsymbol{D}}$, resulting in the generated data distribution $\widehat{\gamma}_T = \widehat{\boldsymbol{D}}_{\#} \widehat{\pi}_T$.}
    \vspace*{-0.0cm}
    \label{fig: frame}
\end{figure*}

\section{Preliminaries}
\label{sec: preliminaries}
In this section, we introduce the notations used throughout this paper. Additionally, we provide details about transformer networks, pre-training, and flow matching.

\noindent \textbf{Notations.}
Here we summarize the notations. Given a real number $\alpha$, we denote $\lfloor\alpha\rfloor$ as the largest integer smaller than $\alpha$ (in particular, if $\alpha$ is an integer, $\lfloor\alpha\rfloor=\alpha-1)$. For a vector $\boldsymbol{x} \in \mathbb{R}^d$, we denote its $\ell^2$-norm by $\|\boldsymbol{x}\|$, the $\ell^{\infty}$-norm by $\|\boldsymbol{x}\|_{\infty}=\max_i |x_i|$. We define $\boldsymbol{x}^{\otimes 2}:=\boldsymbol{x} \boldsymbol{x}^\top$. We define the operator norm of a matrix $A$ as $\|A\|_{\text{op }}:=\sup_{\|\boldsymbol{x}\| \leq 1} \|A \boldsymbol{x}\|$. For two matrices $A, B \in \mathbb{R}^{d \times d}$, we say $A \preceq B$ if $B-A$ is positive semi-definite. We denote the identity matrix in $\mathbb{R}^{d \times d}$ by $I_d$. For a twice continuously differentiable function $f: \mathbb{R}^d \rightarrow \mathbb{R}$, let $\nabla f, \nabla^2 f$, and $\Delta f$ denote its gradient, Hessian, and Laplacian, respectively. For a probability density function $\pi$ and a measurable function $f: \mathbb{R}^d \rightarrow \mathbb{R}$, we define the $L^2(\pi)$-norm of $f$ as $\|f\|_{L^2(\pi)}:=(\int(f(\boldsymbol{x}))^2 \pi(\boldsymbol{x}) \mathrm{d} \boldsymbol{x})^{1/2}$. We define $L^{\infty}(K)$-norm as $\|f\|_{L^{\infty}(K)}:=\sup _{\boldsymbol{x} \in K} |f(\boldsymbol{x})|$. The function composition operation is marked as $g \circ f := g(f(x))$ for functions $f$ and $g$. We use the asymptotic notation $f(x)=\mathcal{O}(g(x))$ to denote the statement that $f(x) \leq C g(x)$ for some constant $C>0$ and $\widetilde{\mathcal{O}}(\cdot)$ to ignore the logarithm. For a vector function $\boldsymbol{v}: \mathbb{R}^d \rightarrow \mathbb{R}^{d^\prime}$, we define its $L^2(\pi)$-norm as $\|\boldsymbol{v}\|_{L^2(\pi)}:=\|\| \boldsymbol{v}\|\|_{L^2(\pi)}$ and $L^{\infty}(K)$-norm as $\|\boldsymbol{v}\|_{L^{\infty}(K)}:=\|\| \boldsymbol{v}\|\|_{L^{\infty}(K)}$. For any dataset $\mathcal{D}=\{\boldsymbol{x}_i\}_{i=1}^n$, we define the image of $\mathcal{D}$ under $\boldsymbol{v}$ as $\boldsymbol{v}(\mathcal{D}) := \{\boldsymbol{v}(\boldsymbol{x}_i)\}_{i=1}^n$. Given two distributions $\mu$ and $\nu$, the Wasserstein-2 distance is defined as $W_2(\mu, \nu):=\inf_{\pi \in \Pi(\mu, \nu)} \mathbb{E}_{(x, y) \sim \pi}[\|x-y\|^2]^{1/2}$, where $\Pi(\mu, \nu)$ is the set of all couplings of $\mu$ and $\nu$. A coupling is a joint distribution on $\mathbb{R}^d \times \mathbb{R}^d$ whose marginals are $\mu$ and $\nu$ on first and second factors, respectively. Let $\boldsymbol{f}: \mathbb{R}^d \rightarrow \mathbb{R}^{d^\prime}$ be a measurable mapping and $\mu$ be a probability measure on $\mathbb{R}^d$. The push-forward measure $\boldsymbol{f}_{\#} \mu$ of a measurable set $K$ is defined as $\boldsymbol{f}_{\#} \mu := \mu(\boldsymbol{f}^{-1}(K))$. In neural networks, the Rectified Linear Unit (ReLU) activation function is denoted by $\sigma(x) = \max\{x, 0\}$ and is applied element-wise to vectors or matrices. We define the hardmax operator as $\sigma_H (\boldsymbol{x}) := \lim_{c\rightarrow +\infty} \exp(c\boldsymbol{x})/\|\exp(c\boldsymbol{x})\|_1$, where the operation is performed column-wise if the input to $\sigma_H$ is a matrix. The Hadamard product $\odot$ refers to the element-wise multiplication of two vectors or matrices of the same dimensions.

\subsection{Transformer networks}
In the last few years, academic inquiry has concentrated on the approximation power and generalization capability of ReLU neural networks \citep{yarotsky2017error, suzuki2018adaptivity, bartlett2019nearly, yarotsky2020phase, schmidt2020nonparametric, lu2021deep, shen2022optimal}. These networks become the preferred choice for theoretical analysis and are able to achieve the minimax optimal rate in many problems \citep{huang2022error, duan2022convergence, jiao2023deep, oko2023diffusion, liu2024deep}. In contrast, the theoretical understanding of transformer networks remains limited, despite their resounding success in practical applications. \citet{gurevych2022rate} recently provided a framework to study the approximation properties and generalization abilities of transformer networks. We adopt their framework and extend it by incorporating control over the regularity of the neural network functions.

Given $d, d^\prime \in \mathbb{N}$, we define a transformer network $\boldsymbol{\phi}: \mathbb{R}^{d} \rightarrow \mathbb{R}^{d^\prime}$ as follows:
\begin{align}
\label{eq: app 10}
\boldsymbol{\phi}=E_{out} \circ F_N^{(FF)} \circ F_N^{(SA)} \circ \cdots \circ F_1^{(FF)} \circ F_1^{(SA)} \circ E_{in} \circ P.
\end{align}

The first layer of the transformer network $P$, known as "patchify", divides the spatial input into patches. Namely, an input $\boldsymbol{x}$ of dimension $d$ is transformed into a sequence $X$ of $l$ tokens, where each token has a dimension of $d_{patch}$. These tokens are explicitly selected from components of the input, thus this layer does not require training. For simplicity, we assume $d = d_{patch} \times l$.

The input embedding layer $E_{in}: \mathbb{R}^{(d_{patch}+l) \times l} \rightarrow \mathbb{R}^{d_{model} \times l}$, incorporating position encoding, is a token-wise linear mapping:
\begin{align}
\label{eq: app 1}
Z_0 =E_{in}\left(\text{Concat}\left(
\begin{array}{c}
X \\
\mathbb{I}_l
\end{array} \right)\right)= A_{in} \left(
\begin{array}{c}
X \\
\mathbb{I}_l
\end{array} \right)
+\boldsymbol{b}_{in} \mathbbm{1}_l^{\top}
\end{align}
where $A_{in} \in \mathbb{R}^{d_{model} \times (d_{patch}+l)}$ and $\boldsymbol{b}_{in} \in \mathbb{R}^{d_{model}}$ represent the weight matrix and bias vector of the embedding layer, and $\mathbbm{1}_l$ denotes a vector of $l$ components, each of which is 1.

The multi-head attention layer $F^{(SA)}:\mathbb{R}^{d_{model} \times l} \rightarrow \mathbb{R}^{d_{model} \times l}$ represents the interaction among tokens:
\begin{align}
\label{eq: def sa}
F^{(SA)}(Z) = Z + \sum_{s=1}^h W_{O,s}(W_{V,s} Z)  \left[\left((W_{K,s} Z)^{\top}(W_{Q,s} Z)\right) \odot \sigma_H\left((W_{K,s} Z)^{\top}(W_{Q,s} Z)\right)\right]
\end{align}
where $h \in \mathbb{N}$ is the number of heads which we compute in parallel, $d_k \in \mathbb{N}$ is the dimension of the queries and keys, $d_v \in \mathbb{N}$ is the dimension of the values, $d_{model} = h \cdot d_v$, $W_{K,s}, W_{Q,s} \in \mathbb{R}^{d_k \times d_{model}}, W_{V,s} \in \mathbb{R}^{d_v \times d_{model}}$ and $W_{O,s} \in \mathbb{R}^{d_{model} \times d_v}$ are the weight matrices, 
and $\sigma_H$ is the hardmax operator. We include a skip-connection in the attention layer.

The token-wise feedforward neural network $F^{(FF)}:\mathbb{R}^{d_{model} \times l} \rightarrow \mathbb{R}^{d_{model} \times l}$ processes each token independently in parallel by applying two feedforward layers:
\begin{align*}
F^{(FF)}(Y) = Y + W_2\sigma (W_1 Y + \boldsymbol{b}_1 \mathbbm{1}_l^{\top}) + \boldsymbol{b}_2\mathbbm{1}_l^{\top}
\end{align*}
where $d_{ff} \in \mathbb{N}$ denotes the hidden layer size of the feedforward layer, $W_1\in \mathbb{R}^{d_{ff} \times d_{model}}, \boldsymbol{b}_1 \in \mathbb{R}^{d_{ff}}, W_2 \in \mathbb{R}^{d_{model} \times d_{ff}}$ and $\boldsymbol{b}_2 \in \mathbb{R}^{d_{model}}$ are parameters, and $\sigma$ is the ReLU activation function. The feedforward layer also includes a skip-connection.

The output embedding $E_{out}: \mathbb{R}^{d_{model} \times l} \rightarrow \mathbb{R}^{d^\prime}$,
\begin{align*}
E_{out}(Z) = A_{out} \boldsymbol{z}_1 + \boldsymbol{b}_{out}
\end{align*}
where $Z = (\boldsymbol{z}_1, \boldsymbol{z}_2, \ldots, \boldsymbol{z}_l)$, and $A_{out} \in \mathbb{R}^{d^\prime \times d_{model}}$ and $\boldsymbol{b}_{out} \in \mathbb{R}^{d^\prime}$ are the weight matrix and bias vector. It is important to highlight that only the first column of $Z$, specifically the first token, is used.

Based on the definitions provided, we configure the transformer networks as follows:
\begin{align}
\label{eq: def transformer}
\begin{aligned}
\mathcal{T}_{d,d^\prime} \left(N, h, d_k, d_v, d_{f f}, B, J, \gamma \right)  
= \bigg\{ & \boldsymbol{\phi}: \mathbb{R}^d \rightarrow \mathbb{R}^{d^\prime} : \boldsymbol{\phi} \text{ in the form of } (\ref{eq: app 10}), 
\sup_{\boldsymbol{x}}\|\boldsymbol{\phi}(\boldsymbol{x})\| \leq B, \\ 
& \left\|\boldsymbol{\phi}\left(\boldsymbol{x}_1\right)-\boldsymbol{\phi}\left(\boldsymbol{x}_2\right)\right\| \leq \gamma \left\|\boldsymbol{x}_1-\boldsymbol{x}_2\right\| \text{ for } \boldsymbol{x}_1, \boldsymbol{x}_2 \in [0,1]^d, \\
& \sum_{r=1}^N  \sum_{s=1}^h \left(\left\|W_{Q, r, s}\right\|_0+ \left\|W_{K, r, s}\right\|_0+\left\|W_{V, r, s}\right\|_0+ 
\left\|W_{O, r, s}\right\|_0\right) 
 \\
& +\sum_{r=1}^N \left(\left\|W_{r, 1}\right\|_0+\left\|\boldsymbol{b}_{r, 1}\right\|_0+\left\|W_{r, 2}\right\|_0+\left\|\boldsymbol{b}_{r, 2}\right\|_0\right) \\
& + \|A_{in}\|_0+\|\boldsymbol{b}_{in}\|_0 +  \|A_{out}\|_0+\|\boldsymbol{b}_{out}\|_0 \leq J \bigg\},
\end{aligned}
\end{align}
where $\|\cdot\|_0$ denotes the number of nonzero entries. In the absence of confusion, we write the defined transformer network class as $\mathcal{T}_{d, d^\prime}$ for brevity. In later sections, we will take networks based on (\ref{eq: def transformer}) with appropriate configuration parameters.

\begin{remark}
Compared to the classical transformer architecture \citep{vaswani2017attention, yun2019transformers, dosovitskiy2020image}, our transformer networks have a similar structure, with differences primarily in the multi-head attention layer. We would like to point out that our definition of the multi-head attention layer (\ref{eq: def sa}) is an equivalent reformulation of \citet{gurevych2022rate} and \citet{kohler2023rate}, where they concatenate attention heads. While there are similarities in framework, our approach and focus differ from the aforementioned work. We have incorporated control over the regularity of functions, which is necessary for subsequent analyses in flow matching and autoencoders. Some research concerns the properties of hard attention, which involves replacing the softmax function in the standard attention layer with a hardmax \citep{perez2021attention, hao2022formal}. Our attention layer, in comparison to hard attention, possesses optimization advantages, owing to the continuity and almost everywhere differentiability of the function $\boldsymbol{x} \odot \sigma_H(\boldsymbol{x})$.
\end{remark}

To measure the complexity of transformer network class from a learning theory perspective, we introduce the following notions for a real-valued function class.

\begin{definition}[Pseudo-dimension]
Let $\mathcal{H}$ be a class of real-valued functions defined on $\Omega$. The pseudo-dimension of $\mathcal{H}$, denoted by $\operatorname{Pdim}(\mathcal{H})$, is the largest integer $N$ for which there exist points $x_1, \ldots, x_N \in \Omega$ and constants $y_1, \ldots, y_N \in \mathbb{R}$ such that
\begin{align*}
|\{\operatorname{sgn}(h(x_1) - y_1), \ldots, \operatorname{sgn}(h(x_N)-y_N): h \in \mathcal{H}\}|=2^N.
\end{align*}
\end{definition}

\begin{definition}[Covering number]
Let $\rho$ be a pseudo-metric on $\mathcal{M}$ and $S \subseteq \mathcal{M}$. For any $\delta>0$, a set $A \subseteq \mathcal{M}$ is called a $\delta$-covering of $S$ if for any $x \in S$ there exists $y \in A$ such that $\rho(x, y) \leq \delta$. The $\delta$-covering number of $S$, denoted by $\mathcal{N}(\delta, S, \rho)$, is the minimum cardinality of any $\delta$-covering of $S$.
\end{definition}

Next, we introduce the notion of regularity for a function. For a multi-index $\boldsymbol{\alpha} = (\alpha_1, \ldots, \alpha_d)$, the monomial on $\boldsymbol{x} = (x_1, \ldots, x_d)$ is denoted by $\boldsymbol{x}^{\boldsymbol{\alpha}} := x_1^{\alpha_1} \cdots x_d^{\alpha_d}$, the $\boldsymbol{\alpha}$-derivative of a real-valued function $\phi$ is denoted by $\partial^{\boldsymbol{\alpha}} \phi := \partial^{\|\boldsymbol{\alpha}\|_1} \phi / \partial x_1^{\alpha_1} \cdots \partial x_d^{\alpha_d}$ with $\|\boldsymbol{\alpha}\|_1 = \sum_{i=1}^d \alpha_i$ as the usual 1-norm for vectors. We use the convention that $\partial^{\boldsymbol{\alpha}} \phi := \phi$ if $\|\boldsymbol{\alpha}\|_1=0$.

\begin{definition}[Lipschitz functions]
Let $\Omega \subseteq \mathbb{R}^d$ and $\boldsymbol{\phi}: \Omega \rightarrow \mathbb{R}^{d^\prime}$, the Lipschitz constant of $\boldsymbol{\phi}$ is denoted by
\begin{align*}
\operatorname{Lip} (\boldsymbol{\phi}) := \sup_{\boldsymbol{x}, \boldsymbol{y} \in \Omega, \boldsymbol{x} \neq \boldsymbol{y}} \frac{\|\boldsymbol{\phi}(\boldsymbol{x}) - \boldsymbol{\phi}(\boldsymbol{y})\|}{\|\boldsymbol{x} - \boldsymbol{y}\|}.
\end{align*}
\end{definition}

\begin{definition}[H\"older classes]
Let $\Omega \subseteq \mathbb{R}^d$ and $\beta > 0$. A function is said to possess $\beta$-H\"{o}lder smoothness if all its partial derivatives up to order $\lfloor\beta\rfloor$ exist and are bounded, and the partial derivatives of order $\lfloor\beta\rfloor$ are $\beta-\lfloor\beta\rfloor$ H\"{o}lder. For $d,d^\prime \in \mathbb{N}$, the H\"{o}lder class with smoothness index $\beta$ and norm constraint parameter $K$ is then defined as
\begin{align*}
\begin{aligned}
& \mathcal{H}_{d,d^\prime}^\beta (\Omega, K) = \Bigg\{ \boldsymbol{f} = (f_1, \ldots, f_{d^\prime})^\top: \Omega \rightarrow \mathbb{R}^{d^\prime}, \\ 
& \sum_{\boldsymbol{n}: \|\boldsymbol{n}\|_1 < \beta} \|\partial^{\boldsymbol{n}} f_k\|_{L^\infty(\Omega)} + \sum_{\boldsymbol{n}: \|\boldsymbol{n}\|_1 = \lfloor\beta\rfloor} \sup_{\boldsymbol{x}, \boldsymbol{y} \in \Omega,
\boldsymbol{x} \neq \boldsymbol{y}} \frac{|\partial^{\boldsymbol{n}} f_k(\boldsymbol{x})-\partial^{\boldsymbol{n}} f_k(\boldsymbol{y})|}{\|\boldsymbol{x} - \boldsymbol{y}\|^{\beta-\lfloor\beta\rfloor}} \leq K, \quad k=1, \ldots, d^\prime \Bigg\}.
\end{aligned}
\end{align*}
\end{definition}

\begin{definition}[Differentiability classes]
Let $\Omega \subseteq \mathbb{R}^d$ and $m \in \mathbb{N}$. For $d,d^\prime \in \mathbb{N}$, the differentiability class with smoothness index $m$ and norm constraint parameter $K$ is defined as
\begin{align*}
\begin{aligned}
\mathcal{C}_{d,d^\prime}^m(\Omega, K) = \Bigg\{ \boldsymbol{f} = (f_1, \ldots, f_{d^\prime})^\top: \Omega \rightarrow \mathbb{R}^{d^\prime},  \sum_{\boldsymbol{n}: \|\boldsymbol{n}\|_1 \leq m} \|\partial^{\boldsymbol{n}} f_k\|_{L^\infty(\Omega)} \leq K, \quad k=1, \ldots, d^\prime \Bigg\}.
\end{aligned}
\end{align*}
\end{definition}

\subsection{Pre-training}
For any measurable functions $\boldsymbol{E}: \mathbb{R}^D \rightarrow \mathbb{R}^d$ and $\boldsymbol{D}: \mathbb{R}^d \rightarrow \mathbb{R}^D$, we minimize the reconstruction loss w.r.t. the pre-trained data distribution $\widetilde{\gamma}_1$
\begin{align*}
(\boldsymbol{D}^*, \boldsymbol{E}^*) \in \argmin{\boldsymbol{D}, \boldsymbol{E} \text{ measurable}} \mathcal{R} (\boldsymbol{D}, \boldsymbol{E}) := \int_{\mathbb{R}^d} \| (\boldsymbol{D}\circ\boldsymbol{E})(\boldsymbol{y}) - \boldsymbol{y} \|^2 \mathrm{d} \widetilde{\gamma}_1.
\end{align*}
Our analysis on pre-training requires the following assumptions.

\begin{assumption}[Bounded support]
\label{ass: bounded support gammahat}
The pre-trained data distribution $\widetilde{\gamma}_1$ is supported on $[0,1]^D$.
\end{assumption}

\begin{assumption}[Compressibility]
\label{ass: compressibility}
There exist continuously differentiable functions $\boldsymbol{E}^*: [0,1]^D \rightarrow [0,1]^d$ and $\boldsymbol{D}^*: [0,1]^d \rightarrow \mathbb{R}^D$ such that $\mathcal{R} (\boldsymbol{D}, \boldsymbol{E})$ attains its minimum. The minimum value is denoted by $\varepsilon_{\widetilde{\gamma}_1} := \mathcal{R} (\boldsymbol{D}^*, \boldsymbol{E}^*)$. Furthermore, $\boldsymbol{E}^* \in \mathcal{C}^1_{D,d} ([0,1]^D, K_{\boldsymbol{E}}), \boldsymbol{D}^* \in \mathcal{C}^1_{d,D} ([0,1]^d, K_{\boldsymbol{D}})$.
\end{assumption}

\begin{remark}
The restriction on the range of the image of $\boldsymbol{E}^*$ in Assumption \ref{ass: compressibility} is not essential. Since $\boldsymbol{E}^*$ is a continuous function, there exists a constant $R>0$ such that $\boldsymbol{E}^*([0,1]^D) \subseteq [-R, R]^d$. Let $\widetilde{\boldsymbol{E}}^*(\boldsymbol{y}) := \frac{1}{2R} \boldsymbol{E}^*(\boldsymbol{y}) + \frac{1}{2} \mathbbm{1}_d$ and $\widetilde{\boldsymbol{D}}^*(\boldsymbol{y}) := \boldsymbol{D}^* (2R\boldsymbol{y} - R\mathbbm{1}_d)$. Then $\mathcal{R} (\boldsymbol{D}, \boldsymbol{E}) = \mathcal{R} (\widetilde{\boldsymbol{D}}^*, \widetilde{\boldsymbol{E}}^*)$ and $\widetilde{\boldsymbol{E}}^*([0,1]^D) \subseteq [0,1]^d$. Therefore, it is permissible to assume, without loss of generality, $\boldsymbol{E}^*([0,1]^D) \subseteq [0,1]^d$ and $\boldsymbol{D}^*: [0,1]^d \rightarrow \mathbb{R}^D$.
\end{remark}

\begin{assumption}[Bounded support]
\label{ass: bounded support gamma}
The target distribution $\gamma_1$ is supported on $[0,1]^D$.
\end{assumption}

We denote the domain shift between the pre-trained data distribution $\widetilde{\gamma}_1$ and the target distribution $\gamma_1$ in Wasserstein-2 distance as $\varepsilon_{\widetilde{\gamma}_1, \gamma_1} := W_2 (\widetilde{\gamma}_1, \gamma_1)$.  In our analysis, $\boldsymbol{E}^*$ and $\boldsymbol{D}^*$ are approximated by neural networks. By constraining the chosen encoder network $\widehat{\boldsymbol{E}}: [0,1]^D \rightarrow [0,1]^d$, Assumption \ref{ass: bounded support gamma} implies that the latent target distribution $\pi_1 := \widehat{\boldsymbol{E}}_{\#} \gamma_1$ is supported on $[0,1]^d$.

\subsection{Flow matching}
Given independent empirical observations of $X_0 \sim \pi_0$ and $X_1 \sim \pi_1$, we want to find an ODE on time $t \in [0,1]$,
\begin{align*}
\mathrm{d} Z_t = \boldsymbol{v}\left(Z_t, t\right) \mathrm{d} t,
\end{align*}
which converts $Z_0$ from $\pi_0$ to $Z_1$ following $\pi_1$. A line of research points out that the vector field can be found by solving a least square regression problem
\begin{align}
\label{eq: fm 1}
\min_{\boldsymbol{v}} \int_0^1 \mathbb{E}_{X_0, X_1}\left[\left\|\left(X_1-\frac{t}{\sqrt{1-t^2}} X_0\right)-\boldsymbol{v}\left(X_t, t\right)\right\|^2\right] \mathrm{d} t
\end{align}
with $X_t=t X_1+\sqrt{1-t^2} X_0$, where $X_0 \sim \pi_0, X_1 \sim \pi_1$, and $X_t$ is the interpolation between $X_0$ and $X_1$. The exact minimum of (\ref{eq: fm 1}) is achieved by
\begin{align}
\label{eq:gen 13}
\boldsymbol{v}^*(\boldsymbol{x}, t)=\mathbb{E}_{X_0, X_1}\left[X_1-\frac{t}{\sqrt{1-t^2}}X_0 \bigg| X_t=\boldsymbol{x}\right].
\end{align}

In practice, the velocity field $\boldsymbol{v}^*$ is approximated by neural networks. To avoid instability, we often clip the interval $[0,1]$ with $T$. Namely, we consider the truncated loss function
\begin{align}
\label{eq: fm 2}
\min_{\boldsymbol{v}} \mathcal{L}(\boldsymbol{v}) := \frac{1}{T} \int_0^T \mathbb{E}_{X_0, X_1}\left[\left\|\left(X_1-\frac{t}{\sqrt{1-t^2}} X_0\right)-\boldsymbol{v}\left(X_t, t\right)\right\|^2\right] \mathrm{d} t
\end{align}
with $X_t=t X_1+\sqrt{1-t^2} X_0$.

\section{Approximation}
\label{sec: approximation}
This section examines the approximation error involved in estimating the true velocity field. To begin with, we explore the approximation capabilities of transformer networks with constrained Lipschitz constants, as detailed in the following theorems.

\begin{theorem}
\label{theorem: app 3}
Let $0<\varepsilon<1$ and $\beta>0$. For any function $\boldsymbol{f} \in \mathcal{H}_{d,d^\prime}^\beta([0, 1]^{d}, K)$, there exists a transformer network $\boldsymbol{\phi} \in \mathcal{T}_{d, d^{\prime}}\left(N, h, d_k, d_v, d_{ff}, B, J, \gamma\right)$, where 
\begin{align*}
\begin{gathered}
N = \mathcal{O} \left( \log \left(\frac{K}{\varepsilon}\right)\right), \quad
h = \mathcal{O} \left(  \left( \frac{K}{\varepsilon} \right)^{d/\beta} \right), \quad
d_{ff} = 8 h, \quad 
d_k = \mathcal{O}(1), \quad d_v = \mathcal{O}(1)\\
B = \mathcal{O} \left(\|\boldsymbol{f}\|_{L^{\infty}([0, 1]^d)}\right), \quad J = \mathcal{O} \left(\left( \frac{K}{\varepsilon} \right)^{d/\beta} \log \left(\frac{K}{\varepsilon}\right) \right), 
\end{gathered}
\end{align*}
such that
\begin{align*}
\| \boldsymbol{\phi}(\boldsymbol{x}) - \boldsymbol{f}(\boldsymbol{x}) \|_{L^{\infty}([0, 1]^d)} \leq \varepsilon.
\end{align*}
Furthermore, if $\beta>1$, we may choose 
\begin{align*}
\gamma = \mathcal{O} (K).
\end{align*}
\end{theorem}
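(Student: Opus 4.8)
The plan is to build the approximating transformer by a two-stage strategy: first approximate $\boldsymbol{f}$ by a classical object whose structure is easy to realize with attention and token-wise feedforward blocks, then convert that construction into the transformer form $(\ref{eq: app 10})$ while carefully tracking the Lipschitz constant. For the first stage I would use a localized Taylor / partition-of-unity expansion of the H\"older function: cover $[0,1]^d$ by $\mathcal{O}((K/\varepsilon)^{d/\beta})$ small cubes, on each cube replace $\boldsymbol{f}$ by its degree-$\lfloor\beta\rfloor$ Taylor polynomial at the cube center (incurring error $\mathcal{O}(\varepsilon)$ by the H\"older bound), and glue the local polynomials with a smooth bump-function partition of unity. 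This is the standard Yarotsky-type scheme, and the number of pieces is exactly the claimed head count $h = \mathcal{O}((K/\varepsilon)^{d/\beta})$, since one attention head will be dedicated to selecting/weighting one local piece.

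The second stage is to implement this sum-of-local-polynomials with the specific layers allowed here. The key mechanism is that the attention block $(\ref{eq: def sa})$, via the hardmax operator $\sigma_H$ acting on $(W_K Z)^\top(W_Q Z)$, can perform a selection: by choosing the query/key weights so that the inner-product matrix is maximized at the token encoding "the cube containing $\boldsymbol{x}$", the term $\boldsymbol{x}\odot\sigma_H(\boldsymbol{x})$ isolates a single column, effectively routing the input to the right local chart. The token-wise feedforward blocks $F^{(FF)}$, being ReLU networks with skip connections, then approximate the multiplication gates and the monomials $\boldsymbol{x}^{\boldsymbol{\alpha}}$ needed to evaluate the Taylor polynomials, using the standard ReLU approximation of $x\mapsto x^2$ and the polarization identity for products; this is where the depth $N = \mathcal{O}(\log(K/\varepsilon))$ enters, since approximating a product to accuracy $\varepsilon$ needs $\mathcal{O}(\log(1/\varepsilon))$ layers, and where $d_{ff}=8h$, $d_k=d_v=\mathcal{O}(1)$, and the sparsity budget $J = \mathcal{O}((K/\varepsilon)^{d/\beta}\log(K/\varepsilon))$ come from counting nonzero weights across the $\mathcal{O}(\log(K/\varepsilon))$ blocks each touching $\mathcal{O}(h)$ coordinates. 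The uniform bound $B = \mathcal{O}(\|\boldsymbol{f}\|_{L^\infty})$ follows by additionally clipping the output with two ReLUs. Much of this first part presumably reuses the construction behind the (unstated) earlier approximation theorems in the paper; I would cite those and focus on the novel Lipschitz bookkeeping.

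The heart of the proof — and the main obstacle — is the claim that when $\beta>1$ we can take $\operatorname{Lip}(\boldsymbol{\phi}) = \mathcal{O}(K)$, \emph{independent of $\varepsilon$}. The naive bound on the Lipschitz constant of the above construction blows up with the number of pieces (each local chart has bounded slope, but the selection/bump functions have slopes that scale with the inverse cube width, i.e. with $(K/\varepsilon)^{1/\beta}$), so a direct estimate fails. The trick I would pursue: since $\boldsymbol{f}$ itself is $\mathcal{O}(K)$-Lipschitz on $[0,1]^d$ when $\beta>1$ (its first derivatives are bounded by $K$), it suffices to compose the constructed network with a Lipschitz "smoothing"/clipping post-processing, OR — more robustly — to modify the construction so that the partition-of-unity weights vary slowly. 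Concretely, one uses overlapping cubes with overlap comparable to the cube side length and a Lipschitz (not merely bounded) partition of unity normalized so that $\sum$ of the bumps $\equiv 1$; then on the overlap region the output interpolates between Taylor polynomials of \emph{neighboring} centers, which differ by $\mathcal{O}(K\cdot(\text{side length}))$, so the apparently large derivative of the bump functions is multiplied by this small quantity and the net Lipschitz constant telescopes to $\mathcal{O}(K)$. Realizing a Lipschitz partition of unity (rather than a hardmax "hard" selection) inside the attention layer is the delicate point: I would replace the pure hardmax routing by combining a few hardmax heads whose outputs, after the feedforward gates, reconstruct a piecewise-linear approximate partition of unity, and then verify the telescoping estimate head-by-head. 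The condition $\beta>1$ is exactly what makes the neighboring-polynomial difference $\mathcal{O}(K\cdot\text{side})$ rather than $\mathcal{O}(K)$, which is why the clean Lipschitz bound is only claimed in that regime.
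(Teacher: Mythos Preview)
Your proposal follows the Yarotsky-type localized Taylor/partition-of-unity route, which is \emph{not} what the paper does, and the mismatch matters for the Lipschitz claim.

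The paper's construction is conceptually simpler. It first invokes a multivariate \emph{simultaneous} polynomial approximation result (Bagby--Bos--Levenberg): for $f\in\mathcal{H}^\beta_{d,1}([0,1]^d,K)$ there is a \emph{single global} polynomial $P^\beta_{M_\varepsilon}f$ of degree $M_\varepsilon=\lceil (cK/\varepsilon)^{1/\beta}\rceil$ with $\sup|\partial^{\boldsymbol\alpha}(f-P^\beta_{M_\varepsilon}f)|\le cK/M_\varepsilon^{\beta-\|\boldsymbol\alpha\|_1}$ for every $\|\boldsymbol\alpha\|_1\le\min\{\lfloor\beta\rfloor,M_\varepsilon\}$. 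No partition of unity, no local pieces. The number of monomials in a degree-$M_\varepsilon$ polynomial in $d$ variables is $\binom{d+M_\varepsilon}{d}=\mathcal{O}((K/\varepsilon)^{d/\beta})$, and \emph{each head computes one monomial}: that is where the head count $h$ comes from, not from a cube count. The depth $N=\mathcal{O}(\log(K/\varepsilon))$ is the number of attention/feedforward pairs needed to build a monomial of degree $M_\varepsilon$ via repeated squaring.

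The second point you miss is architectural: in this paper the attention layer computes products \emph{exactly}, not approximately. Because the attention map is $((W_KZ)^\top(W_QZ))\odot\sigma_H((W_KZ)^\top(W_QZ))$, the output is literally the maximal inner product, i.e.\ a bilinear quantity in the entries of $Z$; with suitable $W_Q,W_K$ this yields $a\cdot b$ or $a^2$ with no error. The feedforward ReLU blocks are used only for bookkeeping (moving entries around, subtracting constants), never for the sawtooth approximation of $x^2$. So the transformer realizes $P^\beta_{M_\varepsilon}\boldsymbol f$ \emph{identically} on $[0,1]^d$.

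With these two ingredients the Lipschitz bound is immediate and avoids the obstacle you flagged: since $\boldsymbol\phi=P^\beta_{M_\varepsilon}\boldsymbol f$ exactly, and when $\beta>1$ the simultaneous approximation gives $|\partial_{x_k}(P^\beta_{M_\varepsilon}f)|\le|\partial_{x_k}f|+cK/M_\varepsilon^{\beta-1}\le(1+c)K$, the mean value theorem yields $\operatorname{Lip}(\boldsymbol\phi)=\mathcal{O}(K)$ with no telescoping and no overlap engineering. Your proposed fix---realizing a Lipschitz partition of unity by combining several hardmax heads---is not obviously implementable in this architecture (hardmax is discontinuous as a set-valued map and the composite $x\odot\sigma_H(x)$ is only piecewise linear), and you correctly identify it as the ``delicate point''; the paper sidesteps it entirely by approximating derivatives along with the function.
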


\begin{theorem}
\label{corollary: app 1}
Let $0<\varepsilon<1$ and $m\in\mathbb{N}$. For any function $\boldsymbol{f} \in \mathcal{C}_{d,d^\prime}^m([0, 1]^{d}, K)$, there exists a transformer network $\boldsymbol{\phi} \in \mathcal{T}_{d, d^{\prime}}\left(N, h, d_k, d_v, d_{ff}, B, J, \gamma\right)$, where 
\begin{align*}
\begin{gathered}
N = \mathcal{O} \left( \log \left(\frac{K}{\varepsilon}\right)\right), \quad
h = \mathcal{O} \left(  \left( \frac{K}{\varepsilon} \right)^{d/m} \right), \quad
d_{ff} = 8 h, \quad 
d_k = \mathcal{O}(1), \quad d_v = \mathcal{O}(1) \\
B = \mathcal{O} \left(\|\boldsymbol{f}\|_{L^{\infty}([0, 1]^d)}\right), \quad J = \mathcal{O} \left(\left( \frac{K}{\varepsilon} \right)^{d/m} \log \left(\frac{K}{\varepsilon}\right) \right), 
\end{gathered}
\end{align*}
such that
\begin{align*}
\left\| \boldsymbol{\phi}(\boldsymbol{x}) - \boldsymbol{f}(\boldsymbol{x}) \right\|_{L^{\infty}([0, 1]^d)} \leq \varepsilon.
\end{align*}
Furthermore, if $m\geq 1$, we may choose 
\begin{align*}
\gamma = \mathcal{O} (K).
\end{align*}
\end{theorem}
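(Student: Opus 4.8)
The plan is to obtain Theorem~\ref{corollary: app 1} as a consequence of Theorem~\ref{theorem: app 3} by relating the differentiability class $\mathcal{C}^m$ to the H\"older class $\mathcal{H}^m$. The first step is the elementary inclusion
\[
\mathcal{C}_{d,d'}^m([0,1]^d, K) \;\subseteq\; \mathcal{H}_{d,d'}^m\bigl([0,1]^d,\; c_{d,m} K\bigr),
\]
valid for every integer $m\ge 1$, with $c_{d,m}$ depending only on $d$ and $m$. Writing $\boldsymbol f = (f_1,\dots,f_{d'})$ and recalling that for $\beta = m$ the paper's convention gives $\lfloor\beta\rfloor = m-1$, the only nontrivial point is that each order-$(m-1)$ partial derivative is Lipschitz: for $\|\boldsymbol n\|_1 = m-1$ the function $\partial^{\boldsymbol n} f_k$ is $\mathcal{C}^1$ on the convex set $[0,1]^d$ with $\|\partial^{\boldsymbol n + \boldsymbol e_i} f_k\|_{L^\infty}\le K$ for all $i$, so the mean value theorem yields $|\partial^{\boldsymbol n} f_k(\boldsymbol x)-\partial^{\boldsymbol n} f_k(\boldsymbol y)|\le K\|\boldsymbol x-\boldsymbol y\|$, which is exactly the required $(\beta-\lfloor\beta\rfloor)$-H\"older bound; the lower-order sup norms are already $\le K$. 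Summing over the finitely many multi-indices of order $\le m-1$ produces the constant $c_{d,m}$.

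Second, I would apply Theorem~\ref{theorem: app 3} to $\boldsymbol f$, regarded as a member of $\mathcal{H}_{d,d'}^m([0,1]^d, c_{d,m}K)$, with smoothness index $\beta = m$ and accuracy $\varepsilon$. Since the $\mathcal{O}(\cdot)$ notation swallows the dimensional constant $c_{d,m}$, the resulting transformer has exactly the advertised configuration — $N = \mathcal{O}(\log(K/\varepsilon))$, $h = \mathcal{O}((K/\varepsilon)^{d/m})$, $d_{ff}=8h$, $d_k=d_v=\mathcal{O}(1)$, $B = \mathcal{O}(\|\boldsymbol f\|_{L^\infty([0,1]^d)})$, $J = \mathcal{O}((K/\varepsilon)^{d/m}\log(K/\varepsilon))$ — and approximates $\boldsymbol f$ within $\varepsilon$ in $L^\infty([0,1]^d)$. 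When $m\ge 2$ we have $\beta = m > 1$, so the ``furthermore'' clause of Theorem~\ref{theorem: app 3} immediately gives $\gamma = \mathcal{O}(c_{d,m}K) = \mathcal{O}(K)$, and the theorem is proved for all $m\ge 2$.

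The case $m = 1$ is where the real work lies, and I expect it to be the main obstacle: then $\beta = 1$ is not strictly larger than $1$, so Theorem~\ref{theorem: app 3} gives no Lipschitz control, and one cannot simply raise $\beta$ because a $\mathcal{C}^1$ function need not lie in any $\mathcal{H}^{1+\eta}$. My approach is mollification. After extending $\boldsymbol f$ to a neighbourhood of $[0,1]^d$ by the McShane--Whitney formula (preserving, up to dimensional factors, both $\|\boldsymbol f\|_{L^\infty}$ and $\operatorname{Lip}(\boldsymbol f) = \mathcal{O}(K)$), I convolve with a smooth mollifier at scale $\delta \asymp \varepsilon/K$ to get $\boldsymbol f_\delta \in \mathcal{C}^\infty \subseteq \mathcal{H}^2$; then $\|\boldsymbol f_\delta - \boldsymbol f\|_{L^\infty([0,1]^d)} \le \operatorname{Lip}(\boldsymbol f)\,\delta \le \varepsilon/2$ and $\|\boldsymbol f_\delta\|_{\mathcal{H}^2} = \mathcal{O}(K/\delta) = \mathcal{O}(K^2/\varepsilon)$, while, decisively, $\operatorname{Lip}(\boldsymbol f_\delta) \le \operatorname{Lip}(\boldsymbol f) = \mathcal{O}(K)$ independently of $\delta$. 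Feeding $\boldsymbol f_\delta$ into Theorem~\ref{theorem: app 3} with $\beta = 2$, accuracy $\varepsilon/2$, and H\"older-norm parameter $\mathcal{O}(K^2/\varepsilon)$ reproduces precisely the target sizes — $h = \mathcal{O}((K^2/\varepsilon^2)^{d/2}) = \mathcal{O}((K/\varepsilon)^{d})$, $N = \mathcal{O}(\log(K/\varepsilon))$, $J = \mathcal{O}((K/\varepsilon)^{d}\log(K/\varepsilon))$, $B = \mathcal{O}(\|\boldsymbol f\|_{L^\infty})$ — and a transformer within $\varepsilon$ of $\boldsymbol f$ on $[0,1]^d$. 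The only quantity the plain statement does not deliver is $\gamma$: it would give only $\gamma = \mathcal{O}(K^2/\varepsilon)$. I would close this gap by revisiting the construction behind Theorem~\ref{theorem: app 3} and recording there the sharper bound $\gamma = \mathcal{O}(\|\nabla\boldsymbol f\|_{L^\infty})$ — consistent with the stated $\gamma = \mathcal{O}(K)$ since $\|\nabla\boldsymbol f\|_{L^\infty} \le \mathcal{O}(K)$ on the H\"older class — the point being that higher-order H\"older regularity of the target affects only the number of local polynomial patches (hence $h$, $J$, $N$), whereas the overall slope of the assembled network is controlled by the first-order information $\|\nabla\boldsymbol f_\delta\|_{L^\infty} = \mathcal{O}(\operatorname{Lip}(\boldsymbol f_\delta)) = \mathcal{O}(K)$. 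With that refinement in hand, the $m = 1$ case follows and the theorem is complete.
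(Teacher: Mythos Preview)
Your proposal is correct but takes a more circuitous route than the paper. The paper's proof is one line: rerun the construction behind Theorem~\ref{theorem: app 3} with Lemma~\ref{lemma: app simultaneous approximation} (simultaneous polynomial approximation for H\"older functions) replaced by Lemma~\ref{lemma: app simultaneous approximation 3} (the Bagby--Bos--Levenberg theorem for $\mathcal{C}^m$ functions), which delivers $\sup_{[0,1]^d}|\partial^{\boldsymbol\alpha}(f-p_N)|\le cK/N^{m-\|\boldsymbol\alpha\|_1}$ for all $\|\boldsymbol\alpha\|_1\le m$. The crucial difference is that this covers $\|\boldsymbol\alpha\|_1=1$ even when $m=1$ (whereas the H\"older lemma only reaches order $\lfloor\beta\rfloor=0$ when $\beta=1$), so the Lipschitz estimate $|\partial_i p_N|\le|\partial_i f|+cK=\mathcal{O}(K)$ falls out for every $m\ge1$ uniformly, with no case split and no mollification.

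Your black-box reduction via $\mathcal{C}^m\subseteq\mathcal{H}^m$ for $m\ge2$ is valid and pleasant. For $m=1$ your mollification argument does go through, and your check that $h,N,J,B$ land in the right place is correct. One imprecision worth flagging: the ``sharper bound $\gamma=\mathcal{O}(\|\nabla\boldsymbol f\|_{L^\infty})$'' you hope to extract is not available in that bare form; reopening the proof actually yields $\gamma=\mathcal{O}(\|\nabla\boldsymbol f_\delta\|_{L^\infty}+K'/M_\varepsilon^{\beta-1})$, and it is the additional (true, but not automatic) fact that with your choices the second term is $\asymp(K^2/\varepsilon)/(K/\varepsilon)=K$ which rescues the Lipschitz control---not the heuristic that ``higher-order regularity affects only the patch count''. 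Since you must reopen the construction for $m=1$ anyway, the paper's approach of simply swapping the approximation lemma is strictly more economical: it dispenses with extension and mollification and handles all $m\ge1$ at once.
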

The proof of Theorem \ref{theorem: app 3} and Theorem \ref{corollary: app 1} can be found in Appendix \ref{appendix: app.3}.

\begin{remark}
Theorem \ref{theorem: app 3} and Theorem \ref{corollary: app 1} provide theoretical guarantees for the approximation capabilities of transformer networks with constrained Lipschitz constants. To effectively control the Lipschitz constants of networks in practical applications, various methods are employed, including spectral normalization \citep{miyato2018spectral}, batch normalization \citep{ioffe2015batch}, weight clipping \citep{arjovsky2017wasserstein}, gradient clipping \citep{lecun2015deep}, and gradient penalty \citep{gulrajani2017improved}. 
\end{remark}

\begin{remark}
Theorem \ref{theorem: app 3} and Theorem \ref{corollary: app 1} improve the approximation guarantee in \citet[Theorem 2]{gurevych2022rate} with additional Lipschitz continuity characterization. \citet{huang2022error} introduced control over Lipschitz continuity for ReLU neural networks. \citet[Lemma 10]{chen2020distribution} shows that ReLU neural networks can approximate Lipschitz continuous functions, while the Lipschitz continuity of the network remains independent of the approximation error. Their approach, contingent upon the structure of ReLU networks, is applicable solely to Lipschitz continuous target functions. We would like to highlight that we use a distinct proof method, enabling the Lipschitz continuity of the constructed transformer network to be independent of approximation error, while remaining applicable to target functions with higher regularity.
\end{remark}

We proceed to show that although the spatial input $\boldsymbol{x}$ in (\ref{eq:gen 13}) can be arbitrary in $\mathbb{R}^d$, by restricting $\boldsymbol{x}$ to a compact set, the true velocity field $\boldsymbol{v}^*$ can be effectively approximated. In our approach, we introduce time $t$ as an extra input dimension to the neural network and define the rescaled function space as 
\begin{align}
\label{eq: def tao}
\begin{aligned}
\mathcal{T} \left(N, h, d_k, d_v, d_{f f}, B, J, \gamma_{\boldsymbol{x}}, \gamma_{t}, R\right)  
= \bigg\{ & \boldsymbol{v}(\boldsymbol{x}, t) = \widetilde{\boldsymbol{v}} \left(\frac{1}{2R}(\operatorname{Proj}_{[-R,R]^d} (\boldsymbol{x}) +R\mathbbm{1}_d), \frac{1}{T} t\right): \\
& \widetilde{\boldsymbol{v}}(\boldsymbol{x}^\prime, t^\prime) \in \mathcal{T}_{d+1, d} \left(N, h, d_k, d_v, d_{f f}, B, J, \gamma \right), \\
& \gamma_{\boldsymbol{x}} = \frac{\gamma}{2R}, \gamma_{t} = \frac{\gamma}{T}  \bigg\},
\end{aligned}
\end{align}
where $\operatorname{Proj}_{\Omega} (\boldsymbol{x}) := \operatorname{arg\,min}_{\boldsymbol{y} \in \Omega} \|\boldsymbol{y} - \boldsymbol{x}\|$ denotes the projection operator onto the set $\Omega$. This definition ensures that $\boldsymbol{v} \in \mathcal{T} (N, h, d_k, d_v, d_{f f}, B, J, \gamma_{\boldsymbol{x}}, \gamma_{t}, R)$ is Lipschitz continuous over $\mathbb{R}^d \times [0,T]$, i.e. 
\begin{align*}
\|\boldsymbol{v}(\boldsymbol{x}_1, t) - \boldsymbol{v}(\boldsymbol{x}_2, t)\| &\leq \gamma_{\boldsymbol{x}} \|\boldsymbol{x}_1 - \boldsymbol{x}_2\| \text{ for any } t \in [0,T], \\
\|\boldsymbol{v}(\boldsymbol{x}, t_1) - \boldsymbol{v}(\boldsymbol{x}, t_2)\| &\leq \gamma_{t} \, |t_1 - t_2| \text{ for any } \boldsymbol{x} \in \mathbb{R}^d.
\end{align*}

\begin{corollary}
\label{coro: app true vd}
Suppose Assumption \ref{ass: bounded support gamma} holds. Let $\frac{1}{2} < T < 1$ and $R \geq 1$. Given an approximation error $0<\varepsilon<1$, for any velocity field $\boldsymbol{v}^*$, we choose the hypothesis class $\mathcal{T}$ with
\begin{align*}
\begin{gathered}
N = \mathcal{O} \left( \log \left(\frac{R}{(1-T)^3\varepsilon}\right)\right), \quad
h = \mathcal{O} \left(  \left( \frac{R}{(1-T)^3 \varepsilon} \right)^{d+1} \right), \quad
d_{ff} = 8 h, \\ 
d_k = \mathcal{O}(1), \quad d_v = \mathcal{O}(1), \quad
B = \mathcal{O} \left( \frac{R}{1-T} \right), \quad
J = \mathcal{O} \left(\left( \frac{R}{(1-T)^3\varepsilon} \right)^{d+1} \log \left(\frac{R}{(1-T)^3\varepsilon}\right) \right), \\
\gamma_{\boldsymbol{x}} = \mathcal{O} \left(\frac{1}{(1-T)^3}\right), \quad \gamma_{t} = \mathcal{O} \left(\frac{R}{(1-T)^3}\right).
\end{gathered}
\end{align*}
Then there exists a $\boldsymbol{v}(\boldsymbol{x}, t) \in \mathcal{T}$ such that 
\begin{align*}
\left\| \boldsymbol{v}(\boldsymbol{x}, t) - \boldsymbol{v}^*(\boldsymbol{x}, t) \right\|_{L^\infty ([-R, R]^d \times [0, T])} \leq \varepsilon.
\end{align*}
\end{corollary}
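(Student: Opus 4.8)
The plan is to reduce the statement to Theorem \ref{corollary: app 1} in four moves: (a) establish quantitative smoothness of the true velocity field $\boldsymbol{v}^*$ on the box $[-R,R]^d\times[0,T]$; (b) rescale so that the relevant domain becomes $[0,1]^{d+1}$; (c) invoke the transformer approximation theorem on the rescaled profile; and (d) compose the resulting network with the affine rescaling and the projection that appear in the definition (\ref{eq: def tao}) of $\mathcal{T}$. Steps (b)--(d) are bookkeeping; the substance is step (a).

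For step (a), I would first put $\boldsymbol{v}^*$ in closed form. By Assumption \ref{ass: bounded support gamma} together with $\widehat{\boldsymbol{E}}:[0,1]^D\to[0,1]^d$, the latent target $\pi_1$ is supported on $[0,1]^d$. Conditioning on $X_1=\boldsymbol{y}$ gives $X_t\mid X_1=\boldsymbol{y}\sim\mathcal{N}(t\boldsymbol{y},(1-t^2)I_d)$, so from (\ref{eq:gen 13}), using $\mathbb{E}[X_0\mid X_t=\boldsymbol{x}]=(\boldsymbol{x}-t\,\mathbb{E}[X_1\mid X_t=\boldsymbol{x}])/\sqrt{1-t^2}$,
\begin{align*}
\boldsymbol{v}^*(\boldsymbol{x},t)=\frac{1}{1-t^2}\Big(\mathbb{E}_{\pi_1^{\boldsymbol{x},t}}[\boldsymbol{y}]-t\boldsymbol{x}\Big),\qquad \pi_1^{\boldsymbol{x},t}(\mathrm{d}\boldsymbol{y})\propto\exp\!\Big(-\tfrac{\|\boldsymbol{x}-t\boldsymbol{y}\|^2}{2(1-t^2)}\Big)\pi_1(\mathrm{d}\boldsymbol{y}).
\end{align*}
Since $\pi_1^{\boldsymbol{x},t}$ is supported on $[0,1]^d$, its mean lies in $[0,1]^d$, which with $1-t^2\ge 1-T$ and $R\ge 1$ gives $\|\boldsymbol{v}^*\|_{L^{\infty}([-R,R]^d\times[0,T])}=\mathcal{O}(R/(1-T))$. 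For the derivatives I would use the exponential-tilting identities $\partial_{x_i}\mathbb{E}_{\pi_1^{\boldsymbol{x},t}}[\boldsymbol{y}]=\tfrac{t}{1-t^2}\,\mathrm{Cov}_{\pi_1^{\boldsymbol{x},t}}(\boldsymbol{y},y_i)$ and $\partial_t\mathbb{E}_{\pi_1^{\boldsymbol{x},t}}[\boldsymbol{y}]=\mathrm{Cov}_{\pi_1^{\boldsymbol{x},t}}(\boldsymbol{y},\partial_t\psi)$ with $\psi(\boldsymbol{y})=-\|\boldsymbol{x}-t\boldsymbol{y}\|^2/(2(1-t^2))$ and $\partial_t\psi=\tfrac{\langle\boldsymbol{x}-t\boldsymbol{y},\boldsymbol{y}\rangle}{1-t^2}-\tfrac{t\|\boldsymbol{x}-t\boldsymbol{y}\|^2}{(1-t^2)^2}$. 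Because $\boldsymbol{y}$ ranges over a set of $\mathcal{O}(1)$ diameter, each covariance is controlled by the \emph{oscillation} (not the supremum) of its second argument over $[0,1]^d$; bounding that oscillation using $\|\boldsymbol{x}-t\boldsymbol{y}\|=\mathcal{O}(R)$ yields $\|\nabla_{\boldsymbol{x}}\boldsymbol{v}^*\|=\mathcal{O}((1-T)^{-2})$ and $\|\partial_t\boldsymbol{v}^*\|=\mathcal{O}(R/(1-T)^3)$. The last estimate is the main obstacle: a crude bound of $\partial_t\boldsymbol{v}^*$ through $\sup|\partial_t\psi|$ rather than its oscillation loses a factor of $R$ and would not reproduce the stated $(1-T)^{-3}$ scaling.

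For steps (b)--(d), set $\boldsymbol{g}(\boldsymbol{x}',t'):=\boldsymbol{v}^*(2R\boldsymbol{x}'-R\mathbbm{1}_d,\,Tt')$ on $[0,1]^{d+1}$. By the chain rule and the bounds above, $\|\boldsymbol{g}\|_{L^{\infty}}=\mathcal{O}(R/(1-T))$, $\|\nabla_{\boldsymbol{x}'}\boldsymbol{g}\|=2R\|\nabla_{\boldsymbol{x}}\boldsymbol{v}^*\|=\mathcal{O}(R/(1-T)^2)$ and $\|\partial_{t'}\boldsymbol{g}\|=T\|\partial_t\boldsymbol{v}^*\|=\mathcal{O}(R/(1-T)^3)$, so $\boldsymbol{g}\in\mathcal{C}^1_{d+1,d}([0,1]^{d+1},K)$ with $K=\mathcal{O}(R/(1-T)^3)$. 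Applying Theorem \ref{corollary: app 1} with input dimension $d+1$, smoothness $m=1$, constraint $K$, and error $\varepsilon$ produces $\widetilde{\boldsymbol{v}}\in\mathcal{T}_{d+1,d}(N,h,d_k,d_v,d_{ff},B,J,\gamma)$ with $N=\mathcal{O}(\log(K/\varepsilon))$, $h=\mathcal{O}((K/\varepsilon)^{d+1})$, $d_{ff}=8h$, $d_k,d_v=\mathcal{O}(1)$, $B=\mathcal{O}(\|\boldsymbol{g}\|_{L^{\infty}})=\mathcal{O}(R/(1-T))$, $J=\mathcal{O}((K/\varepsilon)^{d+1}\log(K/\varepsilon))$, $\gamma=\mathcal{O}(K)$, and $\|\widetilde{\boldsymbol{v}}-\boldsymbol{g}\|_{L^{\infty}([0,1]^{d+1})}\le\varepsilon$; substituting $K=\mathcal{O}(R/(1-T)^3)$ gives exactly the stated $N,h,d_{ff},d_k,d_v,B,J$. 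Finally take $\boldsymbol{v}\in\mathcal{T}(N,h,d_k,d_v,d_{ff},B,J,\gamma_{\boldsymbol{x}},\gamma_t,R)$ built from this $\widetilde{\boldsymbol{v}}$ as in (\ref{eq: def tao}); then $\gamma_{\boldsymbol{x}}=\gamma/(2R)=\mathcal{O}((1-T)^{-3})$ and $\gamma_t=\gamma/T=\mathcal{O}(R/(1-T)^3)$ since $T>\tfrac12$. Because $\operatorname{Proj}_{[-R,R]^d}$ is the identity on $[-R,R]^d$, for $(\boldsymbol{x},t)\in[-R,R]^d\times[0,T]$ we have $\boldsymbol{v}(\boldsymbol{x},t)=\widetilde{\boldsymbol{v}}\big(\tfrac{1}{2R}(\boldsymbol{x}+R\mathbbm{1}_d),\tfrac{t}{T}\big)=\widetilde{\boldsymbol{v}}(\boldsymbol{x}',t')$, where $\boldsymbol{g}(\boldsymbol{x}',t')=\boldsymbol{v}^*(\boldsymbol{x},t)$, hence $\|\boldsymbol{v}(\boldsymbol{x},t)-\boldsymbol{v}^*(\boldsymbol{x},t)\|=\|\widetilde{\boldsymbol{v}}(\boldsymbol{x}',t')-\boldsymbol{g}(\boldsymbol{x}',t')\|\le\varepsilon$, which is the claim.
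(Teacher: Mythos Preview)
Your proposal is correct and follows essentially the same route as the paper: rescale $\boldsymbol{v}^*$ to $[0,1]^{d+1}$, verify membership in $\mathcal{C}^1_{d+1,d}$ with norm $K=\mathcal{O}(R/(1-T)^3)$, invoke Theorem~\ref{corollary: app 1}, and then undo the rescaling via (\ref{eq: def tao}). The only difference is cosmetic: the paper packages the smoothness bounds into separate lemmas (Lemmas~\ref{lemma: true 3}--\ref{lemma: true 5}, using an explicit expansion of $\partial_t\boldsymbol{v}^*$), whereas you obtain the same estimates more directly through the exponential-tilting covariance identities and the oscillation argument.
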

The proof can be found in Appendix \ref{appendix: app.4}.

\section{Generalization and Sampling}
\label{sec: generalization and sampling}
In this section, we consider the generalization error of estimating the velocity field and establish distribution recovery guarantees using the estimated velocity field. We begin with the following connection between the loss function $\mathcal{L}(\boldsymbol{v})$ and the $L^2$ approximation error $\left\|\boldsymbol{v}(\cdot, t)-\boldsymbol{v}^*(\cdot, t)\right\|_{L^2\left(\pi_t\right)}$.

\begin{lemma}
\label{lemma:gen 2}
For any velocity field $\boldsymbol{v}: \mathbb{R}^d \times [0, T] \rightarrow \mathbb{R}^d$, we have
\begin{align*}
\mathcal{L}(\boldsymbol{v})-\mathcal{L}(\boldsymbol{v}^*) = \frac{1}{T} \int_0^T \|\boldsymbol{v} (\cdot, t)-\boldsymbol{v}^*(\cdot, t)\|_{L^2(\pi_t)}^2 \mathrm{d} t.
\end{align*}
\end{lemma}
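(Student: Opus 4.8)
The plan is to expand the squared-norm inside the loss functional $\mathcal{L}(\boldsymbol{v})$ and exploit the fact that $\boldsymbol{v}^*(\cdot,t)$ is exactly the conditional expectation in \eqref{eq:gen 13}, which makes the cross term vanish. Concretely, write $\boldsymbol{g}(X_0,X_1,t) := X_1 - \frac{t}{\sqrt{1-t^2}} X_0$ for the regression target, so that $\boldsymbol{v}^*(\boldsymbol{x},t) = \mathbb{E}[\boldsymbol{g}(X_0,X_1,t)\mid X_t = \boldsymbol{x}]$. For each fixed $t$, I would add and subtract $\boldsymbol{v}^*(X_t,t)$ inside the norm:
\begin{align*}
\left\| \boldsymbol{g} - \boldsymbol{v}(X_t,t) \right\|^2 = \left\| \boldsymbol{g} - \boldsymbol{v}^*(X_t,t) \right\|^2 + 2 \left\langle \boldsymbol{g} - \boldsymbol{v}^*(X_t,t),\, \boldsymbol{v}^*(X_t,t) - \boldsymbol{v}(X_t,t) \right\rangle + \left\| \boldsymbol{v}^*(X_t,t) - \boldsymbol{v}(X_t,t) \right\|^2.
\end{align*}

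Next I would take $\mathbb{E}_{X_0,X_1}$ of both sides. The first term contributes to $\mathcal{L}(\boldsymbol{v}^*)$ after integrating in $t$. For the third term, note that $X_t = t X_1 + \sqrt{1-t^2}X_0 \sim \pi_t$ by construction, so $\mathbb{E}_{X_0,X_1}\left[ \left\| \boldsymbol{v}^*(X_t,t) - \boldsymbol{v}(X_t,t) \right\|^2 \right] = \left\| \boldsymbol{v}(\cdot,t) - \boldsymbol{v}^*(\cdot,t) \right\|_{L^2(\pi_t)}^2$. For the cross term, I would condition on $X_t$: using the tower property, $\mathbb{E}\left[\left\langle \boldsymbol{g} - \boldsymbol{v}^*(X_t,t),\, \boldsymbol{v}^*(X_t,t) - \boldsymbol{v}(X_t,t)\right\rangle\right] = \mathbb{E}\left[\left\langle \mathbb{E}[\boldsymbol{g}\mid X_t] - \boldsymbol{v}^*(X_t,t),\, \boldsymbol{v}^*(X_t,t) - \boldsymbol{v}(X_t,t)\right\rangle\right] = 0$ since $\boldsymbol{v}^*(X_t,t) - \boldsymbol{v}(X_t,t)$ is $\sigma(X_t)$-measurable and $\mathbb{E}[\boldsymbol{g}\mid X_t] = \boldsymbol{v}^*(X_t,t)$ by definition. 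Integrating over $t\in[0,T]$ and dividing by $T$ gives $\mathcal{L}(\boldsymbol{v}) = \mathcal{L}(\boldsymbol{v}^*) + \frac{1}{T}\int_0^T \left\| \boldsymbol{v}(\cdot,t) - \boldsymbol{v}^*(\cdot,t) \right\|_{L^2(\pi_t)}^2 \mathrm{d}t$, which is the claim after rearranging.

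The only real subtlety — rather than a genuine obstacle — is justifying the interchange of expectation and the $t$-integral (Fubini/Tonelli) and ensuring the conditional expectation manipulations are valid, i.e. that $\boldsymbol{g}(X_0,X_1,t)$ is integrable for $t\in[0,T]$. Since $T<1$ is bounded away from $1$, the factor $\frac{t}{\sqrt{1-t^2}}$ is bounded on $[0,T]$, and under the standing bounded-support assumptions on $\pi_1$ (hence $\pi_0,\pi_1$ have finite second moments, $\pi_0$ being Gaussian), all the relevant quantities are square-integrable, so these technical points are routine. I would state them briefly and move on.
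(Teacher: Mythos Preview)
Your proposal is correct and follows essentially the same approach as the paper's proof: add and subtract $\boldsymbol{v}^*(X_t,t)$, expand the square, and kill the cross term by conditioning on $X_t$ and invoking the definition of $\boldsymbol{v}^*$ as a conditional expectation. Your brief remark on Fubini and integrability is a nice touch that the paper leaves implicit.
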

The proof can be found in Appendix \ref{appendix: gen.1}. According to Lemma \ref{lemma:gen 2}, minimizing (\ref{eq: fm 2}) is equivalent to minimizing the difference between the network and the true velocity field in $L^2 (\pi_t)$-norm.

In this paper, we choose the standard Gaussian distribution as the prior distribution, i.e., $\pi_0 = \mathcal{N}(0, I_d)$, where $d$ is the dimension of the latent space. Given $n$ independent and identically distributed (i.i.d.) samples $\{\boldsymbol{x}_{1, i}\}_{i=1}^n$ from $\pi_1$, and $n$ i.i.d. samples $\{\left(t_i, \boldsymbol{x}_{0, i}\right)\}_{i=1}^n$ from $\mathrm{Unif}[0, T]$ and $\pi_0$, which are cheap to generate, we denote the dataset as $\mathcal{X} := \{t_i, \boldsymbol{x}_{0,i}, \boldsymbol{x}_{1,i}\}_{i=1}^n$ and consider the empirical risk minimization
\begin{align}
\label{eq:gen 18}
\widehat{\boldsymbol{v}} \in \argmin{\boldsymbol{v} \in \mathcal{T}}~ \widehat{\mathcal{L}}(\boldsymbol{v}):=\frac{1}{n} \sum_{i=1}^n \left\|\boldsymbol{x}_{1,i} - \frac{t_i}{\sqrt{1-t_i^2}} \boldsymbol{x}_{0,i}-\boldsymbol{v}\left(t_i \boldsymbol{x}_{1,i} + \sqrt{1-t_i^2} \boldsymbol{x}_{0,i}, t_i\right)\right\|^2.
\end{align}
Our analysis gives the following generalization bound.

\begin{theorem}
\label{theorem: gen 1}
Suppose Assumption \ref{ass: bounded support gamma} holds. Let $\frac{1}{2}<T<1$. For any velocity field $\boldsymbol{v}^*$, given $n$ i.i.d. samples $\mathcal{X} = \{t_i, \boldsymbol{x}_{0,i}, \boldsymbol{x}_{1,i}\}_{i=1}^n$ from $\mathrm{Unif}[0, T]$, $\pi_0$ and $\pi_1$, we choose $\mathcal{T}$ as in Corollary \ref{coro: app true vd} with $\varepsilon = n^{-\frac{1}{d+3}}$ and $R = \mathcal{O}(\sqrt{\log n})$. Then it holds
\begin{align*}
\mathbb{E}_{\mathcal{X}} \left[\mathcal{L}(\widehat{\boldsymbol{v}}) - \mathcal{L}(\boldsymbol{v}^*)\right] = \mathbb{E}_{\mathcal{X}} \left[ \frac{1}{T} \int_0^T \|\widehat{\boldsymbol{v}}(\cdot, t) - \boldsymbol{v}^*(\cdot, t)\|_{L^2 (\pi_t)}^2 \mathrm{d} t \right] = \widetilde{\mathcal{O}}\left( \frac{1}{(1-T)^{3d+5}} n^{-\frac{2}{d+3}} \right),
\end{align*}
where we omit factors in $d, \log n, \log (1-T)$. 
\end{theorem}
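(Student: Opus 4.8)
The plan is to decompose the excess risk $\mathcal{L}(\widehat{\boldsymbol{v}}) - \mathcal{L}(\boldsymbol{v}^*)$ into an approximation term and a stochastic (generalization) term in the standard way. By Lemma~\ref{lemma:gen 2}, $\mathcal{L}(\boldsymbol{v}) - \mathcal{L}(\boldsymbol{v}^*) = \frac{1}{T}\int_0^T \|\boldsymbol{v}(\cdot,t)-\boldsymbol{v}^*(\cdot,t)\|_{L^2(\pi_t)}^2\,\mathrm{d}t \geq 0$, so $\boldsymbol{v}^*$ is the population minimizer and the excess risk is nonnegative for every $\boldsymbol{v}$. Writing $\widehat{\boldsymbol{v}}$ for the empirical minimizer over $\mathcal{T}$ and $\bar{\boldsymbol{v}}$ for the near-optimal approximant from Corollary~\ref{coro: app true vd}, I would bound
\begin{align*}
\mathcal{L}(\widehat{\boldsymbol{v}}) - \mathcal{L}(\boldsymbol{v}^*) \leq \underbrace{\big(\mathcal{L}(\widehat{\boldsymbol{v}}) - \mathcal{L}(\boldsymbol{v}^*)\big) - 2\big(\widehat{\mathcal{L}}(\widehat{\boldsymbol{v}}) - \widehat{\mathcal{L}}(\boldsymbol{v}^*)\big)}_{\text{stochastic term}} + 2\big(\widehat{\mathcal{L}}(\bar{\boldsymbol{v}}) - \widehat{\mathcal{L}}(\boldsymbol{v}^*)\big),
\end{align*}
using $\widehat{\mathcal{L}}(\widehat{\boldsymbol{v}}) \leq \widehat{\mathcal{L}}(\bar{\boldsymbol{v}})$, and then take expectations. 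The second term in expectation equals $\mathcal{L}(\bar{\boldsymbol{v}}) - \mathcal{L}(\boldsymbol{v}^*)$, which by Corollary~\ref{coro: app true vd} is $\mathcal{O}(\varepsilon^2) = \mathcal{O}(n^{-2/(d+3)})$ after accounting for the $[0,T]$ time-integral and the $(1-T)$-dependence baked into the network size parameters.

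The heart of the argument is the stochastic term. Here I would set up a localized/one-sided uniform deviation bound: for the function class $\mathcal{G} := \{(t,\boldsymbol{x}_0,\boldsymbol{x}_1) \mapsto \|\boldsymbol{x}_1 - \tfrac{t}{\sqrt{1-t^2}}\boldsymbol{x}_0 - \boldsymbol{v}(\boldsymbol{x}_t,t)\|^2 - \|\boldsymbol{x}_1 - \tfrac{t}{\sqrt{1-t^2}}\boldsymbol{x}_0 - \boldsymbol{v}^*(\boldsymbol{x}_t,t)\|^2 : \boldsymbol{v} \in \mathcal{T}\}$, one controls $\sup_{g}\big(\mathbb{E}g - \tfrac{1}{n}\sum g + \text{something}\big)$ via a Bernstein-type inequality combined with a covering-number bound on $\mathcal{T}$. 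The key structural inputs are: (i) a uniform envelope — on the event that all $\boldsymbol{x}_{0,i}$ have norm $\lesssim \sqrt{\log n}$ (which is why $R = \mathcal{O}(\sqrt{\log n})$ is chosen, a Gaussian tail bound giving failure probability $\le n^{-c}$), the integrand and hence each $g$ is bounded by a polynomial in $\frac{R}{1-T}$ and $B$; (ii) a variance-to-mean bound, $\mathbb{E}[g^2] \lesssim \|\text{envelope}\| \cdot \mathbb{E}[g]$, which is what makes the $2\widehat{\mathcal{L}}$ trick yield a fast rate rather than $n^{-1/2}$; and (iii) the metric-entropy estimate $\log\mathcal{N}(\delta,\mathcal{T},\|\cdot\|_\infty) \lesssim \mathrm{Pdim}(\mathcal{T})\log(\cdot/\delta)$ with pseudo-dimension controlled by the sparsity $J$ and depth $N$ of the transformer class. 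Plugging the parameter choices from Corollary~\ref{coro: app true vd} with $\varepsilon = n^{-1/(d+3)}$, one gets $J = \widetilde{\mathcal{O}}(n^{(d+1)/(d+3)}(1-T)^{-3(d+1)})$, so the stochastic term scales like $\widetilde{\mathcal{O}}(\frac{J}{n}\cdot \text{poly}(\tfrac{R}{1-T},B))$; balancing this against the approximation error $n^{-2/(d+3)}$ is exactly what forces the exponent $-\tfrac{2}{d+3}$ and produces the $(1-T)^{-(3d+5)}$ prefactor after tallying all the $(1-T)$ powers from $B \sim \frac{R}{1-T}$, the envelope, and $J$.

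I would organize the proof as: (1) restrict to the high-probability event $\mathcal{A}_n = \{\max_i \|\boldsymbol{x}_{0,i}\| \le c\sqrt{\log n}\}$, handle $\mathcal{A}_n^c$ by a crude bound times $\mathbb{P}(\mathcal{A}_n^c)$ which is negligible; (2) on $\mathcal{A}_n$, establish the envelope and variance bounds for $\mathcal{G}$ in terms of the network parameters; (3) compute the covering number / pseudo-dimension of $\mathcal{T}$ from its configuration $(N,h,d_k,d_v,d_{ff},B,J,\gamma)$ — this may invoke a generalization-error lemma proved elsewhere in the paper for transformer classes; (4) apply a fast-rate empirical-process bound (offset Rademacher or a peeling/Bernstein argument) to get $\mathbb{E}_{\mathcal{X}}[\text{stochastic term}] \lesssim \widetilde{\mathcal{O}}(\text{poly}(\tfrac{1}{1-T}) \cdot J/n)$; (5) add the approximation term from Corollary~\ref{coro: app true vd}, substitute $\varepsilon = n^{-1/(d+3)}$ and $R = \mathcal{O}(\sqrt{\log n})$, and collect powers of $(1-T)$. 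The main obstacle I anticipate is step (2)–(4): carefully tracking how the $t/\sqrt{1-t^2}$ factor and the unbounded Gaussian input $\boldsymbol{x}_0$ interact — the integrand is not uniformly bounded over all of $\mathbb{R}^d$, so the truncation at radius $R\sim\sqrt{\log n}$ must be threaded through both the envelope and the variance bound, and one must verify that the resulting logarithmic factors are genuinely absorbed into the $\widetilde{\mathcal{O}}$ and that no hidden polynomial-in-$n$ factor sneaks in through the interplay of $B$, $R$, and the large number of heads $h \sim \varepsilon^{-(d+1)}$.
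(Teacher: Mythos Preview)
Your proposal is correct and follows the same high-level strategy as the paper: decompose into approximation plus stochastic error, truncate at radius $R=\mathcal{O}(\sqrt{\log n})$ to get a bounded envelope, control the stochastic term via an offset/localized argument driven by the covering number (pseudo-dimension $\lesssim N^2 J$) of the transformer class, then balance against the approximation error $\varepsilon^2$ with $\varepsilon=n^{-1/(d+3)}$.

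The organizational differences are worth noting. First, the paper invokes Lemma~\ref{lemma:gen 2} \emph{before} decomposing, so it works throughout with the function class $\mathcal{H}=\{\|\boldsymbol{v}-\boldsymbol{v}^*\|^2\,\mathbbm{1}\{\|\boldsymbol{x}_t\|_\infty\le R\}:\boldsymbol{v}\in\mathcal{T}\}$ rather than your raw loss-difference class $\mathcal{G}$; this avoids carrying the unbounded noise $\boldsymbol{x}_1-\tfrac{t}{\sqrt{1-t^2}}\boldsymbol{x}_0-\boldsymbol{v}^*$ through the symmetrization and covering steps. Second, the paper truncates by inserting the indicator $\mathbbm{1}\{\|X_t\|_\infty\le R\}$ \emph{inside} the function (so $\mathcal{H}$ is unconditionally bounded and the tail is a separate deterministic term~III), whereas you condition on the data event $\mathcal{A}_n$; your route works but requires a Cauchy--Schwarz step to control $\mathbb{E}[\text{stoch}\cdot\mathbbm{1}_{\mathcal{A}_n^c}]$ since the empirical loss is not bounded on $\mathcal{A}_n^c$. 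Third, your claim that $\mathcal{L}(\bar{\boldsymbol{v}})-\mathcal{L}(\boldsymbol{v}^*)=\mathcal{O}(\varepsilon^2)$ directly from Corollary~\ref{coro: app true vd} is slightly too quick: that corollary only controls $\|\bar{\boldsymbol{v}}-\boldsymbol{v}^*\|$ on $[-R,R]^d$, so the approximation term also needs the tail split (the paper does this inside its term~II, getting $\inf_{\boldsymbol{v}}\mathbb{E}[\|\boldsymbol{v}-\boldsymbol{v}^*\|^2\mathbbm{1}_{\le R}]$ plus a Gaussian tail). None of these differences is a gap---both routes land on the same rate with the same $(1-T)$ bookkeeping---but the paper's indicator-in-the-class device is the cleaner way to handle the unbounded Gaussian input.
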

The proof can be found in Appendix \ref{appendix: gen.1}. Theorem \ref{theorem: gen 1} becomes vacuous when $T$ tends to $1$ with fixed sample size $n$. This is a consequence of the blowup of the velocity field $\boldsymbol{v}^*(\boldsymbol{x}, t)$ as $t$ tends to $1$. Although a smaller early stopping time leads to better generalization error, stopping the sampling process at an early time results in a bad distribution recovery.

Given the estimated velocity field $\widehat{\boldsymbol{v}}$, we can generate samples from an approximation of the continuous flow ODE starting from the prior distribution:
\begin{align}
\label{eq: estimated sampling} 
\mathrm{d} \widetilde{X}_t (\boldsymbol{x}) = \widehat{\boldsymbol{v}} (\widetilde{X}_t(\boldsymbol{x}), t) \mathrm{d} t, \ \widetilde{X}_0 (\boldsymbol{x}) = \boldsymbol{x} \sim \pi_0, \  0 \leq t \leq T.
\end{align}
In practice, we need to use a discrete-time approximation for the sampling dynamics (\ref{eq: estimated sampling}). Let $0 = t_0 < t_1 < \cdots < t_N = T$ be the discretization points. We consider the explicit Euler discretization scheme:
\begin{align*}
\mathrm{d} \widehat{X}_t (\boldsymbol{x}) = \widehat{\boldsymbol{v}} (\widehat{X}_{t_k}(\boldsymbol{x}), t_k) \mathrm{d} t,\  t \in \left[t_k, t_{k+1}\right),
\end{align*}
for $k = 0, 1, \ldots, N-1$ and $\widehat{X}_0 (\boldsymbol{x}) = \boldsymbol{x} \sim \pi_0$. We denote the distribution of $\widehat{X}_T (\boldsymbol{x})$ by $\widehat{\pi}_T$.

\begin{theorem}
\label{theorem: consistency in latent space}
Suppose Assumption \ref{ass: bounded support gamma} holds. Given $n$ samples from latent target distribution $\pi_1$ and the networks as in Corollary \ref{coro: app true vd}, we use the estimated velocity field in (\ref{eq:gen 18}) to generate samples. By choosing the maximal step size $\max_{k=0,1 \ldots, N-1}\left|t_{k+1}-t_k\right|=\mathcal{O} (n^{-\frac{1}{d+3}})$ and the early stopping time $T(n)=1-(\log n)^{-1/6}$, we achieve
\begin{align*}
\mathbb{E}_{\mathcal{X}} [W_2(\widehat{\pi}_T, \pi_1)] \rightarrow 0.
\end{align*}
\end{theorem}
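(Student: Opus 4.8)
The plan is to decompose $W_2(\widehat{\pi}_T, \pi_1)$ into three pieces by inserting the intermediate distributions $\pi_T$ (the law of the true flow $Z_t$ at time $T$) and $\widetilde{\pi}_T$ (the law of the continuous estimated flow $\widetilde{X}_T$ in \eqref{eq: estimated sampling}), and then bounding each piece using the tools already available. By the triangle inequality,
\begin{align*}
W_2(\widehat{\pi}_T, \pi_1) \leq W_2(\widehat{\pi}_T, \widetilde{\pi}_T) + W_2(\widetilde{\pi}_T, \pi_T) + W_2(\pi_T, \pi_1).
\end{align*}
The last term is a pure early-stopping error: since the true flow transports $\pi_0$ to $\pi_1$ over $[0,1]$ and $\pi_1$ has bounded support (Assumption \ref{ass: bounded support gamma} together with the constraint $\widehat{\boldsymbol{E}}:[0,1]^D\to[0,1]^d$), one controls $W_2(\pi_T,\pi_1)$ by the displacement $\int_T^1 \|\boldsymbol{v}^*(\cdot,t)\|\,\mathrm{d}t$ along the interpolation $X_t = tX_1 + \sqrt{1-t^2}X_0$; a direct estimate of $\mathbb{E}\|X_1 - X_t\|^2$ gives a bound of order $(1-T)$ plus lower-order terms, which vanishes as $T \to 1$.

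For the middle term $W_2(\widetilde{\pi}_T,\pi_T)$, the idea is a Grönwall argument comparing the two ODE trajectories driven by $\widehat{\boldsymbol{v}}$ and $\boldsymbol{v}^*$ from the same initial point $\boldsymbol{x}\sim\pi_0$. Writing $\Delta_t := \widetilde{X}_t(\boldsymbol{x}) - Z_t(\boldsymbol{x})$, we have $\frac{\mathrm{d}}{\mathrm{d}t}\|\Delta_t\| \leq \|\widehat{\boldsymbol{v}}(\widetilde{X}_t,t) - \boldsymbol{v}^*(\widetilde{X}_t,t)\| + \|\boldsymbol{v}^*(\widetilde{X}_t,t) - \boldsymbol{v}^*(Z_t,t)\|$; the second term is $\leq \mathrm{Lip}_{\boldsymbol{x}}(\boldsymbol{v}^*(\cdot,t))\|\Delta_t\|$, and the first is the pointwise approximation error. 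Integrating and applying Grönwall turns this into a bound involving $\int_0^T \|\widehat{\boldsymbol{v}}(\cdot,t)-\boldsymbol{v}^*(\cdot,t)\|_{L^2(\pi_t)}\,\mathrm{d}t$ times an exponential factor $\exp(\int_0^T \mathrm{Lip}_{\boldsymbol{x}}(\boldsymbol{v}^*(\cdot,t))\,\mathrm{d}t)$ — here one needs the regularity of $\boldsymbol{v}^*$ (its spatial Lipschitz constant blows up like $(1-t)^{-1}$, so the integral is $\mathcal{O}(\log\frac{1}{1-T})$ and the exponential is polynomial in $\frac{1}{1-T}$). Combining with the generalization bound of Theorem \ref{theorem: gen 1}, which gives $\mathbb{E}_{\mathcal{X}}\int_0^T\|\widehat{\boldsymbol{v}}-\boldsymbol{v}^*\|_{L^2(\pi_t)}^2\,\mathrm{d}t = \widetilde{\mathcal{O}}((1-T)^{-(3d+5)}n^{-2/(d+3)})$, and Cauchy–Schwarz in $t$, yields $\mathbb{E}_{\mathcal{X}}[W_2(\widetilde{\pi}_T,\pi_T)]^2 = \widetilde{\mathcal{O}}\big(\mathrm{poly}(\tfrac{1}{1-T})\, n^{-2/(d+3)}\big)$.

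The first term $W_2(\widehat{\pi}_T,\widetilde{\pi}_T)$ is the discretization error of the explicit Euler scheme. Coupling $\widehat{X}_t$ and $\widetilde{X}_t$ through the same Brownian-free initialization, one writes the one-step error on $[t_k,t_{k+1})$ as $\|\widehat{\boldsymbol{v}}(\widehat{X}_{t_k},t_k) - \widehat{\boldsymbol{v}}(\widetilde{X}_t,t)\|$, splits it into a spatial part bounded by $\gamma_{\boldsymbol{x}}\|\widehat{X}_{t_k}-\widetilde{X}_t\|$ and a temporal part bounded by $\gamma_t|t-t_k| \leq \gamma_t \cdot \max_k|t_{k+1}-t_k|$, plus the drift of $\widetilde{X}$ over one step which is itself $\mathcal{O}(\|\widehat{\boldsymbol{v}}\|_\infty \cdot \max_k|t_{k+1}-t_k|) = \mathcal{O}(B\cdot\max_k|t_{k+1}-t_k|)$; another Grönwall loop (using that $\widehat{\boldsymbol{v}}\in\mathcal{T}$ has Lipschitz constants $\gamma_{\boldsymbol{x}},\gamma_t$ and sup-norm bound $B$ from Corollary \ref{coro: app true vd}) gives $W_2(\widehat{\pi}_T,\widetilde{\pi}_T) = \mathcal{O}\big((B+\gamma_t)e^{\gamma_{\boldsymbol{x}}T}\max_k|t_{k+1}-t_k|\big) = \widetilde{\mathcal{O}}(\mathrm{poly}(\tfrac{1}{1-T})\,n^{-1/(d+3)})$ with the chosen step size.

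Finally, substituting $T = T(n) = 1-(\log n)^{-1/6}$ makes $1-T = (\log n)^{-1/6}$, so every $\mathrm{poly}(\frac{1}{1-T})$ factor is polylogarithmic in $n$ while the early-stopping term $W_2(\pi_T,\pi_1) = \mathcal{O}(\sqrt{1-T}) = \mathcal{O}((\log n)^{-1/12}) \to 0$, and the other two terms carry a genuine negative power $n^{-c/(d+3)}$ that dominates any polylog; hence $\mathbb{E}_{\mathcal{X}}[W_2(\widehat{\pi}_T,\pi_1)]\to 0$. The main obstacle I anticipate is making the Grönwall estimates quantitatively honest about the $(1-t)$-singularity of $\boldsymbol{v}^*$ and its spatial Lipschitz constant near $t=1$: one must verify that the exponential-of-integrated-Lipschitz factor is only polynomially large in $\frac{1}{1-T}$ (not exponentially), which relies on the precise regularity estimate $\mathrm{Lip}_{\boldsymbol{x}}(\boldsymbol{v}^*(\cdot,t)) \lesssim (1-t)^{-1}$ — presumably established in an earlier lemma or borrowed from the Gaussian-interpolation regularity theory — and on tracking how this singularity interacts with the $L^2(\pi_t)$-to-pointwise passage in the generalization bound. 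Balancing the step size and the early-stopping schedule so that all three error sources vanish simultaneously is the delicate bookkeeping step.
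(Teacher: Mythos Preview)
Your overall architecture is correct and matches the paper: triangle inequality into early-stopping, estimation, and discretization pieces, with Gr\"onwall arguments for the latter two. There is, however, a genuine gap in your Gr\"onwall for the estimation error $W_2(\widetilde{\pi}_T,\pi_T)$. You insert $\boldsymbol{v}^*(\widetilde{X}_t,t)$, so the velocity-field error $\|\widehat{\boldsymbol{v}}-\boldsymbol{v}^*\|$ is evaluated along the \emph{estimated} flow $\widetilde{X}_t$; integrating over $\boldsymbol{x}\sim\pi_0$ yields $\|\widehat{\boldsymbol{v}}(\cdot,t)-\boldsymbol{v}^*(\cdot,t)\|_{L^2(\widetilde{\pi}_t)}$, not $\|\cdot\|_{L^2(\pi_t)}$. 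But Theorem~\ref{theorem: gen 1} controls only the $L^2(\pi_t)$ norm, so your sentence ``turns this into a bound involving $\int_0^T\|\widehat{\boldsymbol{v}}-\boldsymbol{v}^*\|_{L^2(\pi_t)}\,\mathrm{d}t$'' does not go through, and a change of measure from $\widetilde{\pi}_t$ to $\pi_t$ would be circular.

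The paper's fix is to insert $\widehat{\boldsymbol{v}}(X_t,t)$ instead, i.e., to use the Lipschitz constant of the \emph{network} $\widehat{\boldsymbol{v}}$ (the explicit class parameter $\gamma_{\boldsymbol{x}}$) rather than of $\boldsymbol{v}^*$. The error term then sits on the true trajectory $X_t$ and integrates exactly to $\|\widehat{\boldsymbol{v}}-\boldsymbol{v}^*\|_{L^2(\pi_t)}$. This also changes the exponential factor: it becomes $e^{\gamma_{\boldsymbol{x}}}$ with $\gamma_{\boldsymbol{x}}=\mathcal{O}((1-T)^{-3})$ from Corollary~\ref{coro: app true vd}, which under $T(n)=1-(\log n)^{-1/6}$ is $e^{\mathcal{O}(\sqrt{\log n})}$ --- not polylogarithmic in $n$ as you claim, but still sub-polynomial and hence killed by $n^{-1/(d+3)}$. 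Your side claim $\mathrm{Lip}_{\boldsymbol{x}}(\boldsymbol{v}^*(\cdot,t))\lesssim(1-t)^{-1}$ is also off (the paper's computation of $\nabla\boldsymbol{v}^*$ gives order $(1-t)^{-2}$), but once you make the correct insertion the Lipschitz constant of $\boldsymbol{v}^*$ is never needed.
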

The proof can be found in Appendix \ref{appendix: dis.2}. Theorem \ref{theorem: consistency in latent space} demonstrates the consistency of flow matching in latent space. The consistency is mainly based on a mild assumption, i.e. boundedness, which justifies the use of continuous normalizing flows based on flow matching.

\section{End-to-end Error Analysis}
\label{sec: end-to-end error}
Data dimension reduction and restoration, serving as key steps in Stable Diffusion \citep{rombach2022high}, are accomplished by an autoencoder network. An autoencoder is a distinct neural network architecture designed to learn the low-dimensional features of data \citep{ballard1987modular, bourlard1988auto,  kramer1991nonlinear, hinton1993autoencoders}. The classical autoencoder typically consists of two subnetworks, an encoder and a decoder. The encoder processes high-dimensional input data into a low-dimensional latent representation, aiming to capture the intrinsic structure of the data in a compact form. The decoder then maps these latent features back to the high-dimensional space, striving to accurately reconstruct the original input. These processes can be summarized by the following model.

Given $m$ samples $\mathcal{Y} := \{\boldsymbol{y}_i\}_{i=1}^m$ drawn i.i.d. from the pre-trained data distribution $\widetilde{\gamma}_1$, we can devise an estimator through empirical risk minimization
\begin{align}
\label{eq: autoencoder erm}
(\widehat{\boldsymbol{D}}, \widehat{\boldsymbol{E}}) \in \argmin{\boldsymbol{D} \in \mathcal{D}, \boldsymbol{E} \in \mathcal{E}} \widehat{\mathcal{R}} (\boldsymbol{D}, \boldsymbol{E}) := \frac{1}{m} \sum_{i=1}^m \| (\boldsymbol{D}\circ\boldsymbol{E})(\boldsymbol{y}_i) - \boldsymbol{y}_i \|^2,
\end{align}
where we specify the encoder network architecture as
\begin{align}
\label{eq: def encoder network}
\mathcal{E} = \mathcal{T}_{D, d}(N_{\boldsymbol{E}}, h_{\boldsymbol{E}}, d_{\boldsymbol{E}, k}, d_{\boldsymbol{E}, v}, d_{\boldsymbol{E}, ff}, B_{\boldsymbol{E}}, J_{\boldsymbol{E}}, \gamma_{\boldsymbol{E}})
\end{align}
and the decoder network architecture as
\begin{align}
\label{eq: def decoder network}
\mathcal{D} = \mathcal{T}_{d, D}(N_{\boldsymbol{D}}, h_{\boldsymbol{D}}, d_{\boldsymbol{D}, k}, d_{\boldsymbol{D}, v}, d_{\boldsymbol{D}, ff}, B_{\boldsymbol{D}}, J_{\boldsymbol{D}}, \gamma_{\boldsymbol{D}}).
\end{align}

\begin{lemma}
\label{lemma: ae rate}
Suppose Assumption \ref{ass: bounded support gammahat} and \ref{ass: compressibility} hold. Given $m$ samples from pre-trained data distribution $\widetilde{\gamma}_1$, we choose
\begin{align*}
\begin{gathered}
N_{\boldsymbol{E}} = \mathcal{O} \left( \log \left(K_{\boldsymbol{E}} m\right)\right), \quad
h_{\boldsymbol{E}} = \mathcal{O} \left( K_{\boldsymbol{E}}^D m^{\frac{D}{D+2}} \right), \quad
d_{\boldsymbol{E}, ff} = 8 h, \quad 
d_{\boldsymbol{E}, k} = \mathcal{O}(1), \quad d_{\boldsymbol{E}, v} = \mathcal{O}(1) \\
B_{\boldsymbol{E}} = \mathcal{O} \left(K_{\boldsymbol{E}}\right), \quad J_{\boldsymbol{E}} = \mathcal{O} \left( K_{\boldsymbol{E}}^D m^{\frac{D}{D+2}} \log \left(K_{\boldsymbol{E}} m\right) \right), \quad \gamma_{\boldsymbol{E}} = \mathcal{O} \left(K_{\boldsymbol{E}}\right)
\end{gathered}
\end{align*}
for the encoder networks in (\ref{eq: def encoder network}) and 
\begin{align*}
\begin{gathered}
N_{\boldsymbol{D}} = \mathcal{O} \left( \log \left(K_{\boldsymbol{D}} m\right)\right), \quad
h_{\boldsymbol{D}} = \mathcal{O} \left( K_{\boldsymbol{D}}^d m^{\frac{d}{D+2}} \right), \quad
d_{\boldsymbol{D}, ff} = 8 h, \quad 
d_{\boldsymbol{D}, k} = \mathcal{O}(1), \quad d_{\boldsymbol{D}, v} = \mathcal{O}(1) \\
B_{\boldsymbol{D}} = \mathcal{O} \left(K_{\boldsymbol{D}}\right), \quad 
J_{\boldsymbol{D}} = \mathcal{O} \left( K_{\boldsymbol{D}}^d m^{\frac{d}{D+2}} \log \left(K_{\boldsymbol{D}} m\right) \right), \quad \gamma_{\boldsymbol{D}} = \mathcal{O} \left(K_{\boldsymbol{D}}\right)
\end{gathered}
\end{align*}
for the decoder networks in (\ref{eq: def decoder network}). Then it holds
\begin{align*}
\mathbb{E}_{\mathcal{Y}} [ \mathcal{R}(\widehat{\boldsymbol{D}}, \widehat{\boldsymbol{E}}) ] - \varepsilon_{\widetilde{\gamma}_1} = \mathcal{O} \left( m^{-\frac{1}{D+2}} \log^{5/2} m \right), 
\end{align*}
where $\mathcal{O}$ hides factors in $D, d, K_{\boldsymbol{E}}, K_{\boldsymbol{D}}$. 
\end{lemma}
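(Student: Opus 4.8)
The plan is to carry out a standard oracle-inequality decomposition of the excess reconstruction risk into an approximation term and a stochastic (generalization) term, and then balance the two by the stated choice of network sizes. Write $\ell(\boldsymbol{D},\boldsymbol{E};\boldsymbol{y}) := \|(\boldsymbol{D}\circ\boldsymbol{E})(\boldsymbol{y}) - \boldsymbol{y}\|^2$, so that $\mathcal{R}(\boldsymbol{D},\boldsymbol{E}) = \mathbb{E}_{\boldsymbol{y}\sim\widetilde{\gamma}_1}[\ell]$ and $\widehat{\mathcal{R}}$ is its empirical counterpart over $\mathcal{Y}$. For any fixed comparator pair $(\bar{\boldsymbol{D}},\bar{\boldsymbol{E}}) \in \mathcal{D}\times\mathcal{E}$, the usual bound gives
\begin{align*}
\mathbb{E}_{\mathcal{Y}}[\mathcal{R}(\widehat{\boldsymbol{D}},\widehat{\boldsymbol{E}})] - \varepsilon_{\widetilde{\gamma}_1}
\;\le\; \underbrace{\mathcal{R}(\bar{\boldsymbol{D}},\bar{\boldsymbol{E}}) - \varepsilon_{\widetilde{\gamma}_1}}_{\text{approximation}}
\;+\; \underbrace{2\,\mathbb{E}_{\mathcal{Y}}\Big[\sup_{\boldsymbol{D}\in\mathcal{D},\boldsymbol{E}\in\mathcal{E}} \big|\mathcal{R}(\boldsymbol{D},\boldsymbol{E}) - \widehat{\mathcal{R}}(\boldsymbol{D},\boldsymbol{E})\big|\Big]}_{\text{stochastic}},
\end{align*}
using $\widehat{\mathcal{R}}(\widehat{\boldsymbol{D}},\widehat{\boldsymbol{E}}) \le \widehat{\mathcal{R}}(\bar{\boldsymbol{D}},\bar{\boldsymbol{E}})$.

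For the approximation term, I take $\boldsymbol{E}^*, \boldsymbol{D}^*$ from Assumption \ref{ass: compressibility}; since $\boldsymbol{E}^* \in \mathcal{C}^1_{D,d}([0,1]^D,K_{\boldsymbol{E}})$ and $\boldsymbol{D}^* \in \mathcal{C}^1_{d,D}([0,1]^d,K_{\boldsymbol{D}})$, Theorem \ref{corollary: app 1} (with $m=1$) supplies transformer networks $\bar{\boldsymbol{E}}\in\mathcal{E}$, $\bar{\boldsymbol{D}}\in\mathcal{D}$ with $\|\bar{\boldsymbol{E}}-\boldsymbol{E}^*\|_{L^\infty([0,1]^D)}\le\varepsilon_{\boldsymbol{E}}$ and $\|\bar{\boldsymbol{D}}-\boldsymbol{D}^*\|_{L^\infty([0,1]^d)}\le\varepsilon_{\boldsymbol{D}}$, with sizes $h_{\boldsymbol{E}}=\mathcal{O}((K_{\boldsymbol{E}}/\varepsilon_{\boldsymbol{E}})^D)$, $h_{\boldsymbol{D}}=\mathcal{O}((K_{\boldsymbol{D}}/\varepsilon_{\boldsymbol{D}})^d)$, and Lipschitz constants $\gamma_{\boldsymbol{E}}=\mathcal{O}(K_{\boldsymbol{E}})$, $\gamma_{\boldsymbol{D}}=\mathcal{O}(K_{\boldsymbol{D}})$. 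A small technical point: one must ensure $\bar{\boldsymbol{E}}$ maps into $[0,1]^d$ so that $\bar{\boldsymbol{D}}$ receives a valid input — this is handled by composing with a coordinatewise clip (a $1$-Lipschitz map realizable inside the architecture, or absorbed into the constant), which only decreases the error since $\boldsymbol{E}^*$ already maps into $[0,1]^d$. Then, using $\|a\|^2-\|b\|^2 = \langle a-b,a+b\rangle$ together with boundedness of everything on $[0,1]^D$, I bound
\begin{align*}
\mathcal{R}(\bar{\boldsymbol{D}},\bar{\boldsymbol{E}}) - \varepsilon_{\widetilde{\gamma}_1}
\;\le\; C\,\mathbb{E}_{\boldsymbol{y}}\big[\|(\bar{\boldsymbol{D}}\circ\bar{\boldsymbol{E}})(\boldsymbol{y}) - (\boldsymbol{D}^*\circ\boldsymbol{E}^*)(\boldsymbol{y})\|\big]
\;\le\; C\big(\mathrm{Lip}(\boldsymbol{D}^*)\,\varepsilon_{\boldsymbol{E}} + \varepsilon_{\boldsymbol{D}}\big)
\end{align*}
by splitting $\bar{\boldsymbol{D}}\circ\bar{\boldsymbol{E}} - \boldsymbol{D}^*\circ\boldsymbol{E}^* = (\bar{\boldsymbol{D}}-\boldsymbol{D}^*)\circ\bar{\boldsymbol{E}} + \boldsymbol{D}^*\circ\bar{\boldsymbol{E}} - \boldsymbol{D}^*\circ\boldsymbol{E}^*$ and using that $\boldsymbol{D}^*$ is Lipschitz on $[0,1]^d$.

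For the stochastic term, the loss $\ell$ is uniformly bounded on $\mathcal{D}\times\mathcal{E}$ (outputs in a ball of radius $\mathcal{O}(B_{\boldsymbol{D}})$, inputs in $[0,1]^D$, so $\ell = \mathcal{O}(B_{\boldsymbol{D}}^2+D)$), so a standard symmetrization/Dudley chaining argument bounds the expected uniform deviation by $\mathcal{O}\big(\sqrt{\mathrm{Pdim}(\ell\circ(\mathcal{D}\times\mathcal{E})) \cdot \log m / m}\big)$ up to log factors, or equivalently via a covering-number bound of the composed class. The pseudo-dimension (or log-covering number at scale $1/m$) of the transformer classes is controlled, as in the generalization analysis used for Theorem \ref{theorem: gen 1}, by the number of nonzero parameters: it is $\widetilde{\mathcal{O}}(J_{\boldsymbol{E}} + J_{\boldsymbol{D}}) = \widetilde{\mathcal{O}}(h_{\boldsymbol{E}} + h_{\boldsymbol{D}})$ up to logarithmic factors. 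Hence the stochastic term is $\widetilde{\mathcal{O}}\big(\sqrt{(h_{\boldsymbol{E}}+h_{\boldsymbol{D}})/m}\big)$. Now I balance: choose $\varepsilon_{\boldsymbol{E}} = \varepsilon_{\boldsymbol{D}} = m^{-1/(D+2)}$ so that the approximation term is $\mathcal{O}(m^{-1/(D+2)})$; this forces $h_{\boldsymbol{E}} = \mathcal{O}(K_{\boldsymbol{E}}^D m^{D/(D+2)})$, $h_{\boldsymbol{D}} = \mathcal{O}(K_{\boldsymbol{D}}^d m^{d/(D+2)})$ (matching the lemma's choices, noting $d \le D$ so the $h_{\boldsymbol{E}}$ term dominates), and the stochastic term becomes $\widetilde{\mathcal{O}}(\sqrt{m^{D/(D+2)}/m}) = \widetilde{\mathcal{O}}(m^{-1/(D+2)})$ as well. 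Collecting the logarithmic factors from the covering-number/chaining estimate (the $\log^{5/2}m$) yields the claimed rate $\mathcal{O}(m^{-1/(D+2)}\log^{5/2}m)$.

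The main obstacle I anticipate is the stochastic term: getting the pseudo-dimension / covering-number bound for the \emph{composed} encoder-decoder class (and for the quadratic loss built on top of it) with the right dependence on $J_{\boldsymbol{E}}, J_{\boldsymbol{D}}$, while keeping the Lipschitz constraints $\gamma_{\boldsymbol{E}}, \gamma_{\boldsymbol{D}}$ in play so that the covering estimate is uniform over the class. This requires carefully composing covering numbers of $\mathcal{E}$ and $\mathcal{D}$ (the decoder's Lipschitz bound $\gamma_{\boldsymbol{D}}=\mathcal{O}(K_{\boldsymbol{D}})$ is what lets a cover of $\mathcal{E}$ propagate through $\boldsymbol{D}$), and then through the $2$-Lipschitz-on-bounded-sets quadratic loss. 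Everything else — the approximation estimate, the clipping fix, the final balancing of exponents — is routine.
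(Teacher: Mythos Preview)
Your proposal is correct and follows essentially the same route as the paper: the same oracle-inequality decomposition, the same use of Theorem \ref{corollary: app 1} for both $\boldsymbol{E}^*$ and $\boldsymbol{D}^*$ with the composition split via $\mathrm{Lip}(\boldsymbol{D}^*)$, the same symmetrization-plus-Dudley-chaining bound on the stochastic term, and the same balancing choice $\varepsilon = m^{-1/(D+2)}$. The obstacle you single out---composing covering numbers of $\mathcal{E}$ and $\mathcal{D}$ through the Lipschitz decoder and the quadratic loss---is exactly what the paper does in its Step 3, and the $(\log m)^{5/2}$ arises from the $N^2 J$ factor in the transformer covering-number lemma combined with the Dudley integral.
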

The proof can be found in Appendix \ref{appendix: autoencoder}. Theorem \ref{lemma: ae rate} provides estimation guarantees for the pre-trained data distribution $\widetilde{\gamma}_1$. A line of research \citep{schonsheck2019chart, liu2024deep, liu2024generalization} relies on the assumption that the data follows $\boldsymbol{x} = \boldsymbol{D} \circ \boldsymbol{E}(\boldsymbol{x})$ when analyzing autoencoders. The Assumption \ref{ass: compressibility} represents a relaxation of the aforementioned assumption, reverting to the previous case when $\varepsilon_{\widetilde{\gamma}_1} = 0$.

With the pre-trained encoder and decoder, we can bridge the original high-dimensional space to the low-dimensional latent space, as illustrated in Figure \ref{fig: frame}. Denote the distribution of the generated samples as $\widehat{\gamma}_T := \widehat{\boldsymbol{D}}_{\#} \widehat{\pi}_T$. The following theorem provides estimation guarantees for the target distribution $\gamma_1$.

\begin{theorem}[Main result]
\label{theorem: main result}
Suppose Assumption \ref{ass: bounded support gammahat} - \ref{ass: bounded support gamma} hold. Given $m$ samples from the pre-trained distribution $\widetilde{\gamma}_1$, we use the pre-trained autoencoder in (\ref{eq: autoencoder erm}). Given $n$ samples from target distribution $\gamma_1$ and the networks as in Theorem \ref{theorem: gen 1}, we use the estimated velocity field in (\ref{eq:gen 18}) to generate samples. By choosing the maximal step size $\max_{k = 0,1 \ldots, N-1} |t_{k+1}-t_k| = \mathcal{O} (n^{-\frac{1}{d+3}})$ and the early stopping time $T(n) = 1-(\log n)^{-1 / 6}$, we have
\begin{align*}
\mathbb{E}_{\mathcal{Y}, \mathcal{X}} [W_2(\widehat{\gamma}_T, \gamma_1)] = \mathcal{O} (\sqrt{\varepsilon_{\widetilde{\gamma}_1}} + \varepsilon_{\widetilde{\gamma}_1,  \gamma_1})
\end{align*}
when $m, n$ are sufficiently large. Moreover, if $\varepsilon_{\widetilde{\gamma}_1} = 0$ and $\varepsilon_{\widetilde{\gamma}_1,  \gamma_1} = 0$, we have 
\begin{align*}
\mathbb{E}_{\mathcal{Y}, \mathcal{X}} [W_2(\widehat{\gamma}_T, \gamma_1)] \rightarrow 0
\end{align*}
as $m, n \rightarrow \infty$. 
\end{theorem}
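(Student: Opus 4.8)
The plan is to decompose the target error $W_2(\widehat{\gamma}_T, \gamma_1)$ via the triangle inequality into three conceptually distinct pieces and bound each one using results already established in the paper. Write $\gamma_1^* := (\boldsymbol{D}^* \circ \boldsymbol{E}^*)_{\#} \gamma_1$ for the image of the target under the ideal encoder-decoder, and recall $\widehat{\gamma}_T = \widehat{\boldsymbol{D}}_{\#} \widehat{\pi}_T$ with $\pi_1 = \widehat{\boldsymbol{E}}_{\#}\gamma_1$. I would split as
\begin{align*}
W_2(\widehat{\gamma}_T, \gamma_1) \leq \underbrace{W_2(\widehat{\boldsymbol{D}}_{\#}\widehat{\pi}_T, \widehat{\boldsymbol{D}}_{\#}\pi_1)}_{\text{(I) flow+sampling error}} + \underbrace{W_2(\widehat{\boldsymbol{D}}_{\#}\pi_1, (\widehat{\boldsymbol{D}}\circ\widehat{\boldsymbol{E}})_{\#}\gamma_1)}_{\text{=0, just notation}} + \underbrace{W_2((\widehat{\boldsymbol{D}}\circ\widehat{\boldsymbol{E}})_{\#}\gamma_1, \gamma_1)}_{\text{(II) reconstruction error}}.
\end{align*}
Term (I) is controlled by Theorem~\ref{theorem: consistency in latent space} together with a Lipschitz push-forward bound: since $\widehat{\boldsymbol{D}}$ is $\gamma_{\boldsymbol{D}}$-Lipschitz, $W_2(\widehat{\boldsymbol{D}}_{\#}\widehat{\pi}_T, \widehat{\boldsymbol{D}}_{\#}\pi_1) \leq \gamma_{\boldsymbol{D}} \, W_2(\widehat{\pi}_T, \pi_1)$, and under the stated step-size and early-stopping choices this expectation (over $\mathcal{X}$, conditional on $\mathcal{Y}$) tends to $0$. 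For term (II), the key observation is that $(\widehat{\boldsymbol{D}}\circ\widehat{\boldsymbol{E}})_{\#}\gamma_1$ and $\gamma_1$ are coupled through the identity map on the samples, so $W_2^2((\widehat{\boldsymbol{D}}\circ\widehat{\boldsymbol{E}})_{\#}\gamma_1, \gamma_1) \leq \int \|(\widehat{\boldsymbol{D}}\circ\widehat{\boldsymbol{E}})(\boldsymbol{x}) - \boldsymbol{x}\|^2 \, \mathrm{d}\gamma_1(\boldsymbol{x}) = \mathcal{R}_{\gamma_1}(\widehat{\boldsymbol{D}}, \widehat{\boldsymbol{E}})$, the reconstruction risk evaluated at the \emph{target} distribution.

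The main obstacle is that the autoencoder is trained on $\widetilde{\gamma}_1$ (Lemma~\ref{lemma: ae rate} controls $\mathbb{E}_{\mathcal{Y}}[\mathcal{R}_{\widetilde{\gamma}_1}(\widehat{\boldsymbol{D}}, \widehat{\boldsymbol{E}})] - \varepsilon_{\widetilde{\gamma}_1} \to 0$) but we need to control $\mathcal{R}_{\gamma_1}$, which requires transferring the reconstruction guarantee across the domain shift $\varepsilon_{\widetilde{\gamma}_1, \gamma_1} = W_2(\widetilde{\gamma}_1, \gamma_1)$. The plan here is: first bound $\mathcal{R}_{\gamma_1}(\widehat{\boldsymbol{D}}, \widehat{\boldsymbol{E}}) \leq 2\big(\mathcal{R}_{\gamma_1}(\widehat{\boldsymbol{D}}, \widehat{\boldsymbol{E}})^{1/2} - \mathcal{R}_{\widetilde{\gamma}_1}(\widehat{\boldsymbol{D}}, \widehat{\boldsymbol{E}})^{1/2}\big)^2 + 2\mathcal{R}_{\widetilde{\gamma}_1}(\widehat{\boldsymbol{D}}, \widehat{\boldsymbol{E}})$ after noting that $\mathcal{R}_\mu(\cdot)^{1/2} = \|(\widehat{\boldsymbol{D}}\circ\widehat{\boldsymbol{E}})(\cdot) - (\cdot)\|_{L^2(\mu)}$ is, by the triangle inequality for $L^2$-type norms indexed by measures, Lipschitz in the $W_2$-distance between the measures with Lipschitz constant equal to $\operatorname{Lip}(\widehat{\boldsymbol{D}}\circ\widehat{\boldsymbol{E}} - \mathrm{id}) \leq \gamma_{\boldsymbol{D}}\gamma_{\boldsymbol{E}} + 1$. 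Concretely, using an optimal coupling $\pi^*$ of $\widetilde{\gamma}_1$ and $\gamma_1$, $\big|\mathcal{R}_{\gamma_1}^{1/2} - \mathcal{R}_{\widetilde{\gamma}_1}^{1/2}\big| \leq (\gamma_{\boldsymbol{D}}\gamma_{\boldsymbol{E}}+1) W_2(\widetilde{\gamma}_1,\gamma_1) = (\gamma_{\boldsymbol{D}}\gamma_{\boldsymbol{E}}+1)\varepsilon_{\widetilde{\gamma}_1,\gamma_1}$. Since Lemma~\ref{lemma: ae rate} gives $\mathbb{E}_{\mathcal{Y}}[\mathcal{R}_{\widetilde{\gamma}_1}(\widehat{\boldsymbol{D}}, \widehat{\boldsymbol{E}})] = \varepsilon_{\widetilde{\gamma}_1} + o(1)$, taking expectations and using Jensen yields $\mathbb{E}_{\mathcal{Y}}[W_2((\widehat{\boldsymbol{D}}\circ\widehat{\boldsymbol{E}})_{\#}\gamma_1, \gamma_1)] = \mathcal{O}(\sqrt{\varepsilon_{\widetilde{\gamma}_1}} + \varepsilon_{\widetilde{\gamma}_1, \gamma_1})$ once $m$ is large enough that the $o(1)$ term is dominated.

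Combining the two terms: $\mathbb{E}_{\mathcal{Y},\mathcal{X}}[W_2(\widehat{\gamma}_T, \gamma_1)] \leq \mathbb{E}_{\mathcal{Y}}\big[\gamma_{\boldsymbol{D}} \, \mathbb{E}_{\mathcal{X}}[W_2(\widehat{\pi}_T, \pi_1)]\big] + \mathbb{E}_{\mathcal{Y}}[W_2((\widehat{\boldsymbol{D}}\circ\widehat{\boldsymbol{E}})_{\#}\gamma_1, \gamma_1)]$. A subtlety worth flagging: Theorem~\ref{theorem: consistency in latent space} is stated for a fixed latent target $\pi_1$, whereas here $\pi_1 = \widehat{\boldsymbol{E}}_{\#}\gamma_1$ is itself random (depends on $\mathcal{Y}$); one needs that the constants in that theorem depend on $\pi_1$ only through its support (guaranteed to be $[0,1]^d$ by the constraint $\widehat{\boldsymbol{E}}: [0,1]^D \to [0,1]^d$) and not through finer properties, so the convergence is uniform over the realizations of $\widehat{\boldsymbol{E}}$ — this is why the bounded-support assumption and the range constraint on the encoder network are essential. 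With that in place, term (I) contributes $o(1)$ (absorbed into the $\mathcal{O}(\sqrt{\varepsilon_{\widetilde{\gamma}_1}} + \varepsilon_{\widetilde{\gamma}_1,\gamma_1})$ when $m, n$ large, or, if one wants an honest bound, noted as vanishing) and term (II) contributes $\mathcal{O}(\sqrt{\varepsilon_{\widetilde{\gamma}_1}} + \varepsilon_{\widetilde{\gamma}_1, \gamma_1})$, giving the first claim. The second claim is immediate: if $\varepsilon_{\widetilde{\gamma}_1} = \varepsilon_{\widetilde{\gamma}_1,\gamma_1} = 0$ then term (II) is exactly $0$ and term (I) $\to 0$ by Theorem~\ref{theorem: consistency in latent space}, so the whole expectation vanishes as $m, n \to \infty$. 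The hard part is rigorously justifying the domain-shift transfer for the reconstruction risk (the $L^2(\mu)$-norm-is-$W_2$-Lipschitz step) and verifying the uniformity of Theorem~\ref{theorem: consistency in latent space} over random encoders; everything else is bookkeeping with the triangle inequality and Lipschitz push-forwards.
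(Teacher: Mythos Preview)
Your proposal is correct and lands on exactly the same final bound as the paper. The only substantive difference is in how the domain shift is threaded through the argument.

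The paper inserts $\widetilde{\gamma}_1$ as an explicit waypoint in the $W_2$ chain, writing
\[
W_2((\widehat{\boldsymbol{D}}\circ\widehat{\boldsymbol{E}})_\#\gamma_1,\gamma_1)
\le W_2((\widehat{\boldsymbol{D}}\circ\widehat{\boldsymbol{E}})_\#\gamma_1,(\widehat{\boldsymbol{D}}\circ\widehat{\boldsymbol{E}})_\#\widetilde{\gamma}_1)
+W_2((\widehat{\boldsymbol{D}}\circ\widehat{\boldsymbol{E}})_\#\widetilde{\gamma}_1,\widetilde{\gamma}_1)
+W_2(\widetilde{\gamma}_1,\gamma_1),
\]
and bounds the first term by $\gamma_{\boldsymbol{D}}\gamma_{\boldsymbol{E}}\,\varepsilon_{\widetilde{\gamma}_1,\gamma_1}$ via the Lipschitz push-forward lemma, the second by $\mathcal{R}_{\widetilde{\gamma}_1}(\widehat{\boldsymbol{D}},\widehat{\boldsymbol{E}})^{1/2}$ via the identity coupling, and the third is $\varepsilon_{\widetilde{\gamma}_1,\gamma_1}$ by definition. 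You instead bound $W_2((\widehat{\boldsymbol{D}}\circ\widehat{\boldsymbol{E}})_\#\gamma_1,\gamma_1)\le \mathcal{R}_{\gamma_1}^{1/2}$ directly and then transfer the reconstruction risk across measures using $\bigl|\mathcal{R}_{\gamma_1}^{1/2}-\mathcal{R}_{\widetilde{\gamma}_1}^{1/2}\bigr|\le(\gamma_{\boldsymbol{D}}\gamma_{\boldsymbol{E}}+1)\,\varepsilon_{\widetilde{\gamma}_1,\gamma_1}$. The two routes are algebraically identical---both produce $\mathcal{R}_{\widetilde{\gamma}_1}^{1/2}+(\gamma_{\boldsymbol{D}}\gamma_{\boldsymbol{E}}+1)\,\varepsilon_{\widetilde{\gamma}_1,\gamma_1}$---but the paper's version needs only the standard $W_2$ contraction under Lipschitz maps, whereas yours requires the extra (true, and easy via an optimal coupling and the reverse triangle inequality) observation that $\mu\mapsto\|g\|_{L^2(\mu)}$ is $\operatorname{Lip}(g)$-Lipschitz in $W_2$. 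Your remark about uniformity of Theorem~\ref{theorem: consistency in latent space} over random encoders is a point the paper leaves implicit; it is indeed justified because the bounds there depend on $\pi_1$ only through its support $[0,1]^d$.
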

The proof can be found in Appendix \ref{appendix: autoencoder}. Theorem \ref{theorem: main result} presents the first convergence analysis incorporating the transformer architecture and pre-training. Our findings indicate that the difference in the distribution of generated samples and the target distribution in Wasserstein-2 distance can be controlled by domain shift and the minimum reconstruction loss achievable by the encoder-decoder architecture. Furthermore, when domain shift is absent and the encoder-decoder architecture can perfectly reconstruct the distribution, the distribution of generated samples converges to the target distribution.

\section{Conclusion and Future Work}
In this work, we presents a statistical learning theory perspective on continuous normalizing flows based on flow matching. We demonstrate that a Lipschitz transformer network can approximate the true velocity field under $L^{\infty}$-norm and provide a sample complexity analysis for estimating the velocity field. Our analysis has identified the bias introduced by the encoder-decoder architecture in estimating the target distribution. Furthermore, we prove that under mild assumptions, the generated distribution based on flow matching converges to the target data distribution in Wasserstein-2 distance. To the best of our knowledge, we provide the first end-to-end error analysis that considers the transformer architecture and pre-training.

There are several natural directions for future research. Firstly, our analysis can be naturally extended to latent conditional SDE/ODE-based diffusion models, as in \citep{dao2023flow, peebles2023scalable}, where prompts or category information can be incorporated as conditions into the model. Furthermore, with stronger assumptions, there is hope to obtain convergence rates that depend only on $d$, which can help to overcome the curse of dimensionality. Finally, the existing analysis still cannot fully explain why various designs of attention layers are effective. This is a challenging task because the nonlinearities introduced by complex structures are difficult to handle and analyze. Developing a better understanding of transformer networks is an intriguing direction for future research.



\acks{The research of Y. Jiao is supported by the National Nature Science Foundation of China ( No.12371441) and supported by “the Fundamental Research Funds for the Central Universities” and by the research fund of KLATASDSMOE of China . The research of Y. Wang is supported by the HK RGC grant 16308518, the HK Innovation Technology Fund Grant ITS/044/18FX and the Guangdong-Hong Kong-Macao Joint Laboratory for Data Driven Fluid Dynamics and Engineering Applications (Project 2020B1212030001). We thank the
editor and reviewers for their feedback on our manuscript.}


\newpage
\appendix

\section{Approximation Error} \label{appendix: app}

In this section, we primarily follow the technical proof of the approximation capability of the transformer network by \citet{gurevych2022rate}. The fundamental idea is that, similar to ReLU networks, the transformer network can implement multiplication, and subsequently, polynomials. Polynomials can sufficiently approximate any continuous function, hence the constructed transformer network can also approximate continuous functions. In contrast to \citet{gurevych2022rate}, we need the network to maintain Lipschitz continuity. Therefore, we choose polynomials to approximate both the function and its derivative simultaneously. This ensures that the constructed transformer network and the target function not only have minimal differences in their functions, but also in their derivatives, which helps control the Lipschitz constant of the network.

\subsection{Input embedding}
We follow the construction in \citet{gurevych2022rate}. For $h\in\mathbb{N}$, we define $d_{model} = h \cdot (d_{patch}+l+4)$ and
\begin{align*}
\widetilde{A}_{in} =  
\begin{pmatrix}
\mathbb{I}_{d_{patch}}   & \mathbb{O}_{d_{patch},l}\\
\mathbb{O}_{1,d_{patch}}  & \mathbb{O}_{1,l}\\
\mathbb{O}_{l,d_{patch}}  & \mathbb{I}_l\\
\mathbb{O}_{3,d_{patch}}  & \mathbb{O}_{3,l}
\end{pmatrix}, \quad
\widetilde{\boldsymbol{b}}_{in} =  
\begin{pmatrix}
\mathbbm{0}_{d_{patch}} \\
1 \\
\mathbbm{0}_{l+1} \\
1 \\
0
\end{pmatrix}, \quad
A_{in} = 
\begin{pmatrix}
\widetilde{A}_{in} \\
\vdots \\
\widetilde{A}_{in}
\end{pmatrix}, \quad
\boldsymbol{b}_{in} = 
\begin{pmatrix}
\widetilde{\boldsymbol{b}}_{in} \\
\vdots \\
\widetilde{\boldsymbol{b}}_{in}
\end{pmatrix},
\end{align*}
where $\mathbb{O}_{m,n}$ denotes a zero matrix with $m$ rows and $n$ columns, and $\mathbbm{0}_{l}$ denotes a zero vector of length $l$, which satisfies $\|A_{in}\|_0 + \|\boldsymbol{b}_{in}\|_0 \leq h \cdot (d_{patch}+l+2)$. $Z_0$ defined in (\ref{eq: app 1}) then satisfies
\begin{align*}
z_{0, j}^{((k-1)(d_{patch}+l+4)+s)}= \begin{cases}
x_j^{(s)} & \text { if } s \in\{1, \ldots, d_{patch}\} \\ 
1 & \text { if } s=d_{patch}+1 \\ 
\delta_{s-d_{patch}-1, j} & \text { if } s \in\{d_{patch}+2, \ldots, d_{patch}+l+1\} \\ 
1 & \text { if } s=d_{patch}+l+3 \\ 
0 & \text { if } s \in\{d_{patch}+l+2, d_{patch}+l+4\}\end{cases}
\end{align*}
for $k \in \{1, \ldots, h\}, j \in \{1, \ldots, l\}$, where $x_j^{(s)}$ denotes the $s$-th component of the $j$-th token of $X$, or in other words, it is the element at the $s$-th row and $j$-th column of matrix $X$. The attention layer and feedforward layer iteratively compute the representations
\begin{align*}
Y_r = F_r^{(SA)}(Z_{r-1}), \quad Z_r = F_r^{(FF)}(Y_r)
\end{align*}
for $r=1, \ldots, N$.

\subsection{Approximation of polynomials with single-head attention}
In the case of a transformer network with single-head attention, the embedded input sequence is represented by
\begin{align*}
Z_0=(\boldsymbol{z}_{0,1}, \ldots, \boldsymbol{z}_{0, l}) \in \mathbb{R}^{d_{model} \times l},
\end{align*}
where $d_{model} = d_{patch}+l+4$ and
\begin{align}
\label{eq: app 2}
z_{0, j}^{(s)} = \begin{cases}
x_j^{(s)} & \text { if } s \in\{1, \ldots, d_{patch}\} \\ 
1 & \text { if } s=d_{patch}+1 \\ 
\delta_{s-d_{patch}-1, j} & \text { if } s \in\{d_{patch}+2, \ldots, d_{patch}+l+1\} \\ 
1 & \text { if } s=d_{patch}+l+3 \\ 
0 & \text { if } s \in\{d_{patch}+l+2, d_{patch}+l+4\}
\end{cases}.
\end{align}

The first lemma shows that single-head attention layer can be used to compute linear functions in one variable.

\begin{lemma}
\label{lemma: app 1}
Let $\boldsymbol{x}_j \in \mathbb{R}^{d_{patch}}$ and $b_j \in \mathbb{R}\, (j=1, \ldots, l)$. Let $Z \in \mathbb{R}^{d_{model} \times l}$ be given by
\begin{align*}
z_{j}^{(s)} = 
\begin{cases}
x_j^{(s)} & \text { if } s \in \{1, \ldots, d_{patch}\} \\ 
1 & \text { if } s=d_{patch}+1 \\ 
\delta_{s-d_{patch}-1, j} & \text { if } s \in\{d_{patch}+2, \ldots, d_{patch}+l+1\} \\ 
b_j & \text { if } s=d_{patch}+l+3 \\ 
0 & \text { if } s \in\{d_{patch}+l+2, d_{patch}+l+4\}
\end{cases}.
\end{align*}
Let $j \in \{1, \ldots, l\}, k \in \{1, \ldots, d_{patch}\}$ and $u \in \mathbb{R}$ be arbitrary. Let
\begin{align*}
B > 2 \cdot \max_{s=1, \ldots, d_{patch}, j=1, \ldots, l} |x_j^{(s)}|.
\end{align*}
Then there exist matrices $W_Q, W_K \in \mathbb{R}^{2 \times d_{model}}$ and $W_V, W_O \in \mathbb{R}^{d_{model} \times d_{model}}$, where 
\begin{align*}
\|W_Q\|_0 + \|W_K\|_0 + \|W_V\|_0 + \|W_O\|_0 \leq d_{model}+6,
\end{align*}
such that 
\begin{align*}
Y = F^{(SA)}(Z) \in \mathbb{R}^{d_{model} \times l}
\end{align*}
satisfies
\begin{align*}
y_1^{(s)}= \begin{cases}
x_1^{(s)} & \text { if } s \in\{1, \ldots, d_{patch}\} \\
1 & \text { if } s=d_{patch}+1 \\
\delta_{s-d_{patch}-1, j} & \text { if } s \in\{d_{patch}+2, \ldots, d_{patch}+l+1\} \\
x_j^{(k)}-u & \text { if } s=d_{patch}+l+2 \\
b_1 & \text { if } s=d_{patch}+l+3 \\
0 & \text { if } s=d_{patch}+l+4\end{cases}
\end{align*}
and
\begin{align*}
\boldsymbol{y}_j=\boldsymbol{z}_{j} \quad \text { for } j \in \{2, \ldots, l\}. 
\end{align*}
\end{lemma}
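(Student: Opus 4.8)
\textbf{Proof plan for Lemma \ref{lemma: app 1}.}

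The plan is to explicitly construct the weight matrices $W_Q, W_K, W_V, W_O$ so that the single-head attention update $F^{(SA)}(Z) = Z + W_O (W_V Z) \left[ \left( (W_K Z)^\top (W_Q Z) \right) \odot \sigma_H\left( (W_K Z)^\top (W_Q Z) \right) \right]$ writes the scalar $x_j^{(k)} - u$ into row $d_{patch}+l+2$ of the first token while leaving everything else untouched. The key observation is that the positional one-hot block in rows $d_{patch}+2, \ldots, d_{patch}+l+1$ of $Z$ acts as a selector: the $i$-th token carries $\boldsymbol{e}_i$ in those coordinates, so an inner product against a fixed one-hot pattern picks out a single token. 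I would therefore choose $W_Q$ and $W_K$ to be $2 \times d_{model}$ matrices that extract, respectively, the constant-$1$ coordinate (row $d_{patch}+1$) together with the $j$-th positional coordinate (row $d_{patch}+1+j$) from the query side, and the $i$-th positional coordinate together with the constant-$1$ coordinate from the key side. Then $(W_K Z)^\top (W_Q Z)$ is an $l \times l$ matrix whose $(i, 1)$ entry is $\delta_{ij} + 1$ (i.e. $2$ when $i = j$ and $1$ otherwise), so that the column-wise hardmax $\sigma_H$ applied to the first column singles out the row index $i = j$. Multiplying by the Hadamard factor $(\cdot) \odot \sigma_H(\cdot)$ yields a first column equal to $2 \boldsymbol{e}_j$ (and all other columns can be arranged to vanish or be irrelevant because only the first token's output is constrained beyond $\boldsymbol{y}_i = \boldsymbol{z}_i$ for $i \geq 2$).

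Next I would design $W_V$ to read off $x_j^{(k)}$ from the selected token: since the attention weight concentrates on token $j$, the product $(W_V Z)(\text{attention column})$ equals (a multiple of) the $k$-th input coordinate of token $j$. A standard bookkeeping choice is to let $W_V$ copy the coordinate $z^{(k)}_{\cdot}$ (for $k \le d_{patch}$) into a working row, and then $W_O$ rescales by $1/2$ to cancel the factor $2$ coming from the Hadamard–hardmax product, and simultaneously uses the constant-$1$ row to subtract $u$. Concretely, $W_O$ places $\tfrac12$ in the entry mapping the working row to output row $d_{patch}+l+2$ and places $-u$ in the entry mapping the constant-$1$ row (row $d_{patch}+1$, which also appears in the value vector) to the same output row; everything else in $W_O$ is zero. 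Because the skip connection already carries the unchanged $Z$, and because rows $d_{patch}+l+2$ and $d_{patch}+l+4$ of $Z$ are zero by hypothesis, this produces exactly the claimed $\boldsymbol{y}_1$. For tokens $j \ge 2$ the attention increment must be zero: this follows because the first column of the attention matrix is the only one whose hardmax pattern was engineered, and the remaining columns can be killed by having $W_Q$ produce a zero query on tokens $\ne 1$ — which is automatic if $W_Q$ reads the $j$-th positional coordinate only in conjunction with a constant offset tuned so the relevant inner products on other columns are dominated, but cleanest is simply to note $W_O (W_V Z)[\cdots]$ has only its first column nonzero by the construction above.

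Finally I would verify the sparsity bound. $W_Q$ and $W_K$ each have $O(1)$ nonzeros (two or three each, to pick out the constant row and one positional row), $W_V$ needs one nonzero per preserved coordinate but in fact only needs to extract the single coordinate $k$ plus the constant-$1$ coordinate, contributing $O(1)$; most of the budget $d_{model} + 6$ comes from $W_O$ or, alternatively, from arranging $W_V$ as (close to) a projection so the total is $\le d_{model} + 6$ — I would allocate the nonzero counts so the sum telescopes to exactly that bound, which is routine. The condition $B > 2 \max |x_j^{(s)}|$ enters only to guarantee that the hardmax in $\sigma_H$ of the first column is resolved by the $+1$ gap (entries $2$ versus $1$) rather than being perturbed by the magnitudes of the $x$-coordinates, i.e. it ensures the engineered $\delta_{ij} + 1$ pattern is not swamped — more precisely, $B$ controls the scale at which we may safely ignore cross-terms so that the relevant column of $(W_K Z)^\top(W_Q Z)$ has a strict maximizer. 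The main obstacle is purely organizational: keeping track of which of the $d_{model} = d_{patch}+l+4$ coordinates is used for what, and checking that the Hadamard-times-hardmax nonlinearity $\boldsymbol{x} \odot \sigma_H(\boldsymbol{x})$ indeed outputs $2\boldsymbol{e}_j$ on the first column and does not accidentally contaminate the preserved rows — there is no deep difficulty, only careful case-checking against the definition (\ref{eq: def sa}).
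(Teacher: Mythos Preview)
Your high-level idea—use the positional one-hot block together with the hardmax to select token $j$, then extract $x_j^{(k)}$ via $W_V,W_O$—is correct in spirit, but the concrete construction has a genuine gap. If $W_Q$ reads the constant-$1$ row (row $d_{patch}+1$) as you propose, then $(W_Q Z)_{\cdot,j'}$ is nonzero for \emph{every} token $j'$, so columns $j'\neq 1$ of $(W_KZ)^\top(W_QZ)$ are not zero: with your choices they equal $\delta_{ij}$, and after $\boldsymbol{x}\odot\sigma_H(\boldsymbol{x})$ they become $\boldsymbol{e}_j$, not $\boldsymbol{0}$. Running your $W_V,W_O$ on these columns then writes $\tfrac12(x_j^{(k)}-u)$ into row $d_{patch}+l+2$ of every other token, violating $\boldsymbol{y}_{j'}=\boldsymbol{z}_{j'}$ for $j'\ge 2$. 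Your claim that ``$W_O(W_VZ)[\cdots]$ has only its first column nonzero'' is therefore false for this construction. Your fallback (``have $W_Q$ produce a zero query on tokens $\ne 1$'') is the right fix, but you set it aside; the paper takes exactly this route, with both rows of $W_Q$ nonzero only in column $d_{patch}+2$ (the first positional coordinate), so $(W_QZ)_{\cdot,j'}=0$ for $j'\ne 1$ and $\boldsymbol{0}\odot\sigma_H(\boldsymbol{0})=\boldsymbol{0}$ kills the remaining columns automatically.

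The paper also differs from your plan in where the arithmetic happens. Rather than having $W_V$ read $x_{\cdot}^{(k)}$ and $W_O$ rescale by $\tfrac12$ and shift by $-u$, the paper encodes the desired output directly in the attention \emph{score}: $W_K$ reads coordinate $k$, the constant-$1$ row (with weight $-u-B$), and the $j$-th positional row (with weight $B$ via the second row of $W_Q$), so the score of key $i$ against query $1$ is $x_i^{(k)}-u-B+B\delta_{ij}$. The large $B$ forces the argmax to $i=j$, and the winning score is exactly $x_j^{(k)}-u$. Then $W_V$ just outputs the constant $1$ (a single nonzero), $W_O=I_{d_{model}}$ (this is where the $d_{model}$ in the sparsity bound comes from, not from $W_V$), and the Hadamard--hardmax product delivers $x_j^{(k)}-u$ in the right slot. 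This is precisely why $B>2\max|x_j^{(s)}|$ is required—not, as you suggest, to protect a clean $\delta_{ij}+1$ pattern (your scores do not involve $x$ at all, so no such protection is needed), but because the paper's scores genuinely contain the data $x_i^{(k)}$ and the $B$-bump must dominate their variation to keep the argmax at $i=j$.
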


\begin{proof}
See also \citet[Lemma 1]{gurevych2022rate}. Let $W_O = \mathbb{I}_{d_{model}}$,
\begin{align*}
W_Q=\left(\begin{array}{ccccccc}
0 & \ldots & 0 & 1 & 0 & \ldots & 0 \\
0 & \ldots & 0 & B & 0 & \ldots & 0
\end{array}\right)
\end{align*}
where all columns are zero except for column number $d_{patch}+2$,
\begin{align*}
W_K=\left(\begin{array}{ccccccccccccccc}
0 & \ldots & 0 & 1 & 0 & \ldots & 0 & -u-B & 0 & \ldots & 0 & 0 & 0 & \ldots & 0 \\
0 & \ldots & 0 & 0 & 0 & \ldots & 0 & 0 & 0 & \ldots & 0 & 1 & 0 & \ldots & 0
\end{array}\right)
\end{align*}
where all columns are zero except for column number $k$, column number $d_{patch}+1$ and column number $d_{patch}+1+j$, and
\begin{align*}
W_V=\left(\begin{array}{ccccccc}
0 & \ldots & 0 & 0 & 0 & \ldots & 0 \\
\vdots &  & \vdots & \vdots & \vdots &  & \vdots \\
0 & \ldots & 0 & 0 & 0 & \ldots & 0 \\
0 & \ldots & 0 & 1 & 0 & \ldots & 0 \\
0 & \ldots & 0 & 0 & 0 & \ldots & 0 \\
\vdots &  & \vdots & \vdots & \vdots &  & \vdots \\
0 & \ldots & 0 & 0 & 0 & \ldots & 0
\end{array}\right)
\end{align*}
where all rows and all columns are zero except for row number $d_{patch}+l+2$ and column number $d_{patch}+1$. Direct calculations give that 
\begin{align*}
&W_{O}(W_{V} Z)  \left[\left((W_{K} Z)^{\top}(W_{Q} Z)\right) \odot \sigma_H\left((W_{K} Z)^{\top}(W_{Q} Z)\right)\right] 
= \left(\begin{array}{cccc}
0 & 0 & \ldots & 0 \\
\vdots & \vdots & & \vdots \\
0 & 0 & \ldots & 0 \\
x_j^{(k)}-u & 0 & \ldots & 0 \\
0 & 0 & \ldots & 0 \\
0 & 0 & \ldots & 0 \\
\end{array}\right), 
\end{align*}
where all rows and all columns are zero except for row number $d_{patch}+l+2$ and column number $1$, which completes the proof.
\end{proof}

The next lemma shows that single-head attention layer can be used to compute products.

\begin{lemma}
\label{lemma: app 2}
Let $\boldsymbol{x}_s \in \mathbb{R}^{d_{patch}}$ and $a_s, b_s \in \mathbb{R}\, (s=1, \ldots, l)$. Let $Z \in \mathbb{R}^{d_{model} \times l}$ be given by
\begin{align*}
z_{j}^{(s)}= \begin{cases}
x_j^{(s)} & \text { if } s \in\{1, \ldots, d_{patch}\} \\ 
1 & \text { if } s=d_{patch}+1 \\ 
\delta_{s-d_{patch}-1, j} & \text { if } s \in\{d_{patch}+2, \ldots, d_{patch}+l+1\} \\ 
a_j & \text { if } s=d_{patch}+l+2 \\ 
b_j & \text { if } s=d_{patch}+l+3 \\ 
0 & \text { if } s=d_{patch}+l+4\end{cases}.
\end{align*}
Let $j \in \{1, \ldots, l\}$. Let 
\begin{align*}
B>2|b_1| \cdot \max_{r=1, \ldots, l} |a_r|.
\end{align*}
Then there exist matrices $W_Q, W_K \in \mathbb{R}^{2 \times d_{model}}$ and $W_V, W_O \in \mathbb{R}^{d_{model} \times d_{model}}$, where
\begin{align*}
\|W_Q\|_0 + \|W_K\|_0 + \|W_V\|_0 + \|W_O\|_0 \leq d_{model}+5,
\end{align*}
such that 
\begin{align*}
Y = F^{(SA)}(Z) \in \mathbb{R}^{d_{model} \times l}
\end{align*}
satisfies
\begin{align*}
y_1^{(s)}= \begin{cases}x_1^{(s)} & \text { if } s \in\{1, \ldots, d_{patch}\} \\ 1 & \text { if } s=d_{patch}+1 \\ \delta_{s-d_{patch}-1, j} & \text { if } s \in\{d_{patch}+2, \ldots, d_{patch}+l+1\} \\ a_1 & \text { if } s=d_{patch}+l+2 \\ b_1 & \text { if } s=d_{patch}+l+3 \\ b_1 \cdot a_j+B & \text { if } s=d_{patch}+l+4\end{cases}
\end{align*}
and
\begin{align*}
y_s^{(i)} = z_{s}^{(i)} \quad \text { for } i \in\{1, \ldots, d_{patch}+l+3\}, s \in\{1, \ldots, l\}.
\end{align*}
\end{lemma}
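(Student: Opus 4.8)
The plan is to follow the template of Lemma~\ref{lemma: app 1} and \citet{gurevych2022rate}: exhibit four explicit sparse weight matrices $W_Q, W_K \in \mathbb{R}^{2 \times d_{model}}$ and $W_V, W_O \in \mathbb{R}^{d_{model} \times d_{model}}$ for which the single head $F^{(SA)}$, acting through its additive skip connection, writes $b_1 a_j + B$ into coordinate $d_{patch}+l+4$ of the first token and adds nothing to the first $d_{patch}+l+3$ coordinates of any token. The observation driving the choice of $W_Q$ and $W_K$ is that a product $b_1 a_j$ appears for free as the cross term of an inner product, once $b_1$ is loaded into the query of token $1$ and $a_s$ into the key of token $s$; the role of the subtraction $-u$ in Lemma~\ref{lemma: app 1} is here played by a \emph{variable} coordinate of token~$1$, which is what turns a linear function into a genuine product.

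Concretely, I would let $W_Q$ have a single nonzero in its first row, a $1$ in column $d_{patch}+l+3$ (so that every query satisfies $q_s^{(1)} = z_s^{(d_{patch}+l+3)} = b_s$, in particular $q_1^{(1)} = b_1$), and a single nonzero in its second row, the value $B$ in column $d_{patch}+1$ (so $q_s^{(2)} = B$). I would let $W_K$ have a single nonzero in its first row, a $1$ in column $d_{patch}+l+2$ (so $k_s^{(1)} = a_s$), and a single nonzero in its second row, a $1$ in column $d_{patch}+1+j$ (so $k_s^{(2)} = \delta_{j,s}$). Then the $(s,1)$ entry of the score matrix $(W_K Z)^{\top}(W_Q Z)$ is $\langle k_s, q_1 \rangle = b_1 a_s + B\,\delta_{j,s}$, equal to $b_1 a_j + B$ when $s=j$ and to $b_1 a_s$ otherwise. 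The hypothesis $B > 2|b_1|\max_{r}|a_r|$ gives $b_1 a_s - b_1 a_j \le |b_1||a_s| + |b_1||a_j| \le 2|b_1|\max_r |a_r| < B$ for every $s$, so the column-wise maximum of the first column of the score matrix is attained \emph{uniquely} at $s=j$; after the Hadamard product with $\sigma_H\big((W_K Z)^{\top}(W_Q Z)\big)$, the first column of $(W_K Z)^{\top}(W_Q Z) \odot \sigma_H\big((W_K Z)^{\top}(W_Q Z)\big)$ is therefore $(b_1 a_j + B)\,\boldsymbol{e}_j$, with $\boldsymbol{e}_j$ the $j$-th standard basis vector of $\mathbb{R}^l$.

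It then remains to route this scalar to the correct output coordinate. I would take $W_V$ to be the matrix whose only nonzero entry is a $1$ in row $d_{patch}+l+4$ and column $d_{patch}+1$, so that $W_V Z$ has its $(d_{patch}+l+4)$-th row identically equal to $1$ and all other rows zero; multiplying $W_V Z$ on the right by the rank-one column above then yields $(b_1 a_j + B)$ in row $d_{patch}+l+4$, column $1$, and keeps the property that only row $d_{patch}+l+4$ can be nonzero. Choosing $W_O = \mathbb{I}_{d_{model}}$, the head's contribution is supported entirely on row $d_{patch}+l+4$; hence it adds nothing to rows $1,\dots,d_{patch}+l+3$ of any token, which gives $y_s^{(i)} = z_s^{(i)}$ for all $i \le d_{patch}+l+3$ and all $s$, while in column $1$ it adds $b_1 a_j + B$ to $z_1^{(d_{patch}+l+4)} = 0$, producing $y_1^{(d_{patch}+l+4)} = b_1 a_j + B$. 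Finally $\|W_Q\|_0 + \|W_K\|_0 + \|W_V\|_0 + \|W_O\|_0 = 2 + 2 + 1 + d_{model} = d_{model}+5$, matching the stated budget exactly.

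The only genuinely delicate point is the uniqueness of the column-wise hardmax selection: one must check that the additive bonus $B$ attached to token $j$ (through $q_1^{(2)} = B$ together with the position indicator $k_j^{(2)} = 1$) strictly dominates the entire spread of the cross terms $\{b_1 a_s\}_{s=1}^l$, which is precisely what the quantitative lower bound on $B$ provides, and one must keep the absolute values needed to handle the sign of $b_1$. Everything else is bookkeeping about which coordinate of each token is read by $W_Q,W_K,W_V$ and written by $W_O$, together with the elementary fact that a value matrix supported on a single row acts trivially on all other rows through the residual connection.
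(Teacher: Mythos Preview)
Your construction is correct and is essentially the paper's proof. The only cosmetic difference is that you place the $B$ in column $d_{patch}+1$ of $W_Q$ (so $q_s^{(2)}=B$ for every $s$), whereas the paper places it in column $d_{patch}+2$ (so $q_s^{(2)}=B\,\delta_{1,s}$); since the lemma asserts nothing about $y_s^{(d_{patch}+l+4)}$ for $s\ge 2$ and the value head is supported solely on row $d_{patch}+l+4$, both choices yield the stated output, the same sparsity count $d_{model}+5$, and the same hardmax analysis in column~$1$.
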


\begin{proof}
See also \citet[Lemma 2]{gurevych2022rate}. Define $W_V$ as in the proof of Lemma \ref{lemma: app 1} such that all rows and all columns are zero except for row number $d_{patch}+l+4$ and column number $d_{patch}+1$. Let $W_O = \mathbb{I}_{d_{model}}$, 
\begin{align*}
W_Q = \left(\begin{array}{ccccccccccc}
0 & \ldots & 0 & 0 & 0 & \ldots & 0 & 1 & 0 & \ldots & 0 \\
0 & \ldots & 0 & B & 0 & \ldots & 0 & 0 & 0 & \ldots & 0
\end{array}\right)
\end{align*}
where all columns are zero except for column number $d_{patch}+2$ and column number $d_{patch}+l+3$, and
\begin{align*}
W_K = \left(\begin{array}{ccccccccccc}
0 & \ldots & 0 & 0 & 0 & \ldots & 0 & 1 & 0 & \ldots & 0 \\
0 & \ldots & 0 & 1 & 0 & \ldots & 0 & 0 & 0 & \ldots & 0
\end{array}\right)
\end{align*}
where all columns are zero except for column number $d_{patch}+1+j$ and column number $d_{patch}+l+2$. Direct calculations give that 
\begin{align*}
&W_{O}(W_{V} Z)  \left[\left((W_{K} Z)^{\top}(W_{Q} Z)\right) \odot \sigma_H\left((W_{K} Z)^{\top}(W_{Q} Z)\right)\right] 
= \left(\begin{array}{cccc}
0 & 0 & \ldots & 0 \\
\vdots & \vdots & & \vdots \\
0 & 0 & \ldots & 0 \\
b_1 \cdot a_j+B & * & \ldots & * \\
\end{array}\right), 
\end{align*}
where all rows are zero except for row number $d_{patch}+l+4$ and the symbol $*$ indicates the specific values are not of concern. 
\end{proof}

The following lemma illustrates that single-head attention layer can be used to compute squares.

\begin{lemma}
\label{lemma: app 9}
Let $a \in \mathbb{R}$, $\boldsymbol{x}_s \in \mathbb{R}^{d_{patch}}$ and $b_s \in \mathbb{R}\, (s=1, \ldots, l)$. Let $Z \in \mathbb{R}^{d_{model} \times l}$ be given by
\begin{align*}
z_{j}^{(s)}= \begin{cases}
x_j^{(s)} & \text { if } s \in\{1, \ldots, d_{patch}\} \\ 
1 & \text { if } s=d_{patch}+1 \\ 
\delta_{s-d_{patch}-1, j} & \text { if } s \in\{d_{patch}+2, \ldots, d_{patch}+l+1\} \\ 
a\cdot\delta_{j,1} & \text { if } s=d_{patch}+l+2 \\ 
b_j & \text { if } s=d_{patch}+l+3 \\ 
0 & \text { if } s=d_{patch}+l+4
\end{cases}.
\end{align*}
Then there exist matrices $W_Q, W_K \in \mathbb{R}^{1 \times d_{model}}$ and $W_V, W_O \in \mathbb{R}^{d_{model} \times d_{model}}$, where
\begin{align*}
\|W_Q\|_0 + \|W_K\|_0 + \|W_V\|_0 + \|W_O\|_0 \leq d_{model}+3,
\end{align*}
such that 
\begin{align*}
Y = F^{(SA)}(Z) \in \mathbb{R}^{d_{model} \times l}
\end{align*}
satisfies
\begin{align*}
y_1^{(s)}= \begin{cases}
x_1^{(s)} & \text { if } s \in\{1, \ldots, d_{patch}\} \\ 
1 & \text { if } s=d_{patch}+1 \\ 
\delta_{s-d_{patch}-1, j} & \text { if } s \in\{d_{patch}+2, \ldots, d_{patch}+l+1\} \\ 
a & \text { if } s=d_{patch}+l+2 \\ 
b_1 & \text { if } s=d_{patch}+l+3 \\ 
a^2 & \text { if } s=d_{patch}+l+4\end{cases}
\end{align*}
and
\begin{align*}
\boldsymbol{y}_j = \boldsymbol{z}_j \quad \text { for } j \in \{2, \ldots, l\}. 
\end{align*}
\end{lemma}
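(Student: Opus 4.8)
The plan is to realize the square $a^2$ as a special case of the product computed in Lemma~\ref{lemma: app 2}, exploiting that the entry $a \cdot \delta_{j,1}$ in row $d_{patch}+l+2$ already isolates $a$ in the first token. First I would recall the mechanism of Lemma~\ref{lemma: app 2}: a single head with a $\sigma_H$-gated quadratic form in queries and keys selects, in the first column of the output, the product of the value in row $d_{patch}+l+3$ of token $1$ (here $b_1$) with the value in row $d_{patch}+l+2$ of token $j$ (here $a\cdot\delta_{j,1}$). So to get $a^2$ I want instead to multiply row $d_{patch}+l+2$ of token $1$ by row $d_{patch}+l+2$ of token $j$; when $j=1$ this is literally $a\cdot a = a^2$, and when $j\neq 1$ the second factor is $0$, which is consistent with the hardmax selecting token $1$ provided the scores are arranged so that token $1$ attains the unique maximum. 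The key technical point is that because the factor of interest $a\cdot\delta_{j,1}$ is supported only on token~$1$, there is no need for a large constant $B$ shift to keep the product positive (unlike Lemma~\ref{lemma: app 2}, where $b_1 a_j$ could be negative); the square $a^2 \geq 0$ is automatically nonnegative, so the output can be taken to be exactly $a^2$ with no $+B$ correction.

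Concretely, I would exhibit the four weight matrices explicitly, mirroring the proof of Lemma~\ref{lemma: app 2} but rerouting one index. Take $W_O = \mathbb{I}_{d_{model}}$ and $W_V \in \mathbb{R}^{d_{model}\times d_{model}}$ with all entries zero except a single $1$ placing row $d_{patch}+l+2$ into the output row $d_{patch}+l+4$ (so $W_V Z$ broadcasts $a\cdot\delta_{\cdot,1}$ into the slot we will overwrite). For $W_Q, W_K \in \mathbb{R}^{1\times d_{model}}$, I pick $W_Q$ to read off the value $a$ from row $d_{patch}+l+2$ of each token (a single nonzero entry, a $1$ in column $d_{patch}+l+2$) and $W_K$ to read off the position indicator $\delta_{\cdot-d_{patch}-1, j}$, i.e. a single $1$ in column $d_{patch}+1+j$, so that $(W_K Z)^\top (W_Q Z)$ has $(s,1)$-entry equal to $\delta_{s-d_{patch}-1,j}\cdot a \cdot \delta_{s,1}$ — nonzero only when $s=1=j$. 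Then $x\odot\sigma_H(x)$ applied columnwise keeps exactly this entry (the hardmax of a one-hot-in-token column selects token~$1$), so $W_O(W_V Z)[\,\cdots\,]$ has first column supported on row $d_{patch}+l+4$ with value $a\cdot a\cdot\delta_{j,1}$, and adding the skip connection $Z$ leaves all other entries untouched. When $j\neq 1$ this would give $0$ in row $d_{patch}+l+4$ of token~$1$, but the statement only asserts $y_1^{(d_{patch}+l+4)} = a^2$ for the relevant choice; I should double-check the intended reading of the index $j$ in the conclusion and, if $j$ is meant to be free, route the query through row $d_{patch}+l+2$ of token~$1$ specifically by using the constant row $d_{patch}+1$ (all-ones) rather than a position indicator, so that the product is $a\cdot a$ regardless of $j$.

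The sparsity bound $\|W_Q\|_0 + \|W_K\|_0 + \|W_V\|_0 + \|W_O\|_0 \leq d_{model}+3$ then follows by counting: $W_O = \mathbb{I}_{d_{model}}$ contributes $d_{model}$, $W_V$ contributes $1$, and $W_Q, W_K$ contribute $1$ each, giving $d_{model}+3$ exactly. I would close by verifying the unchanged-token claim $\boldsymbol{y}_j = \boldsymbol{z}_j$ for $j\in\{2,\dots,l\}$: since $W_V Z$ has support only in row $d_{patch}+l+4$ and the gated attention weights for columns $2,\dots,l$ — which I would check are either zero or select a row that $W_V$ zeroes out — contribute nothing new, the skip connection alone determines those columns. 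The main obstacle, as in the preceding lemmas, is not any deep idea but the bookkeeping of the hardmax gating: one must confirm that $\sigma_H$ applied columnwise to $(W_K Z)^\top(W_Q Z)$ does not accidentally spread mass across tokens when several score entries tie (e.g. when many are zero), and here, because the nonzero score sits in a single position, $\sigma_H$ of an all-but-one-zero column is the one-hot vector at that position — so the gating is clean and the product is recovered exactly. This is precisely the role the constant $B$ played in Lemma~\ref{lemma: app 2} (breaking ties and ensuring positivity), and the observation that $a^2$ needs neither is the only genuinely new ingredient.
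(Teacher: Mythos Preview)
Your construction has a sign problem that breaks the proof for $a<0$. With your choice of $W_K$ (a single $1$ in column $d_{patch}+1+j$) and $W_Q$ (a single $1$ in column $d_{patch}+l+2$), the score matrix $(W_KZ)^\top(W_QZ)$ has a single nonzero entry equal to $a$, not $a^2$. Your assertion that ``$\sigma_H$ of an all-but-one-zero column is the one-hot vector at that position'' is false when that lone entry is negative: if $a<0$, the column maximum is $0$, attained at the other $l-1$ positions, so the Hadamard product kills the $a$ and the attention contribution at row $d_{patch}+l+4$ of token $1$ is $0\neq a^2$. Your remark that ``$a^2\ge 0$ is automatically nonnegative'' confuses the desired \emph{output} with the attention \emph{score}; in your routing the score is $a$, and that is what must be the strict column maximum for the hardmax gate to behave. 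Your fallback of reading the all-ones row $d_{patch}+1$ into $W_K$ does not help either: then every entry in column $1$ of the score matrix equals $a$, the hardmax is uniform, and (with your $W_V$) the output becomes $a^2/l$.

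The paper's construction sidesteps this by taking $W_K=W_Q$, both with a single $1$ in column $d_{patch}+l+2$. Then $(W_KZ)^\top(W_QZ)$ has the single nonzero entry $a^2$ at position $(1,1)$, and \emph{the score itself} is $a^2\ge 0$, so hardmax selects position $1$ regardless of the sign of $a$. Correspondingly, $W_V$ reads the constant-ones row $d_{patch}+1$ (not row $d_{patch}+l+2$ as you propose): after the Hadamard the surviving entry already equals $a^2$, and the value branch should carry a $1$ to pass it through unchanged. Finally, there is no free index $j$ in this lemma; the $j$ appearing in the output specification for $\boldsymbol y_1$ is the token index and should be read as $1$, so your attempt to route $W_K$ through column $d_{patch}+1+j$ rests on a misreading of the statement.
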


\begin{proof}
Define $W_V$ as in the proof of Lemma \ref{lemma: app 1} such that all rows and all columns are zero except for row number $d_{patch}+l+4$ and column number $d_{patch}+1$. Let $W_O = \mathbb{I}_{d_{model}}$, 
\begin{align*}
W_Q=\left(\begin{array}{cccccc}
0 & \ldots & 0 & 1 & 0 & 0 
\end{array}\right)
\end{align*}
where all columns are zero except for column number $d_{patch}+l+2$ and $W_K = W_Q$. Direct calculations give that 
\begin{align*}
&W_{O}(W_{V} Z)  \left[\left((W_{K} Z)^{\top}(W_{Q} Z)\right) \odot \sigma_H\left((W_{K} Z)^{\top}(W_{Q} Z)\right)\right] 
= \left(\begin{array}{cccc}
0 & 0 & \ldots & 0 \\
\vdots & \vdots & & \vdots \\
0 & 0 & \ldots & 0 \\
a^2 & 0 & \ldots & 0 \\
\end{array}\right), 
\end{align*}
where all rows and all columns are zero except for row number $d_{patch}+l+4$ and column number $1$, which completes the proof. 
\end{proof}

The following lemma introduces a special token-wise feedforward layer that applies the function $x \mapsto \alpha \cdot (x - B)$ to the $d+l+4$-th component of each token and writes the result into the $d+l+3$-th component. This layer is usually positioned after the attention layer defined in Lemma \ref{lemma: app 2}.

\begin{lemma}
\label{lemma: app 3}
Let $Y = (\boldsymbol{y}_1, \ldots, \boldsymbol{y}_l) \in \mathbb{R}^{d_{model} \times l}$. Let $d_{ff} \geq 8$ and let $\alpha, B \in \mathbb{R}$. Then there exist matrices and vectors
\begin{align*}
W_1 \in \mathbb{R}^{d_{f f} \times d_{model}}, \quad \boldsymbol{b}_1 \in \mathbb{R}^{d_{f f}}, \quad  W_2 \in \mathbb{R}^{d_{model} \times d_{f f}}, \quad \boldsymbol{b}_2 \in \mathbb{R}^{d_{model}},
\end{align*}
where
\begin{align*}
\|W_1\|_0 + \|\boldsymbol{b}_1\|_0 + \|W_2\|_0 + \|\boldsymbol{b}_2\|_0 \leq 18,
\end{align*}
such that
\begin{align*}
Z = F^{(FF)}(Y)
\end{align*}
satisfies
\begin{align*}
z_s^{(i)}= \begin{cases}
y_s^{(i)} & \text { if } i \in\{1, \ldots, d_{patch}+l+1\} \\ 
\alpha \cdot\left(y_s^{(d_{patch}+l+4)}-By_s^{(d_{patch}+1)}\right) & \text { if } i=d_{patch}+l+3 \\ 
0 & \text { if } i \in\{d_{patch}+l+2, d_{patch}+l+4\}\end{cases}
\end{align*}
for $s \in \{1, \ldots, l\}$.
\end{lemma}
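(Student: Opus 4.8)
The plan is to write down the four parameter blocks $W_1,\boldsymbol{b}_1,W_2,\boldsymbol{b}_2$ explicitly and to verify the three claimed identities one output coordinate at a time. Since $F^{(FF)}(Y) = Y + W_2\sigma(W_1 Y + \boldsymbol{b}_1\mathbbm{1}_l^{\top}) + \boldsymbol{b}_2\mathbbm{1}_l^{\top}$ processes each column (token) identically, it suffices to fix a single token $\boldsymbol{y} = \boldsymbol{y}_s \in \mathbb{R}^{d_{model}}$ and follow what the two-layer ReLU block does to it; the skip connection $Y + (\cdots)$ then automatically preserves every coordinate on which the block outputs $0$.

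First I would take $\boldsymbol{b}_1 = \boldsymbol{0}$ and let $W_1$ have exactly eight nonzero rows, each containing a single entry equal to $+1$ or $-1$, arranged so that the hidden vector $\boldsymbol{h} = \sigma(W_1\boldsymbol{y})$ contains the eight numbers $\sigma(y^{(p)})$ and $\sigma(-y^{(p)})$ for $p \in \{d_{patch}+1,\, d_{patch}+l+2,\, d_{patch}+l+3,\, d_{patch}+l+4\}$; if $d_{ff} > 8$ the remaining rows of $W_1$ are zero, producing inactive units. The reason this is enough is the elementary identity $\sigma(t) - \sigma(-t) = t$: a linear combination of these eight coordinates can reproduce any affine function of $y^{(d_{patch}+1)}$, $y^{(d_{patch}+l+2)}$, $y^{(d_{patch}+l+3)}$, $y^{(d_{patch}+l+4)}$.

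Next I would read off $W_2$ and $\boldsymbol{b}_2$. Set $\boldsymbol{b}_2 = \boldsymbol{0}$ and make every row of $W_2$ zero except rows $d_{patch}+l+2$, $d_{patch}+l+3$, $d_{patch}+l+4$. Row $d_{patch}+l+2$ is chosen to output $-y^{(d_{patch}+l+2)}$, so that adding back the skip term gives $0$; similarly row $d_{patch}+l+4$ outputs $-y^{(d_{patch}+l+4)}$, again cancelling the skip. Row $d_{patch}+l+3$ assembles three affine pieces from the hidden units: $-y^{(d_{patch}+l+3)}$ (to cancel the skip), $+\alpha\, y^{(d_{patch}+l+4)}$ (reusing the same two units as row $d_{patch}+l+4$), and $-\alpha B\, y^{(d_{patch}+1)}$; once the skip term $y^{(d_{patch}+l+3)}$ is added this leaves exactly $\alpha\big(y^{(d_{patch}+l+4)} - B\, y^{(d_{patch}+1)}\big)$, and, importantly, the construction never exploits the incidental value $y^{(d_{patch}+1)} = 1$ that happens to hold in the intended applications (after Lemma~\ref{lemma: app 2}). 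The coordinates $i \in \{1,\ldots,d_{patch}+l+1\}$ are left unchanged because the associated rows of $W_2$ and entries of $\boldsymbol{b}_2$ vanish, so only the skip term survives there.

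To close, I would tally the sparsity: $W_1$ has $8$ nonzeros, $\boldsymbol{b}_1 = \boldsymbol{b}_2 = \boldsymbol{0}$, and $W_2$ has two nonzeros in each of rows $d_{patch}+l+2$ and $d_{patch}+l+4$ and six in row $d_{patch}+l+3$ (two per affine piece), giving $8 + 0 + 10 + 0 = 18$, which is the stated bound, while $d_{ff}\geq 8$ is precisely what is needed to host the eight hidden units. Everything here is a direct computation; the only step that takes a moment of care is the bookkeeping in row $d_{patch}+l+3$ of $W_2$ — one has to line up the signs so that the skip contribution cancels and the three affine pieces combine correctly, while making sure the total parameter count lands at exactly $18$ rather than spilling over.
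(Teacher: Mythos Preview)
Your proposal is correct and follows essentially the same approach as the paper: both set $\boldsymbol{b}_1=\boldsymbol{b}_2=\boldsymbol{0}$, use eight ReLU hidden units together with the identity $\sigma(t)-\sigma(-t)=t$, and let only rows $d_{patch}+l+2,\,d_{patch}+l+3,\,d_{patch}+l+4$ of $W_2$ be nonzero. The only cosmetic difference is that the paper forms the combination $y^{(d_{patch}+l+4)}-B\,y^{(d_{patch}+1)}$ already inside a hidden-unit pair in $W_1$ (so $\|W_1\|_0=10$, $\|W_2\|_0=8$), whereas you keep each hidden unit single-input and push the coefficient $-\alpha B$ into $W_2$ (giving $\|W_1\|_0=8$, $\|W_2\|_0=10$); the total is $18$ either way.
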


\begin{proof}
See also \citet[Lemma 3]{gurevych2022rate}. We choose $\boldsymbol{b}_1=\mathbbm{0}_{d_{ff}}, \boldsymbol{b}_2=\mathbbm{0}_{d_{model}}$,
\begin{align*}
W_1=\left(\begin{array}{cccccccccc}
0 & \ldots & 0 & 0 & 0 & \ldots & 0 & 1 & 0 & 0 \\
0 & \ldots & 0 & 0 & 0 & \ldots & 0 & -1 & 0 & 0 \\
0 & \ldots & 0 & 0 & 0 & \ldots & 0 & 0 & 1 & 0 \\
0 & \ldots & 0 & 0 & 0 & \ldots & 0 & 0 & -1 & 0 \\
0 & \ldots & 0 & -B & 0 & \ldots & 0 & 0 & 0 & 1 \\
0 & \ldots & 0 & B & 0 & \ldots & 0 & 0 & 0 & -1 \\
0 & \ldots & 0 & 0 & 0 & \ldots & 0 & 0 & 0 & 1 \\
0 & \ldots & 0 & 0 & 0 & \ldots & 0 & 0 & 0 & -1
\end{array}\right)
\end{align*}
where all columns except column number $d_{patch}+1, d_{patch}+l+2, d_{patch}+l+3$ and $d_{patch}+l+4$ are zero, and
\begin{align*}
W_2=\left(\begin{array}{cccccccc}
0 & 0 & 0 & 0 & 0 & 0 & 0 & 0 \\
\vdots & \vdots & \vdots & \vdots & \vdots & \vdots & \vdots & \vdots \\
0 & 0 & 0 & 0 & 0 & 0 & 0 & 0 \\
-1 & 1 & 0 & 0 & 0 & 0 & 0 & 0 \\
0 & 0 & -1 & 1 & \alpha & -\alpha & 0 & 0 \\
0 & 0 & 0 & 0 & 0 & 0 & -1 & 1
\end{array}\right)
\end{align*}
where all rows except row number $d_{patch}+l+2, d_{patch}+l+3$ and $d_{patch}+l+4$ are zero. Then we have
\begin{align*}
W_2 \cdot \sigma (W_1 \cdot \boldsymbol{y}_s + \boldsymbol{b}_1) + \boldsymbol{b}_2 = \left(\begin{array}{c}
0 \\
\vdots \\
0 \\
-\sigma\left(y_s^{(d_{patch}+l+2)}\right)+\sigma\left(-y_s^{(d_{patch}+l+2)}\right) \\
A \\
-\sigma\left(y_s^{(d_{patch}+l+4)}\right)+\sigma\left(-y_s^{(d_{patch}+l+4)}\right)
\end{array}\right),
\end{align*}
where
\begin{align*}
A = &-\sigma\left(y_s^{(d_{patch}+l+3)}\right)+\sigma\left(-y_s^{(d_{patch}+l+3)}\right) \\
&+\alpha \cdot \sigma\left(y_s^{(d_{patch}+l+4)}-By_s^{(d_{patch}+1)}\right) -\alpha \cdot \sigma\left(By_s^{(d_{patch}+1)}-y_s^{(d_{patch}+l+4)}\right).
\end{align*}
Using the fact $\sigma(u)-\sigma(-u)=u$ for any $u \in \mathbb{R}$ completes the proof.
\end{proof}

\begin{remark}
\label{remark: app 1}
It follows from the proof of Lemma \ref{lemma: app 3} that we can modify $W_1$ and $W_2$ such that 
\begin{align*}
z_s^{(i)}= \begin{cases}
y_s^{(i)} & \text { if } i \in\{1, \ldots, d_{patch}+l+1, d_{patch}+l+2\} \\ 
\alpha \cdot\left(y_s^{(d_{patch}+l+4)}-By_s^{(d_{patch}+1)}\right) & \text { if } i=d_{patch}+l+3 \\ 
0 & \text { if } i =d_{patch}+l+4\end{cases}
\end{align*}
for $s \in\{1, \ldots, l\}$.
\end{remark}

\begin{remark}
\label{remark: app 2}
It follows from the proof of Lemma \ref{lemma: app 3} that we can modify $W_1$ and $W_2$ such that 
\begin{align*}
z_s^{(i)}= \begin{cases}
y_s^{(i)} & \text { if } i \in\{1, \ldots, d_{patch}+l+1, d_{patch}+l+3\} \\ 
y_s^{(d_{patch}+l+4)} & \text { if } i=d_{patch}+l+2 \\ 
0 & \text { if } i=d_{patch}+l+4 \end{cases}
\end{align*}
for $s \in\{1, \ldots, l\}$.
\end{remark}

\begin{remark}
\label{remark: app 3}
It follows from the proof of Lemma \ref{lemma: app 3} that we can modify $W_1$ and $W_2$ such that 
\begin{align*}
z_s^{(i)}= \begin{cases}
y_s^{(i)} & \text { if } i \in\{1, \ldots, d_{patch}+l+1, d_{patch}+l+3\} \\ 
0 & \text { if } i \in\{d_{patch}+l+2, d_{patch}+l+4\} \end{cases}
\end{align*}
for $s \in\{1, \ldots, l\}$.
\end{remark}

The following lemma shows that transformer networks with single-head attention can be used to compute monomials. 

\begin{lemma}
\label{lemma: app 10}
Assume $X \in [0,1]^{d_{patch} \times l}$. Let $M_\varepsilon \geq 2$. For any multi-index $n \in \mathbb{N}^{d_{patch} \times l}$ with $\|n\|_1 \leq M_\varepsilon$, we define
\begin{align}
\label{eq: eta x}
\eta_{n}(X) = \prod_{k=1}^l \prod_{s=1}^{d_{patch}} \left(x_k^{(s)}\right)^{n_{s,k}}.
\end{align}
Then there exists a transformer network consisting of $N$ pairs of layers, where in each pair the first layer is a single-head attention layer and the second layer is a token-wise feedforward layer, and 
\begin{align*}
\begin{gathered}
N \leq 2\, l\, d_{patch}\, (\log_2 M_\varepsilon + 1), \\
\begin{aligned}
&\sum_{r=1}^N \left(\|W_{Q, r}\|_0 + \|W_{K, r}\|_0 + \|W_{V, r}\|_0 +\|W_{O, r}\|_0\right) + \sum_{r=1}^N  \left(\|W_{r, 1}\|_0 + \|\boldsymbol{b}_{r, 1}\|_0 + \|W_{r, 2}\|_0 + \|\boldsymbol{b}_{r, 2}\|_0\right) \\
&\hspace{9.5cm} \leq (d_{patch}+l+28)\, N,
\end{aligned} 
\end{gathered}
\end{align*}
which takes $Z_0$ as the input  defined in (\ref{eq: app 2}) and generates $Z_N$ as the output, where
\begin{align*}
z_{N,1}^{(d_{patch}+l+3)} = \eta_{n} (X).
\end{align*}
\end{lemma}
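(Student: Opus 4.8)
### Proof proposal for Lemma \ref{lemma: app 10}

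\textbf{Overall strategy.} The plan is to build the monomial $\eta_n(X)$ by an explicit sequential multiplication scheme, repeatedly invoking the single-head building blocks already established: Lemma \ref{lemma: app 1} (linear function in one variable, used to load a stored value into the working slots), Lemma \ref{lemma: app 2} (product of two stored scalars), Lemma \ref{lemma: app 9} (square of a stored scalar), Lemma \ref{lemma: app 3} together with Remarks \ref{remark: app 1}--\ref{remark: app 3} (the feedforward "cleanup" layers that move the computed value from component $d_{patch}+l+4$ back to component $d_{patch}+l+3$, strip off the additive constant $B$, and zero out the scratch components). The invariant to maintain is: after processing some prefix of the exponents, the token $\boldsymbol{z}_1$ carries in its $(d_{patch}+l+3)$-th component the partial product $\prod (x_k^{(s)})^{n_{s,k}}$ over the variables handled so far, while components $1,\dots,d_{patch}+l+1$ still hold the original patch data and position encoding unchanged (so further factors can be read off), and components $d_{patch}+l+2$, $d_{patch}+l+4$ are zeroed and ready to be used as scratch.

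\textbf{Key steps in order.}
First I would fix an enumeration of the pairs $(s,k)$ with $n_{s,k} > 0$; for each such pair we must multiply the running product by $(x_k^{(s)})^{n_{s,k}}$, which by binary exponentiation costs $O(\log_2 n_{s,k})$ squarings (Lemma \ref{lemma: app 9}) and $O(\log_2 n_{s,k})$ multiplications (Lemma \ref{lemma: app 2}), each squaring/product followed by one feedforward cleanup layer (Lemma \ref{lemma: app 3} / Remarks). Summing $\sum_{s,k}(\log_2 n_{s,k}+1)$ and using $\sum n_{s,k} = \|n\|_1 \le M_\varepsilon$ together with concavity of $\log_2$ gives the bound on the number of attention/feedforward pairs: each of the $l\, d_{patch}$ coordinate slots contributes at most $2(\log_2 M_\varepsilon+1)$ pairs in the worst case, yielding $N \le 2\, l\, d_{patch}(\log_2 M_\varepsilon+1)$. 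Second, for the sparsity count: each attention layer from Lemmas \ref{lemma: app 1}, \ref{lemma: app 2}, \ref{lemma: app 9} has at most $d_{model}+6 = (d_{patch}+l+4)+6$ nonzero weights, and each feedforward layer from Lemma \ref{lemma: app 3} has at most $18$; adding these per pair gives the stated $(d_{patch}+l+28)N$ bound. Third, I would verify the boundedness hypotheses of Lemmas \ref{lemma: app 1}, \ref{lemma: app 2}, \ref{lemma: app 9} hold along the computation: since $X \in [0,1]^{d_{patch}\times l}$, every partial product lies in $[0,1]$, so a fixed choice of $B$ (e.g. $B = 2$) works uniformly for every invocation, and the scratch components $\pm B$ introduced by Lemma \ref{lemma: app 2} are exactly what the feedforward cleanup of Lemma \ref{lemma: app 3} removes. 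Finally, after the last pair, the running product equals $\eta_n(X)$ and sits in component $d_{patch}+l+3$ of token $1$, which is what the lemma claims (the phrasing in the statement writes this as $z_{N,1}^{(d_{patch}+l+3)} = \eta_n(X)$; if one instead needs it in component $d_{patch}+l+4$ one more Remark \ref{remark: app 2}-type layer handles the transfer, absorbed into the constants).

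\textbf{Main obstacle.} The bookkeeping of which slot holds what after each elementary step is the delicate part: the attention lemmas deposit their output into component $d_{patch}+l+4$ (sometimes with an additive $+B$), while the next attention step expects its inputs in components $d_{patch}+l+2$ and $d_{patch}+l+3$, so the precise sequencing of feedforward cleanup layers (and the correct variant among Remarks \ref{remark: app 1}, \ref{remark: app 2}, \ref{remark: app 3}) must be specified so that the invariant above is literally re-established before each multiplication. A secondary subtlety is the initial "load" step: to begin multiplying by $x_k^{(s)}$ one must first copy $x_k^{(s)}$ (which is $z_1^{(s)}$ after selecting the $k$-th token's value — here one uses the $\delta$ position-encoding components and Lemma \ref{lemma: app 1} with the appropriate $j$ and $k$) into the scratch slot; ensuring this copy does not disturb components $1,\dots,d_{patch}+l+1$ is exactly why Lemma \ref{lemma: app 1} was stated in the form it was. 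Once the invariant is nailed down, the counting and the boundedness checks are routine.
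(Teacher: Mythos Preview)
Your proposal is correct and follows essentially the same approach as the paper's proof: build each factor $(x_k^{(s)})^{n_{s,k}}$ by repeated squaring via the binary representation of $n_{s,k}$, using Lemma~\ref{lemma: app 1} to load $x_k^{(s)}$ into the scratch slot, Lemma~\ref{lemma: app 9} for squarings, Lemma~\ref{lemma: app 2} for the conditional multiplications into the running product, and the feedforward variants of Lemma~\ref{lemma: app 3} / Remarks~\ref{remark: app 1}--\ref{remark: app 3} to restore the invariant between steps. The paper's counting is the same as yours (each bit contributes at most two attention/feedforward pairs, and the per-pair sparsity is $(d_{model}+6)+18 = d_{patch}+l+28$); your remark about concavity is unnecessary since the trivial bound $\log_2 n_{s,k} \le \log_2 M_\varepsilon$ already suffices.
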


\begin{proof}
For each $n_{s,k}$, we consider its binary representation $n_{s,k} = \sum_{p=0}^P a_{s,k,p} 2^p$, where $a_{s,k,p}\in \{0, 1\}$ and $P\leq\log_2 (M_\varepsilon)$. Let 
\begin{align*}
Z_0 = 
\begin{pmatrix}
\boldsymbol{x}_1  & \boldsymbol{x}_2 & \cdots & \boldsymbol{x}_l\\
1  & 1 & \cdots & 1\\
\boldsymbol{e}_1  & \boldsymbol{e}_2 & \cdots & \boldsymbol{e}_l \\
0  & 0 & \cdots & 0 \\
b_1  & * & \cdots & * \\
0  & 0 & \cdots & 0
\end{pmatrix},
\end{align*}
then application of Lemma \ref{lemma: app 1} yields
\begin{align*}
Y_1 = 
\begin{pmatrix}
\vdots  & \vdots &  & \vdots\\
x_k^{(s)}  & 0 & \cdots & 0\\
b_1  & * & \cdots & *\\
0  & 0 & \cdots & 0
\end{pmatrix},
\end{align*}
where the first $d_{patch}+l+1$ rows of $Y_1$ are identical to $Z_0$ and we focus on the last three rows. We set all components of $W_1, \boldsymbol{b}_1, W_2$ and $\boldsymbol{b}_2$ to zero and the feedforward layer yields
\begin{align*}
Z_1 = 
\begin{pmatrix}
\vdots  & \vdots &  & \vdots\\
x_k^{(s)}  & 0 & \cdots & 0\\
b_1  & * & \cdots & *\\
0  & 0 & \cdots & 0
\end{pmatrix}.
\end{align*}
By Lemma \ref{lemma: app 2}, we obtain
\begin{align*}
Y_2 = 
\begin{pmatrix}
\vdots  & \vdots &  & \vdots\\
x_k^{(s)}  & 0 & \cdots & 0\\
b_1  & * & \cdots & *\\
b_1\, x_k^{(s)}+B  & * & \cdots & *
\end{pmatrix}.
\end{align*}
By Remark \ref{remark: app 1}, we obtain 
\begin{align*}
Z_2 = 
\begin{pmatrix}
\vdots  & \vdots &  & \vdots\\
x_k^{(s)}  & 0 & \cdots & 0\\
b_1\, x_k^{(s)}  & * & \cdots & *\\
0  & 0 & \cdots & 0
\end{pmatrix}.
\end{align*}

Let $r \in \{1, \ldots, P\}$. Assume that we already have  
\begin{align*}
Z_m = 
\begin{pmatrix}
\vdots  & \vdots &  & \vdots\\
\left(x_k^{(s)}\right)^{2^{r-1}}  & 0 & \cdots & 0\\
b_1\cdot \left(x_k^{(s)}\right)^{\sum_{p=0}^{r-1} a_{s,k,p} 2^p}  & * & \cdots & *\\
0  & 0 & \cdots & 0
\end{pmatrix}
\end{align*}
for some $m \in \mathbb{N}$. Then we apply Lemma \ref{lemma: app 9} to obtain
\begin{align*}
Y_{m+1} = 
\begin{pmatrix}
\vdots  & \vdots &  & \vdots\\
\left(x_k^{(s)}\right)^{2^{r-1}}  & 0 & \cdots & 0\\
b_1\cdot \left(x_k^{(s)}\right)^{\sum_{p=0}^{r-1} a_{s,k,p} 2^p}  & * & \cdots & *\\
\left(x_k^{(s)}\right)^{2^{r}}  & 0 & \cdots & 0
\end{pmatrix}.
\end{align*}
Using Remark \ref{remark: app 2}, we have 
\begin{align*}
Z_{m+1} = 
\begin{pmatrix}
\vdots  & \vdots &  & \vdots\\
\left(x_k^{(s)}\right)^{2^{r}}  & 0 & \cdots & 0\\
b_1\cdot \left(x_k^{(s)}\right)^{\sum_{p=0}^{r-1} a_{s,k,p} 2^p}  & * & \cdots & *\\
0 & 0 & \cdots & 0
\end{pmatrix}.
\end{align*}
By consecutively employing Lemma \ref{lemma: app 2} and Remark \ref{remark: app 1}, we obtain
\begin{align*}
Y_{m+2} = 
\begin{pmatrix}
\vdots  & \vdots &  & \vdots\\
\left(x_k^{(s)}\right)^{2^{r}}  & 0 & \cdots & 0\\
b_1\cdot \left(x_k^{(s)}\right)^{\sum_{p=0}^{r-1} a_{s,k,p} 2^p}  & * & \cdots & *\\
b_1\cdot\left(x_k^{(s)}\right)^{\sum_{p=0}^{r-1} a_{s,k,p} 2^p  + 2^{r}} + B & 0 & \cdots & 0
\end{pmatrix}
\end{align*}
and
\begin{align*}
Z_{m+2} = 
\begin{pmatrix}
\vdots  & \vdots &  & \vdots\\
\left(x_k^{(s)}\right)^{2^{r}}  & 0 & \cdots & 0\\
b_1\cdot\left(x_k^{(s)}\right)^{\sum_{p=0}^{r-1} a_{s,k,p} 2^p  + 2^{r}}  & * & \cdots & *\\
0 & 0 & \cdots & 0
\end{pmatrix}.
\end{align*}
If $a_{s,k,r}=0$, $Z_{m+1}$ meets the next induction hypothesis. Otherwise, if $a_{s,k,r}=1$, $Z_{m+2}$ becomes the next induction hypothesis.

In the last step, we make the following modifications: if $a_{s,k,P}=0$, we substitute Remark \ref{remark: app 2} with Remark \ref{remark: app 3}; if $a_{s,k,P}=1$, we replace Remark \ref{remark: app 1} with Lemma \ref{lemma: app 3}. We combine a total of $\sum_{p=0}^{P} (a_{s,k,p}+1)$ pairs of attention and feedforward layers to achieve
\begin{align*}
Z_{\sum_{p=0}^{P} (a_{s,k,p}+1)} = 
\begin{pmatrix}
\vdots  & \vdots &  & \vdots\\
0 & 0 & \cdots & 0\\
b_1\cdot\left(x_k^{(s)}\right)^{\sum_{p=0}^{P} a_{s,k,p} 2^p}  & * & \cdots & *\\
0 & 0 & \cdots & 0
\end{pmatrix}
=
\begin{pmatrix}
\vdots  & \vdots &  & \vdots\\
0 & 0 & \cdots & 0\\
b_1\cdot\left(x_k^{(s)}\right)^{n_{s,k}}  & * & \cdots & *\\
0 & 0 & \cdots & 0
\end{pmatrix}.
\end{align*}
Iterating over all components of $n$, we obtain a transformer network, which takes $Z_0$ as the input, as defined in (\ref{eq: app 2}), and generates $Z_N$ as the output, where
\begin{align*}
Z_{N,1}^{(d_{patch}+l+3)} = \eta_{n}(x)
\end{align*}
and 
\begin{align*}
N = \sum_{k=1}^{l} \sum_{s=1}^{d_{patch}} \sum_{p=0}^{P}\, (a_{s,k,p}+1) \leq \sum_{k=1}^{l} \sum_{s=1}^{d_{patch}} \sum_{p=0}^{P} 2 \leq 2\, l\, d_{patch} \, (\log_2 M_\varepsilon+1).
\end{align*}
\end{proof}

\subsection{Approximation of polynomials with multi-head attention} \label{appendix: app.3}

In this subsection, we generalize the results from the previous subsection to transformer networks with multi-head attention. The basic idea is to use each head to compute a monomial as in Lemma \ref{lemma: app 10}, and employ a linear combination of these monomials to approximate arbitrary function.

As in input embedding, we represent the input sequence by
\begin{align*}
Z_0 = (\boldsymbol{z}_{0,1}, \ldots, \boldsymbol{z}_{0, l}) \in \mathbb{R}^{d_{model}\times l}
\end{align*}
where $d_{model} = h \cdot (d_{patch}+l+4)$ and
\begin{align}
\label{eq: app 3}
z_{0, j}^{((k-1) \cdot(d_{patch}+l+4)+s)}= \begin{cases}x_j^{(s)} & \text { if } s \in\{1, \ldots, d_{patch}\} \\ 1 & \text { if } s=d_{patch}+1 \\ \delta_{s-d_{patch}-1, j} & \text { if } s \in\{d_{patch}+2, \ldots, d_{patch}+l+1\} \\ 1 & \text { if } s=d_{patch}+l+3 \\ 0 & \text { if } s \in\{d_{patch}+l+2, d_{patch}+l+4\}\end{cases}
\end{align}
for $k \in \{1, \ldots, h\},\, j \in \{1, \ldots, l\}$.

\begin{lemma}
\label{lemma: app 8}
Let $h \in \mathbb{N}$ and let $s: \{ n\in\mathbb{N}^{d_{patch} \times l} : \|n\|_1 \leq M_\varepsilon \} \rightarrow \{1, \ldots, h\}, n\mapsto s(n)$ be an injection. Then there exists a transformer network consisting of $N$ pairs of layers, where in each pair the first layer is a multi-head attention layer with $h$ heads and the second layer is a token-wise feedforward
layer, and
\begin{align*}
\begin{gathered}
N \leq 2 \, l \, d_{patch} \, \left(\log _2 M_{\varepsilon}+1\right), \quad d_k = 2, \quad d_v = d_{patch}+l+4, \\
h \leq 
\left(\begin{array}{c}
l\cdot d_{patch}+M_\varepsilon \\
l\cdot d_{patch}
\end{array}\right), \quad 
d_{ff} = 8h, \\
\begin{aligned}
& \sum_{r=1}^N \sum_{s=1}^h   \left(\left\|W_{Q, r, s}\right\|_0+ \left\|W_{K, r, s}\right\|_0+\left\|W_{V, r, s}\right\|_0+ 
\left\|W_{O, r, s}\right\|_0\right) \\
&+\sum_{r=1}^N \left(\left\|W_{r, 1}\right\|_0+\left\|b_{r, 1}\right\|_0+\left\|W_{r, 2}\right\|_0+\left\|b_{r, 2}\right\|_0\right) \leq \left(d_{patch}+l+28\right) \cdot N \cdot h,
\end{aligned} 
\end{gathered}
\end{align*}
which gets as input $Z_0$ defined in (\ref{eq: app 3}) and produces as output $Z_N$ which satisfies
\begin{align*}
z_{N, 1}^{((s-1) \cdot(d_{patch}+l+4)+(d_{patch}+l+3))} = \eta_{n}(X).
\end{align*}
\end{lemma}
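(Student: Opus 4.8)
The plan is to run $h$ copies of the single-head construction of Lemma \ref{lemma: app 10} in parallel, one copy per head, with the $s(n)$-th head dedicated to computing the monomial $\eta_n(X)$. The key observation is that the input $Z_0$ in (\ref{eq: app 3}) has a block structure: it consists of $h$ stacked copies of the single-head embedding (\ref{eq: app 2}), where the $k$-th block occupies rows $(k-1)(d_{patch}+l+4)+1$ through $k(d_{patch}+l+4)$. Since the multi-head attention layer (\ref{eq: def sa}) sums contributions $W_{O,s}(W_{V,s}Z)[\cdots]$ over heads with a skip-connection, and since each head's weight matrices can be chosen to read from and write to only its own block, the $h$ heads act independently on their respective blocks. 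Thus I would, for each head $s = s(n)$, embed the weight matrices $W_Q, W_K, W_V, W_O$ produced by Lemma \ref{lemma: app 10} for the multi-index $n$ into the block of rows/columns belonging to that head (padding with zeros elsewhere); the feedforward layer likewise is assembled blockwise from the per-head feedforward constructions, which is possible because $F^{(FF)}$ in (\ref{eq: def transformer}) acts on each token coordinate through $W_1, W_2$ that we are free to make block-diagonal.

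The ordering of steps is: (i) verify the block decomposition of $Z_0$; (ii) fix the injection $s(\cdot)$ and, for each $n$ in its domain, invoke Lemma \ref{lemma: app 10} to obtain a single-head sub-network computing $\eta_n(X)$ in exactly $N_n \le 2\,l\,d_{patch}(\log_2 M_\varepsilon + 1)$ layer-pairs; (iii) pad each such sub-network with identity layer-pairs (zero attention contribution, identity feedforward) so that all heads use the common depth $N := 2\,l\,d_{patch}(\log_2 M_\varepsilon+1)$; (iv) stack the padded sub-networks into the $h$ blocks of a multi-head network, checking that no cross-block interference occurs because each head's $W_{K,s}^\top W_{Q,s}$ depends only on its own block and $W_{O,s}$ writes only into its own block; (v) tally the sparsity: each head contributes at most $(d_{patch}+l+28)N$ nonzero entries (from Lemma \ref{lemma: app 10}), so the total is $(d_{patch}+l+28)\cdot N\cdot h$, and $d_k = 2$, $d_v = d_{patch}+l+4$, $d_{ff} = 8h$ follow directly; (vi) bound $h$ by the number of multi-indices $n \in \mathbb{N}^{d_{patch}\times l}$ with $\|n\|_1 \le M_\varepsilon$, which is the stars-and-bars count $\binom{l\cdot d_{patch}+M_\varepsilon}{l\cdot d_{patch}}$.

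The main obstacle I anticipate is bookkeeping rather than conceptual: one must confirm that the single-head construction in Lemma \ref{lemma: app 10} can be made to preserve the "identity on the first $d_{patch}+l+1$ coordinates and on all other tokens" property uniformly, so that when depths are equalized by padding, the short heads do not corrupt their stored monomial while the long heads finish; this is where Remarks \ref{remark: app 1}, \ref{remark: app 2}, \ref{remark: app 3} and the skip-connections are essential, since they let a head hold its computed value $b_1 \cdot \eta_n(X)$ in the $(d_{patch}+l+3)$-th slot of its first token across trailing identity layers. A secondary subtlety is the role of the auxiliary scalars $b_1$ and the shift constant $B$ in Lemma \ref{lemma: app 10}: these must be chosen consistently across heads (or, since each head is self-contained, chosen per head with $b_1 = 1$) so that the output is exactly $\eta_n(X)$ and not a scaled or shifted version; I would simply set $b_1 = 1$ in every head and absorb the constant $B$ by the final-step modification already described in the proof of Lemma \ref{lemma: app 10}. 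Everything else — the attention formula (\ref{eq: def sa}) acting blockwise, the feedforward acting tokenwise and blockwise — is linear-algebraic routine.
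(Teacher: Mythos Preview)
Your proposal is correct and follows essentially the same approach as the paper: exploit the block structure of $Z_0$ by choosing each head's $W_{Q,s}, W_{K,s}, W_{V,s}, W_{O,s}$ to be zero outside the $s$-th block and making $W_1, W_2$ block-diagonal, so that the $h$ heads run Lemma \ref{lemma: app 10} independently in parallel. Your explicit handling of the depth-equalization padding is in fact more careful than the paper, which simply asserts the parallel computation works without discussing how heads with shorter monomial computations wait for the longest one.
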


\begin{proof}
The result is a straightforward extension of the proof of Lemma \ref{lemma: app 10}. To begin with, we choose 
\begin{align*}
\begin{gathered}
W_{O,s} = 
\begin{pmatrix}
\mathbb{O} \\
\vdots \\
\mathbb{O} \\
\widetilde{W}_{O,s}  \\
\mathbb{O} \\
\vdots \\
\mathbb{O} \\
\end{pmatrix}, \quad
W_{K,s} = \left(\mathbb{O}, \ldots, \mathbb{O}, \widetilde{W}_{K,s},\mathbb{O}, \ldots, \mathbb{O} \right),  \\
W_{Q,s} = \left(\mathbb{O}, \ldots, \mathbb{O}, \widetilde{W}_{Q,s},\mathbb{O}, \ldots, \mathbb{O} \right),  \quad
W_{V,s} = \left(\mathbb{O}, \ldots, \mathbb{O}, \widetilde{W}_{V,s},\mathbb{O}, \ldots, \mathbb{O} \right),  
\end{gathered}
\end{align*}
where in each matrix all blocks are designated as zero matrices except for the $s$-th block, and $\widetilde{W}_{K,s}, \widetilde{W}_{Q,s} \in \mathbb{R}^{d_k\times (d_{patch}+l+4)}, \widetilde{W}_{V,s}, \widetilde{W}_{O,s} \in \mathbb{R}^{(d_{patch}+l+4)\times (d_{patch}+l+4)}$. We format the input sequence $Z$ into identical blocks
\begin{align*}
Z = \begin{pmatrix}
Z_1 \\
Z_2 \\
\vdots \\
Z_h
\end{pmatrix}
\end{align*}
with each $Z_s \in \mathbb{R}^{(d_{patch}+l+4)\times l}$ for $s \in \{ 1, \ldots, h\}$. The application of a multi-head attention layer to \(Z\) can be interpreted as the individual application of a single-head attention layer to each \(Z_s\), that is, 
\begin{align*}
F^{(SA)}(Z) = 
\begin{pmatrix}
\widetilde{F}_1^{(SA)}(Z_1)\\
\vdots \\
\widetilde{F}_h^{(SA)}(Z_h)
\end{pmatrix},
\end{align*}
where $\widetilde{F}_s^{(SA)}(Z_s)$ is a single-head attention layer applied only to $Z_s$. Similarly, we define
\begin{align*}
W_1 = 
\begin{pmatrix}
\widetilde{W}_{1,1}  & \mathbb{O} & \ldots & \mathbb{O} \\
\mathbb{O} & \widetilde{W}_{1,2} & \ldots & \mathbb{O} \\
\vdots  & \vdots & \ddots  & \vdots \\
\mathbb{O} & \mathbb{O} & \ldots & \widetilde{W}_{1,h}
\end{pmatrix}, \quad
\boldsymbol{b}_1 = 
\begin{pmatrix}
\widetilde{\boldsymbol{b}}_{1,1} \\
\widetilde{\boldsymbol{b}}_{1,2} \\
\vdots \\
\widetilde{\boldsymbol{b}}_{1,h} \\
\end{pmatrix}, \\
W_2 = 
\begin{pmatrix}
\widetilde{W}_{2,1}  & \mathbb{O} & \ldots & \mathbb{O} \\
\mathbb{O} & \widetilde{W}_{2,2} & \ldots & \mathbb{O} \\
\vdots  & \vdots & \ddots  & \vdots \\
\mathbb{O} & \mathbb{O} & \ldots & \widetilde{W}_{2,h}
\end{pmatrix}, \quad
\boldsymbol{b}_2 = 
\begin{pmatrix}
\widetilde{\boldsymbol{b}}_{2,1} \\
\widetilde{\boldsymbol{b}}_{2,2} \\
\vdots \\
\widetilde{\boldsymbol{b}}_{2,h} \\
\end{pmatrix}, 
\end{align*}
where $\widetilde{W}_{1,s}\in \mathbb{R}^{8 \times (d_{patch}+l+4)}, \widetilde{\boldsymbol{b}}_{1,s}\in \mathbb{R}^{8}, \widetilde{W}_{2,s}\in \mathbb{R}^{(d_{patch}+l+4) \times 8}, \widetilde{\boldsymbol{b}}_{1,s}\in \mathbb{R}^{d_{patch}+l+4}$ for $s \in \{ 1, \ldots, h\}$, and let 
\begin{align*}
Y = \begin{pmatrix}
Y_1 \\
\vdots \\
Y_h
\end{pmatrix}
\end{align*}
with each $Y_s \in \mathbb{R}^{(d_{patch}+l+4)\times l}$. Then 
\begin{align*}
F^{(FF)}(Y) = 
\begin{pmatrix}
\widetilde{F}_1^{(FF)}(Y_1)\\
\vdots \\
\widetilde{F}_h^{(FF)}(Y_h)
\end{pmatrix},
\end{align*}
where $\widetilde{F}_s^{(FF)}(Y_s)$ is a token-wise feedforward layer that operates only on $Y_s$. Utilizing the method in Lemma \ref{lemma: app 10}, we compute  $\eta_{n}(X)$ for each block independently.
\end{proof}

\begin{proof}[Proof of Theorem \ref{theorem: app 3}]
We first consider the case where $f$ is a real-valued function. Let $\beta>0$ and $f \in \mathcal{H}_{d,1}^\beta([0, 1]^{d_{patch}\times l}, K)$. According to Lemma \ref{lemma: app simultaneous approximation}, for each $M_\varepsilon \in \mathbb{N}$, there exists a polynomial $P_{M_\varepsilon}^\beta f$ of degree at most $M_\varepsilon$ such that for all $X \in [0,1]^{d_{patch} \times l}$ and each multi-index $\boldsymbol{\alpha}$ with $\|\boldsymbol{\alpha}\|_1 \leq \min \{\lfloor\beta\rfloor, M_\varepsilon\}$ we have
\begin{align*}
\left|\partial^{\boldsymbol{\alpha}} \left(f(X)-P_{M_\varepsilon}^\beta f(X)\right)\right| \leq \frac{c_1 K}{M_\varepsilon^{\beta-\|\boldsymbol{\alpha}\|_1}},
\end{align*}
where the constant $c_1$ is independent of $M_\varepsilon$ and $K$. Since
\begin{align*}
P_{M_\varepsilon}^\beta f(X) = \sum_{\|n\|_1 \leq M_\varepsilon} a_n \eta_{n}(X),
\end{align*}
where $n \in \mathbb{N}^{d_{patch}\times l}$, $a_n \in \mathbb{R}$, and $\eta_{n}(X)$ is a monomial defined in (\ref{eq: eta x}) for each $n$, we can implement this linear combination using the output embedding and Lemma \ref{lemma: app 8}. Namely, we define $b_{out}=0 \in \mathbb{R}$ and $A_{out} \in \mathbb{R}^{1 \times d_{model}}$ a zero matrix except for the $((s-1) \cdot(d_{patch}+l+4)+(d_{patch}+l+3))$-th position where it takes the value of $a_n$ for all $n$ with $\|n\|_1 \leq M_\varepsilon$, then
\begin{align*}
\|A_{out}\|_0 + \|b_{out}\|_0 \leq h
\end{align*}
and 
\begin{align*}
A_{out} \cdot \boldsymbol{z}_{N,1} + b_{out} = P_{M_\varepsilon}^\beta f(X),
\end{align*}
where $Z_{N} = \{\boldsymbol{z}_{N,1}, \ldots, \boldsymbol{z}_{N,l}\}$ is taken from Lemma \ref{lemma: app 8}.

We then consider the case where $\boldsymbol{f}$ is a $\mathbb{R}^{d^\prime}$-valued function. For function $\boldsymbol{f} = (f_1, \ldots, f_{d^\prime})^\top \in \mathcal{H}_{d,d^\prime}^\beta ([0, 1]^{d}, K)$, each component of $\boldsymbol{f}$ is a real-valued function. As previously mentioned, we have $d = l \cdot d_{patch}$, and we do not differentiate between functions with respect to either $\boldsymbol{x}$ or $X$ as independent variables. Define
\begin{align}
\label{eq: app 9}
\widetilde{P}_{M_\varepsilon}^\beta \boldsymbol{f}(\boldsymbol{x}) = 
\begin{pmatrix}
P_{M_\varepsilon}^\beta f_1(\boldsymbol{x}) \\
P_{M_\varepsilon}^\beta f_2(\boldsymbol{x}) \\
\vdots \\
P_{M_\varepsilon}^\beta f_{d^\prime}(\boldsymbol{x}) \\
\end{pmatrix},
\end{align}
then for all $\boldsymbol{x} \in [0,1]^{d}$ we have
\begin{align*}
\left\| \boldsymbol{f}(\boldsymbol{x}) - \widetilde{P}_{M_\varepsilon}^\beta \boldsymbol{f}(\boldsymbol{x}) \right\|_{L^\infty ([0,1]^d)} \leq \frac{c_2 K}{M_\varepsilon^{\beta}}
\end{align*}
and there exist $A_{out} \in \mathbb{R}^{d^\prime \times d_{model}}$ and $\boldsymbol{b}_{out} \in  \mathbb{R}^{d^\prime}$ such that
\begin{align*}
\|A_{out}\|_0+\|\boldsymbol{b}_{out}\|_0 \leq d^\prime \cdot h
\end{align*}
and 
\begin{align*}
A_{out}\cdot \boldsymbol{z}_{N,1} + \boldsymbol{b}_{out} = \widetilde{P}_{M_\varepsilon}^\beta \boldsymbol{f}(\boldsymbol{x}),
\end{align*}
where $Z_{N}$ is taken from Lemma \ref{lemma: app 8}.

We put things together. Let
\begin{align}
M_\varepsilon = 
\left\lceil \left( \frac{c_2 K}{\varepsilon} \right)^{1/\beta} \right\rceil,
\end{align}
then 
\begin{align*}
\left\| \boldsymbol{f}(\boldsymbol{x}) - \widetilde{P}_{M_\varepsilon}^\beta \boldsymbol{f}(\boldsymbol{x}) \right\|_{L^\infty ([0,1]^d)} \leq \varepsilon.
\end{align*}
Combining input embedding, output embedding and Lemma \ref{lemma: app 8}, there exists a transformer network $\boldsymbol{\phi}$ consisting of $N$ pairs of layers, where 
\begin{align*}
\begin{gathered}
N \leq 2 \, l \, d_{patch} \, \left(\log _2 M_{\varepsilon}+1\right) \leq 4 d \log_2 \left( \left\lceil \left( \frac{c_2 K}{\varepsilon} \right)^{1/\beta} \right\rceil \right) = \mathcal{O} \left( \log \left(\frac{K}{\varepsilon}\right)\right), \\
h \leq \left(\begin{array}{c}
d+\left\lceil \left( \frac{c_2 K}{\varepsilon} \right)^{1/\beta} \right\rceil \\
d
\end{array}\right)
= \mathcal{O} \left(  \left( \frac{K}{\varepsilon} \right)^{d/\beta} \right), \quad
d_{ff} = 8 h, \\
d_k = 2 = \mathcal{O}(1), \quad d_v = d_{patch}+l+4 = \mathcal{O}(1), \\
\begin{aligned}
J &= \sum_{r=1}^N \sum_{s=1}^h \left(\left\|W_{Q, r, s}\right\|_0+ \left\|W_{K, r, s}\right\|_0+\left\|W_{V, r, s}\right\|_0+ 
\left\|W_{O, r, s}\right\|_0\right) \\
&~~~ +\sum_{r=1}^N \left(\left\|W_{r, 1}\right\|_0+\left\|b_{r, 1}\right\|_0+\left\|W_{r, 2}\right\|_0+\left\|b_{r, 2}\right\|_0\right) \\
&~~~ + \|A_{in}\|_0 + \|b_{in}\|_0 + \|A_{out}\|_0 + \|b_{out}\|_0 \\
&\leq \left(d_{patch}+l+28\right) \cdot N \cdot h + \left(d_{patch}+l+2\right) \cdot h + d^\prime \cdot h  \\
&\leq \left(8d^2+114d+2+d^\prime\right) \cdot \log_2 \left( \left\lceil \left( \frac{c_2 K}{\varepsilon} \right)^{1/\beta} \right\rceil \right) \cdot
\left(\begin{array}{c}
d+\left\lceil \left( \frac{c_2 K}{\varepsilon} \right)^{1/\beta} \right\rceil \\
d
\end{array}\right) \\
&= \mathcal{O} \left(\left( \frac{K}{\varepsilon} \right)^{d/\beta} \log \left(\frac{K}{\varepsilon}\right) \right)
\end{aligned}
\end{gathered}
\end{align*}
such that
\begin{align}
\boldsymbol{\phi}(\boldsymbol{x}) = \widetilde{P}_{M_\varepsilon}^\beta \boldsymbol{f}(\boldsymbol{x})
\end{align}
for all $\boldsymbol{x} \in [0, 1]^d$. Since
\begin{align*}
\|\boldsymbol{\phi}(\boldsymbol{x})\| \leq \|\boldsymbol{f}(\boldsymbol{x})\| + \|\boldsymbol{\phi}(\boldsymbol{x}) - \boldsymbol{f}(\boldsymbol{x})\| \leq \sup_{\boldsymbol{x} \in [0, 1]^d} \|\boldsymbol{f}(\boldsymbol{x})\| + \varepsilon,
\end{align*}
we may choose $B = \mathcal{O} (\|\boldsymbol{f}\|_{L^{\infty}([0, 1]^d)})$.

Additionally, we examine the Lipschitz constant for $\boldsymbol{\phi}(\boldsymbol{x})$. When 
$\beta>1$, since 
\begin{align*}
\left|\frac{\partial}{\partial x_k^{(s)}}\left(P_{M_\varepsilon}^\beta f(\boldsymbol{x})\right)\right| &\leq \left|\frac{\partial}{\partial x_k^{(s)}}\left(f(\boldsymbol{x})\right)\right| + \left|\frac{\partial}{\partial x_k^{(s)}}\left(P_{M_\varepsilon}^\beta f(\boldsymbol{x}) - f(\boldsymbol{x})\right)\right|\\
&\leq K + \frac{c_2 K}{M_\varepsilon^{\beta-1}} \\
&\leq (1 + c_2) K,
\end{align*}
mean value theorem yields
\begin{align*}
\left|P_{M_\varepsilon}^\beta f(\boldsymbol{x}) - P_{M_\varepsilon}^\beta f(\boldsymbol{y})\right| \leq d (1 + c_2) K \|\boldsymbol{x}-\boldsymbol{y}\|
\end{align*}
for all $\boldsymbol{x}, \boldsymbol{y} \in [0, 1]^d$. Thus, by the definition of $\widetilde{P}_{M_\varepsilon}^\beta \boldsymbol{f}(\boldsymbol{x})$ in (\ref{eq: app 9}), we obtain
\begin{align*}
\left\| \boldsymbol{\phi}(\boldsymbol{x}) -  \boldsymbol{\phi}(\boldsymbol{y}) \right\| & = \left\| \widetilde{P}_{M_\varepsilon}^\beta \boldsymbol{f}(\boldsymbol{x}) - \widetilde{P}_{M_\varepsilon}^\beta \boldsymbol{f}(\boldsymbol{y}) \right\| \\
&\leq c_3 K \|\boldsymbol{x}-\boldsymbol{y}\|,
\end{align*}
which indicates the transformer network $\boldsymbol{\phi}$ is $\mathcal{O}(K)$-Lipschitz when $\beta>1$.
\end{proof}

\begin{proof}[Proof of Theorem \ref{corollary: app 1}]
We only need to modify the proof in Theorem \ref{theorem: app 3} by replacing Lemma \ref{lemma: app simultaneous approximation} with Lemma \ref{lemma: app simultaneous approximation 3}.
\end{proof}

\begin{lemma}
\label{lemma: app simultaneous approximation}
For each $f \in \mathcal{H}_{d,1}^\beta([0, 1]^{d}, K)$ with $\beta>0$ and positive integer $N$, there is a polynomial $p_N$ of degree at most $N$ on $\mathbb{R}^d$, such that for each multi-index $\boldsymbol{\alpha}$ with $\|\boldsymbol{\alpha}\|_1 \leq \min \{\lfloor\beta\rfloor, N\}$ we have
\begin{align*}
\sup_{[0, 1]^{d}} \left|\partial^{\boldsymbol{\alpha}}\left(f-p_N\right)\right| \leq \frac{c_4 K}{N^{\beta-\|\boldsymbol{\alpha}\|_1}},
\end{align*}
where $c_4$ is a positive constant depending only on $d, \beta$.
\end{lemma}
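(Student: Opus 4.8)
The plan is to build a single algebraic polynomial that is simultaneously close to $f$ and to each of its partial derivatives, by smoothing $f$ with a bounded, polynomial‑reproducing, differentiation‑commuting operator rather than by crudely integrating an approximant to the top‑order derivative. Write $\gamma := \beta - \lfloor\beta\rfloor \in (0,1]$ (recall the convention $\lfloor\beta\rfloor = \beta-1$ when $\beta \in \mathbb{N}$). For every multi-index $\boldsymbol{\alpha}$ with $\|\boldsymbol{\alpha}\|_1 \le \lfloor\beta\rfloor$ the derivative $\partial^{\boldsymbol{\alpha}} f$ lies in $\mathcal{H}^{\beta - \|\boldsymbol{\alpha}\|_1}_{d,1}([0,1]^d, K)$: every sup‑norm and H\"older seminorm occurring in $\|\partial^{\boldsymbol{\alpha}} f\|_{\mathcal{H}^{\beta-\|\boldsymbol{\alpha}\|_1}}$ equals one of the quantities already bounded by $K$ in $\|f\|_{\mathcal{H}^\beta_{d,1}}\le K$. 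So it suffices to produce one polynomial $p_N$ of degree $\le N$ with $\|\partial^{\boldsymbol{\alpha}}(f - p_N)\|_{L^\infty([0,1]^d)} \le c\,K\,N^{-(\beta - \|\boldsymbol{\alpha}\|_1)}$ for all such $\boldsymbol{\alpha}$, since $\beta - \|\boldsymbol{\alpha}\|_1 = (\lfloor\beta\rfloor - \|\boldsymbol{\alpha}\|_1) + \gamma$ already contains the desired exponent.

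First I would move to the torus. Extend $f$ to a compactly supported $\beta$-H\"older function on $\mathbb{R}^d$ that agrees with $f$ on $[0,1]^d$ and has H\"older norm $\le c_d K$ (a Stein extension multiplied by a fixed $C^\infty$ cutoff equal to $1$ near $[0,1]^d$; multiplication by a fixed smooth function preserves $\beta$-H\"older smoothness by the Leibniz rule), then periodize to obtain $F$ on $(\mathbb{R}/L\mathbb{Z})^d$ for a fixed $L$. Let $V_N$ be the univariate de la Vall\'ee Poussin kernel (Fourier multiplier $1$ on $\{|k|\le N\}$, $0$ on $\{|k|\ge 2N\}$, affine in between) and set $g := F \ast V_N^{\otimes d}$, where $\ast$ is convolution. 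Then $g$ is a trigonometric polynomial of coordinate-degree $< 2N$, convolution commutes with $\partial^{\boldsymbol{\alpha}}$, $V_N^{\otimes d}$ reproduces trigonometric polynomials of coordinate-degree $\le N$, and $\|V_N^{\otimes d}\|_{L^1} \le C^d$; hence, denoting by $E_N^{\mathrm{trig}}$ the best trigonometric approximation of coordinate-degree $\le N$, for every $\boldsymbol{\alpha}$ with $\|\boldsymbol{\alpha}\|_1 \le \lfloor\beta\rfloor$,
\[\|\partial^{\boldsymbol{\alpha}}(F - g)\|_{L^\infty} = \|\partial^{\boldsymbol{\alpha}}F - (\partial^{\boldsymbol{\alpha}}F)\ast V_N^{\otimes d}\|_{L^\infty} \le (1 + C^d)\,E_N^{\mathrm{trig}}(\partial^{\boldsymbol{\alpha}}F) \le c_d\,K\,N^{-(\beta - \|\boldsymbol{\alpha}\|_1)},\]
the last inequality being the multivariate trigonometric Jackson theorem applied to $\partial^{\boldsymbol{\alpha}}F \in \mathcal{H}^{\beta - \|\boldsymbol{\alpha}\|_1}$. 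Restricting to $[0,1]^d$, where $F = f$, gives the required estimate with $g$ in place of $p_N$; the simultaneity over all $\boldsymbol{\alpha}$ is automatic because $g$ is built from $F$ by one fixed operator that commutes with differentiation.

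It remains to convert the trigonometric polynomial $g$ into an algebraic one. Since $g$ is real-analytic with $\|\partial^{\boldsymbol{n}} g\|_{L^\infty} \le (cN)^{\|\boldsymbol{n}\|_1}\|F\|_{L^\infty} \le (cN)^{\|\boldsymbol{n}\|_1} c_d K$, its Taylor polynomial $p$ of total degree $M$ about the centre of $[0,1]^d$ obeys $\|\partial^{\boldsymbol{\alpha}}(g - p)\|_{L^\infty([0,1]^d)} \le c_d K\sum_{\ell > M}(c_d' N)^\ell/\ell!$, which is at most $c_d K\,e^{-N}\le c_d K\,N^{-(\beta - \|\boldsymbol{\alpha}\|_1)}$ once $M = C_d N$ with $C_d$ a large enough dimensional constant. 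Adding the two estimates, then rerunning the construction with $N$ replaced by $\lfloor N/C_d\rfloor$, produces an algebraic polynomial of degree $\le N$ with the asserted bounds for all sufficiently large $N$; for the finitely many remaining $N$ the Taylor polynomial of $f$ of degree $\le\min\{\lfloor\beta\rfloor,N\}$ already works, with the loss absorbed into the constant, and all dimensional constants collect into $c_4 = c_4(d,\beta)$. The main obstacle is precisely this simultaneity with the sharp exponents: integrating a near-best polynomial approximant of $\partial^{(\lfloor\beta\rfloor,0,\dots,0)}f$ would only yield the weaker rate $N^{-\gamma}$ for every lower-order derivative, and recovering the extra factor $N^{-(\lfloor\beta\rfloor-\|\boldsymbol{\alpha}\|_1)}$ is exactly what forces the use of a uniformly bounded, polynomial-reproducing, differentiation-commuting smoothing operator like $V_N^{\otimes d}$ (alternatively one may quote the multivariate simultaneous approximation theorem of Bagby--Bos--Levenberg and track the constants).
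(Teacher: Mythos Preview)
Your argument is correct and reaches the same conclusion, but by a different route than the paper. Both proofs begin identically: extend $f$ from $[0,1]^d$ to a compactly supported $\beta$-H\"older function on $\mathbb{R}^d$ via the Stein/Whitney extension followed by multiplication by a fixed smooth cutoff. From there the paper takes the shortcut you mention in your last sentence: it directly invokes the multivariate simultaneous approximation theorem of Bagby--Bos--Levenberg (their Theorem~1) for compactly supported $\mathcal{C}^{\lfloor\beta\rfloor}$ functions, which already yields a polynomial $p_N$ with $\sup_{[0,1]^d}|\partial^{\boldsymbol{\alpha}}(\Psi F - p_N)| \le c\,N^{-(\lfloor\beta\rfloor-\|\boldsymbol{\alpha}\|_1)}\,\omega_{\Psi F,\lfloor\beta\rfloor}(1/N)$; the only remaining work is a Leibniz-rule computation showing $\omega_{\Psi F,\lfloor\beta\rfloor}(1/N)\le cK N^{-(\beta-\lfloor\beta\rfloor)}$. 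You instead reprove the simultaneous approximation from first principles: periodize, convolve with the tensor de~la~Vall\'ee~Poussin kernel (exploiting that this operator is $L^1$-bounded, reproduces low-degree trigonometric polynomials, and commutes with every $\partial^{\boldsymbol{\alpha}}$, so that Jackson's theorem can be applied to each $\partial^{\boldsymbol{\alpha}}F$ separately while the approximant $g$ stays fixed), and then Taylor-truncate $g$ using Bernstein's inequality to pass from trigonometric to algebraic polynomials. Your route is longer and needs the extra trig-to-algebraic conversion plus a small-$N$ cleanup, but it is more self-contained and makes the mechanism behind simultaneity explicit; the paper's route is shorter precisely because it black-boxes that mechanism inside the cited theorem.
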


\begin{proof}
Let $f$ be a function of compact support on $\mathbb{R}^d$, of class $\mathcal{C}^m$ where $0 \leq$ $m<\infty$, and let $K$ be a compact subset of $\mathbb{R}^d$ which contains the support of $f$. Then \citet[Theorem 1]{bagby2002multivariate} gives that for each positive integer $N$ there is a polynomial $p_N$ of degree at most $N$ on $\mathbb{R}^d$ with the following property: for each multi-index $\boldsymbol{\alpha}$ with $\|\boldsymbol{\alpha}\|_1 \leq \min \{m, N\}$ we have
\begin{align*}
\sup_K \left|\partial^{\boldsymbol{\alpha}} \left(f-p_N\right)\right| \leq \frac{c_5}{N^{m-\|\boldsymbol{\alpha}\|_1}} \omega_{f, m}\left(\frac{1}{N}\right),
\end{align*}
where $c_5$ is a positive constant depending only on $d, m$ and the diameter of $K$, and
\begin{align*}
\omega_{f, m}(\delta) = \sup_{\|\boldsymbol{\gamma}\|_1=m} \left(\sup_{\|\boldsymbol{x}-\boldsymbol{y}\| \leq \delta} \left|\partial^{\boldsymbol{\gamma}} f(\boldsymbol{x})-\partial^{\boldsymbol{\gamma}} f(\boldsymbol{y})\right|\right).
\end{align*}

We now consider $f \in \mathcal{H}_{d,1}^\beta([0, 1]^{d}, K)$ with $\beta>1$ and $K>0$. The Whitney extension theorem provides an extension of $f$ to all of $\mathbb{R}^d$ (See \citet[Theorem 4]{stein1970singular}, \citet[Theorem 2.3.6]{hormander2015analysis}). In more detail, there exists a function $F$ of class $\mathcal{H}_{d,1}^\beta(\mathbb{R}^{d}, c_6 K)$ on $\mathbb{R}^d$ such that for each multi-index $\boldsymbol{\alpha}$ with $\|\boldsymbol{\alpha}\|_1 \leq \lfloor\beta\rfloor$ we have $\partial^{\boldsymbol{\alpha}} F = \partial^{\boldsymbol{\alpha}} f$ on $[0, 1]^{d}$, and
\begin{align*}
\sum_{\boldsymbol{\alpha}: \|\boldsymbol{\alpha}\|_1<\beta} \sup_{\mathbb{R}^d} \left|\partial^{\boldsymbol{\alpha}} F\right| + \sum_{\boldsymbol{\alpha}: \|\boldsymbol{\alpha}\|_1 = \lfloor\beta\rfloor} \sup_{\substack{\boldsymbol{x}, \boldsymbol{y} \in \mathbb{R}^{d} \\ \boldsymbol{x} \neq \boldsymbol{y}}} \frac{\left|\partial^{\boldsymbol{\alpha}} F(\boldsymbol{x})-\partial^{\boldsymbol{\alpha}} F(\boldsymbol{y})\right|}{\|\boldsymbol{x}-\boldsymbol{y}\|^{\beta-\lfloor\beta\rfloor}} \leq c_6 K,
\end{align*}
where $c_6 = c_6(\beta)$. We fix a test function $\Psi \in \mathcal{C}_0^{\infty} (\mathbb{R}^d)$ with compact support and $\Psi \equiv 1$ in the vicinity of $[0, 1]^{d}$. Since $\Psi F$ has compact support and belongs to the class $\mathcal{C}^{\lfloor\beta\rfloor}$, we may apply the aforementioned multivariate simultaneous approximation theorem to $\Psi F$ and find a polynomial $p_N$ of degree at most $N$, which satisfies
\begin{align}
\label{eq: approximate holder function 1}
\begin{aligned}
\sup_{[0, 1]^{d}} \left|\partial^{\boldsymbol{\alpha}} \left(f-p_N\right)\right| &= \sup_{[0, 1]^{d}} \left|\partial^{\boldsymbol{\alpha}} \left(\Psi F-p_N\right)\right| \\
&\leq  \sup_{\mathbb{R}^d} \left|\partial^{\boldsymbol{\alpha}} \left(\Psi F-p_N\right)\right| \\
&\leq \frac{c_7}{N^{\lfloor\beta\rfloor-\|\boldsymbol{\alpha}\|_1}} \omega_{\Psi F, \lfloor\beta\rfloor}\left(\frac{1}{N}\right).
\end{aligned}
\end{align}
For each multi-index $\boldsymbol{\alpha}$ with $\|\boldsymbol{\alpha}\|_1 = \lfloor\beta\rfloor$ and $\boldsymbol{x}, \boldsymbol{y}\in \mathbb{R}^d$ with $\|\boldsymbol{x}-\boldsymbol{y}\| \leq 1$, we have
\begin{align*}
& \left| \partial^{\boldsymbol{\alpha}} (\Psi F)(\boldsymbol{x}) - \partial^{\boldsymbol{\alpha}} (\Psi F)(\boldsymbol{y}) \right| \\
&= \left| \sum_{\boldsymbol{0} \leq \boldsymbol{k}\leq \boldsymbol{\boldsymbol{\alpha}}} \binom{\alpha}{\boldsymbol{k}}\left( \partial^{\boldsymbol{\alpha}-\boldsymbol{k}}\Psi(\boldsymbol{x})\cdot \partial^{\boldsymbol{k}} F(\boldsymbol{x}) - \partial^{\boldsymbol{\alpha}-\boldsymbol{k}}\Psi(\boldsymbol{y})\cdot \partial^{\boldsymbol{k}} F(\boldsymbol{y}) \right) \right| \\
&\leq \sum_{\boldsymbol{0} \leq \boldsymbol{k} \leq \boldsymbol{\alpha}} \binom{\boldsymbol{\alpha}}{\boldsymbol{k}} \left| \partial^{\boldsymbol{\alpha}-\boldsymbol{k}}\Psi(\boldsymbol{x})\cdot \partial^{\boldsymbol{k}} F(\boldsymbol{x}) - \partial^{\boldsymbol{\alpha}-\boldsymbol{k}}\Psi(\boldsymbol{y})\cdot \partial^{\boldsymbol{k}} F(\boldsymbol{y}) \right| \\
&\leq \left|\Psi(\boldsymbol{x})\right| \cdot\left|\partial^{\boldsymbol{\alpha}} F(\boldsymbol{x}) - \partial^{\boldsymbol{\alpha}} F(\boldsymbol{y})\right|+\left|\Psi(\boldsymbol{x})-\Psi(\boldsymbol{y})\right|\cdot \left|\partial^{\boldsymbol{\alpha}} F(\boldsymbol{y})\right| \\
&~~~ + \sum_{\substack{\boldsymbol{0} \leq \boldsymbol{k} \leq \boldsymbol{\alpha} \\ \boldsymbol{k} \neq \boldsymbol{\alpha}}} \binom{\boldsymbol{\alpha}}{\boldsymbol{k}} \left| \partial^{\boldsymbol{\alpha}-\boldsymbol{k}}\Psi(\boldsymbol{x})\cdot \partial^{\boldsymbol{k}} F(\boldsymbol{x}) - \partial^{\boldsymbol{\alpha}-\boldsymbol{k}}\Psi(\boldsymbol{y})\cdot \partial^{\boldsymbol{k}} F(\boldsymbol{y}) \right| \\ 
&\leq \left|\Psi(\boldsymbol{x})\right| \cdot c_6 K \|\boldsymbol{x}-\boldsymbol{y}\|^{\beta-\lfloor\beta\rfloor} + \left\|\nabla \Psi(\xi)\right\| \|\boldsymbol{x}-\boldsymbol{y}\| \cdot \left|\partial^{\boldsymbol{\alpha}} F(\boldsymbol{y})\right| \\
&~~~ + \sum_{\substack{\boldsymbol{0} \leq \boldsymbol{k} \leq \boldsymbol{\alpha} \\ \boldsymbol{k} \neq \boldsymbol{\alpha}}} \binom{\boldsymbol{\alpha}}{\boldsymbol{k}} \left\|\nabla \left(\partial^{\boldsymbol{\alpha}-\boldsymbol{k}}\Psi\cdot \partial^{\boldsymbol{k}} F\right)(\xi_k)\right\|  \|\boldsymbol{x}-\boldsymbol{y}\| \\
&\leq \left\|\Psi\right\|_{L^\infty(\mathbb{R}^d)} \cdot c_6 K \|\boldsymbol{x}-\boldsymbol{y}\|^{\beta-\lfloor\beta\rfloor} + \left\|\nabla\Psi\right\|_{L^\infty(\mathbb{R}^d)} \|\boldsymbol{x}-\boldsymbol{y}\| \cdot c_6 K \\
&~~~ + \sum_{\substack{\boldsymbol{0} \leq \boldsymbol{k} \leq \boldsymbol{\alpha} \\ \boldsymbol{k} \neq \boldsymbol{\alpha}}} \binom{\boldsymbol{\alpha}}{\boldsymbol{k}} \left(\left\|\nabla\left(\partial^{\boldsymbol{\alpha}-\boldsymbol{k}}\Psi\right)\right\|_{L^\infty(\mathbb{R}^d)} + \left\|\partial^{\boldsymbol{\alpha}-\boldsymbol{k}}\Psi\right\|_{L^\infty(\mathbb{R}^d)}\right) c_6 K \|\boldsymbol{x}-\boldsymbol{y}\| \\
&\leq c_8 (\Psi, c_6(\beta), d, \beta) K \cdot \|\boldsymbol{x}-\boldsymbol{y}\|^{\beta-\lfloor\beta\rfloor},
\end{align*}
where in the last inequality we use $0<\beta-\lfloor\beta\rfloor\leq 1$ and $\|\boldsymbol{x}-\boldsymbol{y}\|\leq 1$, which indicates
\begin{align}
\label{eq: approximate holder function 2}
\omega_{\Psi F, \lfloor\beta\rfloor}\left(\frac{1}{N}\right) \leq \frac{c_8 K}{N^{\beta-\lfloor\beta\rfloor}}.
\end{align}
Combining (\ref{eq: approximate holder function 1}) and (\ref{eq: approximate holder function 2}), we complete the proof for the case when $\beta > 1$.

For $f \in \mathcal{H}_{d,1}^\beta([0, 1]^{d}, K)$ with $0<\beta\leq 1$, there is a completely analogous proof. 
\end{proof}

\begin{lemma}[\cite{bagby2002multivariate}, Theorem 2]
\label{lemma: app simultaneous approximation 3}
For each $f \in \mathcal{C}_{d,1}^m ([0, 1]^{d}, K)$ with $m\in\mathbb{N}$ and positive integer $N$, there is a polynomial $p_N$ of degree at most $N$ on $\mathbb{R}^d$, such that for each multi-index $\boldsymbol{\alpha}$ with $\|\boldsymbol{\alpha}\|_1 \leq \min \{m, N\}$ we have
\begin{align*}
\sup_{[0, 1]^{d}} \left|\partial^{\boldsymbol{\alpha}}\left(f-p_N\right)\right| \leq \frac{c_9 K}{N^{m-\|\boldsymbol{\alpha}\|_1}},
\end{align*}
where $c_9$ is a positive constant depending only on $d, m$.
\end{lemma}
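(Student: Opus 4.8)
The statement is the $\mathcal{C}^m$-analogue of Lemma~\ref{lemma: app simultaneous approximation}, and my plan is to deduce it from \citet[Theorem~1]{bagby2002multivariate} --- the compactly-supported multivariate simultaneous approximation theorem already invoked above --- by the same extension-and-cutoff device used there, only simpler, since no H\"older modulus needs to be tracked.

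First I would extend $f$ off the cube. Given $f \in \mathcal{C}^m_{d,1}([0,1]^d, K)$, the Whitney extension theorem (cf. \citet[Theorem~4]{stein1970singular}, \citet[Theorem~2.3.6]{hormander2015analysis}) produces $F \in \mathcal{C}^m(\mathbb{R}^d)$ with $\partial^{\boldsymbol{\alpha}} F = \partial^{\boldsymbol{\alpha}} f$ on $[0,1]^d$ for all $\|\boldsymbol{\alpha}\|_1 \le m$ and $\sum_{\|\boldsymbol{\alpha}\|_1 \le m} \|\partial^{\boldsymbol{\alpha}} F\|_{L^\infty(\mathbb{R}^d)} \le c(d,m)\,K$. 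I would then fix a cutoff $\Psi \in \mathcal{C}_0^\infty(\mathbb{R}^d)$, depending only on $d$, with $\Psi \equiv 1$ on a neighbourhood of $[0,1]^d$. Then $\Psi F$ has compact support, lies in $\mathcal{C}^m(\mathbb{R}^d)$, and by the Leibniz rule together with the boundedness of the derivatives of $\Psi$ satisfies $\|\partial^{\boldsymbol{\gamma}}(\Psi F)\|_{L^\infty(\mathbb{R}^d)} \le c'(d,m)\,K$ for all $\|\boldsymbol{\gamma}\|_1 \le m$; moreover $\partial^{\boldsymbol{\alpha}}(\Psi F) = \partial^{\boldsymbol{\alpha}} f$ on $[0,1]^d$ for every $\|\boldsymbol{\alpha}\|_1 \le m$, since every positive-order derivative of $\Psi$ vanishes where $\Psi \equiv 1$.

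Next I would apply \citet[Theorem~1]{bagby2002multivariate} to $\Psi F$ with the compact set taken to be $[0,1]^d$ (whose diameter is $\sqrt d$): for each positive integer $N$ there is a polynomial $p_N$ of degree at most $N$ on $\mathbb{R}^d$ such that, for every multi-index $\boldsymbol{\alpha}$ with $\|\boldsymbol{\alpha}\|_1 \le \min\{m, N\}$,
\begin{align*}
\sup_{[0,1]^d} \left|\partial^{\boldsymbol{\alpha}}(\Psi F - p_N)\right| \le \frac{c_5}{N^{m-\|\boldsymbol{\alpha}\|_1}}\, \omega_{\Psi F, m}\!\left(\frac{1}{N}\right),
\end{align*}
where $c_5$ depends only on $d$ and $m$. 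The modulus is bounded trivially: since $\|\partial^{\boldsymbol{\gamma}}(\Psi F)\|_{L^\infty(\mathbb{R}^d)} \le c'K$ for all $\|\boldsymbol{\gamma}\|_1 = m$, one has $\omega_{\Psi F, m}(\delta) \le 2c'K$ for every $\delta > 0$, so the right-hand side is at most $c_9 K / N^{m-\|\boldsymbol{\alpha}\|_1}$ with $c_9 = 2c'c_5$ depending only on $d, m$. Using $\partial^{\boldsymbol{\alpha}}(\Psi F) = \partial^{\boldsymbol{\alpha}} f$ on $[0,1]^d$ then yields $\sup_{[0,1]^d} |\partial^{\boldsymbol{\alpha}}(f - p_N)| \le c_9 K / N^{m-\|\boldsymbol{\alpha}\|_1}$, which is the claim.

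There is essentially no obstacle here: every ingredient --- Bagby's compactly-supported theorem, the Whitney extension, and the smooth cutoff --- has already appeared in the proof of Lemma~\ref{lemma: app simultaneous approximation}, and the only simplification is that the top-order modulus of continuity now needs merely to be bounded by a constant multiple of $K$ rather than estimated at the rate $N^{-(\beta-\lfloor\beta\rfloor)}$. The one point deserving a line of care is checking that $\Psi F$ agrees with $f$ together with all its derivatives of order $\le m$ on $[0,1]^d$, which is immediate from $\Psi \equiv 1$ near the cube.
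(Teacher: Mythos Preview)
Your argument is correct and essentially self-contained, but it differs from what the paper does: the paper gives no proof at all and simply cites the statement as Theorem~2 of \citet{bagby2002multivariate}, which already covers functions on a compact set (not just compactly supported functions on $\mathbb{R}^d$). You instead deduce it from their Theorem~1 via the Whitney-extension-and-cutoff device, reusing exactly the machinery from the proof of Lemma~\ref{lemma: app simultaneous approximation}. What your route buys is that only one external theorem needs to be imported; what the paper's route buys is that the lemma is literally a restatement of a published result, so nothing further needs to be argued.

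One small technical slip: when you invoke Theorem~1, the compact set $K$ must \emph{contain the support of} $\Psi F$, and that support is strictly larger than $[0,1]^d$ (since $\Psi\equiv 1$ on a neighbourhood of the cube). So you cannot take $K=[0,1]^d$. The fix is immediate: take $K=\operatorname{supp}\Psi$, which depends only on $d$ (hence so does its diameter, and therefore $c_5$), obtain the bound on $\sup_K$, and then use $[0,1]^d\subseteq K$ to pass to $\sup_{[0,1]^d}$. With this adjustment your proof goes through as written.
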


\subsection{Approximation of velocity field} \label{appendix: app.4}

Lemma \ref{lemma: true 3}, Lemma \ref{lemma: true 4}, and Lemma \ref{lemma: true 5} demonstrate that the true velocity field has certain regularities. As a result, the previous approximation results can be naturally applied to the approximation of the velocity field.

\begin{proof}[Proof of Corollary \ref{coro: app true vd}]
We define $\widetilde{\boldsymbol{v}}^*(\boldsymbol{x}, t) := \boldsymbol{v}^* (2 R \boldsymbol{x} - R \mathbbm{1}, T t)$ and restrict the input space of $\widetilde{\boldsymbol{v}}^*$ to $[0,1]^d \times [0,1]$. According to Lemma \ref{lemma: true 3}, Lemma \ref{lemma: true 4} and Lemma \ref{lemma: true 5}, we have $\widetilde{\boldsymbol{v}}^*(\boldsymbol{x}, t) \in \mathcal{C}_{d+1, d}^1 ([0,1]^d \times [0,1], K)$ with $K = \mathcal{O} (\frac{R}{(1-T)^3})$. It follows from Theorem \ref{corollary: app 1} that for any $\varepsilon \in (0,1)$ there exists a transformer network $\widetilde{\boldsymbol{v}}(\boldsymbol{x}, t) \in \mathcal{T}_{d+1, d}\left(N, h, d_k, d_v, d_{ff}, B, J, \gamma\right)$ with configuration
\begin{align*}
\begin{gathered}
N = \mathcal{O} \left( \log \left(\frac{R}{(1-T)^3\varepsilon}\right)\right), \quad
h = \mathcal{O} \left(  \left( \frac{R}{(1-T)^3 \varepsilon} \right)^{d+1} \right), \\
d_{ff} = 8 h, \quad 
d_k = \mathcal{O}(1), \quad 
d_v = \mathcal{O}(1),\\
B = \mathcal{O} \left( \frac{R}{1-T} \right), \quad
J = \mathcal{O} \left(\left( \frac{R}{(1-T)^3\varepsilon} \right)^{d+1} \log \left(\frac{R}{(1-T)^3 \varepsilon}\right) \right), \quad
\gamma = \mathcal{O} \left(\frac{R}{(1-T)^3}\right), 
\end{gathered}
\end{align*}
such that
\begin{align*}
\|\widetilde{\boldsymbol{v}}(\boldsymbol{x}, t) - \widetilde{\boldsymbol{v}}^*(\boldsymbol{x}, t)\|_{L^\infty ([0,1]^d \times [0,1])} \leq \varepsilon,
\end{align*}
which implies
\begin{align*}
\left\|\widetilde{\boldsymbol{v}}\left(\frac{1}{2R}(\boldsymbol{x}+R\mathbbm{1}), \frac{1}{T} t\right) - \boldsymbol{v}^*(\boldsymbol{x}, t)\right\|_{L^\infty ([-R, R]^d \times [0, T])} \leq \varepsilon.
\end{align*}
Let $\boldsymbol{v}(\boldsymbol{x}, t) := \widetilde{\boldsymbol{v}} (\frac{1}{2R}(\operatorname{Proj}_{[-R,R]^d} (\boldsymbol{x}) +R\mathbbm{1}), \frac{1}{T} t)$ for $\boldsymbol{x} \in \mathbb{R}^d$, then $\boldsymbol{v}(\boldsymbol{x}, t) \in \mathcal{T}$ and  
\begin{align*}
\left\|\boldsymbol{v}(\boldsymbol{x}, t) - \boldsymbol{v}^*(\boldsymbol{x}, t)\right\|_{L^\infty ([-R, R]^d \times [0, T])} \leq \varepsilon.
\end{align*}
By the definition of $\mathcal{T}$ as defined in (\ref{eq: def tao}), we choose
\begin{align*}
\gamma_{\boldsymbol{x}} = \frac{\gamma}{2R} = \mathcal{O} \left(\frac{1}{(1-T)^3}\right), \quad \gamma_{t} = \frac{\gamma}{T} = \mathcal{O} \left(\frac{R}{(1-T)^3}\right).
\end{align*}
\end{proof}

\section{Generalization Error} \label{appendix: gen}
In this section, we provide the proof for the generalization error by estimating the covering number of the transformer network function class.

\subsection{Proof of Lemma \ref{lemma:gen 2} and Theorem \ref{theorem: gen 1}} \label{appendix: gen.1}
\begin{proof}[Proof of Lemma \ref{lemma:gen 2}]
By performing some calculations, we have
\begin{align}
\begin{aligned}
\label{eq:gen 1}
& \mathbb{E} \left[\left\|X_1 - \frac{t}{\sqrt{1-t^2}} X_0-\boldsymbol{v}\left(X_t, t\right)\right\|^2\right] \\
&= \mathbb{E}\left[\left\|X_1 - \frac{t}{\sqrt{1-t^2}} X_0 -\boldsymbol{v}^*\left(X_t, t\right)+\boldsymbol{v}^*\left(X_t, t\right)-\boldsymbol{v}\left(X_t, t\right)\right\|^2\right] \\
&= \mathbb{E}\left[\left\|X_1 - \frac{t}{\sqrt{1-t^2}} X_0-\boldsymbol{v}^*\left(X_t, t\right)\right\|^2\right]+\left\|\boldsymbol{v}(\cdot, t)-\boldsymbol{v}^*(\cdot, t)\right\|_{L^2\left(\pi_t\right)}^2 \\
&~~~ +2 \mathbb{E}\left[\left\langle X_1-\frac{t}{\sqrt{1-t^2}}X_0-\boldsymbol{v}^*\left(X_t, t\right), \boldsymbol{v}^*\left(X_t, t\right)-\boldsymbol{v}\left(X_t, t\right) \right\rangle\right].
\end{aligned}
\end{align}
By taking expectation conditioned on $X_t$, we have
\begin{align*}
& \mathbb{E}_{X_0, X_1} \left[\left\langle X_1 - \frac{t}{\sqrt{1-t^2}} X_0-\boldsymbol{v}^*\left(X_t, t\right), \boldsymbol{v}^*\left(X_t, t\right)-\boldsymbol{v}\left(X_t, t\right)\right\rangle\right] \\
&= \mathbb{E}_{X_t}\left[\mathbb{E}_{X_0, X_1} \left[\left\langle X_1 - \frac{t}{\sqrt{1-t^2}}X_0-\boldsymbol{v}^*\left(X_t, t\right), \boldsymbol{v}^*\left(X_t, t\right)-\boldsymbol{v}\left(X_t, t\right)\right\rangle \bigg| X_t\right]\right] \\
&= \mathbb{E}_{X_t} \left[\left\langle\mathbb{E}_{X_0, X_1} \left[X_1 - \frac{t}{\sqrt{1-t^2}} X_0 \bigg| X_t\right]-\boldsymbol{v}^*\left(X_t, t\right), \boldsymbol{v}^*\left(X_t, t\right)-\boldsymbol{v}\left(X_t, t\right)\right\rangle\right] \\
&= \mathbb{E}_{X_t}\left[\left\langle\boldsymbol{v}^*\left(X_t, t\right)-\boldsymbol{v}^*\left(X_t, t\right), \boldsymbol{v}^*\left(X_t, t\right)-\boldsymbol{v}\left(X_t, t\right)\right\rangle\right] = 0.
\end{align*}
Substituting the aforementioned identity into (\ref{eq:gen 1}) and integrating over the interval $[0, T]$ w.r.t. $t$, we obtain
\begin{align*}
\mathcal{L}(\boldsymbol{v})=\mathcal{L}\left(\boldsymbol{v}^*\right)+\frac{1}{T} \int_0^T\left\|\boldsymbol{v}(\cdot, t)-\boldsymbol{v}^*(\cdot, t)\right\|_{L^2\left(\pi_t\right)}^2 \mathrm{d} t,
\end{align*}
which concludes the proof.
\end{proof}

\begin{proof}[Proof of Theorem \ref{theorem: gen 1}]
Let $R>0$ be determined later. Let $\boldsymbol{x}_{t,i} := t_i \boldsymbol{x}_{1,i}+\sqrt{1-t_i^2} \boldsymbol{x}_{0,i}$. 

$\bullet$ \textbf{Error decomposition}. We consider the error decomposition in an asymmetric form
\begin{align}
\label{eq:gen 6}
\begin{aligned}
& \mathcal{L}(\widehat{\boldsymbol{v}}) - \mathcal{L}(\boldsymbol{v}^*) \\
=& \frac{1}{T} \int_0^T\left\|\widehat{\boldsymbol{v}}(\cdot, t)-\boldsymbol{v}^*(\cdot, t)\right\|_{L^2\left(\pi_t\right)}^2 \mathrm{d} t \\
=& \mathbb{E}_{t, X_t} \left[ \left\|\widehat{\boldsymbol{v}}(X_t, t)-\boldsymbol{v}^*(X_t, t)\right\|^2 \right] \\
=& \mathbb{E}_{t, X_t} \left[ \left\|\widehat{\boldsymbol{v}}(X_t, t)-\boldsymbol{v}^*(X_t, t)\right\|^2 \mathbbm{1}\{\|X_t\|_{\infty} \leq R\} \right] \\
& + \mathbb{E}_{t, X_t} \left[ \left\|\widehat{\boldsymbol{v}}(X_t, t)-\boldsymbol{v}^*(X_t, t)\right\|^2 \mathbbm{1}\{\|X_t\|_{\infty} > R\} \right] \\
=& \mathbb{E}_{t, X_t} \left[ \left\|\widehat{\boldsymbol{v}}(X_t, t)-\boldsymbol{v}^*(X_t, t)\right\|^2 \mathbbm{1}\{\|X_t\|_{\infty} \leq R\} \right] \\
& - \frac{3}{n}\sum_{i=1}^n \left\|\widehat{\boldsymbol{v}}(\boldsymbol{x}_{t,i}, t_i)-\boldsymbol{v}^*(\boldsymbol{x}_{t,i}, t_i)\right\|^2 \mathbbm{1}\{\|\boldsymbol{x}_{t,i}\|_{\infty} \leq R\} \\
& + \frac{3}{n}\sum_{i=1}^n \left\|\widehat{\boldsymbol{v}}(\boldsymbol{x}_{t,i}, t_i)-\boldsymbol{v}^*(\boldsymbol{x}_{t,i}, t_i)\right\|^2 \mathbbm{1}\{\|\boldsymbol{x}_{t,i}\|_{\infty} \leq R\} \\
& + \mathbb{E}_{t, X_t} \left[ \left\|\widehat{\boldsymbol{v}}(X_t, t)-\boldsymbol{v}^*(X_t, t)\right\|^2 \mathbbm{1}\{\|X_t\|_{\infty} > R\} \right].
\end{aligned}
\end{align}
For simplicity, we denote 
\begin{align*}
\Rmnum{1} &= \mathbb{E}_{t, X_t} \left[ \left\|\widehat{\boldsymbol{v}}(X_t, t)-\boldsymbol{v}^*(X_t, t)\right\|^2 \mathbbm{1}\{\|X_t\|_{\infty} \leq R\} \right] \\
&~~~ - \frac{3}{n}\sum_{i=1}^n \left\|\widehat{\boldsymbol{v}}(\boldsymbol{x}_{t,i}, t_i)-\boldsymbol{v}^*(\boldsymbol{x}_{t,i}, t_i)\right\|^2 \mathbbm{1}\{\|\boldsymbol{x}_{t,i}\|_{\infty} \leq R\} \\
\Rmnum{2} &= \frac{1}{n}\sum_{i=1}^n \left\|\widehat{\boldsymbol{v}}(\boldsymbol{x}_{t,i}, t_i)-\boldsymbol{v}^*(\boldsymbol{x}_{t,i}, t_i)\right\|^2 \mathbbm{1}\{\|\boldsymbol{x}_{t,i}\|_{\infty} \leq R\} \\
\Rmnum{3} &= \mathbb{E}_{t, X_t} \left[ \left\|\widehat{\boldsymbol{v}}(X_t, t)-\boldsymbol{v}^*(X_t, t)\right\|^2 \mathbbm{1}\{\|X_t\|_{\infty} > R\} \right].
\end{align*}

$\bullet$ \textbf{Bound of $\E_{\mathcal{X}} \left[\Rmnum{1}\right]$.}
Let $\mathcal{H}=\{ h(\boldsymbol{x}_t, t) := \left\|\boldsymbol{v}(\boldsymbol{x}_t, t)-\boldsymbol{v}^*(\boldsymbol{x}_t, t)\right\|^2 \mathbbm{1}\{\|\boldsymbol{x}_t\|_{\infty} \leq R\} : \boldsymbol{v} \in \mathcal{T} \}$. For any $h\in \mathcal{H}$, we have
\begin{align*}
h(\boldsymbol{x}_t, t) &\leq 2\left\|\boldsymbol{v}(\boldsymbol{x}_t, t)\right\|^2 \mathbbm{1}\{\|\boldsymbol{x}_t\|_{\infty} \leq R\} + 2\left\|\boldsymbol{v}^*(\boldsymbol{x}_t, t)\right\|^2 \mathbbm{1}\{\|\boldsymbol{x}_t\|_{\infty} \leq R\} \\
&\leq 2B^2 + \frac{2d(1+R)^2}{(1-T^2)^2},
\end{align*}
where in the second equality we use Lemma \ref{lemma: true 3}, and subsequently
\begin{align*}
\E_{\mathcal{X}} \left[\Rmnum{1}\right] & \leq \mathbb{E}_{\mathcal{X}} \left[\sup _{h \in \mathcal{H}} 2 \mathbb{E}[h]-\mathbb{E}[h]-\frac{2}{n} \sum_{i=1}^n h\left(\boldsymbol{x}_{t,i}, t_i\right)-\frac{1}{n} \sum_{i=1}^n h\left(\boldsymbol{x}_{t,i}, t_i\right)\right] \\
& \leq \mathbb{E}_{\mathcal{X}} \left[\sup_{h \in \mathcal{H}} 2 \mathbb{E}[h]-\frac{1}{2B^2 + \frac{2d(1+R)^2}{(1-T^2)^2}} \mathbb{E}[h^2]-\frac{2}{n} \sum_{i=1}^n h\left(\boldsymbol{x}_{t,i}, t_i\right) \right. \\
&\hspace{1.5cm} \left. -\frac{1}{\left(2B^2 + \frac{2d(1+R)^2}{(1-T^2)^2}\right) n} \sum_{i=1}^n h^2\left(\boldsymbol{x}_{t,i}, t_i\right)\right].
\end{align*}
We can then use the symmetrization technique to bound it by Rademacher complexity. We introduce a ghost dataset $\mathcal{X}^{\prime}=\left\{t_i^{\prime}, \boldsymbol{x}_{0,i}^{\prime}, \boldsymbol{x}_{1,i}^{\prime}\right\}_{i=1}^n$ drawn i.i.d. from $\text {Unif}[0, T], \pi_0$ and $\pi_1$, and let $\tau=\left\{\tau_i\right\}_{i=1}^n$ be a sequence of i.i.d. Rademacher variables independent of both $\mathcal{X}$ and $\mathcal{X}^{\prime}$. Then,
\begin{align*}
& \mathbb{E}_\mathcal{X}\left[\sup _{h \in \mathcal{H}} 2 \mathbb{E}[h]-\dfrac{1}{2B^2 + \frac{2d(1+R)^2}{(1-T^2)^2}} \mathbb{E}[h^2]-\dfrac{2}{n} \sum_{i=1}^n h\left(\boldsymbol{x}_{t,i}, t_i\right) \right. \\
&\hspace{1.5cm} \left. -\dfrac{1}{\left(2B^2 + \frac{2d(1+R)^2}{(1-T^2)^2}\right) n} \sum_{i=1}^n h^2\left(\boldsymbol{x}_{t,i}, t_i\right)\right] \\
= & \mathbb{E}_\mathcal{X}\left[\sup _{h \in \mathcal{H}} \mathbb{E}_{\mathcal{X}^{\prime}}\left[\dfrac{2}{n} \sum_{i=1}^n h\left(\boldsymbol{x}_{t,i}^{\prime}, t_i^{\prime}\right)-\dfrac{1}{\left(2B^2 + \frac{2d(1+R)^2}{(1-T^2)^2}\right) n} \sum_{i=1}^n h^2\left(\boldsymbol{x}_{t,i}^{\prime}, t_i^{\prime}\right)\right]\right. \\
&\hspace{1.5cm}    \left.-\dfrac{2}{n} \sum_{i=1}^n h\left(\boldsymbol{x}_{t,i}, t_i\right)-\dfrac{1}{\left(2B^2 + \frac{2d(1+R)^2}{(1-T^2)^2}\right) n} \sum_{i=1}^n h^2\left(\boldsymbol{x}_{t,i}, t_i\right)\right] \\
\leq & \mathbb{E}_{\mathcal{X}, \mathcal{X}^{\prime}}\left[\sup _{h \in \mathcal{H}} \frac{2}{n} \sum_{i=1}^n h\left(\boldsymbol{x}_{t,i}^{\prime}, t_i^{\prime}\right)-\frac{2}{n} \sum_{i=1}^n h\left(\boldsymbol{x}_{t,i}, t_i\right)-\frac{1}{\left(2B^2 + \frac{2d(1+R)^2}{(1-T^2)^2}\right) n} \sum_{i=1}^n h^2\left(\boldsymbol{x}_{t,i}^{\prime}, t_i^{\prime}\right)\right. \\
&\hspace{1.5cm}     \left.-\frac{1}{\left(2B^2 + \frac{2d(1+R)^2}{(1-T^2)^2}\right) n} \sum_{i=1}^n h^2\left(\boldsymbol{x}_{t,i}, t_i\right)\right] \\
= & \mathbb{E}_{\mathcal{X}, \mathcal{X}^{\prime}, \tau} \left[ \sup_{h \in \mathcal{H}} \frac{2}{n} \sum_{i=1}^n \tau_i\left(h\left(\boldsymbol{x}_{t,i}^{\prime}, t_i^{\prime}\right)-h\left(\boldsymbol{x}_{t,i}, t_i\right)\right) \right. \\
&\hspace{1.5cm} \left. -\frac{1}{\left(2B^2 + \frac{2d(1+R)^2}{(1-T^2)^2}\right) n} \sum_{i=1}^n\left(h^2\left(\boldsymbol{x}_{t,i}^{\prime}, t_i^{\prime}\right)+h^2\left(\boldsymbol{x}_{t,i}, t_i\right)\right)\right] \\
\leq & \mathbb{E}_{\mathcal{X}, \mathcal{X}^{\prime}, \tau}\left[\sup _{h \in \mathcal{H}}\left(\frac{2}{n} \sum_{i=1}^n \tau_i h\left(\boldsymbol{x}_{t,i}^{\prime}, t_i^{\prime}\right)-\frac{1}{\left(2B^2 + \frac{2d(1+R)^2}{(1-T^2)^2}\right) n} \sum_{i=1}^n h^2\left(\boldsymbol{x}_{t,i}^{\prime}, t_i^{\prime}\right)\right)\right. \\
&\hspace{1.5cm}      \left.+\sup _{h \in \mathcal{H}}\left(\frac{2}{n} \sum_{i=1}^n\left(-\tau_i\right) h\left(\boldsymbol{x}_{t,i}, t_i\right)-\frac{1}{\left(2B^2 + \frac{2d(1+R)^2}{(1-T^2)^2}\right) n} \sum_{i=1}^n h^2\left(\boldsymbol{x}_{t,i}, t_i\right)\right)\right] \\
= & \mathbb{E}_{\mathcal{X}, \tau}\left[\sup _{h \in \mathcal{H}} \frac{4}{n} \sum_{i=1}^n \tau_i h\left(\boldsymbol{x}_{t,i}, t_i\right)-\frac{1}{\left(B^2 + \frac{d(1+R)^2}{(1-T^2)^2}\right) n} \sum_{i=1}^n h^2\left(\boldsymbol{x}_{t,i}, t_i\right)\right],
\end{align*}
where the second equality is due to the fact that randomly interchange of the corresponding components of $\mathcal{X}$ and $\mathcal{X}^{\prime}$ doesn't affect the joint distribution of $\mathcal{X}, \mathcal{X}^{\prime}$ and the summation $\sum_{i=1}^n(h^2 ( \boldsymbol{x}_{t,i}^{\prime}, t_i^{\prime}) + h^2\left( \boldsymbol{x}_{t,i}, t_i\right))$, and the last equality is because $\left( \boldsymbol{x}_{t,i}, t_i\right)$ and $(\boldsymbol{x}_{t,i}^{\prime}, t_i^{\prime})$ have the same distribution and the $\tau_i$ and $-\tau_i$ have the same distribution.

For any fixed $\mathcal{X}$, we discretize $\mathcal{H}$ with respect to the metric 
\begin{align*}
d_{\mathcal{X}, 1}\left(h, h^{\prime}\right) := \frac{1}{n} \sum_{i=1}^n\left|h\left(\boldsymbol{x}_{t,i}, t_i\right)-h^{\prime}\left(\boldsymbol{x}_{t,i}, t_i\right)\right|.
\end{align*}
Let $\mathcal{H}_\delta(\mathcal{X})$ be a $\delta$-cover of $\mathcal{H}$ with minimal cardinality under the distance $d_{\mathcal{X}, 1}$, then for any $h \in \mathcal{H}$, there exists $g \in \mathcal{H}_\delta(\mathcal{X})$ such that $\frac{1}{n} \sum_{i=1}^n |h\left(\boldsymbol{x}_{t,i}, t_i\right)-g\left(\boldsymbol{x}_{t,i}, t_i\right)| \leq \delta$. Therefore,
\begin{align*}
\begin{aligned}
\frac{1}{n} \sum_{i=1}^n \tau_i h\left(\boldsymbol{x}_{t,i}, t_i\right) & \leq \frac{1}{n} \sum_{i=1}^n \tau_i g\left(\boldsymbol{x}_{t,i}, t_i\right)+\frac{1}{n} \sum_{i=1}^n\left|\tau_i\right| \left|h\left(\boldsymbol{x}_{t,i}, t_i\right)-g\left(\boldsymbol{x}_{t,i}, t_i\right)\right| \\
& \leq \frac{1}{n} \sum_{i=1}^n \tau_i g\left(\boldsymbol{x}_{t,i}, t_i\right)+\delta
\end{aligned}
\end{align*}
and since $|h\left(\boldsymbol{x}_{t,i}, t_i\right)|,|g\left(\boldsymbol{x}_{t,i}, t_i\right)| \leq 2B^2 + \frac{2d(1+R)^2}{(1-T^2)^2}$,
\begin{align*}
\begin{aligned}
\frac{1}{n} \sum_{i=1}^n h^2\left(\boldsymbol{x}_{t,i}, t_i\right) & =\frac{1}{n} \sum_{i=1}^n g^2\left(\boldsymbol{x}_{t,i}, t_i\right)+\frac{1}{n} \sum_{i=1}^n\left(h\left(\boldsymbol{x}_{t,i}, t_i\right)+g\left(\boldsymbol{x}_{t,i}, t_i\right)\right)\left(h\left(\boldsymbol{x}_{t,i}, t_i\right)-g\left(\boldsymbol{x}_{t,i}, t_i\right)\right) \\
& \geq \frac{1}{n} \sum_{i=1}^n g^2\left(\boldsymbol{x}_{t,i}, t_i\right) - 2\left(2B^2 + \frac{2d(1+R)^2}{(1-T^2)^2}\right) \frac{1}{n} \sum_{i=1}^n\left|h\left(\boldsymbol{x}_{t,i}, t_i\right)-g\left(\boldsymbol{x}_{t,i}, t_i\right)\right| \\
& \geq \frac{1}{n} \sum_{i=1}^n g^2\left(\boldsymbol{x}_{t,i}, t_i\right)-2\left(2B^2 + \frac{2d(1+R)^2}{(1-T^2)^2}\right)  \delta.
\end{aligned}
\end{align*}
It follows that 
\begin{align*}
\E_{\mathcal{X}} \left[\Rmnum{1}\right] &\leq \mathbb{E}_{\mathcal{X}, \tau}\left[\sup _{h \in \mathcal{H}} \frac{4}{n} \sum_{i=1}^n \tau_i h\left(\boldsymbol{x}_{t,i}, t_i\right)-\frac{1}{\left(B^2 + \frac{d(1+R)^2}{(1-T^2)^2}\right) n} \sum_{i=1}^n h^2\left(\boldsymbol{x}_{t,i}, t_i\right)\right] \\
&\leq 4\mathbb{E}_{\mathcal{X}, \tau}\left[\sup _{g \in \mathcal{H}_\delta(\mathcal{X})} \frac{1}{n} \sum_{i=1}^n \tau_i g\left(\boldsymbol{x}_{t,i}, t_i\right)-\frac{1}{\left(4B^2 + \frac{4d(1+R)^2}{(1-T^2)^2}\right) n} \sum_{i=1}^n g^2\left(\boldsymbol{x}_{t,i}, t_i\right)\right]+8 \delta.
\end{align*}
For fixed $\mathcal{X}$ and any $\lambda>0$, we have
\begin{align*}
\begin{aligned}
& \exp \left(\lambda \mathbb{E}_\tau\left[\sup _{g \in \mathcal{H}_\delta(\mathcal{X})} \frac{1}{n} \sum_{i=1}^n \tau_i g\left(\boldsymbol{x}_{t,i}, t_i\right)-\frac{1}{\left(4B^2 + \frac{4d(1+R)^2}{(1-T^2)^2}\right) n} \sum_{i=1}^n g^2\left(\boldsymbol{x}_{t,i}, t_i\right)\right]\right) \\
\leq & \mathbb{E}_\tau\left[\exp \left(\lambda \sup _{g \in \mathcal{H}_\delta(\mathcal{X})} \frac{1}{n} \sum_{i=1}^n \tau_i g\left(\boldsymbol{x}_{t,i}, t_i\right)-\frac{1}{\left(4B^2 + \frac{4d(1+R)^2}{(1-T^2)^2}\right) n} \sum_{i=1}^n g^2\left(\boldsymbol{x}_{t,i}, t_i\right)\right)\right] \\
\leq & \sum_{g \in \mathcal{H}_\delta(\mathcal{X})} \mathbb{E}_\tau\left[\exp \left(\frac{\lambda}{n} \sum_{i=1}^n \tau_i g\left(\boldsymbol{x}_{t,i}, t_i\right)-\frac{\lambda}{\left(4B^2 + \frac{4d(1+R)^2}{(1-T^2)^2}\right) n} \sum_{i=1}^n g^2\left(\boldsymbol{x}_{t,i}, t_i\right)\right)\right] \\
= & \sum_{g \in \mathcal{H}_\delta(\mathcal{X})} \prod_{i=1}^n \mathbb{E}_{\tau_i}\left[\exp \left(\frac{\lambda}{n} \tau_i g\left(\boldsymbol{x}_{t,i}, t_i\right)-\frac{\lambda}{\left(4B^2 + \frac{4d(1+R)^2}{(1-T^2)^2}\right) n} g^2\left(\boldsymbol{x}_{t,i}, t_i\right)\right)\right] \\
\leq & \sum_{g \in \mathcal{H}_\delta(\mathcal{X})} \prod_{i=1}^n \exp \left(\frac{\lambda^2}{2 n^2} g^2\left(\boldsymbol{x}_{t,i}, t_i\right)-\frac{\lambda}{\left(4B^2 + \frac{4d(1+R)^2}{(1-T^2)^2}\right) n} g^2\left(\boldsymbol{x}_{t,i}, t_i\right)\right),
\end{aligned}
\end{align*}
where we use Lemma \ref{lemma:gen 1} in the last inequality. If we take $\lambda=n/(2B^2 + \frac{2d(1+R)^2}{(1-T^2)^2})$, then
\begin{align*}
&\mathbb{E}_\tau\left[\sup _{g \in \mathcal{H}_\delta(\mathcal{X})} \frac{1}{n} \sum_{i=1}^n \tau_i g\left(\boldsymbol{x}_{t,i}, t_i\right)-\frac{1}{\left(4B^2 + \frac{4d(1+R)^2}{(1-T^2)^2}\right) n} \sum_{i=1}^n g^2\left(\boldsymbol{x}_{t,i}, t_i\right)\right] \\
\leq & \frac{1}{\lambda} \log \left|\mathcal{H}_\delta(\mathcal{X})\right|=\frac{2}{n} \left(B^2 + \frac{d(1+R)^2}{(1-T^2)^2}\right)\log \mathcal{N}\left(\delta, \mathcal{H}, d_{\mathcal{X}, 1}\right).
\end{align*}
As a consequence,
\begin{align}
\label{eq:gen 14}
\E_{\mathcal{X}} \left[\Rmnum{1}\right] \leq \frac{8}{n}\left(B^2 + \frac{d(1+R)^2}{(1-T^2)^2}\right) \mathbb{E}_\mathcal{X} \log \mathcal{N}\left(\delta, \mathcal{H}, d_{\mathcal{X}, 1}\right)+8 \delta.
\end{align}

$\bullet$ \textbf{Bound of $\E_{\mathcal{X}} \left[\Rmnum{2}\right]$.} 
For any $\boldsymbol{v}\in \mathcal{T}$, we have 
\begin{align*}
&\E_{\mathcal{X}} \left[ \widehat{\mathcal{L}}(\boldsymbol{v}) - \widehat{\mathcal{L}}(\boldsymbol{v}^*) \right] \\
=& \E_{\mathcal{X}} \left[\frac{1}{n} \sum_{i=1}^n \left\| \boldsymbol{v}\left(\boldsymbol{x}_{t,i}, t_i\right) - \boldsymbol{v}^*\left(\boldsymbol{x}_{t,i}, t_i\right) \right\|^2 \right. \\
&\hspace{1cm}  + \frac{2}{n} \sum_{i=1}^n \left\langle \boldsymbol{x}_{1,i} - \frac{t_i}{\sqrt{1-t_i^2}}\boldsymbol{x}_{0,i} -  \boldsymbol{v}^*\left(\boldsymbol{x}_{t,i}, t_i\right), \boldsymbol{v}^*\left(\boldsymbol{x}_{t,i}, t_i\right) - \boldsymbol{v}\left(\boldsymbol{x}_{t,i}, t_i\right) \right\rangle \Bigg] \\
=& \E_{\mathcal{X}} \left[\frac{1}{n} \sum_{i=1}^n \left\| \boldsymbol{v}\left(\boldsymbol{x}_{t,i}, t_i\right) - \boldsymbol{v}^*\left(\boldsymbol{x}_{t,i}, t_i\right) \right\|^2 \right],
\end{align*}
where in the last equality we use the identity from Lemma \ref{lemma:gen 2}. Recall that $\widehat{\boldsymbol{v}} \in \arg \min_{\boldsymbol{v} \in \mathcal{T}} \widehat{\mathcal{L}}(\boldsymbol{v})$, we have
\begin{align*}
& \E_{\mathcal{X}} \left[\frac{1}{n} \sum_{i=1}^n \left(\left\| \widehat{\boldsymbol{v}}\left(\boldsymbol{x}_{t,i}, t_i\right) - \boldsymbol{v}^*\left(\boldsymbol{x}_{t,i}, t_i\right) \right\|^2 - 
\left\| \boldsymbol{v}\left(\boldsymbol{x}_{t,i}, t_i\right) - \boldsymbol{v}^*\left(\boldsymbol{x}_{t,i}, t_i\right) \right\|^2 \right) \right] \\ 
&= \E_{\mathcal{X}} \left[ \widehat{\mathcal{L}}(\widehat{\boldsymbol{v}}) - \widehat{\mathcal{L}}(\boldsymbol{v}) \right] \leq 0,
\end{align*}
and subsequently
\begin{align}
\label{eq:gen 2}
\begin{aligned}
& \E_{\mathcal{X}} \left[\Rmnum{2}\right] \\
&= \E_{\mathcal{X}} \left[ \frac{1}{n}\sum_{i=1}^n \left\|\widehat{\boldsymbol{v}}(\boldsymbol{x}_{t,i}, t_i)-\boldsymbol{v}^*(\boldsymbol{x}_{t,i}, t_i)\right\|^2 \mathbbm{1}\{\|\boldsymbol{x}_{t,i}\|_{\infty} \leq R\} \right] \\
&= \E_{\mathcal{X}} \left[ \frac{1}{n}\sum_{i=1}^n \left\|\boldsymbol{v}(\boldsymbol{x}_{t,i}, t_i)-\boldsymbol{v}^*(\boldsymbol{x}_{t,i}, t_i)\right\|^2 \mathbbm{1}\{\|\boldsymbol{x}_{t,i}\|_{\infty} \leq R\} \right] \\
&~~~ + \E_{\mathcal{X}} \left[\frac{1}{n} \sum_{i=1}^n \left(\left\| \widehat{\boldsymbol{v}}\left(\boldsymbol{x}_{t,i}, t_i\right) - \boldsymbol{v}^*\left(\boldsymbol{x}_{t,i}, t_i\right) \right\|^2 - 
\left\| \boldsymbol{v}\left(\boldsymbol{x}_{t,i}, t_i\right) - \boldsymbol{v}^*\left(\boldsymbol{x}_{t,i}, t_i\right) \right\|^2 \right)\mathbbm{1}\{\|\boldsymbol{x}_{t,i}\|_{\infty} \leq R\} \right] \\
&\leq \E_{t,X_t} \left[ \left\|\boldsymbol{v}(X_t, t)-\boldsymbol{v}^*(X_t, t)\right\|^2 \mathbbm{1}\{\|X_t\|_{\infty} \leq R\} \right] \\
&~~~ + \E_{\mathcal{X}} \left[\frac{1}{n} \sum_{i=1}^n \left(\left\| \boldsymbol{v}\left(\boldsymbol{x}_{t,i}, t_i\right) - \boldsymbol{v}^*\left(\boldsymbol{x}_{t,i}, t_i\right) \right\|^2 - \left\| \widehat{\boldsymbol{v}}\left(\boldsymbol{x}_{t,i}, t_i\right) - \boldsymbol{v}^*\left(\boldsymbol{x}_{t,i}, t_i\right) \right\|^2\right)\mathbbm{1}\{\|\boldsymbol{x}_{t,i}\|_{\infty} > R\} \right].
\end{aligned}
\end{align}
On the one hand, for any $\boldsymbol{v}\in \mathcal{T}$, we have 
\begin{align}
\label{eq:gen 3}
\begin{aligned}
&\left\| \boldsymbol{v}\left(\boldsymbol{x}_{t,i}, t_i\right) - \boldsymbol{v}^*\left(\boldsymbol{x}_{t,i}, t_i\right) \right\|^2 \\
&\leq 2\left\| \boldsymbol{v}\left(\boldsymbol{x}_{t,i}, t_i\right)\right\|^2 + 2\left\|\boldsymbol{v}^*\left(\boldsymbol{x}_{t,i}, t_i\right) \right\|^2 \\
&\leq 2B^2 + \frac{4d}{(1-t_i^2)^2} + \frac{4t_i^2}{(1-t_i^2)^2}\left\| \boldsymbol{x}_{t,i} \right\|^2 \\
&= 2B^2 + \frac{4d}{(1-t_i^2)^2} + \frac{4t_i^2}{(1-t_i^2)^2}\left\| t_i \boldsymbol{x}_{1,i} + \sqrt{1-t_i^2} \boldsymbol{x}_{0,i} \right\|^2 \\
&\leq 2B^2 + \frac{4d}{(1-t_i^2)^2} + \frac{4t_i^2}{(1-t_i^2)^2}\left( 2t_i^2 \left\| \boldsymbol{x}_{1,i} \right\|^2 + 2(1-t_i^2) \left\| \boldsymbol{x}_{0,i} \right\|^2 \right) \\
&\leq 2B^2 + \frac{4d}{(1-T^2)^2} + \frac{4}{(1-T^2)^2}\left( 2d + 2\left\| \boldsymbol{x}_{0,i} \right\|^2 \right) \\
&= 2B^2 + \frac{12d}{(1-T^2)^2} + \frac{8}{(1-T^2)^2}\left\| \boldsymbol{x}_{0,i} \right\|^2.
\end{aligned}
\end{align}
On the other hand, since $\left\| \boldsymbol{x}_{t,i} \right\|_{\infty} \leq t_i \left\| \boldsymbol{x}_{1,i} \right\|_{\infty} + \sqrt{1-t_i^2} \left\| \boldsymbol{x}_{0,i} \right\|_{\infty} \leq 1 + \left\| \boldsymbol{x}_{0,i} \right\|_{\infty}$, we have
\begin{align}
\label{eq:gen 4}
\begin{aligned}
\{ \left\| \boldsymbol{x}_{t,i} \right\|_{\infty} > R \} \subseteq \{ \left\| \boldsymbol{x}_{0,i} \right\|_{\infty} > R-1 \}.
\end{aligned}
\end{align}
Denote the $k$-coordinate of $\boldsymbol{x}_{0,i}$ by $x_{0,i}^{(k)}$, we have the following upper bound for the tail probability:
\begin{align}
\label{eq:gen 5}
\begin{aligned}
\mathbb{P}\left(\left\|\boldsymbol{x}_{0,i}\right\|_{\infty}>R-1\right) & =\mathbb{P}\left(\max _{k=1, \ldots, d}\left|x_{0,i}^{(k)}\right|>R-1\right) \\
& =\mathbb{P}\left(\bigcup_{k=1}^d\left\{\left|x_{0,i}^{(k)}\right|>R-1\right\}\right) \\
& \leq \sum_{k=1}^d \mathbb{P}\left(\left|x_{0,i}^{(k)}\right|>R-1\right) \\
& \leq 2 d \exp \left(-\frac{(R-1)^2}{2}\right).
\end{aligned}
\end{align}
Combining (\ref{eq:gen 3}), (\ref{eq:gen 4}), (\ref{eq:gen 5}) and the Cauchy-Schwartz inequality, we obtain
\begin{align*}
&\E_{\mathcal{X}} \left[\frac{1}{n} \sum_{i=1}^n \left(\left\| \boldsymbol{v}\left(\boldsymbol{x}_{t,i}, t_i\right) - \boldsymbol{v}^*\left(\boldsymbol{x}_{t,i}, t_i\right) \right\|^2 - \left\| \widehat{\boldsymbol{v}}\left(\boldsymbol{x}_{t,i}, t_i\right) - \boldsymbol{v}^*\left(\boldsymbol{x}_{t,i}, t_i\right) \right\|^2\right)\mathbbm{1}\{\|\boldsymbol{x}_{t,i}\|_{\infty} > R\} \right] \\
\leq & \E_{\mathcal{X}} \left[\frac{1}{n} \sum_{i=1}^n \left( 4B^2 + \frac{24d}{(1-T^2)^2} + \frac{16}{(1-T^2)^2}\left\| \boldsymbol{x}_{0,i} \right\|^2\right) \mathbbm{1}\{\left\| \boldsymbol{x}_{0,i} \right\|_{\infty} > R-1\} \right] \\
= & \frac{1}{n} \sum_{i=1}^n \left( 4B^2 + \frac{24d}{(1-T^2)^2} \right)\E_{\boldsymbol{x}_{0,i}} \left[  \mathbbm{1}\{\left\| \boldsymbol{x}_{0,i} \right\|_{\infty} > R-1\} \right] \\
& + \frac{1}{n} \sum_{i=1}^n \frac{16}{(1-T^2)^2} \E_{\boldsymbol{x}_{0,i}} \left[ \left\| \boldsymbol{x}_{0,i} \right\|^2 \mathbbm{1}\{\left\| \boldsymbol{x}_{0,i} \right\|_{\infty} > R-1\} \right] \\
\leq & \frac{1}{n} \sum_{i=1}^n \left( 4B^2 + \frac{24d}{(1-T^2)^2} \right) \mathbb{P} \left( \left\| \boldsymbol{x}_{0,i} \right\|_{\infty} > R-1 \right) \\
& + \frac{1}{n} \sum_{i=1}^n \frac{16}{(1-T^2)^2} \E_{\boldsymbol{x}_{0,i}} \left[ \left\| \boldsymbol{x}_{0,i} \right\|^4 \right]^{1/2} \mathbb{P} \left( \left\| \boldsymbol{x}_{0,i} \right\|_{\infty} > R-1 \right)^{1/2} \\
\leq & \left( 8dB^2 + \frac{48d^2}{(1-T^2)^2} \right)  \exp \left(-\frac{\left(R-1\right)^2}{2}\right) + \frac{48d^2}{(1-T^2)^2}  \exp\left(-\frac{\left(R-1\right)^2}{4}\right) \\
\leq & \left( 8dB^2 + \frac{96d^2}{(1-T^2)^2} \right)  \exp \left(-\frac{\left(R-1\right)^2}{4}\right).
\end{align*}
Substituting into (\ref{eq:gen 2}) and taking the infimum over all $\boldsymbol{v} \in \mathcal{T}$, we have
\begin{align}
\label{eq:gen 8}
\begin{aligned}
\E_{\mathcal{X}} \left[\Rmnum{2}\right] \leq & \inf_{\boldsymbol{v} \in \mathcal{T}} \E_{t,X_t} \left[ \left\|\boldsymbol{v}(X_t, t)-\boldsymbol{v}^*(X_t, t)\right\|^2 \mathbbm{1}\{\|X_t\|_{\infty} \leq R\} \right] \\
& + \left( 8dB^2 + \frac{96d^2}{(1-T^2)^2} \right)  \exp \left(-\frac{\left(R-1\right)^2}{4}\right).
\end{aligned}
\end{align}

$\bullet$ \textbf{Bound of $\Rmnum{3}$.} Similarly, we have
\begin{align}
\label{eq:gen 9}
\begin{aligned}
\Rmnum{3} &= \mathbb{E}_{t, X_t} \left[ \left\|\widehat{\boldsymbol{v}}(X_t, t)-\boldsymbol{v}^*(X_t, t)\right\|^2 \mathbbm{1}\{\|X_t\|_{\infty} > R\} \right] \\
&\leq \mathbb{E}_{X_0} \left[ \left(2B^2 + \frac{12d}{(1-T^2)^2}  + \frac{8}{(1-T^2)^2}\left\| X_0 \right\|^2\right) \mathbbm{1}\{\|X_0\|_{\infty} > R-1\} \right] \\
&\leq \left(2B^2 + \frac{12d}{(1-T^2)^2} \right) \mathbb{P}\left( \|X_0\|_{\infty} > R-1 \right) \\
&~~~ + \frac{8}{(1-T^2)^2} \mathbb{E}_{X_0}[\left\| X_0 \right\|^4]^{1/2} \mathbb{P}\left( \|X_0\|_{\infty} > R-1 \right)^{1/2} \\
&\leq \left(4dB^2 + \frac{24d^2}{(1-T^2)^2} \right) \exp \left(-\frac{(R-1)^2}{2}\right) + \frac{24d^2}{(1-T^2)^2}  \exp \left(-\frac{(R-1)^2}{4}\right) \\
&\leq \left(4dB^2 + \frac{48d^2}{(1-T^2)^2} \right) \exp \left(-\frac{(R-1)^2}{4}\right).
\end{aligned}
\end{align}

Combining Corollary \ref{coro: app true vd}, (\ref{eq:gen 14}), (\ref{eq:gen 8}), (\ref{eq:gen 9}) and Lemma \ref{lemma:gen 6}, we get
\begin{align*}
&\E_{\mathcal{X}} \left[\mathcal{L}(\widehat{\boldsymbol{v}}) - \mathcal{L}(\boldsymbol{v}^*)\right] \\
&= \E_{\mathcal{X}} \left[\Rmnum{1}\right] + 3\E_{\mathcal{X}} \left[\Rmnum{2}\right] + \Rmnum{3} \\
&\leq \frac{8}{n}\left(B^2 + \frac{d(1+R)^2}{(1-T^2)^2}\right) \mathbb{E}_\mathcal{X} \log \mathcal{N}\left(\delta, \mathcal{H}, d_{\mathcal{X}, 1}\right)+8 \delta \\
&~~~ + 3\inf_{\boldsymbol{v} \in \mathcal{T}} \E_{t,X_t} \left[ \left\|\boldsymbol{v}(X_t, t)-\boldsymbol{v}^*(X_t, t)\right\|^2 \mathbbm{1}\{\|X_t\|_{\infty} \leq R\} \right] \\
&~~~ + \left( 28dB^2 + \frac{336d^2}{(1-T^2)^2} \right)  \exp \left(-\frac{\left(R-1\right)^2}{4}\right) \\
&= \mathcal{O} \Bigg( \frac{1}{n}\left(B^2 + \frac{(1+R)^2}{(1-T)^2}\right) N^2 J \log \left( \max \left\{N, h, d_k, d_v, d_{f f}\right\} \right) \log \frac{ n(B^2 + \frac{(1+R) B}{1-T})}{\delta} \\
&\hspace{1cm}  + \delta + \varepsilon^2 + \left(B^2 + \frac{1}{(1-T)^2}\right) \exp \left(-\frac{\left(R-1\right)^2}{4}\right) \Bigg).
\end{align*}

$\bullet$ \textbf{Balancing error terms.} Based on our choice of $\mathcal{T}$ in Corollary \ref{coro: app true vd}, setting $\varepsilon = n^{-\frac{1}{d+3}}, \delta = \varepsilon^2$ and $R = \sqrt{\frac{8}{d+3} \log n}+1$ gives rise to
\begin{align*}
\E_{\mathcal{X}} \left[\mathcal{L}(\widehat{\boldsymbol{v}}) - \mathcal{L}(\boldsymbol{v}^*)\right] = \widetilde{\mathcal{O}}\left( \frac{1}{(1-T)^{3d+5}} n^{-\frac{2}{d+3}} \right),
\end{align*}
where we omit factors in $d, \log n, \log (1-T)$.
\end{proof}

\subsection{Covering number evaluation}

\begin{lemma}
\label{lemma:gen 6}
$\sup_{\mathcal{X}} \log \mathcal{N}\left(\delta, \mathcal{H}, d_{\mathcal{X}, 1}\right) = \mathcal{O}(N^2 J \log \left( \max \left\{N, h, d_k, d_v, d_{f f}\right\} \right) \log \frac{ n(B^2 + \frac{(1+R) B}{1-T})}{\delta})$, where $d_{\mathcal{X}, 1} (h, h^\prime) = \frac{1}{n} \sum_{i=1}^n |h(\boldsymbol{x}_{t,i}, t_i)-h^\prime (\boldsymbol{x}_{t,i}, t_i)|$.
\end{lemma}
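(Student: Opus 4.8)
The plan is to pass from the squared-loss class $\mathcal{H}$ to the transformer class $\mathcal{T}$ itself, control the complexity of $\mathcal{T}$ via a pseudo-dimension bound, and then convert this back into a covering-number estimate in the empirical $L^1$ metric. First I would carry out the reduction. For $\boldsymbol{v}_1, \boldsymbol{v}_2 \in \mathcal{T}$ with associated $h_1, h_2 \in \mathcal{H}$ and any point $(\boldsymbol{x}_t, t)$ with $\|\boldsymbol{x}_t\|_\infty \le R$, the map $\boldsymbol{w} \mapsto \|\boldsymbol{w} - \boldsymbol{v}^*(\boldsymbol{x}_t,t)\|^2$ is Lipschitz on the relevant region with constant at most $2\big(\sup_{\boldsymbol{v}\in\mathcal{T}}\|\boldsymbol{v}\|_\infty + \sup_{\|\boldsymbol{x}_t\|_\infty\le R}\|\boldsymbol{v}^*(\boldsymbol{x}_t,t)\|\big) \le c\big(B + \tfrac{1+R}{1-T}\big)$, where the bound on $\boldsymbol{v}^*$ comes from Lemma \ref{lemma: true 3}; when $\|\boldsymbol{x}_t\|_\infty > R$ both $h_1$ and $h_2$ vanish, so the truncation indicator causes no additional difficulty. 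Consequently $|h_1(\boldsymbol{x}_t,t) - h_2(\boldsymbol{x}_t,t)| \le c\big(B + \tfrac{1+R}{1-T}\big)\,\|\boldsymbol{v}_1(\boldsymbol{x}_t,t) - \boldsymbol{v}_2(\boldsymbol{x}_t,t)\|$, so a cover of $\mathcal{T}$ at scale $\delta/\big(c(B+\tfrac{1+R}{1-T})\big)$ in the metric $\frac1n\sum_i\|\boldsymbol{v}_1(\boldsymbol{x}_{t,i},t_i)-\boldsymbol{v}_2(\boldsymbol{x}_{t,i},t_i)\|$ induces a $\delta$-cover of $\mathcal{H}$ in $d_{\mathcal{X},1}$, and it suffices to bound the former uniformly over $\mathcal{X}$.

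Next I would bound the pseudo-dimension of the real-valued function class obtained by taking coordinates of $\mathcal{T}$, restricted to inputs with $\|\boldsymbol{x}_t\|_\infty \le R$, following the strategy of \citet{gurevych2022rate}. The key structural fact is that $\boldsymbol{x}\odot\sigma_H(\boldsymbol{x})$ simply extracts the largest coordinate of $\boldsymbol{x}$, so the transformer output is, jointly in the input and in its at most $J$ free weights, a \emph{piecewise-polynomial} function: fixing the argmax selection at each of the $h$ heads of all $N$ attention layers partitions the joint space into regions on which the network is a polynomial whose degree grows linearly in $N$, since each attention block multiplies the running representation by affine functions of itself. Bounding the number of these regions through the number of sign patterns of the polynomials that decide the argmaxes, and then applying the standard pseudo-dimension bound for piecewise-polynomial function classes, gives
\[
\operatorname{Pdim} = \mathcal{O}\!\left( N^2 J \,\log\!\big(\max\{N,h,d_k,d_v,d_{ff}\}\big) \right),
\]
the first factor of $N$ arising from the depth and the extra factor from the degree growth of the polynomial pieces.

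Finally I would convert the pseudo-dimension bound into the stated covering-number bound using the classical Haussler-type inequality $\log\mathcal{N}(\epsilon, \mathcal{F}, L^1(\mu_n)) \le c\,\operatorname{Pdim}(\mathcal{F})\,\log(\operatorname{range}/\epsilon)$, applied coordinate-wise to the $d$-dimensional output of $\mathcal{T}$ (whose range is $\mathcal{O}(B)$) with $\epsilon = \delta/\big(c(B+\tfrac{1+R}{1-T})\big)$; after absorbing constants, the dimension, and the sample size into the logarithm this yields the factor $\log\frac{n(B^2 + (1+R)B/(1-T))}{\delta}$, and the supremum over $\mathcal{X}$ is immediate because every estimate above is uniform in the sample points once $\|\boldsymbol{x}_{t,i}\|_\infty$ is truncated at $R$. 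The main obstacle is the pseudo-dimension step: carefully counting the polynomial regions of the transformer as a function of \emph{both} its inputs and its parameters, in the presence of the hardmax nonlinearity and the nested multiplicative interactions of the $N$ attention layers — this is the technical heart and must be adapted from \citet{gurevych2022rate} to our Lipschitz-constrained class and to the rescaled input domain of radius $R$.
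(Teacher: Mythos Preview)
Your proposal is correct and follows essentially the same route as the paper: reduce from $\mathcal{H}$ to the coordinate functions of $\mathcal{T}$ via the Lipschitz property of the squared loss (the paper does this through the classes $\mathcal{T}^{(i)}$ and the same constant $\mathcal{O}(B+\tfrac{1+R}{1-T})$), then invoke the piecewise-polynomial pseudo-dimension analysis of \citet{gurevych2022rate}/\citet{bartlett2019nearly} (packaged in the paper as Lemma~\ref{lemma:gen 5}), and finally convert to a covering bound via \citet[Theorem~12.2]{anthony1999neural}. One small correction: the polynomial degree in the parameters grows \emph{geometrically}, not linearly, in $N$ (the paper obtains degree $\le 9^{N+2}$), because each attention block multiplies the current representation against affine functions of itself; it is $\log(\text{degree})$ that is linear in $N$, and that is precisely where the second factor of $N$ in the $N^2 J$ bound arises.
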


\begin{proof}
Denote the $i$-coordinate of $\boldsymbol{v}$ by $v^{(i)}$ and define $\mathcal{T}^{(i)} := \{ v^{(i)}: \boldsymbol{v} \in \mathcal{T} \}$. Let $\delta>0$. Denote by $\{v^{(i)}_k\}_{k=1}^{\mathcal{N}^{(i)}}$ the centers of a minimal $\delta / (2d(d+1)B+ \frac{4d^{3/2}(1+R)}{1-T^2}  )$-covering of $\mathcal{T}^{(i)}$ with respect to the metric
\begin{align*}
d_{\mathcal{X}, 1} \left(v^{(i)}, v^{(i)\prime}\right)=\frac{1}{n} \sum_{i=1}^n\left|v^{(i)}\left(\boldsymbol{x}_{t, i}, t_i\right)-v^{(i)\prime}\left(\boldsymbol{x}_{t, i}, t_i\right)\right|.
\end{align*}
Triangle inequality gives that for each $v^{(i)}_k$ there exists a $\widetilde{v}^{(i)}_k \in \mathcal{T}^{(i)}$ such that $\{\widetilde{v}^{(i)}_k\}_{k=1}^{\mathcal{N}^{(i)}}$ is an \textit{interior} $\delta / (d(d+1)B+ \frac{2d^{3/2}(1+R)}{1-T^2})$-cover of $\mathcal{T}^{(i)}$. For any $h \in \mathcal{H}$ with $h(\boldsymbol{x}_t, t) = \| \boldsymbol{v}(\boldsymbol{x}_t, t)-\boldsymbol{v}^*(\boldsymbol{x}_t, t) \|^2 \mathbbm{1}\{\|\boldsymbol{x}_t\|_{\infty} \leq R\}$, where $\boldsymbol{v} \in \mathcal{T}$, by the cover property of each component, there exists a $k_i \in \{1, \ldots, \mathcal{N}^{(i)} \}$ such that $d_{\mathcal{X}, 1}( v^{(i)}, \widetilde{v}^{(i)}_{k_i} ) \leq \delta / (d(d+1)B+ \frac{2d^{3/2}(1+R)}{1-T^2})$. We denote $h_k$ as determined by $\boldsymbol{v}_k := ( \widetilde{v}^{(1)}_{k_1}, \ldots, \widetilde{v}^{(d)}_{k_d} )$. Then we have
\begin{align*}
\begin{aligned}
& d_{\mathcal{X}, 1}\left(h, h_k\right) \\
& =\frac{1}{n} \sum_{i=1}^n\left|h\left(\boldsymbol{x}_{t,i}, t_i\right)-h_k\left(\boldsymbol{x}_{t,i}, t_i\right)\right| \\
& =\frac{1}{n} \sum_{i=1}^n\left|\left\langle \boldsymbol{v}\left(\boldsymbol{x}_{t,i}, t_i\right) + \boldsymbol{v}_k\left(\boldsymbol{x}_{t,i}, t_i\right) - 2\boldsymbol{v}^*\left(\boldsymbol{x}_{t,i}, t_i\right), \boldsymbol{v}\left(\boldsymbol{x}_{t,i}, t_i\right) - \boldsymbol{v}_k\left(\boldsymbol{x}_{t,i}, t_i\right) \right\rangle\right| \mathbbm{1}\{\|\boldsymbol{x}_{t,i}\|_{\infty} \leq R\} \\
&\leq \frac{1}{n} \sum_{i=1}^n\left\| \boldsymbol{v}\left(\boldsymbol{x}_{t,i}, t_i\right) + \boldsymbol{v}_k\left(\boldsymbol{x}_{t,i}, t_i\right) - 2\boldsymbol{v}^*\left(\boldsymbol{x}_{t,i}, t_i\right)\right\| \cdot \left\| \boldsymbol{v}\left(\boldsymbol{x}_{t,i}, t_i\right) - \boldsymbol{v}_k\left(\boldsymbol{x}_{t,i}, t_i\right) \right\| \mathbbm{1}\{\|\boldsymbol{x}_{t,i}\|_{\infty} \leq R\} \\
&\leq \left( (d+1)B+ \frac{2\sqrt{d}(1+R)}{1-T^2} \right)  \frac{1}{n} \sum_{i=1}^n \left\| \boldsymbol{v}\left(\boldsymbol{x}_{t,i}, t_i\right) - \boldsymbol{v}_k\left(\boldsymbol{x}_{t,i}, t_i\right) \right\| \\ 
&\leq \left( (d+1)B+ \frac{2\sqrt{d}(1+R)}{1-T^2}  \right)  \frac{1}{n} \sum_{i=1}^n \left\| \boldsymbol{v}\left(\boldsymbol{x}_{t,i}, t_i\right) - \boldsymbol{v}_k\left(\boldsymbol{x}_{t,i}, t_i\right) \right\|_1 \\
&= \left( (d+1)B+ \frac{2\sqrt{d}(1+R)}{1-T^2}  \right) \sum_{i=1}^d d_{\mathcal{X}, 1}\left( v^{(i)}, \widetilde{v}^{(i)}_{k_i} \right) \\
&\leq \delta,
\end{aligned}
\end{align*}
where in the first inequality we use the Cauchy-Schwartz inequality, in the third inequality we use $\|\cdot\| \leq \|\cdot\|_1$, and in the third equality we rearrange the order of summation. It follows that $\{h_k\}$ forms a $\delta$-covering of $\mathcal{H}$. Hence, 
\begin{align*}
\mathcal{N}\left(\delta, \mathcal{H}, d_{\mathcal{X}, 1}\right) &\leq \prod_{i=1}^d \mathcal{N}\left(\frac{\delta}{2d(d+1)B+ \frac{4d^{3/2}(1+R)}{1-T^2}}, \mathcal{T}^{(i)}, d_{\mathcal{X}, 1}\right). 
\end{align*}
Using the fact that $\mathcal{T}^{(i)} \subseteq \mathcal{T}_{d+1,1}\left(N, h, d_k, d_v, d_{ff}, B, J, \gamma\right)$ and $d_{\mathcal{X}, 1}(f, f^\prime) \leq d_{\mathcal{X}, \infty}(f, f^\prime)$ for any $f, f^\prime$, we obtain
\begin{align*}
& \log \mathcal{N}\left(\delta, \mathcal{H}, d_{\mathcal{X}, 1}\right) \\
& \leq \sum_{i=1}^d \log \mathcal{N}\left(\frac{\delta}{2d(d+1)B+ \frac{4d^{3/2}(1+R)}{1-T^2}}, \mathcal{T}^{(i)}, d_{\mathcal{X}, 1}\right) \\
&\leq \sum_{i=1}^d \log \mathcal{N}\left(\frac{\delta}{2d(d+1)B+ \frac{4d^{3/2}(1+R)}{1-T^2}}, \mathcal{T}_{d+1,1}\left(N, h, d_k, d_v, d_{ff}, B, J, \gamma\right), d_{\mathcal{X}, \infty}\right) \\
&\leq c_{10} N^2 J \log \left( \max \left\{N, h, d_k, d_v, d_{f f}\right\} \right) \log \frac{ n\left(B^2 + \frac{(1+R) B}{1-T}\right)}{\delta},
\end{align*}
where in the last inequality we use Lemma \ref{lemma:gen 5}.
\end{proof}

\begin{lemma}
\label{lemma:gen 5}
Let $\mathcal{X} = \{\boldsymbol{x}_1, \ldots, \boldsymbol{x}_n\}$. Then
$\sup_{\mathcal{X}} \log \mathcal{N}\left(\delta, \mathcal{T}_{d,1}\left(N, h, d_k, d_v, d_{ff}, B, J, \gamma\right), d_{\mathcal{X}, \infty}\right)
= \mathcal{O}(N^2 J \log \left( \max \left\{N, h, d_k, d_v, d_{ff}\right\} \right) \log\frac{B n}{\delta})$, where $d_{\mathcal{X}, \infty} \left(\phi, \phi^{\prime}\right) = \max_{i = 1, \ldots, n} |\phi (\boldsymbol{x}_i) - \phi^{\prime} (\boldsymbol{x}_i)|$.
\end{lemma}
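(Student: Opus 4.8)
The plan is to deduce the covering-number estimate from a pseudo-dimension bound for the transformer class, which has the advantage of not requiring any a priori bound on the magnitudes of the individual weights (the class $\mathcal{T}_{d,1}(N,h,d_k,d_v,d_{ff},B,J,\gamma)$ only constrains the output norm, the global Lipschitz constant, and the number $J$ of nonzero weights). First I would note that the Lipschitz constraint only makes the class smaller, so it suffices to bound $\log\mathcal{N}(\delta,\mathcal{T}_{d,1}(N,h,d_k,d_v,d_{ff},B,J,\infty),d_{\mathcal{X},\infty})$, whose members are still bounded by $B$ in absolute value. For such a uniformly bounded scalar class, a standard conversion from pseudo-dimension to uniform covering numbers gives $\log\mathcal{N}(\delta,\mathcal{F},d_{\mathcal{X},\infty})=\mathcal{O}\!\big(\operatorname{Pdim}(\mathcal{F})\log(Bn/\delta)\big)$, so the lemma reduces to showing $\operatorname{Pdim}\big(\mathcal{T}_{d,1}(N,h,d_k,d_v,d_{ff},B,J,\infty)\big)=\mathcal{O}\!\big(N^{2}J\log\max\{N,h,d_k,d_v,d_{ff}\}\big)$.

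For the pseudo-dimension bound I would run the piecewise-polynomial counting argument of \citet{bartlett2019nearly}, adapted to the transformer architecture in the manner of \citet{gurevych2022rate}. The key structural observation is that, once the $\leq J$ positions of the nonzero weights are fixed, the evaluation map $\theta\mapsto\phi_\theta(x)$ is piecewise polynomial in the $J$ free weights $\theta$: the only nonlinearities are the ReLU units in the feedforward blocks and the hard-max selections in the attention blocks, and in the region of $\theta$-space where all ReLU signs and all hard-max winners are fixed, $\phi_\theta(x)$ is a genuine polynomial whose degree grows at most geometrically with depth, i.e.\ is $2^{\mathcal{O}(r)}$ after $r$ layers. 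Given $n'$ candidate shattered points $x_1,\dots,x_{n'}$ and thresholds $y_1,\dots,y_{n'}$, one processes the network layer by layer: having partitioned $\mathbb{R}^{J}$ into $R_{r-1}$ cells on which layers $1,\dots,r-1$ act polynomially, the $\mathcal{O}\!\big(n'\cdot\mathrm{poly}(h,d_k,d_v,d_{ff})\big)$ sign conditions introduced at layer $r$ (one per ReLU pre-activation and per hard-max comparison, evaluated at each $x_i$) refine each cell into at most $\big(\mathcal{O}(n')\cdot\mathrm{poly}(h,d_k,d_v,d_{ff})\cdot 2^{\mathcal{O}(r)}\big)^{\mathcal{O}(J)}$ sub-cells by a Warren/Milnor--Thom bound. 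Multiplying over $r=1,\dots,N$ and accounting, with a final factor $\leq P^{J}$ where $P=\mathrm{poly}(N,h,d_k,d_v,d_{ff})$ is the total number of weight slots, for the choice of sparsity pattern, the number of realizable sign vectors $\big(\operatorname{sgn}(\phi_\theta(x_i)-y_i)\big)_{i=1}^{n'}$ is at most $2^{\mathcal{O}\left(J\sum_{r=1}^{N}r\right)}\cdot\big(n'\cdot P\big)^{\mathcal{O}(NJ)}$. Requiring this to be at least $2^{n'}$ and solving for $n'$ yields $n'=\mathcal{O}\!\big(JN^{2}+NJ\log(n'P)\big)$, hence, after absorbing lower-order terms, $\operatorname{Pdim}=\mathcal{O}\!\big(N^{2}J\log\max\{N,h,d_k,d_v,d_{ff}\}\big)$; the $N^{2}$ is exactly the $\sum_{r\leq N}r$ coming from the degree $2^{\mathcal{O}(r)}$ at depth $r$, the analogue of the $\mathcal{O}(WL^{2})$ term for piecewise-polynomial networks.

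The main obstacle is the hard-max operator, which unlike ReLU is vector-valued and discontinuous in its input, so some care is needed to see that it still contributes only polynomially many cells per layer: one has to check that across all $n'$ evaluations the relevant comparisons are the entrywise orderings of the logit matrices $(W_{K,s}Z)^{\top}(W_{Q,s}Z)$, that on each cell the selected index per column is constant so the attention output is again polynomial in $\theta$, and that the pieces glue consistently (which is ensured by the continuity of $\boldsymbol{x}\mapsto\boldsymbol{x}\odot\sigma_H(\boldsymbol{x})$ noted after (\ref{eq: def transformer})), all while tracking how the count and the degrees compound through the $N$ layers. A secondary and essentially bookkeeping point is verifying that $P$ is genuinely polynomial in $N,h,d_k,d_v,d_{ff}$ — using $d_{model}=h\,d_v$ to count the attention, feedforward and embedding weights — so that $\log P$ only produces the advertised $\log\max\{N,h,d_k,d_v,d_{ff}\}$ factor; once the pseudo-dimension estimate is established, the covering-number bound follows immediately from the conversion quoted in the first paragraph.
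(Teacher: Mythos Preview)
Your proposal is correct and follows essentially the same route as the paper: fix the at-most-$J$ nonzero weight positions, run the Bartlett--Harvey--Liaw--Mehrabian piecewise-polynomial cell-counting argument layer by layer (treating the hard-max as polynomial pairwise comparisons of the attention logits, and the ReLUs as sign conditions on the feedforward pre-activations), obtain a pseudo-dimension bound of order $N^{2}J\log\max\{N,h,d_k,d_v,d_{ff}\}$, convert to covering number via the standard $\operatorname{Pdim}\cdot\log(Bn/\delta)$ estimate, and finally multiply by $P^{J}$ for the choice of sparsity pattern. The paper tracks the degree as $9^{r}$ rather than $2^{\mathcal{O}(r)}$ and applies the Anthony--Bartlett sign-pattern bound (Lemma~\ref{lemma:gen 3}) and the inversion Lemma~\ref{lemma:gen 4} explicitly, but the structure and all the ingredients are exactly as you outline.
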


\begin{proof}
For simplicity, let $\mathcal{T}_{d,1}$ denote $\mathcal{T}_{d,1}(N, h, d_k, d_v, d_{ff}, B, J, \gamma)$. We initially establish an upper bound on the pseudo-dimension of subsets of $\mathcal{T}_{d,1}$, specifically focusing on those subsets where the nonzero components are fixed in position. Subsequently, leveraging known results enables us to control the covering number via the pseudo-dimension. The proof is a modification of the proof of Lemma 8 in \cite{gurevych2022rate} and Theorem 6 in \cite{bartlett2019nearly}.

Based on the definition of $\mathcal{T}_{d,1}$, in all parameters determining a function $v \in \mathcal{T}_{d,1}$, only at most $J$ components are permitted to be nonzero. We fix the positions of these nonzero parameters and denote by $\boldsymbol{\theta}$ the vector in $\mathbb{R}^J$ comprising all potential nonzero parameter values. Then define
\begin{align*}
\mathcal{V}=\left\{v(\cdot, \boldsymbol{\theta}): \mathbb{R}^{d} \rightarrow \mathbb{R}: \boldsymbol{\theta} \in \mathbb{R}^J\right\}.
\end{align*} 
To estimate the pseudo-dimension of $\mathcal{V}$, denoted as $\operatorname{Pdim}(\mathcal{V})$, we consider a set of points $(\boldsymbol{x}_1, y_1), \ldots, (\boldsymbol{x}_m, y_m) \in \mathbb{R}^{d} \times \mathbb{R}$ that satisfy 
\begin{align*}
\left|\left\{\left(\operatorname{sgn}\left(v(\boldsymbol{x}_1, \boldsymbol{\theta}) - y_1\right), \ldots, \operatorname{sgn}\left(v(\boldsymbol{x}_m, \boldsymbol{\theta}) - y_m\right)\right): v \in \mathcal{V} \right\}\right|=2^m.
\end{align*}
It suffices to bound $m$.

We view the parameters as the variables for any network $v \in \mathcal{V}$ when input is fixed. In the following, we construct a sequence of partitions $\mathcal{P}_{0}, \mathcal{P}_{1}, \ldots, \mathcal{P}_{N+1}$ of $\mathbb{R}^J$ by successive refinement such that in the last partition for all $S \in \mathcal{P}_{N+1}$ we have
\begin{align*}
v\left(\boldsymbol{x}_1, \boldsymbol{\theta}\right), \ldots, v\left(\boldsymbol{x}_m, \boldsymbol{\theta}\right)
\end{align*}
are polynomials as functions of $\boldsymbol{\theta}$ of degree at most $9^{N+2}$ for $\boldsymbol{\theta} \in S$. Then application of Lemma \ref{lemma:gen 3} and Lemma \ref{lemma:gen 4} yields an upper bound for $m$.

To begin with, set $\mathcal{P}_0 = \{\mathbb{R}^J\}$. As defined in (\ref{eq: app 1}), all components of $Z_0$ are polynomials as functions of $\boldsymbol{\theta}$ of degree at most $1 \leq 9$ in $\boldsymbol{\theta}$ for $\boldsymbol{\theta} \in \mathbb{R}^J$. Let $r \in\{1, \ldots, N\}$ and assume that for all $S \in \mathcal{P}_{r-1}$ all components in $Z_{r-1}$ are polynomials as functions of $\boldsymbol{\theta}$ of degree at most $9^{r}$ in $\boldsymbol{\theta}$ for $\boldsymbol{\theta} \in S$. Then all components in 
\begin{align*}
\boldsymbol{q}_{r-1, s, i} := W_{Q, r, s} \boldsymbol{z}_{r-1, i}\quad \text{ and } \quad \boldsymbol{k}_{r-1, s, i} := W_{K, r, s} \boldsymbol{z}_{r-1, i}
\end{align*}
are polynomials of degree at most $9^{r}+1$ on each set $S \in \mathcal{P}_{r-1}$. Consequently, for $S \in \mathcal{P}_{r-1}$, 
\begin{align*}
<\boldsymbol{q}_{r-1, s, i}, \boldsymbol{k}_{r-1, s, j}>
\end{align*}
is a polynomial of degree at most $2 \cdot 9^{r}+2$ for $\boldsymbol{\theta} \in S$. Application of Lemma \ref{lemma:gen 3} yields that
\begin{align*}
<\boldsymbol{q}_{r-1, s, i}, \boldsymbol{k}_{r-1, s, j_1}>-<\boldsymbol{q}_{r-1, s, i}, \boldsymbol{k}_{r-1, s, j_2}> \quad \left(s \in\{1, \ldots, h\}, i, j_1, j_2 \in\{1, \ldots, l\}\right)
\end{align*}
has at most
\begin{align*}
\Delta_1 = 2 \left(\frac{2e  h l^3 \left(2 \cdot 9^{r}+2\right)}{J}\right)^J
\end{align*}
difference sign patterns. If we partition in each set in $\mathcal{P}_{r-1}$ according to these sign patterns in $\Delta_1$ subsets such that all these polynomials have the same signs within each refined region, then on each set in the new partition all elements in
\begin{align*} 
\left((W_{K, r, s} z_{r-1})^{\top}(W_{Q, r, s} z_{r-1})\right) \odot \sigma_H \left((W_{K, r, s} z_{r-1})^{\top}(W_{Q, r, s} z_{r-1})\right)
\end{align*}
are fixed polynomials of degree at most $2 \cdot 9^{r}+2$, indicating that all components in
\begin{align*}
Y_r = F^{(SA)}(Z_{r-1})
\end{align*}
are polynomials of degree at most $3 \cdot 9^{r}+4$ on each refined region. On each set within the new partition every component of
\begin{align*}
W_{r, 1} Y_r + \boldsymbol{b}_{r, 1} \mathbbm{1}_l^{\top}
\end{align*}
is a polynomial of degree at most $3 \cdot 9^{r}+5$. By applying Lemma \ref{lemma:gen 3} once again, we can refine each set in this partition into
\begin{align*}
\Delta_2 = 2 \left(\frac{2e d_{ff} \left(3 \cdot 9^{r}+5\right)}{J}\right)^J
\end{align*}
sets such that all components in $W_{r, 1} Y_r + \boldsymbol{b}_{r, 1} \mathbbm{1}_l^{\top}$ have the same sign patterns within the refined partition. We refer to the partition obtained by refining $\mathcal{P}_{r-1}$ twice as $\mathcal{P}_r$. Since on each set of $\mathcal{P}_r$ the sign of all components does not change, we can conclude that all components in
\begin{align*}
\sigma\left(W_{r, 1} Y_r + \boldsymbol{b}_{r, 1} \mathbbm{1}_l^{\top}\right)
\end{align*}
are either equal to zero or equal to a polynomial of degree at most $3 \cdot 9^{r}+5$. Consequently we have on each set in $\mathcal{P}_r$ all components of
\begin{align*}
Z_r = F^{(FF)}(Y_r)
\end{align*}
are equal to a polynomial of degree at most $3 \cdot 9^{r}+6 \leq 9^{r+1}$.

Proceeding in this way we obtain a partition $\mathcal{P}_{N}$ of $\mathbb{R}^J$ such that on each set $S \in \mathcal{P}_{N}$ all components of
\begin{align*}
Z_N
\end{align*}
are fixed polynomials of $\boldsymbol{\theta} \in S$ of degree no more than $9^{N+1}$, and hence for all $k \in \{1, \ldots, m\}$
\begin{align*}
v\left(\boldsymbol{x}_k,  \boldsymbol{\theta}\right) - y_k
\end{align*}
are polynomials of degree at most $9^{N+1}+1 \leq 9^{N+2}$ in $\boldsymbol{\theta}$ for $\boldsymbol{\theta} \in S$.

According to the refinement process, we have
\begin{align*}
\left|\mathcal{P}_{N}\right|=\prod_{r=1}^N \frac{\left|\mathcal{P}_r\right|}{\left|\mathcal{P}_{r-1}\right|} \leq \prod_{r=1}^N 2 \left(\frac{2e  h l^3 \left(2 \cdot 9^{r}+2\right)}{J}\right)^J \cdot 2 \left(\frac{2e d_{ff} \left(3 \cdot 9^{r}+5\right)}{J}\right)^J.
\end{align*}
Using that
\begin{align*}
\begin{aligned}
& \left|\left\{\left(\operatorname{sgn}\left(v(\boldsymbol{x}_1, \boldsymbol{\theta}) - y_1\right), \ldots, \operatorname{sgn}\left(v(\boldsymbol{x}_m, \boldsymbol{\theta}) - y_m\right)\right): v \in \mathcal{V} \right\}\right| \\
& \leq \sum_{S \in \mathcal{P}_{N}}\left|\left\{\left(\operatorname{sgn}\left(v\left(\boldsymbol{x}_1, \boldsymbol{\theta}\right) - y_1\right), \ldots, \operatorname{sgn}\left(v\left(\boldsymbol{x}_m, \boldsymbol{\theta}\right) - y_m\right)\right): \boldsymbol{\theta} \in S\right\}\right|,
\end{aligned}
\end{align*}
we apply Lemma \ref{lemma:gen 3} and obtain
\begin{align*}
\begin{aligned}
2^m & = \left|\left\{\left(\operatorname{sgn}\left(v(\boldsymbol{x}_1, \boldsymbol{\theta}) - y_1\right), \ldots, \operatorname{sgn}\left(v(\boldsymbol{x}_m, \boldsymbol{\theta}) - y_m\right)\right): v \in \mathcal{V} \right\}\right| \\
& \leq \left|\mathcal{P}_{N}\right| \cdot 2 \left(\frac{2e m 9^{N+2}}{J}\right)^J \\
& \leq 2 \left(\frac{2e m 9^{N+2}}{J}\right)^J \cdot \prod_{r=1}^N 2 \left(\frac{2e  h l^3 \left(2 \cdot 9^{r}+2\right)}{J}\right)^J \cdot 2 \left(\frac{2e d_{ff} \left(3 \cdot 9^{r}+5\right)}{J}\right)^J \\
& \leq 2^{2 N+1} \left(\frac{m 6e  \max \left\{h l^3, d_{f f}\right\} 9^{N+2}}{(2 N+1) J}\right)^{(2 N+1) J}.
\end{aligned}
\end{align*}
Assume $m \geq (2N+1) J$. Due to Lemma \ref{lemma:gen 4}, we have
\begin{align*}
m &\leq (2 N+1) + (2 N+1)J \log_2 \left(12e  \max \left\{h l^3, d_{f f}\right\} 9^{N+2} \log_2 \left(6e  \max \left\{h l^3, d_{f f}\right\} 9^{N+2}\right) \right) \\
&\leq c_{11} N^2 J \log \left( \max \left\{h, d_{f f}\right\} \right),
\end{align*}
which implies 
\begin{align*}
\operatorname{Pdim}(\mathcal{V}) \leq  c_{11} N^2 J \log \left( \max \left\{h, d_{f f}\right\} \right).
\end{align*}

For any fixed $\mathcal{X} = \{\boldsymbol{y}_i\}_{i=1}^n$, since $\{ (v\left(\boldsymbol{y}_{1}\right), \ldots, v\left(\boldsymbol{y}_{n}\right)): v \in \mathcal{V} \} \subseteq \{ \boldsymbol{x} \in \mathbb{R}^n: \|\boldsymbol{x}\|_{\infty} \leq B \}$ can be covered by at most $\lceil\frac{2 B}{\delta}\rceil^n$ balls with radius $\delta$ in $\|\cdot\|_{\infty}$ distance, we always have $\mathcal{N}(\delta, \mathcal{V}, d_{\mathcal{X}, \infty}) \leq \lceil\frac{2 B}{\delta}\rceil^n$. By Theorem 12.2 in \cite{anthony1999neural}, the covering number $\mathcal{N}(\delta, \mathcal{V}, d_{\mathcal{X}, \infty})$ can be bounded by the pseudo-dimension $\operatorname{Pdim}(\mathcal{V})$ through
\begin{align*}
\mathcal{N}(\delta, \mathcal{V}, d_{\mathcal{X}, \infty}) \leq \left(\frac{2 e B n}{\delta \operatorname{Pdim}(\mathcal{V})}\right)^{\operatorname{Pdim}(\mathcal{V})}
\end{align*}
if $n \geq \operatorname{Pdim}(\mathcal{V})$. In any cases,
\begin{align*}
\log \mathcal{N}(\delta, \mathcal{V}, d_{\mathcal{X}, \infty}) \leq \operatorname{Pdim}(\mathcal{V}) \log \frac{2 e B n}{\delta}.
\end{align*}

The functions in the function set $\mathcal{T}_{d, 1}$ depend on at most $c_{12} N h^2 (\max \{d_k, d_v, d_{f f}\})^3$ many parameters, and of these parameters at most $J$ are allowed to be nonzero. It follows that the number of possible ways to select these positions is given by
\begin{align*}
\left(\begin{array}{c}
c_{12} N h^2 \left(\max \left\{d_k, d_v, d_{f f}\right\}\right)^3 \\
J
\end{array}\right) \leq \left(c_{12} N h^2 \left(\max \left\{d_k, d_v, d_{f f}\right\}\right)^3\right)^J.
\end{align*}
Fixing these positions delineates a specific function space $\mathcal{V}$. From this we can conclude
\begin{align*}
& \log \mathcal{N}(\delta, \mathcal{T}_{d, 1}, d_{\mathcal{X}, \infty}) \\
&\leq J \log\left( c_{12} N h^2 \left(\max \left\{d_k, d_v, d_{f f}\right\}\right)^3 \right) + c_{11} N^2 J \log \left( \max \left\{h, d_{f f}\right\} \right) \log \frac{2 e B n}{\delta} \\
&\leq c_{13} N^2 J \log \left( \max \left\{N, h, d_k, d_v, d_{f f}\right\} \right) \log\frac{B n}{\delta}.
\end{align*}
Taking the supremum over $\mathcal{X}$ completes the proof.
\end{proof}

\subsection{Auxiliary lemma}

\begin{lemma}
\label{lemma:gen 1}
Given a Rademacher random variable $\sigma$ takes the values $\{-1,1\}$ equiprobably. We have, for any $\lambda \in \mathbb{R}, \mathbb{E}_\sigma [e^{\lambda \sigma}] \leq e^{\lambda^2 / 2}$.
\end{lemma}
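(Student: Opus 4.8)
The plan is to evaluate the expectation explicitly and then reduce to an elementary inequality between two power series. First I would note that since $\sigma$ takes the values $1$ and $-1$ each with probability $1/2$, we have
\begin{align*}
\mathbb{E}_\sigma\left[e^{\lambda \sigma}\right] = \frac{1}{2} e^{\lambda} + \frac{1}{2} e^{-\lambda} = \cosh(\lambda).
\end{align*}
So the claim reduces to showing $\cosh(\lambda) \leq e^{\lambda^2/2}$ for every $\lambda \in \mathbb{R}$.

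Next I would compare the two functions via their Taylor expansions about $0$, both of which have only even-degree terms and converge everywhere:
\begin{align*}
\cosh(\lambda) = \sum_{k=0}^{\infty} \frac{\lambda^{2k}}{(2k)!}, \qquad e^{\lambda^2/2} = \sum_{k=0}^{\infty} \frac{\lambda^{2k}}{2^k\, k!}.
\end{align*}
It therefore suffices to check the coefficientwise inequality $(2k)! \geq 2^k\, k!$ for all integers $k \geq 0$. This is immediate: for $k \geq 1$,
\begin{align*}
\frac{(2k)!}{k!} = (k+1)(k+2)\cdots(2k),
\end{align*}
a product of $k$ integers each at least $2$, hence at least $2^k$; and the case $k=0$ is trivial. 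Summing the inequalities $\lambda^{2k}/(2k)! \leq \lambda^{2k}/(2^k k!)$ over $k$ (all terms being nonnegative since the powers are even) gives $\cosh(\lambda) \leq e^{\lambda^2/2}$, which completes the argument.

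There is no substantive obstacle here; the only points requiring a moment of care are that the series comparison is legitimate because every term is nonnegative (so no rearrangement or convergence subtlety arises) and that the factorial inequality is verified including the degenerate index $k=0$. One could alternatively avoid series entirely by checking that $g(\lambda) := \lambda^2/2 - \log\cosh(\lambda)$ satisfies $g(0)=0$ and $g'(\lambda) = \lambda - \tanh(\lambda) \geq 0$ for $\lambda \geq 0$ (with $g$ even), but the power-series route is cleaner to write out and self-contained.
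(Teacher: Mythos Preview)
Your proposal is correct and follows essentially the same approach as the paper: compute the expectation as $\tfrac{1}{2}(e^\lambda+e^{-\lambda})$, expand both sides in power series, and compare coefficients termwise via $(2k)!\ge 2^k k!$. The paper's version is terser (it does not spell out the factorial inequality), but the argument is the same.
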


\begin{proof}
By taking expectations and using the power-series expansion for the exponential, we obtain
\begin{align*}
\begin{aligned}
\mathbb{E}_\sigma [e^{\lambda \sigma}]=\frac{1}{2} (e^{-\lambda}+e^\lambda) & =\frac{1}{2}\left(\sum_{k=0}^{\infty} \frac{(-\lambda)^k}{k !}+\sum_{k=0}^{\infty} \frac{(\lambda)^k}{k !}\right) \\
& =\sum_{k=0}^{\infty} \frac{\lambda^{2 k}}{(2 k) !} \\
& \leq 1+\sum_{k=1}^{\infty} \frac{\lambda^{2 k}}{2^k k !} \\
& =e^{\lambda^2 / 2}.
\end{aligned}
\end{align*}
It concludes the proof.
\end{proof}

\begin{lemma}[\cite{anthony1999neural}, Theorem 8.3]
\label{lemma:gen 3}
Suppose $W \leq m$ and let $f_1, \ldots, f_m$ be polynomials of degree at most $D$ in $W$ variables. Define
\begin{align*}
K:=\left|\left\{\left(\operatorname{sgn}\left(f_1(\boldsymbol{x})\right), \ldots, \operatorname{sgn}\left(f_m(\boldsymbol{x})\right)\right): \boldsymbol{x} \in \mathbb{R}^W\right\}\right|.
\end{align*}
Then 
\begin{align*}
K \leq 2 \left(\frac{2 e m D}{W}\right)^W.
\end{align*}
\end{lemma}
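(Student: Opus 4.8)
The plan is to prove this classical algebraic--combinatorial bound (essentially Warren's theorem) by translating the sign-pattern count into a count of connected components of the complement of a single algebraic hypersurface, and then invoking a Warren/Milnor--Thom-type estimate on that number. First I would dispose of the points at which some $f_i$ vanishes: the sign vectors having one or more zero coordinates sit in the closure of the \emph{strict} sign cells, and a standard generic-perturbation argument shows they cost at most the factor $2$ already present in the statement, so it suffices to bound the number of strict sign vectors, i.e.\ the $\boldsymbol\sigma\in\{-1,+1\}^m$ for which $U_{\boldsymbol\sigma}:=\{\boldsymbol x\in\mathbb R^W:\sigma_i f_i(\boldsymbol x)>0\ \text{for all }i\}$ is nonempty.

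Next I would use that the $U_{\boldsymbol\sigma}$ are pairwise disjoint open sets, each a union of connected components of $\mathbb R^W\setminus Z$ with $Z$ the zero locus of $P:=f_1\cdots f_m$, a polynomial of degree at most $mD$. Hence the number of realized strict sign vectors is bounded by the number of connected components of $\mathbb R^W\setminus Z$. Bounding this number sharply is the crux. The crude Milnor--Thom/Oleinik--Petrovsky bound for the degree-$mD$ polynomial $P$ yields only $\sim (mD)^W$, which is off by a factor $W^W$. To recover the $1/W$ inside the power I would follow Warren's finer argument: perturb the $f_i$ so that their zero sets are smooth and in general position, relate connected components of the complement to local extrema of a generic linear functional restricted to the intersection strata cut out by subsets of at most $W$ of the equations, and count these critical points via B\'ezout, giving $\le (2D)^W$ per stratum and at most $\sum_{i=0}^{W}\binom{m}{i}$ strata.

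Finally I would collect the arithmetic: this produces a bound of the form $2(2D)^W\sum_{i=0}^{W}\binom{m}{i}$, and the elementary inequality $\sum_{i=0}^{W}\binom{m}{i}\le (em/W)^W$, valid because $1\le W\le m$, turns it into $2\bigl(2emD/W\bigr)^W$. The main obstacle is the component count in the second step: the naive Milnor--Thom estimate is not good enough, and one genuinely needs Warren's refinement --- general-position perturbation plus critical-point counting on the lower-dimensional strata --- to get the $W$ into the denominator. The reduction to strict sign patterns and the binomial estimate are routine.
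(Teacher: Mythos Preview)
The paper does not give its own proof of this lemma: it is quoted verbatim as Theorem~8.3 of Anthony and Bartlett (1999) and used as a black box. Your sketch is the standard Warren-type argument that underlies that theorem, and it is correct in outline --- reduce to strict sign cells, bound these by connected components of the complement of $\prod_i f_i=0$, and use Warren's stratified critical-point count (not the crude Milnor--Thom bound) together with $\sum_{i\le W}\binom{m}{i}\le(em/W)^W$ to obtain $2(2emD/W)^W$. So there is nothing to compare against in the paper itself; your proposal simply supplies the proof that the paper outsources to the reference.
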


\begin{lemma}[\cite{bartlett2019nearly}, Lemma 16]
\label{lemma:gen 4}
Suppose that $2^m \leq 2^L (m R / w)^w$ for some $R \geq 16$ and $m \geq w \geq L \geq 0$. Then,
\begin{align*}
m \leq L + w \log_2 \left(2 R \log_2 R\right).
\end{align*}
\end{lemma}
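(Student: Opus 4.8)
The plan is to take base-$2$ logarithms of the hypothesis and reduce the whole statement to a one-variable inequality. First I would pass to logarithms in $2^m \le 2^L (mR/w)^w$, obtaining $m \le L + w\log_2(mR/w)$, and then set $x := m/w$, which is $\ge 1$ since $m \ge w$ (the degenerate case $w = 0$ is trivial, as then $2^m \le 2^L$ already gives $m \le L$, so assume $w \ge 1$, which is what the application $w = (2N+1)J$ requires). Dividing by $w$ gives
\begin{align*}
x \;\le\; \frac{L}{w} + \log_2 R + \log_2 x,
\end{align*}
and since $L \le w$ this also yields the cruder bound $x \le 1 + \log_2 R + \log_2 x$. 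Everything then hinges on showing this forces $x \le 2\log_2 R$; granting that, substituting $\log_2 x \le \log_2(2\log_2 R)$ back into the first displayed inequality and multiplying through by $w$ produces exactly $m \le L + w\log_2(2R\log_2 R)$.

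To establish $x \le 2\log_2 R$ I would argue by contradiction, assuming $x > 2\log_2 R$. Since $R \ge 16$ we have $\log_2 R \ge 4$, so both $x$ and $2\log_2 R$ exceed $2$; on $[2,\infty)$ the map $t \mapsto t - \log_2 t$ has derivative $1 - 1/(t\ln 2) > 0$, hence is strictly increasing. Combining strict monotonicity at $x$ versus $2\log_2 R$ with $x - \log_2 x \le 1 + \log_2 R$ gives
\begin{align*}
1 + \log_2 R \;\ge\; x - \log_2 x \;>\; 2\log_2 R - \log_2\bigl(2\log_2 R\bigr) \;=\; 2\log_2 R - 1 - \log_2\log_2 R,
\end{align*}
which rearranges to $\log_2\log_2 R > \log_2 R - 2$, i.e.\ $\log_2 R > R/4$.

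It then remains to contradict $\log_2 R > R/4$ for every $R \ge 16$: here I would observe that $R/4 - \log_2 R$ vanishes at $R = 16$ and has derivative $\tfrac14 - 1/(R\ln 2) > 0$ on $[16,\infty)$, so $R/4 \ge \log_2 R$ throughout that range, which is the contradiction. Once $x \le 2\log_2 R$ is secured, the final substitution into $x \le \frac{L}{w} + \log_2 R + \log_2 x$ and multiplication by $w$ are purely mechanical.

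The step I expect to require the most care is the boundary case $R = 16$, where several of these comparisons degenerate to equalities (e.g.\ $\log_2 R = R/4 = 4$); the saving grace is that the strict inequality needed at the end of the contradiction comes entirely from the assumption $x > 2\log_2 R$ together with the \emph{strict} monotonicity of $t - \log_2 t$, rather than from any slack in the $R$-dependent estimates, so the argument persists even at the borderline value. A secondary point to check is that the two points fed into the monotone function genuinely lie in the increasing regime, which is why the bound $\log_2 R \ge 4$ (hence $x > 8 > 2$) is invoked before comparing $x - \log_2 x$ with its value at $2\log_2 R$.
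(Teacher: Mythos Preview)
Your argument is correct. The paper does not actually supply a proof of this lemma: it is merely quoted from \cite{bartlett2019nearly} (their Lemma~16) and invoked as a black box in the pseudo-dimension bound of Lemma~\ref{lemma:gen 5}. Your reduction to the one-variable inequality $x \le 1 + \log_2 R + \log_2 x$, followed by the contradiction argument via monotonicity of $t - \log_2 t$ and the elementary comparison $R/4 \ge \log_2 R$ for $R \ge 16$, is a clean self-contained verification; the boundary analysis at $R=16$ is handled properly since the strictness comes from $x > 2\log_2 R$ rather than from the $R$-dependent bounds.
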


\section{Discretization Analysis}
\subsection{Estimation Error}
Consider the target continuous flow
\begin{align}
\label{eq: dis 1}
\mathrm{d} X_t(\boldsymbol{x})=\boldsymbol{v}^*\left(X_t(\boldsymbol{x}), t\right) \mathrm{d} t,\  X_0(\boldsymbol{x})=\boldsymbol{x} \sim \pi_0,\  0 \leq t \leq T,
\end{align}
and the estimated continuous flow
\begin{align}
\label{eq: dis 2}
\mathrm{d} \widetilde{X}_t(\boldsymbol{x})=\widehat{\boldsymbol{v}}(\widetilde{X}_t(\boldsymbol{x}), t) \mathrm{d} t,\  \widetilde{X}_0(\boldsymbol{x})=\boldsymbol{x} \sim \pi_0,\  0 \leq t \leq T.
\end{align}
Denote the distribution of $X_t(\boldsymbol{x})$ and $\widetilde{X}_t(\boldsymbol{x})$ by $\pi_t$ and $\widetilde{\pi}_t$, respectively. We have the following estimate of the Wasserstein-2 distance $W_2\left(\pi_T, \widetilde{\pi}_T\right)$.

\begin{proposition}
\label{pro: dis 1}
Suppose Assumption \ref{ass: bounded support gamma} holds. Given $n$ samples $\mathcal{X} = \{\boldsymbol{x}_{1, i}, \boldsymbol{x}_{0, i}, t_i\}_{i=1}^n$ from $\pi_1$, $\pi_0$ and $\mathrm{Unif}[0, T]$, we choose neural network as in Theorem \ref{theorem: gen 1}. Then
\begin{align*}
\mathbb{E}_{\mathcal{X}}\left[ W_2\left(\pi_T, \widetilde{\pi}_T\right) \right]=\widetilde{\mathcal{O}}\left(e^{\gamma_{\boldsymbol{x}}} (1-T)^{-\frac{3d+5}{2}} n^{-\frac{1}{d+3}}\right).
\end{align*}
\end{proposition}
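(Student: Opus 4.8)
The plan is to couple the two continuous flows through their shared initial condition, control the pathwise discrepancy by a Gr\"onwall argument, and then reduce everything to the $L^2(\pi_t)$ velocity estimation error already bounded in Theorem \ref{theorem: gen 1}. Fix $\boldsymbol{x}\sim\pi_0$ and let $X_t(\boldsymbol{x})$ and $\widetilde{X}_t(\boldsymbol{x})$ solve (\ref{eq: dis 1}) and (\ref{eq: dis 2}) respectively. Since $\widehat{\boldsymbol{v}}\in\mathcal{T}$ is globally $\gamma_{\boldsymbol{x}}$-Lipschitz in space and $\boldsymbol{v}^*$ is $\mathcal{C}^1$ on the relevant region by Lemmas \ref{lemma: true 3}--\ref{lemma: true 5}, both flows are well-posed on $[0,T]$ (with $T<1$ fixed), so $(X_T(\boldsymbol{x}),\widetilde{X}_T(\boldsymbol{x}))$ with $\boldsymbol{x}\sim\pi_0$ is a valid coupling of $\pi_T$ and $\widetilde{\pi}_T$, and $\Delta_t:=X_t(\boldsymbol{x})-\widetilde{X}_t(\boldsymbol{x})$ satisfies $\Delta_0=0$.

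The key step is the decomposition
\begin{align*}
\frac{\mathrm{d}}{\mathrm{d}t}\Delta_t=\big(\boldsymbol{v}^*(X_t,t)-\widehat{\boldsymbol{v}}(X_t,t)\big)+\big(\widehat{\boldsymbol{v}}(X_t,t)-\widehat{\boldsymbol{v}}(\widetilde{X}_t,t)\big),
\end{align*}
where the first term is the velocity estimation error evaluated along the \emph{true} trajectory (whose law at time $t$ is $\pi_t$), and the second term is absorbed by the Lipschitz constant of the \emph{network}, not of $\boldsymbol{v}^*$. Using $\tfrac{\mathrm{d}}{\mathrm{d}t}\|\Delta_t\|^2=2\langle\Delta_t,\dot\Delta_t\rangle\le 2\|\Delta_t\|\,\|\dot\Delta_t\|$ to legitimately differentiate the norm, one gets $\tfrac{\mathrm{d}}{\mathrm{d}t}\|\Delta_t\|\le\|\boldsymbol{v}^*(X_t,t)-\widehat{\boldsymbol{v}}(X_t,t)\|+\gamma_{\boldsymbol{x}}\|\Delta_t\|$, and Gr\"onwall's inequality (with $\Delta_0=0$ and $T-s\le T<1$) yields $\|\Delta_T\|\le e^{\gamma_{\boldsymbol{x}}}\int_0^T\|\boldsymbol{v}^*(X_s(\boldsymbol{x}),s)-\widehat{\boldsymbol{v}}(X_s(\boldsymbol{x}),s)\|\,\mathrm{d}s$.

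Next I would square, apply Cauchy--Schwarz in $s$ so that $\|\Delta_T\|^2\le e^{2\gamma_{\boldsymbol{x}}}T\int_0^T\|\boldsymbol{v}^*(X_s,s)-\widehat{\boldsymbol{v}}(X_s,s)\|^2\,\mathrm{d}s$, and take expectation over $\boldsymbol{x}\sim\pi_0$; since $X_s(\boldsymbol{x})\sim\pi_s$, the integrand becomes $\|\boldsymbol{v}^*(\cdot,s)-\widehat{\boldsymbol{v}}(\cdot,s)\|_{L^2(\pi_s)}^2$, hence $W_2(\pi_T,\widetilde{\pi}_T)^2\le e^{2\gamma_{\boldsymbol{x}}}T^2\big(\mathcal{L}(\widehat{\boldsymbol{v}})-\mathcal{L}(\boldsymbol{v}^*)\big)$ by Lemma \ref{lemma:gen 2}. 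Taking $\mathbb{E}_{\mathcal{X}}$, Jensen's inequality together with Theorem \ref{theorem: gen 1} and $T\le 1$ gives $\mathbb{E}_{\mathcal{X}}[W_2(\pi_T,\widetilde{\pi}_T)]\le e^{\gamma_{\boldsymbol{x}}}\big(\mathbb{E}_{\mathcal{X}}[\mathcal{L}(\widehat{\boldsymbol{v}})-\mathcal{L}(\boldsymbol{v}^*)]\big)^{1/2}=\widetilde{\mathcal{O}}\big(e^{\gamma_{\boldsymbol{x}}}(1-T)^{-(3d+5)/2}n^{-1/(d+3)}\big)$, as claimed.

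The only real subtlety — and the point that makes the argument work — is the choice of decomposition: one must route through $\widehat{\boldsymbol{v}}(X_t,t)$ so that (i) the velocity error is measured in $L^2(\pi_t)$, precisely the quantity controlled in Theorem \ref{theorem: gen 1}, and (ii) the trajectory mismatch is handled by the uniformly controlled constant $\gamma_{\boldsymbol{x}}=\mathcal{O}((1-T)^{-3})$ rather than the Lipschitz constant of $\boldsymbol{v}^*$, which diverges as $t\to 1$. The remaining care (well-posedness of the true flow on $[0,T]$, differentiability of $\|\Delta_t\|$) is routine given $T<1$.
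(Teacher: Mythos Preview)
Your proposal is correct and follows essentially the same approach as the paper: the same shared-initial-condition coupling, the same decomposition routing through $\widehat{\boldsymbol{v}}(X_t,t)$ so that the velocity error lands in $L^2(\pi_t)$ while the trajectory mismatch is absorbed by the network's Lipschitz constant $\gamma_{\boldsymbol{x}}$, followed by Gr\"onwall and an appeal to Theorem~\ref{theorem: gen 1} via Lemma~\ref{lemma:gen 2} and Jensen. The only cosmetic difference is that the paper applies Gr\"onwall to the expected squared error $R_t=\int\|\Delta_t\|^2\pi_0(\boldsymbol{x})\,\mathrm{d}\boldsymbol{x}$ directly, whereas you run a pathwise Gr\"onwall on $\|\Delta_t\|$ and then square and average; both yield the same $e^{\mathcal{O}(\gamma_{\boldsymbol{x}})}$ prefactor and the same final bound.
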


\begin{proof}
Since $X_t(\boldsymbol{x})$ and $\widetilde{X}_t(\boldsymbol{x})$ form a coupling of $\pi_t$ and $\widetilde{\pi}_t$, by the definition of Wasserstein-2 distance, we have
\begin{align*}
W_2^2\left(\pi_t, \widetilde{\pi}_t\right) \leq \int_{R^d}\left\|X_t(\boldsymbol{x})-\widetilde{X}_t(\boldsymbol{x})\right\|^2 \pi_0(\boldsymbol{x}) \mathrm{d} \boldsymbol{x},
\end{align*}
where $X_t$ is the flow map solution of (\ref{eq: dis 1}) with the exact $\boldsymbol{v}^*$ defined in (\ref{eq:gen 13}) and $\widetilde{X}_t$ is the flow map solution of (\ref{eq: dis 2}). Now, we consider the evolution of
\begin{align*}
R_t:=\int_{\mathbb{R}^d}\left\|X_t(\boldsymbol{x})-\widetilde{X}_t(\boldsymbol{x})\right\|^2 \pi_0(\boldsymbol{x}) \mathrm{d} \boldsymbol{x}.
\end{align*}
Differentiating on both sides, we get
\begin{align}
\label{eq: dis 3}
\begin{aligned}
& \frac{\mathrm{d} R_t}{\mathrm{d} t} =2 \int_{\mathbb{R}^d}\left\langle\boldsymbol{v}^*\left(X_t(\boldsymbol{x}), t\right)-\widehat{\boldsymbol{v}}(\widetilde{X}_t(\boldsymbol{x}), t), X_t(\boldsymbol{x})-\widetilde{X}_t(\boldsymbol{x})\right\rangle \pi_0(\boldsymbol{x}) \mathrm{d} \boldsymbol{x} \\
& =2 \int_{\mathbb{R}^d}\left\langle\boldsymbol{v}^*\left(X_t(\boldsymbol{x}), t\right)-\widehat{\boldsymbol{v}}\left(X_t(\boldsymbol{x}), t\right)+\widehat{\boldsymbol{v}}\left(X_t(\boldsymbol{x}), t\right)-\widehat{\boldsymbol{v}}(\widetilde{X}_t(\boldsymbol{x}), t), X_t(\boldsymbol{x})-\widetilde{X}_t(\boldsymbol{x})\right\rangle \pi_0(\boldsymbol{x}) \mathrm{d} \boldsymbol{x}.
\end{aligned}
\end{align}
Using the inequality $2\langle a, b\rangle \leq\|a\|^2+\|b\|^2$, we have
\begin{align}
\label{eq: dis 4}
\begin{aligned}
&2\left\langle\boldsymbol{v}^*\left(X_t(\boldsymbol{x}), t\right)-\widehat{\boldsymbol{v}}\left(X_t(\boldsymbol{x}), t\right), X_t(\boldsymbol{x})-\widetilde{X}_t(\boldsymbol{x})\right\rangle \\
&\leq \left\|\boldsymbol{v}^*\left(X_t(\boldsymbol{x}), t\right)-\widehat{\boldsymbol{v}}\left(X_t(\boldsymbol{x}), t\right)\right\|^2+\|X_t(\boldsymbol{x})-\widetilde{X}_t(\boldsymbol{x})\|^2.
\end{aligned}
\end{align}
Since $\widehat{\boldsymbol{v}} \in \mathcal{T}$ defined in Theorem \ref{theorem: gen 1} is $\gamma_{\boldsymbol{x}}$-Lipschitz continuous w.r.t. $\boldsymbol{x}$, the Cauchy-Schwartz inequality implies
\begin{align}
\label{eq: dis 5}
2\left\langle\widehat{\boldsymbol{v}}\left(X_t(\boldsymbol{x}), t\right)-\widehat{\boldsymbol{v}}(\widetilde{X}_t(\boldsymbol{x}), t), X_t(\boldsymbol{x})-\widetilde{X}_t(\boldsymbol{x})\right\rangle \leq 2 \gamma_{\boldsymbol{x}} \|X_t(\boldsymbol{x})-\widetilde{X}_t(\boldsymbol{x})\|^2.
\end{align}
Combining (\ref{eq: dis 3}), (\ref{eq: dis 4}) and (\ref{eq: dis 5}), we obtain
\begin{align*}
\frac{\mathrm{d} R_t}{\mathrm{d} t} \leq\left(1+2 \gamma_{\boldsymbol{x}}\right) R_t+\int_{\mathbb{R}^d}\left\|\boldsymbol{v}^*\left(X_t(\boldsymbol{x}), t\right)-\widehat{\boldsymbol{v}}\left(X_t(\boldsymbol{x}), t\right)\right\|^2 \pi_0(\boldsymbol{x}) \mathrm{d} \boldsymbol{x}.
\end{align*}
Therefore, by Lemma \ref{lemma: dis 1} and since $R_0=0$, we deduce
\begin{align*}
\begin{aligned}
R_T & \leq e^{1+2 \gamma_{\boldsymbol{x}}} \int_0^T \int_{\mathbb{R}^d}\left\|\boldsymbol{v}^*\left(X_t(\boldsymbol{x}), t\right)-\widehat{\boldsymbol{v}}\left(X_t(\boldsymbol{x}), t\right)\right\|^2 \pi_0(\boldsymbol{x}) \mathrm{d} \boldsymbol{x} \mathrm{d} t \\
& =e^{1+2 \gamma_{\boldsymbol{x}}} \int_0^T\left\|\boldsymbol{v}^*(\cdot, t)-\widehat{\boldsymbol{v}}(\cdot, t)\right\|_{L^2\left(\pi_t\right)}^2 \mathrm{d} t.
\end{aligned}
\end{align*}
By Theorem \ref{theorem: gen 1} and Jensen's inequality, we get the desired result.
\end{proof}

\subsection{Discretization Error} \label{appendix: dis.2}
Now we consider the gap between estimated continuous flow and its discretization:
\begin{align*}
\begin{aligned}
& \mathrm{d} \widetilde{X}_t(\boldsymbol{x})=\widehat{\boldsymbol{v}}(\widetilde{X}_t(\boldsymbol{x}), t) \mathrm{d} t,\  \widetilde{X}_0(\boldsymbol{x})=\boldsymbol{x} \sim \pi_0,\  0 \leq t \leq T, \\
& \mathrm{d} \widehat{X}_t(\boldsymbol{x})=\widehat{\boldsymbol{v}}(\widehat{X}_{t_k}(\boldsymbol{x}), t_k) \mathrm{d} t,\  t_k \leq t \leq t_{k+1},\  k=0,1, \ldots, N-1,\  \widehat{X}_0(\boldsymbol{x})=\boldsymbol{x} \sim \pi_0.
\end{aligned}
\end{align*}
Denote the distribution of $\widetilde{X}_t(\boldsymbol{x})$ and $\widehat{X}_t(\boldsymbol{x})$ by $\widetilde{\pi}_t$ and $\widehat{\pi}_t$, respectively.

\begin{lemma}
\label{lemma: dis 2}
Let $0=t_0<t_1<\cdots<t_N=T$ be the discretization points. For any neural network $\widehat{\boldsymbol{v}}$ in $\mathcal{T}$ defined in (\ref{eq: def tao}), we have
\begin{align*}
W_2\left(\widetilde{\pi}_T, \widehat{\pi}_T\right)=\mathcal{O}\left(e^{\gamma_{\boldsymbol{x}}} (\gamma_{\boldsymbol{x}} B + \gamma_t) \sqrt{\sum_{k=0}^{N-1}\left(t_{k+1}-t_k\right)^3}\right).
\end{align*}
\end{lemma}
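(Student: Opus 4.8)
The plan is to construct the explicit synchronous coupling of $\widetilde{\pi}_t$ and $\widehat{\pi}_t$ by flowing both dynamics from a common starting point $\boldsymbol{x} \sim \pi_0$, so that $(\widetilde{X}_t(\boldsymbol{x}), \widehat{X}_t(\boldsymbol{x}))$ is a valid coupling and
\[
W_2^2(\widetilde{\pi}_t, \widehat{\pi}_t) \le R_t := \int_{\mathbb{R}^d} \|\widetilde{X}_t(\boldsymbol{x}) - \widehat{X}_t(\boldsymbol{x})\|^2 \pi_0(\boldsymbol{x}) \, \mathrm{d}\boldsymbol{x},
\]
and then to run a Grönwall argument on $R_t$ along the lines of the proof of Proposition \ref{pro: dis 1}. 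Since $\widehat{\boldsymbol{v}}$ is a fixed element of $\mathcal{T}$ here, this is a purely deterministic estimate; the only inputs are the two Lipschitz bounds $\gamma_{\boldsymbol{x}}, \gamma_t$ built into (\ref{eq: def tao}) and the uniform bound $\sup\|\widehat{\boldsymbol{v}}\| \le B$.

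First I would differentiate $R_t$. On each subinterval $[t_k, t_{k+1})$ we have $\dot{\widetilde{X}}_t = \widehat{\boldsymbol{v}}(\widetilde{X}_t, t)$ and $\dot{\widehat{X}}_t = \widehat{\boldsymbol{v}}(\widehat{X}_{t_k}, t_k)$, and I would split the velocity increment into three telescoping pieces,
\[
\widehat{\boldsymbol{v}}(\widetilde{X}_t, t) - \widehat{\boldsymbol{v}}(\widehat{X}_{t_k}, t_k) = \bigl[\widehat{\boldsymbol{v}}(\widetilde{X}_t, t) - \widehat{\boldsymbol{v}}(\widehat{X}_t, t)\bigr] + \bigl[\widehat{\boldsymbol{v}}(\widehat{X}_t, t) - \widehat{\boldsymbol{v}}(\widehat{X}_t, t_k)\bigr] + \bigl[\widehat{\boldsymbol{v}}(\widehat{X}_t, t_k) - \widehat{\boldsymbol{v}}(\widehat{X}_{t_k}, t_k)\bigr].
\]
The first piece, paired with $\widetilde{X}_t - \widehat{X}_t$, contributes at most $\gamma_{\boldsymbol{x}}\|\widetilde{X}_t - \widehat{X}_t\|^2$ by Cauchy--Schwarz and spatial Lipschitzness. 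The second piece has norm at most $\gamma_t(t - t_k)$ by temporal Lipschitzness. The third piece has norm at most $\gamma_{\boldsymbol{x}}\|\widehat{X}_t - \widehat{X}_{t_k}\|$, and since $\widehat{X}$ is piecewise linear on $[t_k, t_{k+1})$ with slope $\widehat{\boldsymbol{v}}(\widehat{X}_{t_k}, t_k)$ of norm $\le B$, we get $\|\widehat{X}_t - \widehat{X}_{t_k}\| \le B(t - t_k)$, hence this piece has norm at most $\gamma_{\boldsymbol{x}} B (t - t_k)$. Integrating the pointwise inequality against $\pi_0$ and using Jensen/Cauchy--Schwarz ($\int \|\widetilde{X}_t - \widehat{X}_t\| \pi_0 \le \sqrt{R_t}$) yields, for $t \in [t_k, t_{k+1})$,
\[
\frac{\mathrm{d} R_t}{\mathrm{d} t} \le 2\gamma_{\boldsymbol{x}} R_t + 2(\gamma_{\boldsymbol{x}} B + \gamma_t)(t - t_k)\sqrt{R_t} \le (2\gamma_{\boldsymbol{x}} + 1) R_t + (\gamma_{\boldsymbol{x}} B + \gamma_t)^2 (t - t_k)^2,
\]
the last step by Young's inequality, which turns this into a linear differential inequality in $R_t$ itself (so no care about differentiability of $\sqrt{R_t}$ near its zeros is needed, and the bound holds a.e.\ on $[0,T]$ after replacing $t_k$ by $t_{k(t)}$, the left endpoint of the subinterval containing $t$).

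Finally I would apply the Grönwall lemma (Lemma \ref{lemma: dis 1}) with $R_0 = 0$: setting $g(s) = (\gamma_{\boldsymbol{x}} B + \gamma_t)^2 (s - t_{k(s)})^2$,
\[
R_T \le e^{(2\gamma_{\boldsymbol{x}} + 1) T} \int_0^T g(s) \, \mathrm{d}s = e^{(2\gamma_{\boldsymbol{x}} + 1) T} (\gamma_{\boldsymbol{x}} B + \gamma_t)^2 \sum_{k=0}^{N-1} \frac{(t_{k+1} - t_k)^3}{3},
\]
and taking square roots, together with $T < 1$ so that $e^{(\gamma_{\boldsymbol{x}} + 1/2) T} = \mathcal{O}(e^{\gamma_{\boldsymbol{x}}})$, gives $W_2(\widetilde{\pi}_T, \widehat{\pi}_T) \le \sqrt{R_T} = \mathcal{O}\bigl(e^{\gamma_{\boldsymbol{x}}} (\gamma_{\boldsymbol{x}} B + \gamma_t) \sqrt{\sum_{k=0}^{N-1} (t_{k+1} - t_k)^3}\bigr)$, as claimed. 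There is no serious obstacle here; the two points requiring attention are bookkeeping the three-term split of the velocity increment correctly, and invoking $\|\widehat{\boldsymbol{v}}\| \le B$ (valid because the input reparametrization in (\ref{eq: def tao}) leaves the range, hence $B$, unchanged) to control the one-step drift $\|\widehat{X}_t - \widehat{X}_{t_k}\|$. A slightly weaker $\sum_k (t_{k+1}-t_k)^2$ bound would also follow by keeping the $\sqrt{R_t}$ term and applying Grönwall to $\sqrt{R_t}$ directly, but the Young's-inequality route above lands exactly on the stated $\sqrt{\sum_k (t_{k+1}-t_k)^3}$.
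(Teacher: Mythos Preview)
Your proposal is correct and follows essentially the same approach as the paper: the same synchronous coupling, the same three-term telescoping split of the velocity increment (your ordering swaps the temporal and spatial pieces, which is immaterial), and the same Gr\"onwall argument to land on $\sum_k (t_{k+1}-t_k)^3$. The only cosmetic differences are that the paper applies Young's inequality separately to the two $(t-t_k)$ terms (yielding $\gamma_{\boldsymbol{x}}^2 B^2 + \gamma_t^2$ and exponent $2\gamma_{\boldsymbol{x}}+2$) whereas you combine them first (yielding $(\gamma_{\boldsymbol{x}} B + \gamma_t)^2$ and exponent $2\gamma_{\boldsymbol{x}}+1$), and the paper sums the Gr\"onwall estimate piecewise over subintervals rather than globally; both routes give the same $\mathcal{O}$-bound.
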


\begin{proof}
By the same argument as in the proof of Proposition \ref{pro: dis 1}, we have
\begin{align*}
W_2^2\left(\widetilde{\pi}_t, \widehat{\pi}_t\right) \leq \int_{\mathbb{R}^d}\left\|\widetilde{X}_t(\boldsymbol{x})-\widehat{X}_t(\boldsymbol{x})\right\|^2 \pi_0(\boldsymbol{x}) \mathrm{d}  \boldsymbol{x}.
\end{align*}
Now, we consider the evolution of
\begin{align*}
L_t:=\int_{\mathbb{R}^d}\left\|\widetilde{X}_t(\boldsymbol{x})-\widehat{X}_t(\boldsymbol{x})\right\|^2 \pi_0(\boldsymbol{x}) \mathrm{d} \boldsymbol{x}.
\end{align*}
Since $\widehat{X}_t(\boldsymbol{x})$ is piece-wise linear, we consider the evolution of $L_t$ on each split interval $\left[t_k, t_{k+1}\right]$. On interval $\left[t_k, t_{k+1}\right]$, we have
\begin{align}
\frac{\mathrm{d} L_t}{\mathrm{~d} t}= & \int_{\mathbb{R}^d} 2\left\langle\widehat{\boldsymbol{v}}(\widetilde{X}_t(\boldsymbol{x}), t)-\widehat{\boldsymbol{v}}(\widehat{X}_{t_k}(\boldsymbol{x}), t_k), \widetilde{X}_t(\boldsymbol{x})-\widehat{X}_t(\boldsymbol{x})\right\rangle \pi_0(\boldsymbol{x}) \mathrm{d} \boldsymbol{x} \nonumber \\
= & \int_{\mathbb{R}^d} 2\left\langle\widehat{\boldsymbol{v}}(\widetilde{X}_t(\boldsymbol{x}), t)-\widehat{\boldsymbol{v}}(\widehat{X}_t(\boldsymbol{x}), t), \widetilde{X}_t(\boldsymbol{x})-\widehat{X}_t(\boldsymbol{x})\right\rangle \pi_0(\boldsymbol{x}) \mathrm{d} \boldsymbol{x} \label{eq: dis 6} \\ 
& +\int_{\mathbb{R}^d} 2\left\langle\widehat{\boldsymbol{v}}(\widehat{X}_t(\boldsymbol{x}), t)-\widehat{\boldsymbol{v}}(\widehat{X}_{t_k}(\boldsymbol{x}), t), \widetilde{X}_t(\boldsymbol{x})-\widehat{X}_t(\boldsymbol{x})\right\rangle \pi_0(\boldsymbol{x}) \mathrm{d} \boldsymbol{x} \label{eq: dis 7} \\ 
& +\int_{\mathbb{R}^d} 2\left\langle\widehat{\boldsymbol{v}}(\widehat{X}_{t_k}(\boldsymbol{x}), t)-\widehat{\boldsymbol{v}}(\widehat{X}_{t_k}(\boldsymbol{x}), t_k), \widetilde{X}_t(\boldsymbol{x})-\widehat{X}_t(\boldsymbol{x})\right\rangle \pi_0(\boldsymbol{x}) \mathrm{d} \boldsymbol{x}. \label{eq: dis 8}
\end{align}
For (\ref{eq: dis 6}), by Cauchy-Schwartz inequality and the fact that $\widehat{\boldsymbol{v}}$ is $\gamma_{\boldsymbol{x}}$-Lipschitz continuous w.r.t. $\boldsymbol{x}$, we get
\begin{align}
\begin{aligned}
\label{eq: dis 9}
& \int_{\mathbb{R}^d} 2\left\langle\widehat{\boldsymbol{v}}(\widetilde{X}_t(\boldsymbol{x}), t)-\widehat{\boldsymbol{v}}(\widehat{X}_t(\boldsymbol{x}), t), \widetilde{X}_t(\boldsymbol{x})-\widehat{X}_t(\boldsymbol{x})\right\rangle \pi_0(\boldsymbol{x}) \mathrm{d} \boldsymbol{x} \\
& \leq 2 \gamma_{\boldsymbol{x}} \int_{R^d}\left\|\widetilde{X}_t(\boldsymbol{x})-\widehat{X}_t(\boldsymbol{x})\right\|^2 \pi_0(\boldsymbol{x}) \mathrm{d} \boldsymbol{x}.
\end{aligned}
\end{align}
For (\ref{eq: dis 7}), note that $\widehat{X}_t(\boldsymbol{x})=\widehat{X}_{t_k}(\boldsymbol{x})+\left(t-t_k\right) \widehat{\boldsymbol{v}}(\widehat{X}_{t_k}(\boldsymbol{x}), t_k)$, we use the inequality $2\langle a, b\rangle \leq\|a\|^2+\|b\|^2$ and the fact that $\widehat{\boldsymbol{v}}$ is $\gamma_{\boldsymbol{x}}$-Lipschitz continuous w.r.t. $\boldsymbol{x}$ to get
\begin{align}
\label{eq: dis 10}
\begin{aligned}
& \int_{\mathbb{R}^d} 2\left\langle\widehat{\boldsymbol{v}}(\widehat{X}_t(\boldsymbol{x}), t)-\widehat{\boldsymbol{v}}(\widehat{X}_{t_k}(\boldsymbol{x}), t), \widetilde{X}_t(\boldsymbol{x})-\widehat{X}_t(\boldsymbol{x})\right\rangle \pi_0(\boldsymbol{x}) \mathrm{d} \boldsymbol{x} \\
\leq & \int_{\mathbb{R}^d}\left\|\widehat{\boldsymbol{v}}(\widehat{X}_t(\boldsymbol{x}), t)-\widehat{\boldsymbol{v}}(\widehat{X}_{t_k}(\boldsymbol{x}), t)\right\|^2 \pi_0(\boldsymbol{x}) \mathrm{d} \boldsymbol{x}+\int_{\mathbb{R}^d}\left\|\widetilde{X}_t(\boldsymbol{x})-\widehat{X}_t(\boldsymbol{x})\right\|^2 \pi_0(\boldsymbol{x}) \mathrm{d} \boldsymbol{x} \\
\leq & \gamma_{\boldsymbol{x}}^2\left(t-t_k\right)^2\|\widehat{\boldsymbol{v}}\|_{L^{\infty}}^2+L_t \\
\leq & \gamma_{\boldsymbol{x}}^2\left(t-t_k\right)^2 B^2+L_t,
\end{aligned}
\end{align}
where $B$ is the parameter of the neural networks in (\ref{eq: def tao}). For (\ref{eq: dis 8}), the fact that $\widehat{\boldsymbol{v}}$ is $\gamma_t$-Lipschitz continuous w.r.t. $t$ implies
\begin{align}
\label{eq: dis 11}
\begin{aligned}
& \int_{\mathbb{R}^d} 2\left\langle\widehat{\boldsymbol{v}}(\widehat{X}_{t_k}(\boldsymbol{x}), t)-\widehat{\boldsymbol{v}}(\widehat{X}_{t_k}(\boldsymbol{x}), t_k), \widetilde{X}_t(\boldsymbol{x})-\widehat{X}_t(\boldsymbol{x})\right\rangle \pi_0(\boldsymbol{x}) \mathrm{d} \boldsymbol{x} \\
\leq & \int_{\mathbb{R}^d}\left\|\widetilde{X}_t(\boldsymbol{x})-\widehat{X}_t(\boldsymbol{x})\right\|^2 \pi_0(\boldsymbol{x}) \mathrm{d} \boldsymbol{x} + \gamma_t^2\left(t-t_k\right)^2.
\end{aligned}
\end{align}
Combining (\ref{eq: dis 9}), (\ref{eq: dis 10}) and (\ref{eq: dis 11}), we obtain
\begin{align*}
\frac{\mathrm{d} L_t}{\mathrm{d} t} \leq \left(2 \gamma_{\boldsymbol{x}}+2\right) L_t + \left(\gamma_{\boldsymbol{x}}^2 B^2 + \gamma_t^2\right)\left(t-t_k\right)^2, \quad \text{ on }\left[t_k, t_{k+1}\right].
\end{align*}
Again, by Lemma \ref{lemma: dis 1}, we obtain
\begin{align*}
e^{-\left(2 \gamma_{\boldsymbol{x}}+2\right) t_{k+1}} L_{t_{k+1}}-e^{-\left(2 \gamma_{\boldsymbol{x}}+2\right) t_k} L_{t_k} \leq \frac{1}{3}\left(\gamma_{\boldsymbol{x}}^2 B^2 + \gamma_t^2\right)\left(t_{k+1}-t_k\right)^3.
\end{align*}
Summing over $k$ and noting that $t_N=T$, we get
\begin{align*}
L_T \leq \frac{1}{3} e^{2\left(\gamma_{\boldsymbol{x}}+1\right) T}\left(\gamma_{\boldsymbol{x}}^2 B^2 + \gamma_t^2\right) \sum_{k=0}^{N-1}\left(t_{k+1}-t_k\right)^3.
\end{align*}
Thus, we have
\begin{align*}
W_2\left(\widetilde{\pi}_T, \widehat{\pi}_T\right)=\mathcal{O}\left(e^{\gamma_{\boldsymbol{x}}} (\gamma_{\boldsymbol{x}} B + \gamma_t) \sqrt{\sum_{k=0}^{N-1}\left(t_{k+1}-t_k\right)^3}\right).
\end{align*}
\end{proof}

\begin{lemma}
\label{lemma: dis 3}
Suppose Assumption \ref{ass: bounded support gamma} holds, we have
\begin{align*}
W_2\left(\pi_T, \pi_1\right) = \mathcal{O} (1-T).
\end{align*}
\end{lemma}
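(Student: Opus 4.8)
The plan is to represent $W_2(\pi_T,\pi_1)$ through the true flow and to estimate the velocity field along it. Recall that $X_t$ solving (\ref{eq: dis 1}) transports $\pi_0$ to $\pi_t$, and by the flow-matching construction $\pi_t$ is also the law of the interpolant $X_t = tX_1+\sqrt{1-t^2}\,X_0$ with $X_1\sim\pi_1$, $X_0\sim\mathcal{N}(0,I_d)$ independent. Since $(X_T(\cdot),X_1(\cdot))_{\#}\pi_0$ is an admissible coupling of $\pi_T$ and $\pi_1$ and $X_1(\boldsymbol{x})-X_T(\boldsymbol{x})=\int_T^1\boldsymbol{v}^*(X_s(\boldsymbol{x}),s)\,\mathrm{d}s$, Minkowski's inequality in $L^2(\pi_0)$ gives
\begin{align*}
W_2(\pi_T,\pi_1)\ \le\ \Big(\mathbb{E}_{\boldsymbol{x}\sim\pi_0}\big\|X_1(\boldsymbol{x})-X_T(\boldsymbol{x})\big\|^2\Big)^{1/2}\ \le\ \int_T^1\big\|\boldsymbol{v}^*(\cdot,s)\big\|_{L^2(\pi_s)}\,\mathrm{d}s .
\end{align*}
Thus it suffices to bound $\|\boldsymbol{v}^*(\cdot,s)\|_{L^2(\pi_s)}$ and integrate; if this norm is $\mathcal{O}(1)$ uniformly for $s\in[\tfrac12,1)$ we obtain $W_2(\pi_T,\pi_1)=\mathcal{O}(1-T)$.

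For the velocity estimate I would use the identity $\boldsymbol{v}^*(\boldsymbol{x},s)=(1-s^2)^{-1}\big(\mathbb{E}[X_1\mid X_s=\boldsymbol{x}]-s\boldsymbol{x}\big)$, obtained by rewriting (\ref{eq:gen 13}) via $X_0=(X_s-sX_1)/\sqrt{1-s^2}$. Expanding $\|\boldsymbol{v}^*(\cdot,s)\|_{L^2(\pi_s)}^2=(1-s^2)^{-2}\,\mathbb{E}\big\|\mathbb{E}[X_1\mid X_s]-sX_s\big\|^2$ and using the tower property, one sees that the crude bound $\mathbb{E}\|\mathbb{E}[X_1\mid X_s]\|^2\le\mathbb{E}\|X_1\|^2$ only yields $\|\boldsymbol{v}^*(\cdot,s)\|_{L^2(\pi_s)}\lesssim(1-s)^{-1/2}$, which integrates to the weaker $\mathcal{O}(\sqrt{1-T})$. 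To get the sharp rate one must exploit that $\pi_s$ is $\pi_1$ (rescaled by $s$) convolved with a Gaussian of covariance $\tfrac{1-s^2}{s^2}I_d$, so the conditional covariance of $X_1$ given $X_s$ equals $(1-s^2)I_d+O\big((1-s^2)^2\big)$ in trace; then $\mathbb{E}\|\mathbb{E}[X_1\mid X_s]\|^2=\mathbb{E}\|X_1\|^2-(1-s^2)d+O\big((1-s^2)^2\big)$ and the apparent blow-up cancels exactly, leaving $\|\boldsymbol{v}^*(\cdot,s)\|_{L^2(\pi_s)}=\mathcal{O}(1)$. This is precisely where the bounded-support hypothesis (Assumption~\ref{ass: bounded support gamma}) and the regularity of $\boldsymbol{v}^*$ established in Lemmas~\ref{lemma: true 3}--\ref{lemma: true 5} are needed, and I would borrow those estimates directly.

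As a cheap consistency check one can also argue by the direct coupling: $\pi_T$ is the law of $TX_1+\sqrt{1-T^2}\,X_0$, so $W_2^2(\pi_T,\pi_1)\le\mathbb{E}\|(T-1)X_1+\sqrt{1-T^2}\,X_0\|^2=(1-T)^2\mathbb{E}\|X_1\|^2+(1-T^2)\mathbb{E}\|X_0\|^2$, the cross term vanishing since $\mathbb{E}X_0=0$; with $\mathbb{E}\|X_1\|^2\le d$ (as $\mathrm{supp}(\pi_1)\subseteq[0,1]^d$) and $1-T^2\le2(1-T)$ this reconfirms $W_2(\pi_T,\pi_1)=\mathcal{O}(\sqrt{1-T})\to0$. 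The main obstacle is therefore the uniform-in-$s$ control $\sup_{s\in[1/2,1)}\|\boldsymbol{v}^*(\cdot,s)\|_{L^2(\pi_s)}<\infty$, without which the velocity genuinely degenerates like $(1-s)^{-1/2}$ in $L^2(\pi_s)$; everything else (the coupling, the Minkowski step, the elementary second-moment and $1-s^2\le2(1-s)$ bounds, and the final integration $\int_T^1\mathcal{O}(1)\,\mathrm{d}s$) is routine once that estimate is in hand.
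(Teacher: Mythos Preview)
Your ``consistency check'' paragraph is essentially the paper's own argument: couple $\pi_T$ and $\pi_1$ via the interpolant $(X_T,X_1)=(TX_1+\sqrt{1-T^2}\,X_0,\,X_1)$ and bound the second moment of the difference. The paper uses $\|a+b\|^2\le 2\|a\|^2+2\|b\|^2$ where you exploit $\mathbb{E}X_0=0$ to kill the cross term, but the outcome is the same: both arguments only deliver $W_2(\pi_T,\pi_1)=\mathcal{O}(\sqrt{1-T})$, not the $\mathcal{O}(1-T)$ stated in the lemma. The paper's claimed rate appears to be a slip; the $\sqrt{1-T}$ bound is in fact sharp (see below) and is all that is needed downstream, since in Theorem~\ref{theorem: consistency in latent space} only $W_2(\pi_T,\pi_1)\to 0$ as $T\to 1$ is used.

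Your main route via $\int_T^1\|\boldsymbol{v}^*(\cdot,s)\|_{L^2(\pi_s)}\,\mathrm{d}s$ has a genuine gap: the uniform bound $\sup_{s\in[1/2,1)}\|\boldsymbol{v}^*(\cdot,s)\|_{L^2(\pi_s)}<\infty$ is false under Assumption~\ref{ass: bounded support gamma} alone. Take $\pi_1=\delta_{\boldsymbol{a}}$ for some $\boldsymbol{a}\in[0,1]^d$. Then $\mathbb{E}[X_1\mid X_s]=\boldsymbol{a}$ and a direct computation gives $\boldsymbol{v}^*(X_s,s)=\boldsymbol{a}-\tfrac{s}{\sqrt{1-s^2}}X_0$, so $\|\boldsymbol{v}^*(\cdot,s)\|_{L^2(\pi_s)}^2=\|\boldsymbol{a}\|^2+\tfrac{s^2}{1-s^2}\,d\to\infty$ as $s\to 1$. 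In the same example $W_2^2(\pi_T,\pi_1)=(1-T)^2\|\boldsymbol{a}\|^2+(1-T^2)d$, so $W_2(\pi_T,\pi_1)\asymp\sqrt{1-T}$ and the rate $\mathcal{O}(1-T)$ is simply unattainable in general. Two specific pieces of your argument break: (i) the heuristic ``the conditional covariance of $X_1$ given $X_s$ equals $(1-s^2)I_d+O((1-s^2)^2)$'' is a Gaussian-only identity and fails for compactly supported $\pi_1$; (ii) Lemmas~\ref{lemma: true 3}--\ref{lemma: true 5} cannot be ``borrowed'' here, because their bounds carry factors of $(1-T)^{-k}$ and therefore give no uniform-in-$s$ control on $[1/2,1)$. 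Your crude estimate $\|\boldsymbol{v}^*(\cdot,s)\|_{L^2(\pi_s)}\lesssim(1-s)^{-1/2}$ is thus the correct order, and integrating it recovers exactly the $\mathcal{O}(\sqrt{1-T})$ that the direct coupling already gives.
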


\begin{proof}
We consider the error from early stopping. Note that $X_T$ and $X_1$ form a coupling of $\pi_T$ and $\pi_1$, by the definition of Wasserstein-2 distance, we obtain
\begin{align*}
W_2\left(\pi_T, \pi_1\right) &\leq \mathbb{E} [\|X_T-X_1\|^2]^{1 / 2} \\
&= \mathbb{E} [ \|(1-T^2)^{1/2} X_0 - (1-T)X_1\|^2]^{1 / 2} \\
&\leq \left(2(1-T^2)\mathbb{E} [\|X_0\|^2] + 2(1-T)^2 \mathbb{E}[\|X_1\|^2]\right)^{1 / 2}.
\end{align*}
Since we assume $\pi_1$ is supported on $[0,1]^d$ and $\mathbb{E}[\|X_0\|^2]=d$, we have $W_2\left(\pi_T, \pi_1\right) = \mathcal{O} (1-T)$.
\end{proof}

\begin{proof}[Proof of Theorem \ref{theorem: consistency in latent space}]
Combining Proposition \ref{pro: dis 1}, Lemma \ref{lemma: dis 2}, Lemma \ref{lemma: dis 3} and triangle inequality, we obtain
\begin{align*}
\mathbb{E}_{\mathcal{X}} [W_2(\widehat{\pi}_T, \pi_1)] = \widetilde{\mathcal{O}} \left((1-T) + e^{\gamma_{\boldsymbol{x}}} (\gamma_{\boldsymbol{x}} B + \gamma_t) \sqrt{\sum_{k=0}^{N-1}\left(t_{k+1}-t_k\right)^3} + 
e^{\gamma_{\boldsymbol{x}}} (1-T)^{-\frac{3d+5}{2}} n^{-\frac{1}{d+3}}\right).
\end{align*}
By the choice of neural network in Theorem \ref{theorem: gen 1}, we have $\gamma_{\boldsymbol{x}} \leq \frac{c_{14}}{(1-T)^3}, \gamma_{t} \leq \frac{c_{15} \sqrt{\log n}}{(1-T)^3}$. Letting $\max_{k=0,1 \ldots, N-1}\left|t_{k+1}-t_k\right|=\mathcal{O} (n^{-\frac{1}{d+3}}), T(n)=1-(\log n)^{-1/6}$ and omitting polynomials of logarithm, we obtain
\begin{align*}
\mathbb{E}_{\mathcal{X}} [W_2(\widehat{\pi}_T, \pi_1)] = \widetilde{\mathcal{O}}\left((\log n)^{-1/6} + e^{c_{14} \sqrt{\log n}} n^{-\frac{1}{d+3}}\right),
\end{align*}
which tends to $0$ as $n$ goes to infinity. 
\end{proof}

\subsection{Auxiliary lemma}
\begin{lemma}[Gr\"onwall's inequality]
\label{lemma: dis 1}
Given a function $f(t)$ defined on $[a, b] (a<b)$, satisfying $\frac{\mathrm{d} f(t)}{\mathrm{d} t} \leq$ $\alpha f(t)+g(t)$ on $[a, b]$ and $\alpha \geq 0$, we have
\begin{align*}
f(b) \leq e^{\alpha(b-a)} f(a)+\int_a^b e^{\alpha(b-t)} g(t) \mathrm{d} t.
\end{align*}
\end{lemma}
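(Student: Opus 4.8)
The plan is to use the classical integrating-factor argument. Since $\alpha \geq 0$ (and in any case $\alpha$ is a fixed constant), I would introduce the auxiliary function $h(t) := e^{-\alpha t} f(t)$ on $[a,b]$. First I would differentiate it: by the product rule and the hypothesis $f'(t) \leq \alpha f(t) + g(t)$,
\begin{align*}
h'(t) = e^{-\alpha t}\bigl(f'(t) - \alpha f(t)\bigr) \leq e^{-\alpha t}\, g(t),
\end{align*}
where the multiplication preserves the inequality because $e^{-\alpha t} > 0$ for all $t$.

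Next I would integrate this differential inequality over $[a,b]$. By the fundamental theorem of calculus,
\begin{align*}
h(b) - h(a) = \int_a^b h'(t)\, \mathrm{d} t \leq \int_a^b e^{-\alpha t}\, g(t)\, \mathrm{d} t,
\end{align*}
i.e. $e^{-\alpha b} f(b) - e^{-\alpha a} f(a) \leq \int_a^b e^{-\alpha t}\, g(t)\, \mathrm{d} t$. Multiplying both sides by $e^{\alpha b} > 0$ and rearranging gives
\begin{align*}
f(b) \leq e^{\alpha (b-a)} f(a) + \int_a^b e^{\alpha(b-t)}\, g(t)\, \mathrm{d} t,
\end{align*}
which is exactly the claimed bound.

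The argument is entirely elementary, and I do not expect any genuine obstacle. The only point worth a brief remark is the regularity under which the fundamental theorem of calculus is applied in the second display: it suffices that $f$ be absolutely continuous (so that $f(t) = f(a) + \int_a^t f'(s)\, \mathrm{d} s$) and $g$ be locally integrable. This is precisely the setting in which the lemma is invoked in the discretization analysis, where $f$ plays the role of $R_t$ or $L_t$ — a smooth (indeed piecewise-smooth, on each subinterval $[t_k,t_{k+1}]$) function of $t$ — and $g$ is continuous; in that regime $h$ inherits absolute continuity and every step above is valid verbatim.
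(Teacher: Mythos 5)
Your proof is correct and uses exactly the same integrating-factor argument as the paper: multiply by $e^{-\alpha t}$, recognize the left side as the derivative of $e^{-\alpha t}f(t)$, integrate over $[a,b]$, and multiply back by $e^{\alpha b}$. The brief remark on the regularity needed to apply the fundamental theorem of calculus is a welcome (if minor) addition beyond what the paper states.
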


\begin{proof}
By multiplying $e^{-\alpha t}$ on both sides of $\frac{\mathrm{d} f(t)}{\mathrm{d} t} \leq \alpha f(t)+g(t)$ and some manipulation of algebra, we obtain
\begin{align*}
e^{-\alpha t} \frac{\mathrm{d} f(t)}{\mathrm{d} t}-\alpha e^{-\alpha t} f(t) \leq e^{-\alpha t} g(t).
\end{align*}
Integrating on interval $[a, b]$ on both sides, we get
\begin{align*}
e^{-\alpha b} f(b)-e^{-\alpha a} f(a) \leq \int_a^b e^{\alpha(b-t)} g(t) \mathrm{d} t.
\end{align*}
This concludes the proof.
\end{proof}

\section{Compressibility Analysis} \label{appendix: autoencoder}

\begin{definition}[Rademacher complexity]
The Rademacher complexity of a set $A \subseteq \mathbb{R}^N$ is defined as
\begin{align*}
\mathfrak{R}_N(A)=\mathbb{E}_{\left\{\sigma_i\right\}_{k=1}^N}\left[\sup_{a \in A} \frac{1}{N} \sum_{k=1}^N \sigma_k a_k\right],
\end{align*}
where $\{\sigma_k\}_{k=1}^N$ are $N$ i.i.d Rademacher variables with $\mathbb{P} (\sigma_k=1)=\mathbb{P} (\sigma_k=-1)=\frac{1}{2}$. The Rademacher complexity of function class $\mathcal{F}$ associate with random sample $\{X_k\}_{k=1}^N$ is defined as
\begin{align*}
\mathfrak{R}_N(\mathcal{F})=\mathbb{E}_{\{X_k, \sigma_k\}_{k=1}^N} \left[\sup_{u \in \mathcal{F}} \frac{1}{N} \sum_{k=1}^N \sigma_k u\left(X_k\right)\right].
\end{align*}
\end{definition}

\begin{proof}[Proof of Lemma \ref{lemma: ae rate}]\\
$\bullet$ \textbf{ Step 1: error decomposition.}\\
For each $\boldsymbol{D}_\theta \in \mathcal{D}$ and $\boldsymbol{E}_\theta \in \mathcal{E}$, we have
\begin{align*}
&\mathcal{R} (\widehat{\boldsymbol{D}}, \widehat{\boldsymbol{E}}) - \mathcal{R} (\boldsymbol{D}^*, \boldsymbol{E}^*) \\
&= \mathcal{R} (\widehat{\boldsymbol{D}}, \widehat{\boldsymbol{E}}) - \widehat{\mathcal{R}} (\widehat{\boldsymbol{D}}, \widehat{\boldsymbol{E}}) + \widehat{\mathcal{R}} (\widehat{\boldsymbol{D}}, \widehat{\boldsymbol{E}}) - \widehat{\mathcal{R}} (\boldsymbol{D}_\theta, \boldsymbol{E}_\theta) + \widehat{\mathcal{R}} (\boldsymbol{D}_\theta, \boldsymbol{E}_\theta) - \mathcal{R} (\boldsymbol{D}_\theta, \boldsymbol{E}_\theta) \\
&~~~ + \mathcal{R} (\boldsymbol{D}_\theta, \boldsymbol{E}_\theta) - \mathcal{R} (\boldsymbol{D}^*, \boldsymbol{E}^*) \\
&\leq \sup_{\boldsymbol{D} \in \mathcal{D}, \boldsymbol{E} \in \mathcal{E}} \mathcal{R} (\boldsymbol{D}, \boldsymbol{E}) - \widehat{\mathcal{R}} (\boldsymbol{D}, \boldsymbol{E}) + \sup_{\boldsymbol{D} \in \mathcal{D}, \boldsymbol{E} \in \mathcal{E}} \widehat{\mathcal{R}} (\boldsymbol{D}, \boldsymbol{E}) - \mathcal{R} (\boldsymbol{D}, \boldsymbol{E}) \\
&~~~ + \mathcal{R} (\boldsymbol{D}_\theta, \boldsymbol{E}_\theta) - \mathcal{R} (\boldsymbol{D}^*, \boldsymbol{E}^*),
\end{align*}
where the inequality is due to the fact that $\widehat{\mathcal{R}} (\widehat{\boldsymbol{D}}, \widehat{\boldsymbol{E}}) \leq \widehat{\mathcal{R}} (\boldsymbol{D}_\theta, \boldsymbol{E}_\theta)$. Then taking infimum over $\boldsymbol{D}_\theta \in \mathcal{D}$ and $\boldsymbol{E}_\theta \in \mathcal{E}$ yields
\begin{align}
\label{eq: ae 1}
\begin{aligned}
&\mathcal{R} (\widehat{\boldsymbol{D}}, \widehat{\boldsymbol{E}}) - \mathcal{R} (\boldsymbol{D}^*, \boldsymbol{E}^*) \\
&\leq \sup_{\boldsymbol{D} \in \mathcal{D}, \boldsymbol{E} \in \mathcal{E}} \mathcal{R} (\boldsymbol{D}, \boldsymbol{E}) - \widehat{\mathcal{R}} (\boldsymbol{D}, \boldsymbol{E}) + \sup_{\boldsymbol{D} \in \mathcal{D}, \boldsymbol{E} \in \mathcal{E}} \widehat{\mathcal{R}} (\boldsymbol{D}, \boldsymbol{E}) - \mathcal{R} (\boldsymbol{D}, \boldsymbol{E}) \\
&~~~ + \inf_{\boldsymbol{D} \in \mathcal{D}, \boldsymbol{E} \in \mathcal{E}} \mathcal{R} (\boldsymbol{D}, \boldsymbol{E}) - \mathcal{R} (\boldsymbol{D}^*, \boldsymbol{E}^*).
\end{aligned}
\end{align}

$\bullet$ \textbf{Step 2: approximation error.}\\
Suppose Assumption \ref{ass: bounded support gammahat} and \ref{ass: compressibility} hold. In (\ref{eq: def encoder network}) and (\ref{eq: def decoder network}), we specify the encoder network architecture as
\begin{align}
\mathcal{E} = \mathcal{T}_{D, d}(N_{\boldsymbol{E}}, h_{\boldsymbol{E}}, d_{\boldsymbol{E}, k}, d_{\boldsymbol{E}, v}, d_{\boldsymbol{E}, ff}, B_{\boldsymbol{E}}, J_{\boldsymbol{E}}, \gamma_{\boldsymbol{E}})
\end{align}
and the decoder network architecture as
\begin{align}
\mathcal{D} = \mathcal{T}_{d, D}\left(N_{\boldsymbol{D}}, h_{\boldsymbol{D}}, d_{\boldsymbol{D}, k}, d_{\boldsymbol{D}, v}, d_{\boldsymbol{D}, ff}, B_{\boldsymbol{D}}, J_{\boldsymbol{D}}, \gamma_{\boldsymbol{D}}\right).
\end{align}
By Theorem \ref{corollary: app 1}, given any $0<\varepsilon<1$, there exists an encoder network $\boldsymbol{E} \in \mathcal{E}$ with configuration
\begin{align*}
\begin{gathered}
N_{\boldsymbol{E}} = \mathcal{O} \left( \log \left(\frac{K_{\boldsymbol{E}}}{\varepsilon}\right)\right), \quad
h_{\boldsymbol{E}} = \mathcal{O} \left(  \left( \frac{K_{\boldsymbol{E}}}{\varepsilon} \right)^{D} \right), \quad
d_{\boldsymbol{E}, ff} = 8 h_{\boldsymbol{E}}, \quad 
d_{\boldsymbol{E}, k} = \mathcal{O} \left(K_{\boldsymbol{E}}\right), \\
d_{\boldsymbol{E}, v} = \mathcal{O} \left(K_{\boldsymbol{E}}\right), \quad
B_{\boldsymbol{E}} = \mathcal{O} \left(K_{\boldsymbol{E}}\right), \quad
J_{\boldsymbol{E}} = \mathcal{O} \left(\left( \frac{K_{\boldsymbol{E}}}{\varepsilon} \right)^{D} \log \left(\frac{K_{\boldsymbol{E}}}{\varepsilon}\right) \right), \quad
\gamma_{\boldsymbol{E}} = \mathcal{O} \left(K_{\boldsymbol{E}}\right),
\end{gathered}
\end{align*}
such that 
\begin{align*}
\|\boldsymbol{E}(\boldsymbol{y})-\boldsymbol{E}^*(\boldsymbol{y})\|_{L^\infty([0,1]^D)} \leq \varepsilon.
\end{align*}
Moreover, there exists an decoder network $\boldsymbol{D} \in \mathcal{D}$ with configuration
\begin{align*}
\begin{gathered}
N_{\boldsymbol{D}} = \mathcal{O} \left( \log \left(\frac{K_{\boldsymbol{D}}}{\varepsilon}\right)\right), \quad
h_{\boldsymbol{D}} = \mathcal{O} \left(  \left( \frac{K_{\boldsymbol{D}}}{\varepsilon} \right)^{d} \right), \quad
d_{\boldsymbol{D}, ff} = 8 h_{\boldsymbol{D}}, \quad 
d_{\boldsymbol{D}, k} = \mathcal{O} \left(K_{\boldsymbol{D}}\right), \\
d_{\boldsymbol{D}, v} = \mathcal{O} \left(K_{\boldsymbol{D}}\right), \quad
B_{\boldsymbol{D}} = \mathcal{O} \left(K_{\boldsymbol{D}}\right), \quad 
J_{\boldsymbol{D}} = \mathcal{O} \left(\left( \frac{K_{\boldsymbol{D}}}{\varepsilon} \right)^{d} \log \left(\frac{K_{\boldsymbol{D}}}{\varepsilon}\right) \right), \quad
\gamma_{\boldsymbol{D}} = \mathcal{O} \left(K_{\boldsymbol{D}}\right),
\end{gathered}
\end{align*}
such that 
\begin{align*}
\|\boldsymbol{D}(\boldsymbol{y})-\boldsymbol{D}^*(\boldsymbol{y})\|_{L^\infty([0,1]^d)} \leq \varepsilon.
\end{align*}
Then we have
\begin{align*}
&\|(\boldsymbol{D} \circ \boldsymbol{E})(\boldsymbol{y})-\boldsymbol{y}\|^2 - \|(\boldsymbol{D}^* \circ \boldsymbol{E}^*)(\boldsymbol{y})-\boldsymbol{y}\|^2 \\
&= \left\langle (\boldsymbol{D} \circ \boldsymbol{E})(\boldsymbol{y}) + (\boldsymbol{D}^* \circ \boldsymbol{E}^*)(\boldsymbol{y})- 2\boldsymbol{y}, (\boldsymbol{D} \circ \boldsymbol{E})(\boldsymbol{y}) - (\boldsymbol{D}^* \circ \boldsymbol{E}^*)(\boldsymbol{y}) \right\rangle \\
&\leq (B_{\boldsymbol{D}} + K_{\boldsymbol{D}} + 2\sqrt{D}) \left\| (\boldsymbol{D} \circ \boldsymbol{E})(\boldsymbol{y}) - (\boldsymbol{D}^* \circ \boldsymbol{E}^*)(\boldsymbol{y}) \right\| \\
&\leq (B_{\boldsymbol{D}} + K_{\boldsymbol{D}} + 2\sqrt{D}) \left( \left\| (\boldsymbol{D} \circ \boldsymbol{E})(\boldsymbol{y}) - (\boldsymbol{D}^* \circ \boldsymbol{E})(\boldsymbol{y}) \right\| + \left\| (\boldsymbol{D}^* \circ \boldsymbol{E})(\boldsymbol{y}) - (\boldsymbol{D}^* \circ \boldsymbol{E}^*)(\boldsymbol{y}) \right\|\right) \\
&\leq (B_{\boldsymbol{D}} + K_{\boldsymbol{D}} + 2\sqrt{D}) \left(\left\| \boldsymbol{D}(  \boldsymbol{E}(\boldsymbol{y})) - \boldsymbol{D}^* (\boldsymbol{E}(\boldsymbol{y})) \right\| + K_{\boldsymbol{D}} \left\| \boldsymbol{E}(\boldsymbol{y}) - \boldsymbol{E}^*(\boldsymbol{y}) \right\| \right) \\
&= \mathcal{O} (\varepsilon),
\end{align*}
which implies
\begin{align}
\label{eq: ae app error}
\begin{aligned}
&\inf_{\boldsymbol{D} \in \mathcal{D}, \boldsymbol{E} \in \mathcal{E}} \mathcal{R}(\boldsymbol{D}, \boldsymbol{E})-\mathcal{R}\left(\boldsymbol{D}^*, \boldsymbol{E}^*\right) \\
&= \inf_{\boldsymbol{D} \in \mathcal{D}, \boldsymbol{E} \in \mathcal{E}} \int_{\mathbb{R}^D} \|(\boldsymbol{D} \circ \boldsymbol{E})(\boldsymbol{y})-\boldsymbol{y}\|^2 - \|(\boldsymbol{D}^* \circ \boldsymbol{E}^*)(\boldsymbol{y})-\boldsymbol{y}\|^2 \mathrm{d} \widetilde{\gamma}_1 \\
&= \mathcal{O} (\varepsilon).
\end{aligned}
\end{align}

$\bullet$ \textbf{Step 3: generalization error.}\\
For simplicity, denote $\mathcal{G} = \{ l(\boldsymbol{y}) = \|(\boldsymbol{D} \circ \boldsymbol{E})(\boldsymbol{y})-\boldsymbol{y}\|^2: \boldsymbol{E} \in \mathcal{E}, \boldsymbol{D}\in\mathcal{D} \}$. We introduce a ghost dataset $\mathcal{Y}^{\prime}=\{\boldsymbol{y}_i^\prime\}_{i=1}^m$ drawn i.i.d. from $\widetilde{\gamma}_1$, and let $\sigma=\left\{\sigma_i\right\}_{i=1}^m$ be a sequence of i.i.d. Rademacher variables independent of both $\mathcal{Y}$ and $\mathcal{Y}^{\prime}$. Then we have
\begin{align}
\label{eq: ae gen error 1}
\begin{aligned}
& \mathbb{E}_{\mathcal{Y}} \left[ \sup_{\boldsymbol{D} \in \mathcal{D}, \boldsymbol{E} \in \mathcal{E}} \mathcal{R} (\boldsymbol{D}, \boldsymbol{E}) - \widehat{\mathcal{R}} (\boldsymbol{D}, \boldsymbol{E}) \right] \\
&= \mathbb{E}_{\mathcal{Y}} \left[ \sup_{l \in \mathcal{G}} \mathbb{E} [l(\boldsymbol{y})] - \frac{1}{m} \sum_{i=1}^m l(\boldsymbol{y}_i) \right] \\
&= \frac{1}{m} \mathbb{E}_{\mathcal{Y}} \left[ \sup_{l \in \mathcal{G}} \mathbb{E}_{\mathcal{Y}^\prime} \left[ \sum_{i=1}^m l(\boldsymbol{y}_i^\prime) \right] - \sum_{i=1}^m l(\boldsymbol{y}_i) \right] \\
&\leq \frac{1}{m} \mathbb{E}_{\mathcal{Y}, \mathcal{Y}^\prime} \left[ \sup_{l \in \mathcal{G}} \sum_{i=1}^m \left(l(\boldsymbol{y}_i^\prime) - l(\boldsymbol{y}_i)\right) \right] \\
&= \frac{1}{m} \mathbb{E}_{\mathcal{Y}, \mathcal{Y}^\prime, \sigma} \left[ \sup_{l \in \mathcal{G}} \sum_{i=1}^m \sigma_i \left(l(\boldsymbol{y}_i^\prime) - l(\boldsymbol{y}_i)\right)  \right] \\
&= \frac{1}{m} \mathbb{E}_{\mathcal{Y}^\prime, \sigma} \left[ \sup_{l \in \mathcal{G}} \sum_{i=1}^m \sigma_i l(\boldsymbol{y}_i^\prime) \right] + \frac{1}{m} \mathbb{E}_{\mathcal{Y}, \sigma} \left[ \sup_{l \in \mathcal{G}} \sum_{i=1}^m (-\sigma_i) l(\boldsymbol{y}_i)  \right] \\
&= 2\mathbb{E}_{\mathcal{Y}, \sigma} \left[ \sup_{l \in \mathcal{G}} \frac{1}{m} \sum_{i=1}^m \sigma_i l(\boldsymbol{y}_i)  \right] \\
&= 2 \mathfrak{R}_m(\mathcal{G}),
\end{aligned}
\end{align}
where we use the fact that randomly interchange of the corresponding components of $\mathcal{Y}$ and $\mathcal{Y}^{\prime}$ doesn't affect the joint distribution of $\mathcal{Y}$ and $\mathcal{Y}^{\prime}$, $\boldsymbol{y}_i$ and $\boldsymbol{y}_i^{\prime}$ have the same distribution, and $\sigma_i$ and $-\sigma_i$ have the same distribution. Similarly, we have 
\begin{align}
\label{eq: ae gen error 2}
\mathbb{E}_{\mathcal{Y}} \left[ \sup_{\boldsymbol{D} \in \mathcal{D}, \boldsymbol{E} \in \mathcal{E}} \widehat{\mathcal{R}} (\boldsymbol{D}, \boldsymbol{E}) - \mathcal{R} (\boldsymbol{D}, \boldsymbol{E}) \right] \leq 2 \mathfrak{R}_m(\mathcal{G}).
\end{align}

By employing the chaining technique, an upper bound on the Rademacher complexity of a function class can be established via its covering number. What remains is to determine a bound for the covering number. To establish a bound on $\mathcal{N}(\delta, \mathcal{G}, d_{\mathcal{Y}, \infty})$, we first define necessary subsets of the encoder and decoder function spaces. We use the superscript $i$ to denote the $i$-th component. Let $\mathcal{E}^{(i)}:=\{ E^{(i)}: \boldsymbol{E} \in \mathcal{E}\}$ for $i = 1, \ldots, d$, and similarly, let $\mathcal{D}^{(i)}:=\{ D^{(i)}: \boldsymbol{D} \in \mathcal{D}\}$ for $i = 1, \ldots, D$. We then construct an $\delta$-cover of $\mathcal{E}^{(i)}$, denoted as $\mathcal{E}^{(i)}_\delta$, where $|\mathcal{E}^{(i)}_\delta|=\mathcal{N}(\delta, \mathcal{E}^{(i)}, d_{\mathcal{Y}, \infty})$. This means for every $E^{(i)}\in\mathcal{E}^{(i)}$, there is an $E^{(i)}_\delta$ in $\mathcal{E}^{(i)}_\delta$ such that the distance $d_{\mathcal{Y}, \infty}(E^{(i)}, E^{(i)}_\delta) \leq \delta$. Using the triangle inequality, we can assert the existence of $\widetilde{E}^{(i)}_\delta \in \mathcal{E}^{(i)}$ for which $d_{\mathcal{Y}, \infty}(E^{(i)}, \widetilde{E}^{(i)}_\delta) \leq 2\delta$. In a similar manner, we apply the discretization to the decoder part. With a fixed encoder network $\boldsymbol{E}$, an $\delta$-cover for $\mathcal{D}^{(i)}$ is established, named $\mathcal{D}^{(i)}_\delta$, with $|\mathcal{D}^{(i)}_\delta|= \mathcal{N}(\delta, \mathcal{D}^{(i)}, d_{\boldsymbol{E}(\mathcal{Y}), \infty})$, where $\boldsymbol{E}(\mathcal{Y})$ denotes the set $\{\boldsymbol{E}(\boldsymbol{y}_i)\}_{i=1}^m$. This setup ensures that for any $D^{(i)} \in \mathcal{D}^{(i)}$, there exists a corresponding $D^{(i)}_\delta \in \mathcal{D}^{(i)}_\delta$ such that $d_{\boldsymbol{E}(\mathcal{Y}), \infty}(D^{(i)}, D^{(i)}_\delta) \leq \delta$. We can also find a $\widetilde{D}^{(i)}_\delta \in \mathcal{D}^{(i)}$ that satisfies $d_{\boldsymbol{E}(\mathcal{Y}), \infty}(D^{(i)}, \widetilde{D}^{(i)}_\delta) \leq 2\delta$.

For any $l\in \mathcal{G}$ with $l(\boldsymbol{y})=\|(\boldsymbol{D} \circ \boldsymbol{E})(\boldsymbol{y})-\boldsymbol{y}\|^2$, where $\boldsymbol{E}\in \mathcal{E}, \boldsymbol{D}\in \mathcal{D}$,
the discretization above determines a $\widetilde{l}$ with $\widetilde{l}(\boldsymbol{y})=\|(\widetilde{\boldsymbol{D}} \circ \widetilde{\boldsymbol{E}})(\boldsymbol{y})-\boldsymbol{y}\|^2$, where $\widetilde{\boldsymbol{E}} = (\widetilde{E}^{(1)}_\delta, \ldots, \widetilde{E}^{(d)}_\delta)^\top$ and $\widetilde{\boldsymbol{D}} = (\widetilde{D}^{(1)}_\delta, \ldots, \widetilde{D}^{(D)}_\delta)^\top$. Then
\begin{align*}
& d_{\mathcal{Y}, \infty} (l, \widetilde{l}) \\
&= \max_{k=1,\ldots,m} \left| l(\boldsymbol{y}_k) - \widetilde{l}(\boldsymbol{y}_k) \right| \\
&= \max_{k=1,\ldots,m} \left| \|(\boldsymbol{D} \circ \boldsymbol{E})(\boldsymbol{y}_k)-\boldsymbol{y}_k\|^2 - \|(\widetilde{\boldsymbol{D}} \circ \widetilde{\boldsymbol{E}})(\boldsymbol{y}_k)-\boldsymbol{y}_k\|^2 \right| \\
&= \max_{k=1,\ldots,m} \left| \left\langle (\boldsymbol{D} \circ \boldsymbol{E})(\boldsymbol{y}_k) + (\widetilde{\boldsymbol{D}} \circ \widetilde{\boldsymbol{E}})(\boldsymbol{y}_k) - 2\boldsymbol{y}_k, (\boldsymbol{D} \circ \boldsymbol{E})(\boldsymbol{y}_k) - (\widetilde{\boldsymbol{D}} \circ \widetilde{\boldsymbol{E}})(\boldsymbol{y}_k) \right\rangle \right| \\
&\leq \max_{k=1,\ldots,m} \left\|  (\boldsymbol{D} \circ \boldsymbol{E})(\boldsymbol{y}_k) + (\widetilde{\boldsymbol{D}} \circ \widetilde{\boldsymbol{E}})(\boldsymbol{y}_k) - 2\boldsymbol{y}_k \right\| \cdot \left\| (\boldsymbol{D} \circ \boldsymbol{E})(\boldsymbol{y}_k) - (\widetilde{\boldsymbol{D}} \circ \widetilde{\boldsymbol{E}})(\boldsymbol{y}_k)  \right\| \\
&\leq \left((D+1)B_{\boldsymbol{D}}+2\sqrt{D}\right) \max_{k=1,\ldots,m} \left\| (\boldsymbol{D} \circ \boldsymbol{E})(\boldsymbol{y}_k) - (\boldsymbol{D} \circ \widetilde{\boldsymbol{E}})(\boldsymbol{y}_k)  \right\| + \left\| (\boldsymbol{D} \circ \widetilde{\boldsymbol{E}})(\boldsymbol{y}_k) - (\widetilde{\boldsymbol{D}} \circ \widetilde{\boldsymbol{E}})(\boldsymbol{y}_k)  \right\| \\
&\leq \left((D+1)B_{\boldsymbol{D}}+2\sqrt{D}\right) \max_{k=1,\ldots,m} \gamma_{\boldsymbol{D}} \left\| \boldsymbol{E}(\boldsymbol{y}_k) - \widetilde{\boldsymbol{E}}(\boldsymbol{y}_k)  \right\| + \left\| \boldsymbol{D}(\widetilde{\boldsymbol{E}}(\boldsymbol{y}_k)) - \widetilde{\boldsymbol{D}}( \widetilde{\boldsymbol{E}}(\boldsymbol{y}_k))  \right\| \\
&\leq \left((D+1)B_{\boldsymbol{D}}+2\sqrt{D}\right) \max_{k=1,\ldots,m} \gamma_{\boldsymbol{D}} \left\| \boldsymbol{E}(\boldsymbol{y}_k) - \widetilde{\boldsymbol{E}}(\boldsymbol{y}_k)  \right\|_1 + \left\| \boldsymbol{D}(\widetilde{\boldsymbol{E}}(\boldsymbol{y}_k)) - \widetilde{\boldsymbol{D}}( \widetilde{\boldsymbol{E}}(\boldsymbol{y}_k))  \right\|_1 \\
&\leq \left((D+1)B_{\boldsymbol{D}}+2\sqrt{D}\right) \left( \gamma_{\boldsymbol{D}} \sum_{i=1}^d d_{\mathcal{Y}, \infty}\left(E^{(i)}, \widetilde{E}_{\delta}^{(i)}\right) + \sum_{i=1}^D d_{\widetilde{\boldsymbol{E}}(\mathcal{Y}), \infty}\left(D^{(i)}, \widetilde{D}_{\delta}^{(i)}\right)  \right) \\
&\leq \left((D+1)B_{\boldsymbol{D}}+2\sqrt{D}\right) \left( 2d\gamma_{\boldsymbol{D}} + 2D \right)  \delta,
\end{align*}
which implies
\begin{align*}
&\mathcal{N}\left(\left((D+1)B_{\boldsymbol{D}}+2\sqrt{D}\right) \left( 2d\gamma_{\boldsymbol{D}} + 2D \right)  \delta, \mathcal{G}, d_{\mathcal{Y}, \infty}\right) \\
&\leq \sum_{\widetilde{\boldsymbol{E}}} \prod_{i=1}^D \mathcal{N}\left(\delta, \mathcal{D}^{(i)}, d_{\widetilde{\boldsymbol{E}}(\mathcal{Y}),\infty}\right) \\
&\leq \sum_{\widetilde{\boldsymbol{E}}} \prod_{i=1}^D \max_{\widetilde{\boldsymbol{E}}} \mathcal{N}\left(\delta, \mathcal{D}^{(i)}, d_{\widetilde{\boldsymbol{E}}(\mathcal{Y}),\infty}\right) \\
&= \prod_{i=1}^d \mathcal{N}\left(\delta, \mathcal{E}^{(i)}, d_{\mathcal{Y},\infty}\right) \cdot \prod_{i=1}^D \max_{\widetilde{\boldsymbol{E}}} \mathcal{N}\left(\delta, \mathcal{D}^{(i)}, d_{\widetilde{\boldsymbol{E}}(\mathcal{Y}),\infty}\right).
\end{align*}
Using that 
\begin{align*}
\begin{aligned}
\mathcal{E}^{(i)} \subseteq \mathcal{T}_{D, 1}(N_{\boldsymbol{E}}, h_{\boldsymbol{E}}, d_{\boldsymbol{E}, k}, d_{\boldsymbol{E}, v}, d_{\boldsymbol{E}, ff}, B_{\boldsymbol{E}}, J_{\boldsymbol{E}}, \gamma_{\boldsymbol{E}}), \\
\mathcal{D}^{(i)} \subseteq \mathcal{T}_{d, 1}(N_{\boldsymbol{D}}, h_{\boldsymbol{D}}, d_{\boldsymbol{D}, k}, d_{\boldsymbol{D}, v}, d_{\boldsymbol{D}, ff}, B_{\boldsymbol{D}}, J_{\boldsymbol{D}}, \gamma_{\boldsymbol{D}}),
\end{aligned}
\end{align*}
we get 
\begin{align*}
& \log \mathcal{N}\left(\left((D+1)B_{\boldsymbol{D}}+2\sqrt{D}\right) \left( 2d\gamma_{\boldsymbol{D}} + 2D \right)  \delta, \mathcal{G}, d_{\mathcal{Y}, \infty}\right) \\
&\leq \sum_{i=1}^d \log \mathcal{N}\left(\delta, \mathcal{E}^{(i)}, d_{\mathcal{Y},\infty}\right) + \sum_{i=1}^D \max_{\widetilde{\boldsymbol{E}}} \log \mathcal{N}\left(\delta, \mathcal{D}^{(i)}, d_{\widetilde{\boldsymbol{E}}(\mathcal{Y}),\infty}\right) \\
&\leq \sum_{i=1}^d \log \mathcal{N}\left(\delta, \mathcal{T}_{D, 1}(N_{\boldsymbol{E}}, h_{\boldsymbol{E}}, d_{\boldsymbol{E}, k}, d_{\boldsymbol{E}, v}, d_{\boldsymbol{E}, ff}, B_{\boldsymbol{E}}, J_{\boldsymbol{E}}, \gamma_{\boldsymbol{E}}), d_{\mathcal{Y},\infty}\right) \\
&~~~ + \sum_{i=1}^D \max_{\widetilde{\boldsymbol{E}}} \log \mathcal{N}\left(\delta, \mathcal{T}_{d, 1}(N_{\boldsymbol{D}}, h_{\boldsymbol{D}}, d_{\boldsymbol{D}, k}, d_{\boldsymbol{D}, v}, d_{\boldsymbol{D}, ff}, B_{\boldsymbol{D}}, J_{\boldsymbol{D}}, \gamma_{\boldsymbol{D}}), d_{\widetilde{\boldsymbol{E}}(\mathcal{Y}),\infty}\right).
\end{align*}
Lemma \ref{lemma:gen 5} yields that 
\begin{align*}
& \log \mathcal{N}\left(\left((D+1)B_{\boldsymbol{D}}+2\sqrt{D}\right) \left( 2d\gamma_{\boldsymbol{D}} + 2D \right)  \delta, \mathcal{G}, d_{\mathcal{Y}, \infty}\right) \\
&\leq c_{16} \bigg( N_{\boldsymbol{E}}^2 J_{\boldsymbol{E}} \log \left( \max \left\{N_{\boldsymbol{E}}, h_{\boldsymbol{E}}, d_{\boldsymbol{E}, k}, d_{\boldsymbol{E}, v}, d_{\boldsymbol{E}, ff}\right\} \right) \log\frac{B_{\boldsymbol{E}} m}{\delta} \\
&\hspace{1.2cm} + N_{\boldsymbol{D}}^2 J_{\boldsymbol{D}} \log \left( \max \left\{N_{\boldsymbol{D}}, h_{\boldsymbol{D}}, d_{\boldsymbol{D}, k}, d_{\boldsymbol{D}, v}, d_{\boldsymbol{D}, ff}\right\} \right) \log\frac{B_{\boldsymbol{D}} m}{\delta} \bigg).
\end{align*}
Given the chosen parameters, we deduce that 
\begin{align*}
\log \mathcal{N}\left(\delta, \mathcal{G}, d_{\mathcal{Y}, \infty}\right) = \mathcal{O} \left( \frac{1}{\varepsilon^D} \left(\log \frac{1}{\varepsilon}\right)^4 \log\frac{m}{\delta} \right).
\end{align*}
Applying Lemma \ref{lemma: ae 1}, we obtain 
\begin{align}
\label{eq: ae gen error}
\begin{aligned}
\mathfrak{R}_m\left(\mathcal{G}\right) &\leq 4 \inf_{0<\delta< B_{\boldsymbol{D}}^2+D}\left(\delta+\frac{3}{\sqrt{m}} \int_\delta^{B_{\boldsymbol{D}}^2+D} \sqrt{\log \mathcal{N} \left(\gamma, \mathcal{G}, d_{\mathcal{Y}, \infty}\right)} \mathrm{d} \gamma\right) \\
&\leq 4 \inf_{0<\delta< B_{\boldsymbol{D}}^2+D}\left(\delta+\frac{3}{\sqrt{m}} \left(B_{\boldsymbol{D}}^2+D\right) \sqrt{\log \mathcal{N} \left(\delta, \mathcal{G}, d_{\mathcal{Y}, \infty}\right)} \right) \\
&= \mathcal{O} \left( m^{-1/2} \varepsilon^{-D/2} \left(\log\frac{1}{\varepsilon}\right)^2 (\log m)^{1/2}  \right).
\end{aligned}
\end{align}

$\bullet$ \textbf{Step 4: balancing error terms.}\\
Combining (\ref{eq: ae app error}), (\ref{eq: ae gen error 1}), (\ref{eq: ae gen error 2}) and (\ref{eq: ae gen error}), we have
\begin{align*}
\mathbb{E}_{\mathcal{Y}}\left[ \mathcal{R}(\widehat{\boldsymbol{D}}, \widehat{\boldsymbol{E}})-\mathcal{R}\left(\boldsymbol{D}^*, \boldsymbol{E}^*\right) \right] &= \mathcal{O} \left( \varepsilon + m^{-1/2} \varepsilon^{-D/2} \left(\log\frac{1}{\varepsilon}\right)^2 (\log m)^{1/2} \right).
\end{align*}
Setting $\varepsilon = m^{-\frac{1}{D + 2}}$ gives rise to 
\begin{align*}
\mathbb{E}_{\mathcal{Y}}\left[ \mathcal{R}(\widehat{\boldsymbol{D}}, \widehat{\boldsymbol{E}})-\mathcal{R}\left(\boldsymbol{D}^*, \boldsymbol{E}^*\right) \right] = \mathcal{O} \left( m^{-\frac{1}{D+2}} \left(\log m\right)^{5/2} \right).
\end{align*}
$\mathcal{R} (\boldsymbol{D}^*, \boldsymbol{E}^*) = \varepsilon_{\widetilde{\gamma}_1}$ concludes the proof.
\end{proof}

\begin{proof}[Proof of Theorem \ref{theorem: main result}]
Given $m$ samples drawn from the pre-training data distribution $\widetilde{\gamma}_1$, we determine $\widehat{\boldsymbol{D}}, \widehat{\boldsymbol{E}}$ through empirical risk minimization as in (\ref{eq: autoencoder erm}). Given $n$ samples drawn from the target distribution $\gamma_1$, the encoder $\widehat{\boldsymbol{E}}$ maps these samples to a low-dimensional latent space, where flow matching and sampling are completed. The decoder $\widehat{\boldsymbol{E}}$ remaps the sampled data to the high-dimensional space, conforming to distribution $\widehat{\gamma}_T$. In our framework, we have
\begin{align*}
& \mathbb{E}_{\mathcal{X}, \mathcal{Y}} [W_2(\widehat{\gamma}_T, \gamma_1)] \\
&= \mathbb{E}_{\mathcal{Y}} [\mathbb{E}_{\mathcal{X}} [W_2(\widehat{\boldsymbol{D}}_{\#}\widehat{\pi}_T, \gamma_1)]] \\
&\leq \mathbb{E}_{\mathcal{Y}} [\mathbb{E}_{\mathcal{X}} [W_2(\widehat{\boldsymbol{D}}_\#\widehat{\pi}_T, (\widehat{\boldsymbol{D}} \circ \widehat{\boldsymbol{E}})_\#\gamma_1)]] + \mathbb{E}_{\mathcal{Y}} [W_2((\widehat{\boldsymbol{D}} \circ \widehat{\boldsymbol{E}})_\#\gamma_1, (\widehat{\boldsymbol{D}} \circ \widehat{\boldsymbol{E}})_\#\widetilde{\gamma}_1)] \\
&~~~~ + \mathbb{E}_{\mathcal{Y}} [W_2((\widehat{\boldsymbol{D}} \circ \widehat{\boldsymbol{E}})_\#\widetilde{\gamma}_1, \widetilde{\gamma}_1)] + W_2(\widetilde{\gamma}_1, \gamma_1) \\
&\leq \mathbb{E}_{\mathcal{Y}} [\mathrm{Lip}(\widehat{\boldsymbol{D}})\, \mathbb{E}_{\mathcal{X}} [W_2(\widehat{\pi}_T, \pi_1)]] + \mathbb{E}_{\mathcal{Y}} [\mathrm{Lip}(\widehat{\boldsymbol{D}})\mathrm{Lip}(\widehat{\boldsymbol{E}})\, W_2(\gamma_1, \widetilde{\gamma}_1)] + \mathbb{E}_{\mathcal{Y}} [\mathcal{R}(\widehat{\boldsymbol{\boldsymbol{D}}}, \widehat{\boldsymbol{E}})]^{1/2} + W_2(\widetilde{\gamma}_1, \gamma_1) \\
&\leq \gamma_{\boldsymbol{D}} \mathbb{E}_{\mathcal{Y}} [\mathbb{E}_{\mathcal{X}} [W_2(\widehat{\pi}_T, \pi_1)]] + \mathbb{E}_{\mathcal{Y}} [\mathcal{R}(\widehat{\boldsymbol{D}}, \widehat{\boldsymbol{E}})]^{1/2} + (\gamma_{\boldsymbol{E}} \gamma_{\boldsymbol{D}} + 1) \varepsilon_{\widetilde{\gamma}_1, \gamma_1} \\
&= \mathcal{O} (\sqrt{\varepsilon_{\widetilde{\gamma}_1}} + \varepsilon_{\widetilde{\gamma}_1, \gamma_1}),
\end{align*}
where the second inequality follows from $\mathbb{E}_{\mathcal{Y}} [W_2((\widehat{\boldsymbol{D}} \circ \widehat{\boldsymbol{E}})_\#\widetilde{\gamma}_1, \widetilde{\gamma}_1)] \leq \mathbb{E}_{\mathcal{Y}} [\mathcal{R}(\widehat{\boldsymbol{D}}, \widehat{\boldsymbol{E}})^{1/2}] \leq \mathbb{E}_{\mathcal{Y}} [\mathcal{R}(\widehat{\boldsymbol{D}}, \widehat{\boldsymbol{E}})]^{1/2}$ and Lemma \ref{lemma: lip contract}, the third inequality follows from $\widehat{\boldsymbol{D}} \in \mathcal{D}, \widehat{\boldsymbol{E}} \in \mathcal{E}$, and the last equality follows from Lemma \ref{lemma: ae rate} and Theorem \ref{theorem: consistency in latent space}.
\end{proof}

\subsection{Auxiliary lemma}
\begin{lemma}
\label{lemma: lip contract}
Let $\mu, \nu$ be distributions on $\mathbb{R}^d$ and let $\boldsymbol{f}: \mathbb{R}^d \rightarrow \mathbb{R}^{d^\prime}$ be a Lipschitz continuous mapping with Lipschitz constant $\operatorname{Lip}(\boldsymbol{f})<\infty$. Then,
\begin{align*}
W_2(\boldsymbol{f}_\# \mu, \boldsymbol{f}_\# \nu) \leq \operatorname{Lip}(\boldsymbol{f}) W_2(\mu, \nu).
\end{align*}
\end{lemma}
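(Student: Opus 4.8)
\textbf{Proof proposal for Lemma \ref{lemma: lip contract}.} The plan is the standard transference argument: produce a coupling of $\boldsymbol{f}_\#\mu$ and $\boldsymbol{f}_\#\nu$ by pushing forward an (almost) optimal coupling of $\mu$ and $\nu$ through the product map $\boldsymbol{f}\times\boldsymbol{f}$, and then estimate its transport cost using the Lipschitz bound on $\boldsymbol{f}$.

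First I would fix $\varepsilon>0$ and choose a coupling $\pi\in\Pi(\mu,\nu)$ that is $\varepsilon$-optimal, i.e. $\mathbb{E}_{(\boldsymbol{x},\boldsymbol{y})\sim\pi}[\|\boldsymbol{x}-\boldsymbol{y}\|^2]^{1/2}\le W_2(\mu,\nu)+\varepsilon$ (using a genuine optimal coupling is also fine when one exists, but the near-optimal version avoids invoking existence). Define the measurable map $\boldsymbol{g}:\mathbb{R}^d\times\mathbb{R}^d\to\mathbb{R}^{d^\prime}\times\mathbb{R}^{d^\prime}$ by $\boldsymbol{g}(\boldsymbol{x},\boldsymbol{y})=(\boldsymbol{f}(\boldsymbol{x}),\boldsymbol{f}(\boldsymbol{y}))$ and set $\widetilde{\pi}:=\boldsymbol{g}_\#\pi$. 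The next step is to verify that $\widetilde{\pi}\in\Pi(\boldsymbol{f}_\#\mu,\boldsymbol{f}_\#\nu)$: for any measurable $A\subseteq\mathbb{R}^{d^\prime}$, the first marginal of $\widetilde{\pi}$ evaluated on $A$ equals $\pi(\boldsymbol{g}^{-1}(A\times\mathbb{R}^{d^\prime}))=\pi(\boldsymbol{f}^{-1}(A)\times\mathbb{R}^d)=\mu(\boldsymbol{f}^{-1}(A))=(\boldsymbol{f}_\#\mu)(A)$, and symmetrically for the second marginal; this is just a change-of-variables bookkeeping step.

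Then I would bound the cost of $\widetilde{\pi}$ by the change-of-variables formula and the Lipschitz property:
\begin{align*}
\mathbb{E}_{(\boldsymbol{u},\boldsymbol{v})\sim\widetilde{\pi}}\big[\|\boldsymbol{u}-\boldsymbol{v}\|^2\big]
=\mathbb{E}_{(\boldsymbol{x},\boldsymbol{y})\sim\pi}\big[\|\boldsymbol{f}(\boldsymbol{x})-\boldsymbol{f}(\boldsymbol{y})\|^2\big]
\le \operatorname{Lip}(\boldsymbol{f})^2\,\mathbb{E}_{(\boldsymbol{x},\boldsymbol{y})\sim\pi}\big[\|\boldsymbol{x}-\boldsymbol{y}\|^2\big]
\le \operatorname{Lip}(\boldsymbol{f})^2\,\big(W_2(\mu,\nu)+\varepsilon\big)^2 .
\end{align*}
Since $W_2(\boldsymbol{f}_\#\mu,\boldsymbol{f}_\#\nu)$ is the infimum of $\mathbb{E}_{(\boldsymbol{u},\boldsymbol{v})\sim\gamma}[\|\boldsymbol{u}-\boldsymbol{v}\|^2]^{1/2}$ over $\gamma\in\Pi(\boldsymbol{f}_\#\mu,\boldsymbol{f}_\#\nu)$, taking square roots gives $W_2(\boldsymbol{f}_\#\mu,\boldsymbol{f}_\#\nu)\le\operatorname{Lip}(\boldsymbol{f})\,(W_2(\mu,\nu)+\varepsilon)$, and letting $\varepsilon\downarrow 0$ yields the claim. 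There is essentially no hard step here; the only point requiring a line of care is the marginal computation for $\widetilde{\pi}=\boldsymbol{g}_\#\pi$, which is where I would be most explicit.
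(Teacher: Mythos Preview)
Your proof is correct and follows essentially the same approach as the paper: push forward a coupling of $\mu$ and $\nu$ via $(\boldsymbol{f},\boldsymbol{f})$, check the marginals, apply the Lipschitz bound, and optimize over couplings. The only cosmetic difference is that you work with an $\varepsilon$-optimal coupling and let $\varepsilon\downarrow 0$, whereas the paper takes the infimum over all $\pi\in\Pi(\mu,\nu)$ directly at the end; both are equivalent and standard.
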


\begin{proof}
Similar proof can be found in \cite[Lemma 9]{perekrestenko2021high}.
Let $\pi$ be a coupling between $\mu$ and $\nu$ and let $\boldsymbol{g}: \mathbb{R}^{d} \times \mathbb{R}^{d} \rightarrow \mathbb{R}^{d^\prime} \times \mathbb{R}^{d^\prime}, \left(\boldsymbol{y}_1, \boldsymbol{y}_2\right) \mapsto (\boldsymbol{f}(\boldsymbol{y}_1), \boldsymbol{f}(\boldsymbol{y}_2))$. Then $\boldsymbol{g}_\# \pi$ is a coupling between $\boldsymbol{f}_\# \mu$ and $\boldsymbol{f}_\# \nu$ and
\begin{align*}
\begin{aligned}
W_2(\boldsymbol{f}_\# \mu, \boldsymbol{f}_\# \nu) & \leq \left(\int_{\mathbb{R}^d \times \mathbb{R}^d} \left\|\boldsymbol{y}_1-\boldsymbol{y}_2\right\|^2 \mathrm{d}(\boldsymbol{g}_\# \pi)\left(\boldsymbol{y}_1, \boldsymbol{y}_2\right) \right)^{1/2} \\
& =\left(\int_{\mathbb{R}^d \times \mathbb{R}^d} \left\|\boldsymbol{f}\left(\boldsymbol{y}_1\right)-\boldsymbol{f}\left(\boldsymbol{y}_2\right)\right\|^2 \mathrm{d} \pi\left(\boldsymbol{y}_1, \boldsymbol{y}_2\right) \right)^{1/2} \\
& \leq \operatorname{Lip}(\boldsymbol{f}) \left(\int_{\mathbb{R}^d \times \mathbb{R}^d} \left\|\boldsymbol{y}_1-\boldsymbol{y}_2\right\|^2 \mathrm{d} \pi\left(\boldsymbol{y}_1, \boldsymbol{y}_2\right) \right)^{1/2}.
\end{aligned}
\end{align*}
Taking the infimum over all $\pi \in \Pi(\mu, \nu)$, we get 
\begin{align*}
W_2(\boldsymbol{f}_\# \mu, \boldsymbol{f}_\# \nu) \leq \operatorname{Lip}(\boldsymbol{f}) W_2(\mu, \nu).
\end{align*}
\end{proof}

\begin{lemma}
\label{lemma: ae 1}
Let $\mathcal{F}$ be a function class defined on $\Omega$ and $\mathcal{D} = \{\boldsymbol{x}_1, \ldots, \boldsymbol{x}_n \}$. If $\sup _{f \in \mathcal{F}}\|f\|_{L^{\infty}(\Omega)} \leq B$, then
\begin{align*}
\mathfrak{R}_n\left(\mathcal{F}\right) \leq 4 \inf _{0<\delta<B / 2}\left(\delta+\frac{3}{\sqrt{n}} \int_\delta^{B / 2} \sqrt{\log \mathcal{N} \left(\varepsilon, \mathcal{F}, d_{\mathcal{D}, \infty}\right)} \mathrm{d} \varepsilon\right).
\end{align*}
\end{lemma}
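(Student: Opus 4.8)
The plan is to prove the classical Dudley chaining (entropy integral) bound. Because $d_{\mathcal D,\infty}$ depends only on the fixed points $\boldsymbol x_1,\dots,\boldsymbol x_n$, I would first fix the sample $\mathcal D$ and bound the conditional Rademacher average $\widehat{\mathfrak R}_n(\mathcal F):=\mathbb E_\sigma\big[\sup_{f\in\mathcal F}\tfrac1n\sum_{k=1}^n\sigma_k f(\boldsymbol x_k)\big]$ by the right-hand side pointwise in $\mathcal D$; the stated inequality then follows on taking expectation over $\mathcal D$ (in the applications the covering number is further replaced by its supremum over samples, removing the randomness entirely). Fix $\delta\in(0,B/2)$, put $\varepsilon_j:=2^{-j}B$ for $j\ge0$ (so $\varepsilon_0=B$, $\varepsilon_1=B/2$), and let $J$ be the largest index with $\varepsilon_J\ge2\delta$, so that $\varepsilon_{J+1}\in[\delta,2\delta)$ and $\varepsilon_J<4\delta$. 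For each $j$ let $T_j$ be a minimal $\varepsilon_j$-cover of $\mathcal F$ in the pseudometric $d_{\mathcal D,\infty}$, taking $T_0=\{0\}$ (the zero function; legitimate since $\sup_{f\in\mathcal F}d_{\mathcal D,\infty}(f,0)\le B=\varepsilon_0$), and for each $f$ let $\pi_j f\in T_j$ satisfy $d_{\mathcal D,\infty}(f,\pi_j f)\le\varepsilon_j$.

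I would then use the telescoping decomposition
\begin{align*}
\frac1n\sum_{k=1}^n\sigma_k f(\boldsymbol x_k)&=\frac1n\sum_{k=1}^n\sigma_k(\pi_0 f)(\boldsymbol x_k)+\sum_{j=1}^J\frac1n\sum_{k=1}^n\sigma_k\big(\pi_j f-\pi_{j-1}f\big)(\boldsymbol x_k)\\
&\quad+\frac1n\sum_{k=1}^n\sigma_k\big(f-\pi_J f\big)(\boldsymbol x_k),
\end{align*}
and estimate the three groups. Since $T_0$ is a single point, $\pi_0 f$ is fixed, so its contribution to $\mathbb E_\sigma[\sup_f\,\cdot\,]$ vanishes (Massart's finite-class inequality with $|T_0|=1$). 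For the remainder term, Cauchy--Schwarz gives $\tfrac1n\sum_k\sigma_k g(\boldsymbol x_k)\le\big(\tfrac1n\sum_k g(\boldsymbol x_k)^2\big)^{1/2}\le d_{\mathcal D,\infty}(g,0)$, hence $\sup_{f\in\mathcal F}\tfrac1n\sum_k\sigma_k(f-\pi_J f)(\boldsymbol x_k)\le\varepsilon_J<4\delta$ holds \emph{deterministically} in $\sigma$, so its expectation is $\le4\delta$. For the $j$-th link, the increments $\{\pi_j f-\pi_{j-1}f:f\in\mathcal F\}$ form a finite set of at most $|T_j||T_{j-1}|\le\mathcal N(\varepsilon_j,\mathcal F,d_{\mathcal D,\infty})^2$ functions (the covering number is nonincreasing in the radius), each with sample-$\ell^\infty$ norm $\le\varepsilon_j+\varepsilon_{j-1}=3\varepsilon_j$; Massart's inequality then gives $\mathbb E_\sigma\big[\sup_f\tfrac1n\sum_k\sigma_k(\pi_j f-\pi_{j-1}f)(\boldsymbol x_k)\big]\le\tfrac{6\varepsilon_j}{\sqrt n}\sqrt{\log\mathcal N(\varepsilon_j,\mathcal F,d_{\mathcal D,\infty})}$. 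Subadditivity of the supremum and of $\mathbb E_\sigma$ combine these into $\widehat{\mathfrak R}_n(\mathcal F)\le 4\delta+\sum_{j=1}^J\tfrac{6\varepsilon_j}{\sqrt n}\sqrt{\log\mathcal N(\varepsilon_j,\mathcal F,d_{\mathcal D,\infty})}$.

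Finally I would convert the discrete sum into the Dudley integral: since $\varepsilon_j=2(\varepsilon_j-\varepsilon_{j+1})$ and $\sqrt{\log\mathcal N(\varepsilon_j,\cdot)}\le\sqrt{\log\mathcal N(\varepsilon,\cdot)}$ for all $\varepsilon\in[\varepsilon_{j+1},\varepsilon_j]$, each summand is at most $\tfrac{12}{\sqrt n}\int_{\varepsilon_{j+1}}^{\varepsilon_j}\sqrt{\log\mathcal N(\varepsilon,\mathcal F,d_{\mathcal D,\infty})}\,d\varepsilon$; telescoping these ranges and using $\varepsilon_{J+1}\ge\delta$, $\varepsilon_1=B/2$ yields $\widehat{\mathfrak R}_n(\mathcal F)\le 4\delta+\tfrac{12}{\sqrt n}\int_\delta^{B/2}\sqrt{\log\mathcal N(\varepsilon,\mathcal F,d_{\mathcal D,\infty})}\,d\varepsilon=4\big(\delta+\tfrac3{\sqrt n}\int_\delta^{B/2}\sqrt{\log\mathcal N(\varepsilon,\mathcal F,d_{\mathcal D,\infty})}\,d\varepsilon\big)$. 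Taking the infimum over $\delta\in(0,B/2)$ and then expectation over $\mathcal D$ finishes the proof. I do not expect a genuine obstacle here, since the result is standard; the real work is bookkeeping — choosing the dyadic truncation level $J$ so that the remainder is absorbed exactly into the $4\delta$ term while the link-sum collapses precisely onto $\int_\delta^{B/2}$, tracking the constants through Massart's inequality and the squared cover cardinality, and being careful in the passage between the sample metric $d_{\mathcal D,\infty}$ and the normalized empirical $\ell^2$ norm used by Massart's lemma. (The usual measurability caveat for a nonseparable $\mathcal F$ is handled by the standard separability assumption.)
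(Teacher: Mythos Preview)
Your proposal is correct and is precisely the standard Dudley chaining argument. The paper itself does not give a proof but simply defers to Lemma~27.4 of Shalev-Shwartz and Ben-David (2014) and Lemma~3.11 of Duan et~al.\ (2022), whose proofs are essentially the argument you have written out.
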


\begin{proof}
See the proof of \citet[Lemma 27.4]{shalev2014understanding} and \citet[Lemma 3.11]{duan2022convergence}.
\end{proof}

\section{Properties of true velocity field}

\subsection{Computation of true velocity field}
\begin{lemma}
\label{lemma: true 2}
The true velocity field $\boldsymbol{v}^*$ can be written as:
\begin{align}
\label{true: eq 1}
\boldsymbol{v}^*(\boldsymbol{x}, t)=\frac{1}{t} \nabla_{\boldsymbol{x}} \log \pi_t(\boldsymbol{x})+\frac{1}{t} \boldsymbol{x},
\end{align}
where $\pi_t$ is the density of $X_t$, and $X_t=\sqrt{1-t^2} X_0 + t X_1$.
\end{lemma}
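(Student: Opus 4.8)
The plan is to start from the closed form $\boldsymbol{v}^*(\boldsymbol{x},t)=\mathbb{E}_{X_0,X_1}[X_1-\frac{t}{\sqrt{1-t^2}}X_0\mid X_t=\boldsymbol{x}]$ from (\ref{eq:gen 13}), first eliminate the $X_1$-dependence by a purely algebraic rearrangement, and then identify the remaining conditional expectation $\mathbb{E}[X_0\mid X_t=\boldsymbol{x}]$ with the score $\nabla_{\boldsymbol{x}}\log\pi_t$ via a Gaussian (Tweedie-type) computation. Throughout we work with a fixed $t\in(0,1)$ so that $1-t^2>0$ and $1/t$ makes sense.

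\emph{Step 1 (algebraic reduction).} Using $X_t=tX_1+\sqrt{1-t^2}\,X_0$, one checks the pointwise identity
\begin{align*}
t\Bigl(X_1-\tfrac{t}{\sqrt{1-t^2}}X_0\Bigr)=tX_1-\tfrac{t^2}{\sqrt{1-t^2}}X_0=\bigl(tX_1+\sqrt{1-t^2}\,X_0\bigr)-\tfrac{1}{\sqrt{1-t^2}}X_0=X_t-\tfrac{1}{\sqrt{1-t^2}}X_0.
\end{align*}
Taking the conditional expectation given $X_t=\boldsymbol{x}$ yields $t\,\boldsymbol{v}^*(\boldsymbol{x},t)=\boldsymbol{x}-\frac{1}{\sqrt{1-t^2}}\,\mathbb{E}[X_0\mid X_t=\boldsymbol{x}]$, so it remains to show $\mathbb{E}[X_0\mid X_t=\boldsymbol{x}]=-\sqrt{1-t^2}\,\nabla_{\boldsymbol{x}}\log\pi_t(\boldsymbol{x})$.

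\emph{Step 2 (the score identity).} Conditioning on $X_1=\boldsymbol{x}_1$ and using that $X_0\sim\mathcal{N}(0,I_d)$ is independent of $X_1$, the conditional law of $X_t$ is $\mathcal{N}(t\boldsymbol{x}_1,(1-t^2)I_d)$ with density $g_t(\boldsymbol{x}\mid\boldsymbol{x}_1)=(2\pi(1-t^2))^{-d/2}\exp(-\|\boldsymbol{x}-t\boldsymbol{x}_1\|^2/(2(1-t^2)))$, hence $\pi_t(\boldsymbol{x})=\int g_t(\boldsymbol{x}\mid\boldsymbol{x}_1)\,\mathrm{d}\pi_1(\boldsymbol{x}_1)$. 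Since $\nabla_{\boldsymbol{x}}g_t(\boldsymbol{x}\mid\boldsymbol{x}_1)=-\frac{\boldsymbol{x}-t\boldsymbol{x}_1}{1-t^2}\,g_t(\boldsymbol{x}\mid\boldsymbol{x}_1)$, differentiating under the integral sign gives $\nabla_{\boldsymbol{x}}\pi_t(\boldsymbol{x})=-\frac{1}{1-t^2}\int(\boldsymbol{x}-t\boldsymbol{x}_1)g_t(\boldsymbol{x}\mid\boldsymbol{x}_1)\,\mathrm{d}\pi_1(\boldsymbol{x}_1)$. Dividing by $\pi_t(\boldsymbol{x})$ and recognizing the posterior $\mathrm{d}\pi_{1\mid t}(\boldsymbol{x}_1\mid\boldsymbol{x})=g_t(\boldsymbol{x}\mid\boldsymbol{x}_1)\,\mathrm{d}\pi_1(\boldsymbol{x}_1)/\pi_t(\boldsymbol{x})$, we obtain $\nabla_{\boldsymbol{x}}\log\pi_t(\boldsymbol{x})=-\frac{1}{1-t^2}\bigl(\boldsymbol{x}-t\,\mathbb{E}[X_1\mid X_t=\boldsymbol{x}]\bigr)$. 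On the other hand $X_0=(X_t-tX_1)/\sqrt{1-t^2}$, so $\mathbb{E}[X_0\mid X_t=\boldsymbol{x}]=(\boldsymbol{x}-t\,\mathbb{E}[X_1\mid X_t=\boldsymbol{x}])/\sqrt{1-t^2}=-\sqrt{1-t^2}\,\nabla_{\boldsymbol{x}}\log\pi_t(\boldsymbol{x})$, as claimed.

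\emph{Step 3 (conclusion).} Substituting into Step 1, $t\,\boldsymbol{v}^*(\boldsymbol{x},t)=\boldsymbol{x}+\nabla_{\boldsymbol{x}}\log\pi_t(\boldsymbol{x})$, which is exactly (\ref{true: eq 1}). The one genuinely technical point is justifying the interchange of $\nabla_{\boldsymbol{x}}$ and $\int\,\mathrm{d}\pi_1$ in Step 2 (and that $\pi_t(\boldsymbol{x})>0$ so the score is well-defined). Here Assumption \ref{ass: bounded support gamma} does the work: $\pi_1$ is supported on the compact set $[0,1]^d$, so for $\boldsymbol{x}$ in any bounded neighborhood and $t$ bounded away from $0$ and $1$, the integrands $g_t(\boldsymbol{x}\mid\boldsymbol{x}_1)$ and $(\boldsymbol{x}-t\boldsymbol{x}_1)g_t(\boldsymbol{x}\mid\boldsymbol{x}_1)$ together with their $\boldsymbol{x}$-derivatives are uniformly bounded in $\boldsymbol{x}_1$, so dominated convergence applies and $\pi_t\in C^\infty$ with $\pi_t>0$ everywhere. (Equivalently, Step 2 is just Tweedie's formula for the Gaussian channel $X_t=tX_1+\sqrt{1-t^2}\,X_0$ with noise variance $1-t^2$, which one may cite directly.)
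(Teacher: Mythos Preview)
Your proof is correct and takes essentially the same approach as the paper: both differentiate the Gaussian mixture $\pi_t(\boldsymbol{x})=\int g_t(\boldsymbol{x}\mid\boldsymbol{x}_1)\,\mathrm{d}\pi_1(\boldsymbol{x}_1)$ under the integral sign to link the score $\nabla_{\boldsymbol{x}}\log\pi_t$ to the posterior mean $\mathbb{E}[X_1\mid X_t=\boldsymbol{x}]$. The only difference is cosmetic---you route through $\mathbb{E}[X_0\mid X_t=\boldsymbol{x}]$ and name the key step Tweedie's formula, whereas the paper eliminates $X_0$ first and carries out the same computation in a single algebraic chain; your explicit remark on dominated convergence and $\pi_t>0$ is a small rigor point the paper leaves implicit.
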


\begin{proof}
By some manipulation of algebra, (\ref{eq:gen 13}) implies
\begin{align*}
\begin{aligned}
\boldsymbol{v}^*(\boldsymbol{x}, t) & =\mathbb{E} \left[X_1-\frac{t}{\sqrt{1-t^2}}X_0 \bigg| X_t=\boldsymbol{x}\right] \\
& =\mathbb{E}\left[X_1-\frac{t}{1-t^2}\left(\sqrt{1-t^2} X_0+t X_1-t X_1\right) \bigg| X_t=\boldsymbol{x}\right] \\
& =\frac{1}{1-t^2} \mathbb{E}\left[X_1 | X_t=\boldsymbol{x}\right]-\frac{t}{1-t^2} \boldsymbol{x} \\
& =\frac{1}{1-t^2} \int \frac{\boldsymbol{x}_1 \pi_{t | 1}\left(\boldsymbol{x} | \boldsymbol{x}_1\right) \pi_1\left(\boldsymbol{x}_1\right)}{\pi_t(\boldsymbol{x})} \mathrm{d} \boldsymbol{x}_1-\frac{t}{1-t^2} \boldsymbol{x} \\
& =\frac{1}{1-t^2} \int \frac{1}{\sqrt{(2 \pi)^d (1-t^2)^d}} \frac{\boldsymbol{x}_1 \exp \left(-\frac{\left\|\boldsymbol{x}-t \boldsymbol{x}_1\right\|^2}{2(1-t^2)}\right) \pi_1\left(\boldsymbol{x}_1\right)}{\pi_t(\boldsymbol{x})} \mathrm{d} \boldsymbol{x}_1-\frac{t}{1-t^2} \boldsymbol{x} \\
& =\frac{1}{t} \int \frac{1}{\sqrt{(2 \pi)^d (1-t^2)^d}} \frac{\left(\frac{t \boldsymbol{x}_1-\boldsymbol{x}}{1-t^2}+\frac{\boldsymbol{x}}{1-t^2}\right) \exp \left(-\frac{\left\|\boldsymbol{x}-t \boldsymbol{x}_1\right\|^2}{2(1-t^2)}\right) \pi_1\left(\boldsymbol{x}_1\right)}{\pi_t(\boldsymbol{x})} \mathrm{d} \boldsymbol{x}_1-\frac{t}{1-t^2} \boldsymbol{x} \\
& =\frac{1}{t} \int \frac{1}{\sqrt{(2 \pi)^d(1-t^2)^d}} \frac{\nabla_{\boldsymbol{x}} \exp \left(-\frac{\left\|\boldsymbol{x}-t \boldsymbol{x}_1\right\|^2}{2(1-t^2)}\right) \pi_1\left(\boldsymbol{x}_1\right)}{\pi_t(\boldsymbol{x})} \mathrm{d} \boldsymbol{x}_1 + \left(\frac{1}{t (1-t^2)}-\frac{t}{1-t^2}\right) \boldsymbol{x} \\
& =\frac{1}{t} \nabla_{\boldsymbol{x}} \log \pi_t(\boldsymbol{x})+\frac{1}{t} \boldsymbol{x},
\end{aligned}
\end{align*}
where $\pi_{t | 1}$ is the density of $X_t$ conditioned on $X_1$. It concludes the proof.
\end{proof}

\subsection{Computation of partial derivative regarding \texorpdfstring{$t$}{}}

\begin{lemma}
\label{lemma: true 1}
$\partial_t \boldsymbol{v}^*(\boldsymbol{x}, t)= -\frac{1+t^2}{(1-t^2)^2} \boldsymbol{x}
+\frac{2t}{(1-t^2)^2} \mathbb{E}[X_1 | X_t=\boldsymbol{x}]
+\frac{1+t^2}{(1-t^2)^3} \operatorname{Cov}[X_1 | X_t=\boldsymbol{x}] \boldsymbol{x}
-\frac{t}{(1-t^2)^3}(\mathbb{E}[X_1\|X_1\|^2 | X_t=\boldsymbol{x}]-\mathbb{E}[X_1 | X_t=\boldsymbol{x}] \mathbb{E}[\|X_1\|^2 | X_t=\boldsymbol{x}])$, where $\operatorname{Cov}[X_1 | X_t=\boldsymbol{x}]$ is the covariance matrix of $X_1$ conditioned on $X_t=\boldsymbol{x}$.
\end{lemma}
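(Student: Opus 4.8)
The plan is to differentiate, with respect to $t$, the closed form of $\boldsymbol{v}^*$ that appears in the proof of Lemma~\ref{lemma: true 2}, namely
\[
\boldsymbol{v}^*(\boldsymbol{x},t) = \frac{1}{1-t^2}\,\mathbb{E}[X_1\mid X_t=\boldsymbol{x}] - \frac{t}{1-t^2}\,\boldsymbol{x}.
\]
The pure drift term is immediate, since $\frac{d}{dt}\big(\frac{t}{1-t^2}\big) = \frac{1+t^2}{(1-t^2)^2}$, which supplies the $-\frac{1+t^2}{(1-t^2)^2}\boldsymbol{x}$ contribution; applying the product rule to $\frac{1}{1-t^2}\mathbb{E}[X_1\mid X_t=\boldsymbol{x}]$ splits off $\frac{2t}{(1-t^2)^2}\mathbb{E}[X_1\mid X_t=\boldsymbol{x}]$ together with $\frac{1}{1-t^2}\,\partial_t\mathbb{E}[X_1\mid X_t=\boldsymbol{x}]$. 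Hence the whole statement reduces to computing $\partial_t\mathbb{E}[X_1\mid X_t=\boldsymbol{x}]$ and verifying that, multiplied by $\frac{1}{1-t^2}$, it reproduces the last two terms of the claimed identity.

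For that, I would represent the conditional law of $X_1$ given $X_t=\boldsymbol{x}$ as the Gibbs-type measure $p_t(\boldsymbol{x}_1\mid\boldsymbol{x})\propto g_t(\boldsymbol{x},\boldsymbol{x}_1)\pi_1(\boldsymbol{x}_1)$ with Gaussian transition density $g_t(\boldsymbol{x},\boldsymbol{x}_1)=(2\pi(1-t^2))^{-d/2}\exp\big(-\|\boldsymbol{x}-t\boldsymbol{x}_1\|^2/(2(1-t^2))\big)$, exactly as in the computation preceding Lemma~\ref{lemma: true 2}. Differentiating $\mathbb{E}_{p_t}[X_1]=\int X_1 g_t\pi_1\,/\int g_t\pi_1$ under the integral sign (legitimate because Assumption~\ref{ass: bounded support gamma} forces $\pi_1$ to be compactly supported, so every $\boldsymbol{x}_1$-moment is finite and the integrands are dominated locally uniformly in $t\in(0,1)$) and applying the quotient rule gives the standard score-type identity
\[
\partial_t\mathbb{E}[X_1\mid X_t=\boldsymbol{x}] = \mathbb{E}\big[X_1\,\partial_t\log g_t(\boldsymbol{x},X_1)\mid X_t=\boldsymbol{x}\big] - \mathbb{E}[X_1\mid X_t=\boldsymbol{x}]\,\mathbb{E}\big[\partial_t\log g_t(\boldsymbol{x},X_1)\mid X_t=\boldsymbol{x}\big],
\]
that is, the conditional covariance of the vector $X_1$ with the scalar $\partial_t\log g_t(\boldsymbol{x},X_1)$.

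I would then compute $\partial_t\log g_t(\boldsymbol{x},\boldsymbol{x}_1)=\frac{dt}{1-t^2}+\frac{\langle\boldsymbol{x},\boldsymbol{x}_1\rangle - t\|\boldsymbol{x}_1\|^2}{1-t^2}-\frac{t(\|\boldsymbol{x}\|^2-2t\langle\boldsymbol{x},\boldsymbol{x}_1\rangle+t^2\|\boldsymbol{x}_1\|^2)}{(1-t^2)^2}$ by direct differentiation of $-\frac{d}{2}\log(1-t^2)-\frac{\|\boldsymbol{x}-t\boldsymbol{x}_1\|^2}{2(1-t^2)}$. Terms constant in $\boldsymbol{x}_1$ drop out of the covariance, so only the pieces proportional to $\langle\boldsymbol{x},\boldsymbol{x}_1\rangle$ and to $\|\boldsymbol{x}_1\|^2$ survive; collecting coefficients gives $\frac{1}{1-t^2}+\frac{2t^2}{(1-t^2)^2}=\frac{1+t^2}{(1-t^2)^2}$ for the former and $-\frac{t}{1-t^2}-\frac{t^3}{(1-t^2)^2}=-\frac{t}{(1-t^2)^2}$ for the latter. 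Using $\operatorname{Cov}[X_1,\langle\boldsymbol{x},X_1\rangle\mid X_t=\boldsymbol{x}]=\operatorname{Cov}[X_1\mid X_t=\boldsymbol{x}]\boldsymbol{x}$ and $\operatorname{Cov}[X_1,\|X_1\|^2\mid X_t=\boldsymbol{x}]=\mathbb{E}[X_1\|X_1\|^2\mid X_t=\boldsymbol{x}]-\mathbb{E}[X_1\mid X_t=\boldsymbol{x}]\mathbb{E}[\|X_1\|^2\mid X_t=\boldsymbol{x}]$, one obtains the required expression for $\partial_t\mathbb{E}[X_1\mid X_t=\boldsymbol{x}]$; multiplying by $\frac{1}{1-t^2}$ and adding the two drift contributions assembles the claim. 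The one genuinely delicate point is the bookkeeping in this last stage --- separating the $\boldsymbol{x}_1$-constant parts of $\partial_t\log g_t$, splitting the remainder into its $\langle\boldsymbol{x},\boldsymbol{x}_1\rangle$ and $\|\boldsymbol{x}_1\|^2$ components, and combining the rational coefficients so they collapse correctly; the differentiation under the integral sign is harmless given bounded support and could, if preferred, be justified jointly with the regularity estimates in Lemmas~\ref{lemma: true 3}--\ref{lemma: true 5}.
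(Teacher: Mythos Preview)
Your proposal is correct and takes a somewhat different route from the paper's proof. The paper differentiates the score-function representation $\boldsymbol{v}^*(\boldsymbol{x},t)=\frac{1}{t}\nabla_{\boldsymbol{x}}\log\pi_t(\boldsymbol{x})+\frac{1}{t}\boldsymbol{x}$ from Lemma~\ref{lemma: true 2}, writing $\nabla\log\pi_t=\nabla\phi_t/\phi_t$ and computing $\partial_t\nabla\phi_t/\phi_t$, $\partial_t\phi_t/\phi_t$ and $\nabla\phi_t/\phi_t$ separately by direct expansion of the Gaussian integrals; several $1/t$ and $1/t^2$ terms appear and must be cancelled against each other before the final expression emerges. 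You instead differentiate the equivalent conditional-expectation form $\boldsymbol{v}^*=\frac{1}{1-t^2}\mathbb{E}[X_1\mid X_t=\boldsymbol{x}]-\frac{t}{1-t^2}\boldsymbol{x}$ and exploit the exponential-family identity $\partial_t\mathbb{E}_{p_t}[X_1]=\operatorname{Cov}_{p_t}[X_1,\partial_t\log g_t]$, so that the covariance structure of the answer is visible from the start and only the coefficients of $\langle\boldsymbol{x},\boldsymbol{x}_1\rangle$ and $\|\boldsymbol{x}_1\|^2$ in $\partial_t\log g_t$ need to be collected. Your computation of those coefficients, $\frac{1+t^2}{(1-t^2)^2}$ and $-\frac{t}{(1-t^2)^2}$, is correct, and multiplying by the outer $\frac{1}{1-t^2}$ indeed yields the last two terms of the claim. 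The upshot is a slightly shorter and more conceptual argument that avoids the $1/t$ factors altogether; the paper's approach, while more computational, stays closer to the score-function perspective that it also uses for $\nabla_{\boldsymbol{x}}\boldsymbol{v}^*$.
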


\begin{proof}
To ease notation, we define $\phi_t(\boldsymbol{x}):=\int \exp \left(-\frac{\left\|\boldsymbol{x}-t \boldsymbol{x}_1\right\|^2}{2(1-t^2)}\right) \pi_1\left(\mathrm{d} \boldsymbol{x}_1\right)$, which is the unnormalized version of $\pi_t(\boldsymbol{x})$. Note that $\nabla \log \phi_t(\boldsymbol{x})=\nabla \log \pi_t(\boldsymbol{x})$, using the product rule of the derivatives, (\ref{true: eq 1}) implies:
\begin{align}
\label{true: eq 2}
\begin{aligned}
\partial_t \boldsymbol{v}^*(\boldsymbol{x}, t) & =-\frac{1}{t^2} \nabla_{\boldsymbol{x}} \log \pi_t(\boldsymbol{x})+\frac{1}{t} \partial_t \nabla_{\boldsymbol{x}} \log \pi_t(\boldsymbol{x})-\frac{1}{t^2} \boldsymbol{x} \\
& =-\frac{1}{t(1-t^2)} \mathbb{E}[X_1 | X_t=\boldsymbol{x}]+\frac{1}{t^2(1-t^2)} \boldsymbol{x}+\frac{1}{t} \partial_t\left(\frac{\nabla \phi_t(\boldsymbol{x})}{\phi_t(\boldsymbol{x})}\right)-\frac{1}{t^2} \boldsymbol{x} \\
& =\frac{1}{1-t^2} \boldsymbol{x}-\frac{1}{t(1-t^2)} \mathbb{E}[X_1 | X_t=\boldsymbol{x}]+\frac{1}{t}\left(\frac{\partial_t \nabla \phi_t(\boldsymbol{x})}{\phi_t(\boldsymbol{x})}-\frac{\partial_t \phi_t(\boldsymbol{x}) \nabla \phi_t(\boldsymbol{x})}{\left(\phi_t(\boldsymbol{x})\right)^2}\right)
\end{aligned}
\end{align}

Then we focus on the computation of the last term above. We first compute $\frac{\partial_t \nabla \phi_t(\boldsymbol{x})}{\phi_t(\boldsymbol{x})}$ as follows:
\begin{align}
\label{true: eq 3}
\begin{aligned}
& \frac{\partial_t \nabla \phi_t(\boldsymbol{x})}{\phi_t(\boldsymbol{x})} \\
& =\frac{1}{\phi_t(\boldsymbol{x})} \partial_t \int \frac{t \boldsymbol{x}_1-\boldsymbol{x}}{1-t^2} \exp \left(-\frac{\left\|\boldsymbol{x}-t \boldsymbol{x}_1\right\|^2}{2(1-t^2)}\right) \pi_1\left(\mathrm{d} \boldsymbol{x}_1\right) \\
& =\frac{1}{\phi_t(\boldsymbol{x})} \int\left(
\frac{(1-t^2) \boldsymbol{x}_1 +2t\left(t \boldsymbol{x}_1-\boldsymbol{x}\right)}{(1-t^2)^2} \exp \left(-\frac{\left\|\boldsymbol{x}-t \boldsymbol{x}_1\right\|^2}{2(1-t^2)}\right)\right. \\
&~~~ \left. -\frac{t \boldsymbol{x}_1-\boldsymbol{x}}{1-t^2} \exp \left(-\frac{\left\|\boldsymbol{x}-t \boldsymbol{x}_1\right\|^2}{2(1-t^2)}\right) \frac{(t\left\|\boldsymbol{x}_1\right\|^2-\boldsymbol{x}_1^\top \boldsymbol{x})(1-t^2)+t\left\|\boldsymbol{x}-t \boldsymbol{x}_1\right\|^2}{(1-t^2)^2}\right) \pi_1\left(\mathrm{d} \boldsymbol{x}_1\right) \\
& =\frac{1+t^2}{(1-t^2)^2} \mathbb{E}[X_1 | X_t=\boldsymbol{x}]-\frac{2t}{(1-t^2)^2} \boldsymbol{x}-\frac{t^2}{(1-t^2)^3} \mathbb{E}[X_1 \|X_1\|^2 | X_t=\boldsymbol{x}] \\
&~~~ +\frac{t(1+t^2)}{(1-t^2)^3} \mathbb{E}[X_1 X_1^\top | X_t=\boldsymbol{x}] \boldsymbol{x}-\frac{t^2}{(1-t^2)^3} \mathbb{E}[X_1 | X_t=\boldsymbol{x}]\|\boldsymbol{x}\|^2 \\
&~~~ +\frac{t}{(1-t^2)^3} \mathbb{E}[\|X_1\|^2 | X_t=\boldsymbol{x}] \boldsymbol{x}-\frac{1+t^2}{(1-t^2)^3} \mathbb{E}[X_1^\top \boldsymbol{x} | X_t=\boldsymbol{x}] \boldsymbol{x}+\frac{t}{(1-t^2)^3}\|\boldsymbol{x}\|^2 \boldsymbol{x}
\end{aligned}
\end{align}
By some calculations, we obtain
\begin{align}
\label{true: eq 4}
\begin{aligned}
\frac{\partial_t \phi_t(\boldsymbol{x})}{\phi_t(\boldsymbol{x})} &=\frac{1}{\phi_t(\boldsymbol{x})} \int \left( -\frac{t}{(1-t^2)^2}\left\|\boldsymbol{x}\right\|^2-\frac{t}{(1-t^2)^2}\left\|\boldsymbol{x}_1\right\|^2+\frac{1+t^2}{(1-t^2)^2}\boldsymbol{x}_1^\top \boldsymbol{x} \right) \\
&\hspace{7cm} \cdot \exp \left(-\frac{\left\|\boldsymbol{x}-t \boldsymbol{x}_1\right\|^2}{2(1-t^2)}\right) \pi_1\left(\mathrm{d} \boldsymbol{x}_1\right) \\
& =-\frac{t}{(1-t^2)^2} \mathbb{E}[\|X_1\|^2 | X_t=\boldsymbol{x}]+\frac{1+t^2}{(1-t^2)^2} \mathbb{E}[X_1^\top \boldsymbol{x} | X_t=\boldsymbol{x}]-\frac{t}{(1-t^2)^2}\|\boldsymbol{x}\|^2 
\end{aligned}
\end{align}
and
\begin{align}
\label{true: eq 5}
\begin{aligned}
\frac{\nabla \phi_t(\boldsymbol{x})}{\phi_t(\boldsymbol{x})} & =\frac{1}{\phi_t(\boldsymbol{x})} \int \frac{t \boldsymbol{x}_1-\boldsymbol{x}}{1-t^2} \exp \left(-\frac{\left\|\boldsymbol{x}-t \boldsymbol{x}_1\right\|^2}{2(1-t^2)}\right) \pi_1\left(\mathrm{d} \boldsymbol{x}_1\right) \\
& =-\frac{\boldsymbol{x}}{1-t^2}+\frac{t}{1-t^2} \mathbb{E}[X_1 | X_t=\boldsymbol{x}].
\end{aligned}
\end{align}

Combining (\ref{true: eq 2}), (\ref{true: eq 3}), (\ref{true: eq 4}) and (\ref{true: eq 5}), we obtain
\begin{align*}
\partial_t \boldsymbol{v}^*(\boldsymbol{x}, t) &= -\frac{1+t^2}{(1-t^2)^2} \boldsymbol{x}+\frac{2t}{(1-t^2)^2} \mathbb{E}[X_1 | X_t=\boldsymbol{x}]+\frac{1+t^2}{(1-t^2)^3} \operatorname{Cov}[X_1 | X_t=\boldsymbol{x}] \boldsymbol{x} \\
&~~~ -\frac{t}{(1-t^2)^3}\left(\mathbb{E}[X_1 \|X_1\|^2 | X_t=\boldsymbol{x}]-\mathbb{E}[X_1 | X_t=\boldsymbol{x}] \mathbb{E}[\|X_1\|^2 | X_t=\boldsymbol{x}]\right).
\end{align*}
It concludes the proof.
\end{proof}

\subsection{An upper bound for velocity field}
\begin{lemma}
\label{lemma: true 3}
Suppose Assumption \ref{ass: bounded support gamma} holds. Then $\sup_{t \in[0, T]} \sup_{\boldsymbol{x} \in [-R, R]^d} \left|v_i^*(\boldsymbol{x}, t)\right| \leq \frac{1+R}{1-T^2}$.
\end{lemma}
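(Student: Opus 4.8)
The plan is to use the representation of the true velocity field from Lemma~\ref{lemma: true 2}, namely $\boldsymbol{v}^*(\boldsymbol{x},t) = \frac{1}{t}\nabla_{\boldsymbol{x}}\log\pi_t(\boldsymbol{x}) + \frac{1}{t}\boldsymbol{x}$, combined with the more directly useful conditional-expectation form extracted in the course of its proof: $\boldsymbol{v}^*(\boldsymbol{x},t) = \frac{1}{1-t^2}\mathbb{E}[X_1\mid X_t=\boldsymbol{x}] - \frac{t}{1-t^2}\boldsymbol{x}$, where $X_t = \sqrt{1-t^2}\,X_0 + tX_1$, $X_0\sim\mathcal{N}(0,I_d)$, $X_1\sim\pi_1$. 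The entire estimate then reduces to bounding each coordinate of the right-hand side.

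First I would observe that under Assumption~\ref{ass: bounded support gamma}, $\pi_1$ is supported on $[0,1]^d$, so $X_1\in[0,1]^d$ almost surely, which forces $\mathbb{E}[X_1\mid X_t=\boldsymbol{x}]$ to lie in $[0,1]^d$ regardless of the conditioning value; in particular its $i$-th coordinate is in $[0,1]$. Next, since $\boldsymbol{x}\in[-R,R]^d$ we have $|x_i|\le R$. Then the triangle inequality gives, for each coordinate $i$,
\begin{align*}
|v_i^*(\boldsymbol{x},t)| &\le \frac{1}{1-t^2}\bigl|\mathbb{E}[X_{1}^{(i)}\mid X_t=\boldsymbol{x}]\bigr| + \frac{t}{1-t^2}|x_i| \\
&\le \frac{1}{1-t^2} + \frac{R}{1-t^2} = \frac{1+R}{1-t^2},
\end{align*}
where I used $t\le 1$ so that $t/(1-t^2)\le 1/(1-t^2)$. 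Finally, taking the supremum over $t\in[0,T]$ and noting that $1/(1-t^2)$ is increasing on $[0,1)$ yields $\sup_{t\in[0,T]}\sup_{\boldsymbol{x}\in[-R,R]^d}|v_i^*(\boldsymbol{x},t)| \le \frac{1+R}{1-T^2}$.

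There is essentially no main obstacle here — the result is a direct consequence of the bounded-support assumption and the conditional-expectation formula. The only point requiring a modicum of care is the behavior near $t=0$: the formula $\frac{1}{t}\nabla\log\pi_t + \frac{1}{t}\boldsymbol{x}$ has an apparent singularity, but the equivalent form $\frac{1}{1-t^2}\mathbb{E}[X_1\mid X_t=\boldsymbol{x}] - \frac{t}{1-t^2}\boldsymbol{x}$ is manifestly regular at $t=0$ (it equals $\mathbb{E}[X_1]$ there, since $X_0$ is independent of $X_1$ and $X_t=\boldsymbol{x}$ conditions on a pure-noise observation), so the bound holds uniformly including the endpoint $t=0$. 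One should also note the inequality uses $0\le t < 1$, which is guaranteed by $T<1$ (indeed $\tfrac12<T<1$ in the relevant applications), so $1-t^2>0$ throughout.
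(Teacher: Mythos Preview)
Your proof is correct and follows essentially the same argument as the paper: both use the coordinate-wise conditional-expectation representation $v_i^*(\boldsymbol{x},t)=\frac{1}{1-t^2}\mathbb{E}[X_1^{(i)}\mid X_t=\boldsymbol{x}]-\frac{t}{1-t^2}x_i$, bound $|\mathbb{E}[X_1^{(i)}\mid X_t=\boldsymbol{x}]|\le 1$ via the support assumption and $|x_i|\le R$, and then take the supremum over $t\in[0,T]$. Your additional remarks about the apparent $t=0$ singularity and the use of $t\le 1$ simply make explicit steps the paper leaves implicit.
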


\begin{proof}
For the $i$-coordinate, we have $v_i^*=\frac{1}{1-t^2} \mathbb{E}[X_1^{(i)} | X_t=\boldsymbol{x}]-\frac{t}{1-t^2} x_i$, where $X_1^{(i)}$ denotes the $i$-coordinate of $X_1$. Note that $\pi_1$ is supported on $[0,1]^d$ as stated in Assumption \ref{ass: bounded support gamma}, then
\begin{align*}
\sup_{t \in[0, T]} \sup_{\boldsymbol{x} \in [-R, R]^d}\left|v_i^*(\boldsymbol{x}, t)\right| \leq \frac{1+R}{1-T^2}.
\end{align*}
\end{proof}

\subsection{An upper bound of partial derivative regarding \texorpdfstring{$t$}{}}
\begin{lemma}
\label{lemma: true 4}
Suppose Assumption \ref{ass: bounded support gamma} holds. Then $\sup_{t \in [0, T]} \sup_{\boldsymbol{x} \in [-R, R]^d} \left\|\partial_t \boldsymbol{v}^*(\boldsymbol{x}, t)\right\|=\mathcal{O} \left(\frac{R}{(1-T)^3}\right)$.
\end{lemma}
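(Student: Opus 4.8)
The plan is to read off the explicit formula for $\partial_t \boldsymbol{v}^*(\boldsymbol{x},t)$ from Lemma~\ref{lemma: true 1} and bound each of its four summands. First I would invoke Assumption~\ref{ass: bounded support gamma}, which forces $X_1 \in [0,1]^d$ almost surely; this at once yields the uniform conditional-moment bounds $\|\mathbb{E}[X_1\mid X_t=\boldsymbol{x}]\| \le \sqrt d$, $\mathbb{E}[\|X_1\|^2\mid X_t=\boldsymbol{x}] \le d$, $\|\mathbb{E}[X_1\|X_1\|^2\mid X_t=\boldsymbol{x}]\| \le d^{3/2}$ and $\|\mathbb{E}[X_1\mid X_t=\boldsymbol{x}]\,\mathbb{E}[\|X_1\|^2\mid X_t=\boldsymbol{x}]\| \le d^{3/2}$. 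For the covariance term I would use that $\operatorname{Cov}[X_1\mid X_t=\boldsymbol{x}]$ is positive semidefinite, so its operator norm is at most its trace $\sum_{i=1}^d \operatorname{Var}(X_1^{(i)}\mid X_t=\boldsymbol{x}) \le d$ (each coordinate lying in $[0,1]$), hence $\|\operatorname{Cov}[X_1\mid X_t=\boldsymbol{x}]\,\boldsymbol{x}\| \le d\,\|\boldsymbol{x}\|$.

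Second, I would record the elementary estimates valid for $\boldsymbol{x} \in [-R,R]^d$ and $t \in [0,T]$ with $T<1$: namely $\|\boldsymbol{x}\| \le \sqrt d\,R$, $1+t^2 \le 2$, $|t|\le 1$, and $1-t^2 = (1-t)(1+t) \ge 1-T$, so that $(1-t^2)^{-2} \le (1-T)^{-2}$ and $(1-t^2)^{-3} \le (1-T)^{-3}$. Substituting these into the expression from Lemma~\ref{lemma: true 1}, the first summand is $\mathcal{O}(R/(1-T)^2)$, the second $\mathcal{O}(1/(1-T)^2)$, the third $\mathcal{O}(R/(1-T)^3)$ and the fourth $\mathcal{O}(1/(1-T)^3)$, with constants depending only on $d$. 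A triangle inequality then collapses these to $\mathcal{O}\big((1+R)/(1-T)^3\big)$; since the lemma is only used with $R \ge 1$ (as in Corollary~\ref{coro: app true vd}), the lower-order terms are absorbed and this is $\mathcal{O}(R/(1-T)^3)$, uniformly over $t$ and $\boldsymbol{x}$.

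I do not anticipate a genuine obstacle: the hard computational work — differentiating $\boldsymbol{v}^*$ in $t$ — has already been carried out in Lemma~\ref{lemma: true 1}, and what remains is a routine moment estimate on a bounded support. The only points that need a little care are the positive-semidefinite trace bound for the conditional covariance and the bookkeeping that identifies $(1-T)^{-3}$ as the dominant inverse power; both are elementary.
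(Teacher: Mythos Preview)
Your proposal is correct and mirrors the paper's proof almost exactly: both read off the formula from Lemma~\ref{lemma: true 1}, bound the conditional moments via the bounded support of $\pi_1$, control $\|\operatorname{Cov}[X_1\mid X_t=\boldsymbol{x}]\|_{\mathrm{op}}$ (you use the trace bound $\le d$, the paper uses a direct quadratic-form bound $\le 2d$), and then collect powers of $(1-T)^{-1}$. The only cosmetic difference is that you explicitly invoke $R\ge 1$ to absorb the $R$-independent terms, whereas the paper leaves this implicit in the $\mathcal{O}$-notation.
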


\begin{proof}
From Lemma \ref{lemma: true 1}, we have
\begin{align*}
\left\|\partial_t \boldsymbol{v}^*(\boldsymbol{x}, t)\right\| &\leq \frac{1+t^2}{(1-t^2)^2} \|\boldsymbol{x}\|+\frac{2t}{(1-t^2)^2} \|\mathbb{E}[X_1 | X_t=\boldsymbol{x}]\|+\frac{1+t^2}{(1-t^2)^3} \|\operatorname{Cov}[X_1 | X_t=\boldsymbol{x}]\|_{\mathrm{op}} \|\boldsymbol{x}\| \\
&~~~ +\frac{t}{(1-t^2)^3}\left(\|\mathbb{E}[X_1\|X_1\|^2 | X_t=\boldsymbol{x}]\| + \|\mathbb{E}[X_1 | X_t=\boldsymbol{x}]\| \|\mathbb{E}[\|X_1\|^2 | X_t=\boldsymbol{x}]\|\right).
\end{align*}

Note that $\pi_1$ is assumed to be supported on $[0,1]^d$, we have $\left\|\mathbb{E}[X_1 | X_t=\boldsymbol{x}]\right\| \leq \mathbb{E}[\left\|X_1\right\|^2 | X_t=\boldsymbol{x}]^{1 / 2} \leq d^{1 / 2}$ and $\|\mathbb{E}[X_1\|X_1\|^2 | X_t=\boldsymbol{x}]\| \leq \mathbb{E}[\|X_1\|^6 | X_t=\boldsymbol{x}]^{1 / 2} \leq d^{3 / 2}$. To bound $\left\|\operatorname{Cov}[X_1 | X_t=\boldsymbol{x}\right]\|_{\mathrm{op}}$, we have the following inequality for any $\boldsymbol{u} \in \mathbb{R}^d$,
\begin{align*}
\begin{aligned}
\boldsymbol{u}^\top \operatorname{Cov}[X_1 | X_t=\boldsymbol{x}] \boldsymbol{u} & =\mathbb{E}[\boldsymbol{u}^\top X_1 X_1^\top \boldsymbol{u} | X_t=\boldsymbol{x}]-\mathbb{E}[\boldsymbol{u}^\top X_1 | X_t=\boldsymbol{x}] \mathbb{E}[X_1^\top \boldsymbol{u} | X_t=\boldsymbol{x}] \\
& =\mathbb{E}[(\boldsymbol{u}^\top X_1)^2 | X_t=\boldsymbol{x}]-\mathbb{E}[\boldsymbol{u}^\top X_1 | X_t=\boldsymbol{x}]^2 \\
& \leq 2 d\|\boldsymbol{u}\|^2
\end{aligned}
\end{align*}
Hence we have $\left\|\operatorname{Cov}[X_1 | X_t=x]\right\|_{\mathrm{op}} \leq 2 d$. Using these above inequalities, we have
\begin{align*}
\sup_{t \in [0, T]} \sup_{\boldsymbol{x} \in [-R, R]^d}\left\|\partial_t \boldsymbol{v}^*(\boldsymbol{x}, t)\right\| \leq \frac{\sqrt{d}(1+T^2)R}{(1-T^2)^2} +\frac{2\sqrt{d}T}{(1-T^2)^2} +\frac{2d^{3/2}(1+T^2)R}{(1-T^2)^3}  +\frac{2d^{3/2}T}{(1-T^2)^3}.
\end{align*}
Since $0<T<1$, the above inequality implies $\sup_{t \in [0, T]} \sup_{\boldsymbol{x} \in [-R, R]^d}\|\partial_t \boldsymbol{v}^*(\boldsymbol{x}, t)\|=\mathcal{O} \left(\frac{R}{(1-T)^3}\right)$.
\end{proof}

\subsection{Computation of partial derivative regarding spatial variable \texorpdfstring{$\boldsymbol{x}$}{}}
\begin{lemma}
We have the following identity:
\begin{align*}
\nabla \boldsymbol{v}^*(\boldsymbol{x}, t)=\frac{t}{(1-t^2)^2} \operatorname{Cov}[X_1 | X_t=\boldsymbol{x}]-\frac{t}{1-t^2} I_d.
\end{align*}
\end{lemma}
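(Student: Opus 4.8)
The plan is to reduce the Jacobian of $\boldsymbol{v}^*$ to the Jacobian of a posterior mean. From the proof of Lemma \ref{lemma: true 2} one has the representation $\boldsymbol{v}^*(\boldsymbol{x},t) = \frac{1}{1-t^2}\mathbb{E}[X_1 \mid X_t = \boldsymbol{x}] - \frac{t}{1-t^2}\boldsymbol{x}$, and since the second term is linear in $\boldsymbol{x}$ it contributes exactly $-\frac{t}{1-t^2}I_d$ to $\nabla\boldsymbol{v}^*$. So the whole task is to show $\nabla_{\boldsymbol{x}}\mathbb{E}[X_1 \mid X_t = \boldsymbol{x}] = \frac{t}{1-t^2}\operatorname{Cov}[X_1 \mid X_t = \boldsymbol{x}]$; multiplying by $\frac{1}{1-t^2}$ then yields the stated formula.

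First I would write the posterior mean as a ratio of integrals against the Gaussian kernel $\rho_t(\boldsymbol{x},\boldsymbol{x}_1) := \exp\big(-\|\boldsymbol{x}-t\boldsymbol{x}_1\|^2/(2(1-t^2))\big)$, exactly as in the proof of Lemma \ref{lemma: true 1}, namely $\mathbb{E}[X_1 \mid X_t = \boldsymbol{x}] = \phi_t(\boldsymbol{x})^{-1}\int \boldsymbol{x}_1\,\rho_t(\boldsymbol{x},\boldsymbol{x}_1)\,\pi_1(\mathrm{d}\boldsymbol{x}_1)$ with $\phi_t(\boldsymbol{x}) = \int \rho_t(\boldsymbol{x},\boldsymbol{x}_1)\,\pi_1(\mathrm{d}\boldsymbol{x}_1)$. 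Using $\nabla_{\boldsymbol{x}}\rho_t(\boldsymbol{x},\boldsymbol{x}_1) = \frac{t\boldsymbol{x}_1 - \boldsymbol{x}}{1-t^2}\rho_t(\boldsymbol{x},\boldsymbol{x}_1)$ and differentiating the ratio componentwise via the quotient rule, the $x_j$-derivative of the $i$-th component equals $\frac{1}{1-t^2}\big(\mathbb{E}[X_1^{(i)}(tX_1^{(j)}-x_j)\mid X_t=\boldsymbol{x}] - \mathbb{E}[X_1^{(i)}\mid X_t=\boldsymbol{x}]\,\mathbb{E}[tX_1^{(j)}-x_j\mid X_t=\boldsymbol{x}]\big)$; the deterministic shift $-x_j$ cancels between the two terms, leaving $\frac{t}{1-t^2}\operatorname{Cov}(X_1^{(i)},X_1^{(j)}\mid X_t=\boldsymbol{x})$. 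Assembling over all $i,j$ gives the claimed identity.

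I do not expect a genuine obstacle here: this is a shorter version of the computation already carried out for Lemma \ref{lemma: true 1}, and the only points needing care are (i) justifying the interchange of $\nabla_{\boldsymbol{x}}$ and the integral, which is immediate since $\pi_1$ has bounded support (as throughout this section) and $t$ stays away from $1$, so $\rho_t$ and its $\boldsymbol{x}$-gradient are dominated uniformly for $\boldsymbol{x}$ in a neighbourhood, and (ii) the quotient-rule bookkeeping, which is routine. An alternative, equally short route is to differentiate the score representation $\boldsymbol{v}^* = \tfrac1t\nabla_{\boldsymbol{x}}\log\pi_t + \tfrac1t\boldsymbol{x}$ of Lemma \ref{lemma: true 2} and invoke the standard identity $\nabla^2_{\boldsymbol{x}}\log\pi_t(\boldsymbol{x}) = \frac{t^2}{(1-t^2)^2}\operatorname{Cov}[X_1\mid X_t=\boldsymbol{x}] - \frac{1}{1-t^2}I_d$, which follows from the same Gaussian-kernel differentiation; I would present the direct computation above since it is fully self-contained.
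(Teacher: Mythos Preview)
Your proposal is correct and essentially the same as the paper's proof: both differentiate the Gaussian-kernel ratio under the integral and recognize the resulting second-moment minus product-of-first-moments as a conditional covariance. The only cosmetic difference is that the paper starts from the score representation $\boldsymbol{v}^*=\tfrac{1}{t}\nabla_{\boldsymbol{x}}\log\pi_t+\tfrac{1}{t}\boldsymbol{x}$ and computes $\nabla^2\log\pi_t$ (exactly the alternative route you mention at the end), whereas you start from the posterior-mean representation; the underlying calculation is identical.
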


\begin{proof}
By Lemma \ref{lemma: true 2}, we have
\begin{align*}
\nabla \boldsymbol{v}^*(\boldsymbol{x}, t)=\frac{1}{t} \nabla^2 \log \pi_t(\boldsymbol{x})+\frac{1}{t} I_d.
\end{align*}

Further, the Hessian $\nabla^2 \log \pi_t(\boldsymbol{x})$ can be computed as
\begin{align*}
\begin{aligned}
\nabla^2 \log \pi_t(\boldsymbol{x})= & \nabla\left(\frac{\int_{\mathbb{R}^d} \frac{t \boldsymbol{x}_1-\boldsymbol{x}}{1-t^2} \exp \left(-\frac{\left\|\boldsymbol{x}-t \boldsymbol{x}_1\right\|^2}{2(1-t^2)}\right) \pi_1\left(\mathrm{d} \boldsymbol{x}_1\right)}{\int_{\mathbb{R}^d} \exp \left(-\frac{\left\|\boldsymbol{x}-t \boldsymbol{x}_1\right\|^2}{2(1-t^2)}\right) \pi_1\left(\mathrm{d} \boldsymbol{x}_1\right)}\right) \\
= & -\frac{1}{1-t^2} I_d+\frac{\int_{\mathbb{R}^d}\left(\frac{t \boldsymbol{x}_1-\boldsymbol{x}}{1-t^2}\right)^{\otimes 2} \exp \left(-\frac{\left\|\boldsymbol{x}-t \boldsymbol{x}_1\right\|^2}{2(1-t^2)}\right) \pi_1\left(\mathrm{d} \boldsymbol{x}_1\right)}{\int_{\mathbb{R}^d} \exp \left(-\frac{\left\|\boldsymbol{x}-t \boldsymbol{x}_1\right\|^2}{2(1-t^2)}\right) \pi_1\left(\mathrm{d} \boldsymbol{x}_1\right)} \\
& -\left(\frac{\int_{\mathbb{R}^d} \frac{t \boldsymbol{x}_1-\boldsymbol{x}}{1-t^2} \exp \left(-\frac{\left\|\boldsymbol{x}-t \boldsymbol{x}_1\right\|^2}{2(1-t^2)}\right) \pi_1\left(\mathrm{d} \boldsymbol{x}_1\right)}{\int_{\mathbb{R}^d} \exp \left(-\frac{\left\|\boldsymbol{x}-t \boldsymbol{x}_1\right\|^2}{2(1-t^2)}\right) \pi_1\left(\mathrm{d} \boldsymbol{x}_1\right)}\right)^{\otimes 2} \\
= & -\frac{1}{1-t^2} I_d+\frac{t^2}{(1-t^2)^2} \operatorname{Cov}[X_1 | X_t=\boldsymbol{x}].
\end{aligned}
\end{align*}
Combing the above identities, we get the desired result.
\end{proof}

\begin{lemma}
\label{lemma: true 5}
Suppose Assumption \ref{ass: bounded support gamma} holds. Then $\sup_{t \in [0, T]} \sup_{\boldsymbol{x} \in [-R, R]^d} \|\nabla \boldsymbol{v}^*(\boldsymbol{x}, t)\|_{\mathrm{op}} \leq \frac{T d}{(1-T^2)^2}$.
\end{lemma}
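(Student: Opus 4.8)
The plan is to work directly from the identity for $\nabla \boldsymbol{v}^*$ proved in the preceding lemma, namely $\nabla \boldsymbol{v}^*(\boldsymbol{x}, t)=\frac{t}{(1-t^2)^2} \operatorname{Cov}[X_1 \mid X_t=\boldsymbol{x}]-\frac{t}{1-t^2} I_d$, and to bound its operator norm by locating the spectrum of the symmetric matrix on the right-hand side. The key observation is that $\operatorname{Cov}[X_1 \mid X_t=\boldsymbol{x}]$ is positive semidefinite, so the eigenvalues of $\nabla \boldsymbol{v}^*(\boldsymbol{x},t)$ are of the form $\frac{t}{(1-t^2)^2}\lambda - \frac{t}{1-t^2}$ with $\lambda \geq 0$ an eigenvalue of the conditional covariance.

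First I would control the eigenvalues of $\operatorname{Cov}[X_1 \mid X_t=\boldsymbol{x}]$. Since $\pi_1$ is supported on $[0,1]^d$ by Assumption \ref{ass: bounded support gamma}, we have $\|X_1\|^2 \leq d$ almost surely, and therefore $\operatorname{tr}\big(\operatorname{Cov}[X_1 \mid X_t=\boldsymbol{x}]\big) = \mathbb{E}\big[\|X_1\|^2 \bmid X_t=\boldsymbol{x}\big] - \big\|\mathbb{E}[X_1 \mid X_t=\boldsymbol{x}]\big\|^2 \leq d$. Because a positive semidefinite matrix has nonnegative eigenvalues summing to its trace, every eigenvalue $\lambda$ of $\operatorname{Cov}[X_1 \mid X_t=\boldsymbol{x}]$ lies in $[0,d]$. (Note this uses the trace bound rather than the cruder coordinatewise estimate $\|\operatorname{Cov}\|_{\mathrm{op}} \leq 2d$ from the proof of Lemma \ref{lemma: true 4}, which would not yield the stated constant.)

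Next I would diagonalize and optimize. The affine map $\lambda \mapsto \frac{t}{(1-t^2)^2}\lambda - \frac{t}{1-t^2}$ is increasing on $[0,d]$, so its extreme values over the admissible range are $-\frac{t}{1-t^2}$ (at $\lambda=0$) and $\frac{td}{(1-t^2)^2}-\frac{t}{1-t^2}$ (at $\lambda=d$). Using $d \geq 1$ and $0 < 1-t^2 \leq 1$ for $t \in [0,T] \subset [0,1)$, both of these have absolute value at most $\frac{td}{(1-t^2)^2}$: indeed $\frac{t}{1-t^2} \leq \frac{td}{(1-t^2)^2}$, and $0 \leq \frac{td}{(1-t^2)^2}-\frac{t}{1-t^2} \leq \frac{td}{(1-t^2)^2}$. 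Since the operator norm of a symmetric matrix equals the largest absolute eigenvalue, this gives $\|\nabla \boldsymbol{v}^*(\boldsymbol{x},t)\|_{\mathrm{op}} \leq \frac{td}{(1-t^2)^2}$ for every $\boldsymbol{x} \in \mathbb{R}^d$. Finally, the bound is independent of $\boldsymbol{x}$, and $t \mapsto \frac{td}{(1-t^2)^2}$ is increasing on $[0,1)$, so taking the supremum over $t \in [0,T]$ and $\boldsymbol{x} \in [-R,R]^d$ yields the claimed $\frac{Td}{(1-T^2)^2}$.

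The argument is essentially a one-line spectral estimate; the only real content is noticing that the trace bound on the conditional covariance, which follows immediately from the bounded-support assumption, is exactly what pins down the eigenvalue range $[0,d]$ and hence the sharp constant, so I do not anticipate a genuine obstacle here.
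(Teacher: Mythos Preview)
Your proposal is correct and follows essentially the same route as the paper: both use the identity $\nabla \boldsymbol{v}^*(\boldsymbol{x},t)=\frac{t}{(1-t^2)^2}\operatorname{Cov}[X_1\mid X_t=\boldsymbol{x}]-\frac{t}{1-t^2}I_d$ and the two-sided spectral bound $0\preceq\operatorname{Cov}[X_1\mid X_t=\boldsymbol{x}]\preceq d\,I_d$ to sandwich $\nabla\boldsymbol{v}^*$ and read off the operator norm. Your trace argument is a clean justification of the upper bound $\preceq d\,I_d$, which the paper asserts without detail.
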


\begin{proof}
Since we assume the target distribution $\pi_1$ is supported on $[0,1]^d$, we have the following evaluation of the covariance matrix
\begin{align*}
0 \preceq \operatorname{Cov}[X_1 | X_t=\boldsymbol{x}] \preceq d I_d.
\end{align*}
Thus, we have
\begin{align*}
-\frac{t}{1-t^2} I_d \preceq \nabla \boldsymbol{v}^*(\boldsymbol{x}, t) \preceq\left(\frac{t d}{(1-t^2)^2}-\frac{t}{1-t^2}\right) I_d.
\end{align*}
The above inequality implies the upper bound.
\end{proof}

\vskip 0.2in
\bibliography{references}

\end{document}